\documentclass[12pt]{article}
\usepackage{amsmath}
\usepackage{graphicx}
\usepackage{enumerate}
\usepackage{natbib,nccmath}
\usepackage{url} % not crucial - just used below for the URL 

%\pdfminorversion=4
% NOTE: To produce blinded version, replace "0" with "1" below.
\newcommand{\blind}{1}

% DON'T change margins - should be 1 inch all around.
\addtolength{\oddsidemargin}{-.5in}%
\addtolength{\evensidemargin}{-1in}%
\addtolength{\textwidth}{1in}%
\addtolength{\textheight}{1.7in}%
\addtolength{\topmargin}{-1in}%
% Extra by us

\usepackage[utf8]{inputenc} % allow utf-8 input
\usepackage[T1]{fontenc}    % use 8-bit T1 fonts
\usepackage{hyperref}       % hyperlinks
\usepackage{url}            % simple URL typesetting
\usepackage{booktabs}       % professional-quality tables
\usepackage{amsfonts}       % blackboard math symbols
\usepackage{nicefrac}       % compact symbols for 1/2, etc.
\usepackage{xcolor}         % colors
\usepackage{comment}
\usepackage{amsthm}
\usepackage{enumitem}
\usepackage{graphicx}
\usepackage{bm}
\usepackage{enumerate,enumitem}
\usepackage{multirow}
\usepackage{caption}
\usepackage{subcaption}
\usepackage{float}

\usepackage{setspace}
\usepackage{amsmath}
\usepackage[ruled]{algorithm}
%\usepackage{etoolbox}
%\AtBeginEnvironment{algorithm}{\setstretch{1.25}}
\usepackage{algpseudocode}
\usepackage{amssymb}
\usepackage{comment}
\usepackage{rotating}
\usepackage{mathtools}

\usepackage{authblk}
\usepackage{appendix}
\usepackage{tabularx}
\setlength{\bibsep}{3pt}

\def\argmax{\text{argmax}}
\def\argmin{\text{argmin}}
\newcommand{\indep}{\perp \!\!\! \perp}

\newtheorem{theorem}{Theorem}
\newtheorem{lemma}{Lemma}
\newtheorem{proposition}{Proposition}

\newtheorem{definition}{Definition}
\newtheorem{remark}{Remark}
\newtheorem{condition}{Condition}

\def\cvgn{\mathop{\longrightarrow}_{n\to+\infty}}
\def\ds1{{\mathrm{1 \hspace{-2.6pt} I}}}
\def\dsB{\mathbb {B}}
\def\dsC{\mathbb {C}}
\def\dsE{\mathbb {E}}
\def\dsN{\mathbb {N}}
\def\dsP{\mathbb {P}}
\def\dsQ{\mathbb {Q}}
\def\dsR{\mathbb {R}}
\def\dsS{\mathbb {S}}
\def\dsV{\mathbb {V}}
\def\dsZ{\mathbb {Z}}
\def\calA{{\cal A}}
\def\calB{{\cal B}}
\def\calBX{{{\cal B}_\calX}}
\def\calBY{{{\cal B}_\calY}}
\def\calBR{{\cal B}}
\def\calBZ{{{\cal B}_\calZ}}
\def\calC{{\cal C}}
\def\tcalC{{\tilde{\cal C}}}
\def\calCm{{{\cal C}_{max}}}
\def\calD{{\cal D}}
\def\calE{{\cal E}}
\def\calF{{\cal F}}
\def\calFXY{{\cal F}}%(\dsR^d,\calY)}
\def\calG{{\cal G}}
\def\calH{{\cal H}}
\def\calI{{\cal I}}
\def\calK{{\cal K}}
\def\calL{{\cal L}}
\def\calM{{\cal M}}
\def\calN{{\cal N}}
\def\calP{{\cal P}}
\def\calQ{{\cal Q}}
\def\calR{{\cal R}}
\def\tcalR{{\tilde{\cal R}}}
\def\calS{{\cal S}}
\def\calT{{\cal T}}
\def\calU{{\cal U}}
\def\calV{{\cal V}}
\def\calW{{\cal W}}
\def\calX{{\cal X}}
\def\calY{{\cal Y}}
\def\calZ{{\cal Z}}

\def\Dpi{\calD^\pi}
\def\Dpin{ \hat{\calD}^\pi_n}
\def\barotimes{\,\bar{\otimes}\,}
\def\Var{\text{Var}}
\def\EE{\mathbb{E}}
\def\w{\boldsymbol{w}}
\def\e{\boldsymbol{e}}
\def\f{\boldsymbol{f}}
\def\vpsi{\boldsymbol{\psi}}
\def\XX{\textbf{X}}
\def\ZZ{\textbf{Z}}
\def\rwrd{\textbf{R}}
\def\vphi{\boldsymbol{\phi}}
\def\R{\mathbb{R}}
\def\Regret{\operatorname{Regret}}
\def\Rn{\mathbf{R}_n}
\def\Rem{\operatorname{Rem}}
\def\Yn{\mathbf{Y}_n}
\def\I{\,\mathcal{I}}
\def\ones{\mathbf{1}}
\def\zeros{\mathbf{0}}
\def\B{\mathcal{B}}
\def\argmin{\operatorname{argmin}}
\def\argminb{\operatorname*{argmin}}
\def\argmax{\operatorname{argmax}}
\def\N{\mathcal{N}}
\def\argmaxb{\operatorname*{argmax}}
\def\wcvg{\Rightarrow}
\renewcommand{\liminf}{\varliminf}
\renewcommand{\limsup}{\varlimsup}
\def\op{\operatorname{op}}
\def\X{\mathcal{X}}
\def\Y{\mathcal{Y}}
\def\bs{\mathbf{s}}
\def\jiao{\cap}
\def\bing{\cup}
\def\C{\mathcal{C}}

\newcommand{\samfixed}[1]{}

\def\linfty{l^{\infty}}
\def\Q{\mathcal{Q}}
\def\given{\, | \,}
\def\Given{\, \Big| \,}
\def\GG{\mathbb{G}}
\def\Gn{\mathbb{G}_n}
\def\V{\mathcal{V}}
\def\transpose{\top}
\def\H{\mathcal{H}}
\def\Morth{\mathcal{M}^{\perp}}
\def\opt{\text{opt}}
\def\Span{\text{Span}}
\def\Vn{\hat{\mathcal{V}}_n}
\def\Mn{\hat{M}_N}

% Commands defined by Jin

\newcommand{\sspace}{\mathcal{S}}
\newcommand{\svar}{\mathbf{S}}
\newcommand{\aspace}{\mathcal{A}}
\newcommand{\rfun}{\mathcal{R}}
\newcommand{\rspace}{\mathbb{R}}
\newcommand{\E}{\mathbb{E}}

\usepackage{xr}
\externaldocument{supp}  

%%%%%%%%%%%%%%%%%%%%%%%%%%%%%%%%%%%%%%%%%%%%%%%%%%%%%%%%%%%%%%%%%%%%%%%%%%%%%%

\begin{document}

\def\spacingset#1{\renewcommand{\baselinestretch}%
{#1}\small\normalsize} \spacingset{1}

%%%%%%%%%%%%%%%%%%%%%%%%%%%%%%%%%%%%%%%%%%%%%%%%%%%%%%%%%%%%%%%%%%%%%%%%%%%%%%

\if1\blind
{
  \title{\bf Sequential Knockoffs for Variable Selection in Reinforcement Learning}
  % \author{Tao Ma$^1$
  % %\thanks{
  %   %The authors gratefully acknowledge 
  %   %\textit{please remember to list all relevant funding sources in the unblinded version}
  %   %}
  %   \hspace{.2cm}\\
  %   London School of Economics and Political Science\\
  %   and \\
  %   Author 2 \\
  %   Department of ZZZ, University of WWW}
\author[1]{Tao Ma$^\ast$}
\author[1]{Jin Zhu\thanks{The first two authors contribute to this work equally.}}
\author[2]{Hengrui Cai}
\author[3]{Zhengling Qi}
\author[1]{Yunxiao Chen}
\author[1]{Chengchun Shi}
\author[4]{Eric B. Laber}
\affil[1]{London School of Economics and Political Science}
\affil[2]{University of California, Irvine}
\affil[3]{George Washington University}
\affil[4]{Duke University}
\date{}
  \maketitle
} \fi

\if0\blind
{
  \bigskip
  \bigskip
  \bigskip
  \begin{center}
    {\LARGE\bf Sequential Knockoffs for Variable Selection in Reinforcement Learning}
\end{center}
  \medskip
} \fi

\bigskip

\begin{abstract}

In real-world applications of reinforcement learning,  it is often challenging to obtain a state representation that is parsimonious and satisfies the Markov property without prior knowledge. Consequently, it is common practice to construct a state larger than necessary, e.g., by concatenating measurements over contiguous time points. However, needlessly increasing the dimension of the state may slow learning and obfuscate the learned policy. We introduce the notion of a minimal sufficient state in a Markov decision process (MDP) as the subvector of the original state under which the process remains an MDP and shares the same reward function as the original process. We propose a novel \underline{se}qu\underline{e}ntial \underline{k}nockoffs (SEEK) algorithm that estimates the minimal sufficient state in a system with high-dimensional complex nonlinear dynamics. In large samples, the proposed method achieves selection consistency. As the method is agnostic to the reinforcement learning algorithm being applied, it benefits downstream tasks such as policy learning.  Empirical experiments verify theoretical results and show the proposed approach outperforms several competing methods regarding variable selection accuracy and regret. 
\end{abstract}

\noindent%
{\it Keywords:}  Reinforcement learning, Variable selection, Sequential knockoffs, False discovery rate control, %Selection consistency, 
Power analysis
\vfill

\newpage
\spacingset{1.50} % DON'T change the spacing!
\section{Introduction}
\label{sec: 1}

Interest in reinforcement learning \citep[RL,][]{sutton2018reinforcement} has increased dramatically in recent years due in part to several high-profile successes in games \citep[][]{mnih2015human}, autonomous driving \citep[][]{sallab2017deep}, and precision medicine \citep[][]{tsiatis2019dynamic}.  However, despite theoretical and computational advances, real-world applications of RL remain difficult.  A primary challenge is dealing with high-dimensional state representations. Such representations occur naturally in systems with high-dimensional measurements, like images or audio, but can also occur when the system state is constructed by concatenating a series of measurements over a contiguous block of time.  A high-dimensional state---when a more parsimonious one would suffice---dilutes the efficiency of learning algorithms. %and makes the estimated optimal policy harder to interpret.  
For instance, in policy learning, it has been shown that the performance of the estimated optimal policy deteriorates rapidly with the state dimension \citep{fan2020theoretical,hu2024fast}. Meanwhile, in policy evaluation, the accuracy of the estimated value declines as state dimension grows \citep{chen2022well}\footnote{Refer to Sections \ref{sec:policy-learning} and \ref{sec:policy-evaluation} of the Supplementary Materials for formal statements.}. Thus, methods for removing uninformative or redundant variables from the state are of tremendous practical value.  

This work is partly motivated by applying RL to develop optimal treatment strategies for sepsis patients in the ICU, which concerns determining the appropriate dosage of vasopressors and intravenous fluids. Sepsis, a critical and often fatal condition, occurs when the body's immune response to an infection causes damage to its tissues and organs. Its symptoms can deteriorate quickly, leading to multiple organ failures and a rapid decline in patient health, significantly increasing the risk of mortality. Therefore, prompt and effective treatment is essential for improving patient outcomes and reducing mortality rates.
We consider the Medical Information Mart for Intensive Care III (MIMIC-III) dataset \citep{johnson2016mimic}, which contains 47 sepsis variables, including patients' clinical information and personal demographics. As highlighted in a variable importance ranking report in \citet{komorowski2018artificial}, not all variables significantly influence the optimal policy. Specifically, both RL-based and clinician-prescribed optimal policies highlight the importance of two urine output variables in making dosage decisions for vasopressors and intravenous fluids. To illustrate the usefulness of variable selection in RL, we apply the proposed method (denoted by SEEK) to this dataset alongside several baseline variable selection methods detailed in Section \ref{sec:expermentdesign}. Table~\ref{tab:mimic3} reports the estimated cumulative rewards for the policies derived from these variable selection methods. The results show that utilizing all 47 state variables yields the second lowest estimated cumulative reward. In contrast, the SEEK method identifies three variables, including the two urine output variables, yielding the highest return.

%\begin{figure}[!t]
%\centering
%	\includegraphics[width=0.6\linewidth]{}
%	\vspace*{-10pt}
%	\caption{The $x$-axis is the number of selected variables and the $y$-axis is the estimated cumulative rewards. }\label{fig:mimic3}
%\end{figure}
\begin{table}[!t]
\centering
	\linespread{1.25}\selectfont
	\footnotesize
  \caption{\small The number of selected variables for each variable selection method and the estimated cumulative rewards of the learned policies based on the selected variables. Deep-RL uses all variables to learn the optimal policy, whereas AE uses an estimated state representation based on an auto-encoder.}\label{tab:mimic3}
  \begin{tabular}{c|ccccc|c|c}
    \toprule
	Method & SEEK & Reward-only & One-step & SFS & VS-RF & Deep-RL & AE \\
    \midrule
    Number of selected states    & 3    & 2     & 45    & 3 &  46 & 47 & 47 \\ \hline
	Estimated cumulative reward & 0.31 & -0.01 & -1.96 & -1.26 & -2.16 & -2.34 & -12.01 \\
    \bottomrule
    \end{tabular}
\end{table}

In this paper, we develop SEEK, a general variable selection algorithm for offline RL, which aims to learn an optimal policy using only logged data without any additional online interaction.  Our contributions are as follows: (i)  we define a {\em minimal sufficient state} for an MDP and argue that it is an appropriate target by which to design and evaluate variable selection methods in RL; (ii) we show that na\"ive variable selection methods based on the state or reward alone need not recover the minimal sufficient state; (iii)  we propose a novel sequential knockoffs (SEEK) algorithm that applies with general black-box learning methods, and, under a $\beta$-mixing condition, consistently recovers the minimal sufficient state; 
and (iv) we develop a novel algorithm to estimate the $\beta$-mixing coefficient of an MDP (see Section \ref{sec:bestK} of the Supplementary Materials). The algorithm in (iv) is important in its own right as it applies to many applications beyond RL \citep{mcdonald2015estimating}. %\textcolor{blue}{I don't see a result regarding estimating the beta-mixing coefficient. Where is it?}

\vspace{-0.5cm}

\subsection{Related Work}\label{sec:related-work}
\addtolength{\textheight}{.3in}
Our work uses knockoff variables, also known as pseudo-variables, for variable selection.  Intuitively, knockoff methods augment the observed data with noise variables (knockoffs) and use the rate at which these noise variables are selected by a variable selection algorithm to characterize the algorithm's operating characteristics \citep[][]{wu2007controlling, barber2015controlling}. A key strength of knockoff methods is their generality. Knockoffs can be used to estimate (and subsequently tune) the FDR of general black-box variable selection algorithms \citep[][]{candes2018panning,  lu2018deeppink, romano2020deep, zhu2021deeplink}. Being algorithm agnostic is especially appealing in a vibrant field like RL, where algorithms are emerging and evolving rapidly. Unfortunately, it is not possible to directly export existing knockoff methods to RL because knockoffs have been developed almost exclusively in the regression setting under the assumption of independent observations, which is violated in RL settings.  One exception is a recent proposal for knockoffs in time series data \citep[][]{chi2021high}.  However, as will be shown in Section \ref{sec:minimal}, applying this one-step approach to the states in an MDP need not recover the minimal sufficient state.

In the statistics literature, RL is closely related to a growing line of works on learning optimal dynamic treatment regimes in precision medicine \citep[see, e.g.,][for an overview]{kosorok2019precision}. Several variable selection approaches have been proposed to estimate a parsimonious treatment regime \citep{gunter2011variable,qian2011performance,song2015sparse,fan2016sequential,shi2018high,zhang2018variable,qi2020multi,bian2021variable}. However, these methods are designed for a short time horizon (e.g., 2-5 time points), and do not apply to long or infinite horizons that are common in RL.  
 
Relative to its practical importance, variable selection for RL is under-explored. In the computer science literature,  
\citet{kroon2009automatic} and \citet{guo2017sample} studied variable selection in factored Markov decision processes (MDPs). 
\citet{nguyen2013online} applied penalized logistic regression for learning transition functions in structured MDPs. 
In MDPs without any specific structure, a common approach is penalized estimation.  
This includes temporal-difference (TD) learning with LASSO \citep{tibshirani1996regression} or the Dantzig selector \citep{candes2007dantzig}; see, e.g., \citet{kolter2009regularization,ghavamzadeh2011finite}, and \citet{geist2012dantzig}. In a similar spirit, 
\citet{hao2021sparse} recently proposed to combine LASSO with fitted Q-iteration 
\citep[FQI,][]{ernst2005tree}.
However, these methods are designed for linear function approximation, whereas our goal is to develop variable selection methods 
that do not rely on a specific algorithm or functional form.  
\vspace{-0.1cm}

Our proposal is also related to a line of research on abstract representation learning in the RL literature, where the goal is to construct a state abstraction, defined as a mapping from the original state space to a much smaller abstraction space that preserves many properties of the original MDP \citep[see, e.g.,][for an overview]{jiang2018notes}. In particular, bisimulation corresponds to a class of abstraction methods, where the system preserves the Markov property, and the optimal policy based on the resulting abstract MDP can achieve the same value as that of the original MDP \citep{dean1997model, li2006towards}. However, identifying the ``minimal'' bisimulation is NP-hard \citep{givan2003equivalence}. Existing solutions focus either on building an approximate bisimulation \citep{ferns2011bisimulation, ferns2014bisimulation, zhang2020learning} or learning a Markov state abstraction without the reward signal \citep{allen2021learning}. These methods %share similar ideas with 
are also related to existing dimension reduction and representation learning methods developed in the RL literature \citep[see, e.g.,][]{tangkaratt2016model, wang2017sufficient, uehara2021representation}. 

In contrast to the aforementioned state abstraction and dimension reduction methods, we focus on selecting a subvector of the original state rather than engineering new features.  This approach is advantageous in settings such as mobile health, where an interpretable learned policy is important for generating new clinical insights.  However,
our methods can also be used with basis expansions constructed from the original state, such as splines or tile codings. Another advantage of our approach is its computational efficiency; we show that SEEK is guaranteed
to terminate after no more than $d$ iterations, where $d$ is the dimension of the original state.

\vspace{-0.5cm}
\subsection{Organization of the Paper}
	
In Section \ref{sec:2}, we introduce the Model-X knockoffs for variable selection under a contextual bandit setting \citep{lu2010contextual} and describe the challenges of adapting this framework to the RL setting. In Section \ref{sec:minimal}, we formulate the variable selection problem in RL and introduce the concept of a minimal sufficient state. In Section \ref{sec:seek}, we present the sequential knockoffs procedure for identifying the minimal sufficient state. In Section \ref{sec:theory}, we show that, asymptotically, our method will not select null variables, and its power approaches one. We investigate the empirical performance of our method via extensive simulations and real data analysis in Sections \ref{sec:exp} and \ref{sec:mimic3}.

{\singlespacing
\section{Preliminaries: Contextual Bandits, Variable Selection with Knockoffs and Challenges of Adapting to RL}\label{sec:2}
}

%We briefly review the Model-X knockoffs algorithm \citep{candes2018panning} in the contextual bandit setting, a special case of RL with independent state transitions. 
We denote a contextual bandit model by a triplet $\mathcal{B}= (\sspace, \aspace, \rfun)$ with a compact contextual space $\sspace = \prod_{j=1}^d\sspace_j \subseteq \rspace^d$, a discrete action space $\aspace$ and a reward function $\rfun: \sspace \times \aspace \to \rspace$ that characterizes the conditional mean of the reward $R$ given the contextual information $\mathbf{S} \in \sspace$ and the action $A \in \aspace$. We further assume that the reward is bounded. %i.e., there exists constant $R_{\max}$ such that $|r(s, a)| \le R_{\max}$ for any $(\mathbf{S}, A) \in \sspace\times \aspace$. 

Consider an offline setting with access to a historical dataset $\mathcal{D}$, 
consisting of independent and identically distributed (i.i.d.) copies of contextual-action-reward triplet $(\svar, A, R)$.  %denoted by $\left\lbrace (\svar_i, A_i, R_i)\right\rbrace_{i=1}^n$. 
These triplets are generated as follows: at each time, an agent observes a contextual variable $\svar$ drawn from $\sspace$, chooses an action $A$ according to a behavior policy $\pi_b(\cdot|\svar)$ that corresponds to the conditional probability mass function (pmf) of $A$ given $\svar$, and receives an immediate reward $R$ with mean  $\rfun(\svar, A)$ for the chosen action $A$. The optimal policy is defined as the pmf $\pi^*$ that maximizes the resulting expected reward, expressed as $\sum_a \E [\pi^*(a|\svar) \rfun(\svar,a)]$. With some calculations, it is straightforward to show that $\pi^*$ will assign probability $1$ to the action $a$ that maximizes $\rfun(\svar,a)$ for each $\svar$.
%$\pi^*(a^*|\svar)=\argmax_{a\in \} \rfun(\svar,a)$.
%denoted by $\left\lbrace (\svar_i, A_i, R_i)\right\rbrace_{i=1}^n$. 

%With the collected static dataset $\mathcal{D}$, we would like 
We aim to select a subset of contextual variables that determine the optimal policy $\pi^*$. Given that $\pi^*$ is uniquely determined by the reward function $\rfun$, it suffices to select those that appear in $\rfun$. 
Consequently, we say $\svar_j$ is a null variable if it is conditionally independent of $\rfun(a,\svar)$ for any $a$ given the other variables; a non-null variable is said to be significant.  Let $\mathcal{H}_0 \subseteq \left\lbrace 1,\ldots, d\right\rbrace$ be the indices of the null variables and $\mathcal{I} = \mathcal{H}_0^c$ the indices of significant variables. Given an estimator $\widehat{\mathcal{I}}$  of $\mathcal{I}$ constructed from $\mathcal{D}$ and some pre-specified level $q\in (0,1)$,  the false discovery rate (FDR) and modified FDR (mFDR) associated with $\widehat{\mathcal{I}}$ are given by
$\mathbb{E} \lbrace \# (\widehat{\mathcal{I}} \bigcap \mathcal{H}_0 )/ \max(1, \#\widehat{\mathcal{I}})\rbrace$ 
and $\mathbb{E} \lbrace \# (\widehat{\mathcal{I}} \bigcap \mathcal{H}_0 )/ (1/q+\#(\widehat{\mathcal{I}})) \rbrace$, respectively, where $\#(\cdot)$ denotes the cardinality of a set.  The knockoff approach aims to construct an estimator of $\mathcal{I}$ that ensures the (m)FDR to be below $q$.

A key innovation of the Model-X knockoff method \citep{candes2018panning} lies in the construction of knockoff variables. When adapted to contextual bandits, we define a knockoff vector $\widetilde{\mathbf{S}} = (\widetilde{\mathbf{S}}_1,\ldots,\widetilde{\mathbf{S}}_d )^\top\in\mathbb{R}^d$ which mimics the original $\mathbf{S}$ except that they are conditionally independent of the rewards given the action and thus are recognized as null variables. These knockoffs are constructed so that: (1) $\widetilde{\mathbf{S}}\indep R \given \mathbf{S}, A$ (i.e., conditional independence of the reward); (2) $(\mathbf{S}, \widetilde{\mathbf{S}} )_{\text{swap}(B)} \given A \overset{d}{=} ( \mathbf{S}, \widetilde{\mathbf{S}}) \given A$ for any subset $B\subset \{1,2,\ldots,d\}$ (i.e., exchangeability  in distribution), where $(\mathbf{S}, \widetilde{\mathbf{S}} )_{\text{swap}(B)}$ is the vector formed by switching the $j$th entries of $\mathbf{S}$ and $\widetilde{\mathbf{S}}$ for each $j\in B$. Given $\widetilde{\mathbf{S}}$, we obtain an augmented dataset $\widetilde{\mathcal{D}}$ that contains the original dataset $\mathcal{D}$ and the knockoff contextual variables.

From $\widetilde{\mathcal{D}}$ we compute feature importance statistics $Z_j$ and $\widetilde{Z}_j$ for each variable $\mathbf{S}_j$ and its knockoff $\widetilde{\mathbf{S}}_j$, respectively, and define $W_j = Z_j - \widetilde{Z}_j$ so that a higher value of $W_j$ is associated with stronger evidence that $\mathbf{S}_j$ is significant. Given a target FDR level $q$, the $j$th variable is selected if $W_j$ is no less than some threshold $\tau_q$ (or $\tau_{q+}$ in a knockoffs+ procedure). 

Model-X knockoffs possess several desirable statistical properties.  First, it provides non-asymptotic control of the FDR given the covariate distribution.  Second, it does not depend on any specific model structure between the outcome and covariates; thus, the method applies to nonlinear and high-dimensional settings.  Finally, the power is nearly optimal \citep[][]{fan2019rank, weinstein2020power, wang2020power, ke2020power}.  However, the derivation of these theoretical properties relies critically on the assumption that the observed data are independent -- an assumption violated in RL. Another challenge is that the definition of a significant variable in an MDP is less obvious.  Attempting to mimic the contextual bandit setting, for example, by defining significant variables as those associated with the reward, will fail to capture variables that impact cumulative reward indirectly through state transitions.  Conversely, suppose one defines the ``outcome'' as the reward concatenated with the next state. In that case, the knockoff procedure will incorrectly select those associated with the next state while having no effect on the (present or future) reward. We propose to define variable significance in RL based on the concept of the minimal sufficient state, introduced in the next section.

\vspace{-0.5cm}
\section{Minimal Sufficient State and Model-based Selections}\label{sec:minimal}
% \vspace{-0.2cm}
In this section, we switch to the RL setting where the data, summarized as a sequence of state-action-reward triplets over time $\{(\svar_t,A_t,R_t)\}_{t\ge 0}$, is modeled by a time-homogeneous MDP \citep{puterman2014markov}, denoted by $\mathcal{M} = (\mathcal{S}, \mathcal{A}, \rfun, \mathcal{P}, \gamma )$. Here, $\mathcal{S}$ refers to the same $d$-dimensional compact subspace as in the contextual bandit model described in Section \ref{sec:2}, now termed the state space to reflect the sequential nature of the data. The action space $\aspace$ and reward function $\rfun$ remain unchanged from those described in the bandit model. Additionally, $\mathcal{P}:\mathcal{S}  \times \mathcal{S} \times \mathcal{A} \rightarrow \mathbb{R}^+$ denotes the state transition function, representing the probability density function (pdf) or pmf of $\svar_{t+1}$ given $\svar_t$ and $A_t$. Lastly, $\gamma\in[0,1)$ denotes the discount factor, which balances the trade-off between immediate and long-term rewards. Specifically, for a given policy $\pi$, we measure its return by the $\gamma$-discounted expect cumulative reward $\mathbb{E}^\pi(\sum_{t=0}^{\infty} \gamma^t R_t)$ where $\mathbb{E}^\pi$ denotes the expectation assuming that the actions are chosen according to $\pi$. Under the MDP assumption, the dynamic system is Markovian such that both the next state $\mathbf{S}_{t+1}$ and the conditional mean function of the immediate reward $R_t$ are independent of the data history given the current state and action, $\mathbf{S}_t$ and $A_t$. Throughout this paper, we use $(\mathbf{S}, A, R, \mathbf{S}^\prime)$ to refer to a generic state-action-reward-next-state tuple. For any  $\mathbf{\nu} \in \mathbb{R}^d$ and index set $G\subseteq \{1, 2, \ldots, d\}$, let $\mathbf{\nu}_{G} = \{\nu_{g} \, \mid \, g \in G \}$; thus, $\mathbf{S}_{t,G}$ is the subvector of state $\mathbf{S}_t$ indexed by components in $G$. Let $G^c = \lbrace 1,\ldots, d\rbrace \setminus G$ be the complement of $G$. 
\begin{definition}[Sufficient State]\label{def:sufficient-state}
   We say $\mathbf{S}_{G}$ is a sufficient state in an MDP if, for all $t \ge 0$, (i) $\mathbb{E}(R_{t} \given \mathbf{S}_{t}, A_t) = \mathbb{E}(R_{t} \given \mathbf{S}_{t, G}, A_t)$ almost surely and (ii) $\mathbf{S}_{t+1,G} \indep \mathbf{S}_{t, G^c} \given (\mathbf{S}_{t, G}, A_t)$ hold. 
\end{definition}
This definition of sufficiency is weaker than the condition that $R_t$ and $\mathbf{S}_{t+1,G}$ are \textit{jointly} conditionally independent of $\mathbf{S}_{t, G^c}$. In addition, the first condition is weaker than requiring $R_t \indep \mathbf{S}_{t, G^c} \given (\mathbf{S}_{t, G}, A_t)$. Essentially, it identifies significant variables that influence the conditional mean of reward rather than other aspects of the conditional distribution. Furthermore, a sufficient state always exists as $\mathbf{S}_t$ itself is sufficient. The following results justify the use of the term {\em sufficient} in Definition \ref{def:sufficient-state}.
\begin{proposition}\label{prop1}
	If $\mathbf{S}_{G}$ is a sufficient state, then there exists an optimal policy depending only on  $\mathbf{S}_G$ that maximizes the $\gamma$-discounted expected cumulative reward.
\end{proposition}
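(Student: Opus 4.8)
The plan is to show that the optimal value function of the original MDP depends on the state only through its $G$-coordinates, and that the induced greedy policy is therefore a function of $\mathbf{S}_G$ alone while remaining optimal. The whole argument runs through the Bellman optimality operator, which for the original MDP acts on bounded functions $V:\mathcal{S}\to\mathbb{R}$ by $(\mathcal{T}V)(\mathbf{s}) = \max_{a\in\aspace}\{\rfun(\mathbf{s},a) + \gamma\,\mathbb{E}[V(\mathbf{S}')\mid \mathbf{S}=\mathbf{s},A=a]\}$. Since $\gamma<1$ and the reward is bounded, $\mathcal{T}$ is a contraction on the space of bounded functions under the sup-norm, so value iteration $V_{k+1}=\mathcal{T}V_k$ converges to the unique fixed point $V^\ast$, the optimal value function, and the greedy policy with respect to $V^\ast$ is optimal.

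The key step is an invariance lemma: if $V(\mathbf{s})=v(\mathbf{s}_G)$ depends on $\mathbf{s}$ only through $\mathbf{s}_G$, then so does $\mathcal{T}V$. This is exactly where the two parts of Definition~\ref{def:sufficient-state} enter. Condition~(i) gives $\rfun(\mathbf{s},a)=\mathbb{E}(R\mid \mathbf{S}=\mathbf{s},A=a)=\mathbb{E}(R\mid \mathbf{S}_G=\mathbf{s}_G,A=a)$, which I write $\rfun_G(\mathbf{s}_G,a)$, a function of $\mathbf{s}_G$ alone. Condition~(ii), $\mathbf{S}'_G\indep \mathbf{S}_{G^c}\mid(\mathbf{S}_G,A)$, gives $\mathbb{E}[v(\mathbf{S}'_G)\mid \mathbf{S}=\mathbf{s},A=a]=\mathbb{E}[v(\mathbf{S}'_G)\mid \mathbf{S}_G=\mathbf{s}_G,A=a]$, again a function of $\mathbf{s}_G$ alone. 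Adding these and maximizing over $a$ shows $(\mathcal{T}V)(\mathbf{s})$ depends on $\mathbf{s}$ only through $\mathbf{s}_G$.

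Starting value iteration from $V_0\equiv 0$, which trivially depends only on $\mathbf{s}_G$, the lemma and induction show every iterate $V_k$ has this property; passing to the sup-norm limit, I conclude $V^\ast(\mathbf{s})=v^\ast(\mathbf{s}_G)$ for some $v^\ast$. Substituting this back into the greedy rule, the maximizer $\pi^\ast(\mathbf{s})\in\argmax_{a}\{\rfun_G(\mathbf{s}_G,a)+\gamma\,\mathbb{E}[v^\ast(\mathbf{S}'_G)\mid \mathbf{S}_G=\mathbf{s}_G,A=a]\}$ depends on $\mathbf{s}$ only through $\mathbf{s}_G$; since it is greedy with respect to $V^\ast$, it is an optimal policy. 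This is the desired optimal policy depending only on $\mathbf{S}_G$.

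I expect the main obstacle to be the careful conditional-independence bookkeeping in the invariance lemma—specifically, justifying the replacement of the conditioning event $\{\mathbf{S}=\mathbf{s}\}$ by $\{\mathbf{S}_G=\mathbf{s}_G\}$ in the one-step expectation, which requires invoking condition~(ii) at the level of the transition kernel rather than as a scalar identity, and handling the almost-sure qualifier in condition~(i) so that the pointwise identities hold on a set of full measure throughout the iteration. The contraction property, convergence of value iteration, and optimality of the greedy policy are standard consequences of $\gamma<1$ with bounded rewards, and measurability of the $\argmax$ follows from the usual regularity of the compact-state, finite-action setting.
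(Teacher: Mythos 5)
Your proof is correct, but it takes a genuinely different route from the paper's. The paper argues via policy iteration: it first shows that for \emph{any} policy $\pi$ depending on the state only through $\mathbf{S}_G$, the Q-function $Q^{\pi}$ is itself a function of $\mathbf{S}_G$, which it establishes by proving $\E^{\pi}(R_t\mid\mathbf{S}_0,A_0)$ depends on $\mathbf{S}_0$ only through $\mathbf{S}_{0,G}$ for every $t$; the core of that argument is a multi-step induction showing $\mathbf{S}_{t,G}\indep\mathbf{S}_0\mid(\mathbf{S}_{0,G},A_0)$, carried out with the contraction and weak-union rules of conditional independence, after which each policy-improvement step preserves $\mathbf{S}_G$-dependence and the limit of the greedy policies is optimal. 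You instead run value iteration on the optimal value function and prove a one-step invariance lemma for the Bellman optimality operator $\mathcal{T}$, using condition (i) for the reward term and condition (ii) for the continuation term, then pass to the sup-norm limit and take the greedy policy. Your route avoids the multi-step conditional-independence bookkeeping entirely — the only probabilistic input is the single-transition identity $\E[v(\mathbf{S}'_G)\mid\mathbf{S}=\mathbf{s},A=a]=\E[v(\mathbf{S}'_G)\mid\mathbf{S}_G=\mathbf{s}_G,A=a]$, which is an immediate restatement of condition (ii) — and it yields as a by-product that $V^{\ast}$ and $Q^{\ast}$ are functions of $\mathbf{S}_G$ alone. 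The paper's route delivers the stronger intermediate fact that $Q^{\pi}$ is $\mathbf{S}_G$-measurable for every $\mathbf{S}_G$-dependent $\pi$ (not just the optimal one), which the authors reuse elsewhere when comparing against model-free selection targets. Both arguments must, as you note, track the almost-sure qualifiers in Definition 1 across iterations; this accumulates only countably many null sets and is harmless in either version.
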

\begin{proposition}\label{prop2}
	If $\mathbf{S}_{G}$ is a sufficient state, then the process $\{\left(\mathbf{S}_{t, G}, A_t, R_{t}\right)\}_{t\geq0}$ forms a time-homogeneous MDP. 
    %satisfies:
	%\begin{enumerate}
	%	\item{Conditional mean independence assumption of rewards}, i.e., $\mathbb{E}(R_t \given \{\left(\mathbf{S}_{t}, A_t, R_{t}\right)\}_{t\geq0}) = \mathbb{E}(R_t \given \mathbf{S}_{t, G}, A_t)$.
	%	\item{Markov property of transitions}, i.e, $\mathbb{P}(\mathbf{S}_{t+1, G} \given \{\left(\mathbf{S}_{t, G}, A_t, R_{t}\right)\}_{t\geq0}) = \mathbb{P}(\mathbf{S}_{t+1, G} \given \mathbf{S}_{t, G}, A_t)$, for any $t \geq 0$.
	%	\item {Time homogeneity}, i.e., $\mathbb{P}(\mathbf{S}_{t+1, G} = \mathbf{s}'_{G} \given \mathbf{S}_{t, G} = \mathbf{s}_{G}, A_t = a) = p(\mathbf{s}'_{G} \given \mathbf{s}_{G}, a)$ and $\mathbb{E}(R_t \given \mathbf{S}_{t, G} = \mathbf{s}_{G}, A_t = a) = r(\mathbf{s}_{G}, a)$.
	%\end{enumerate}
\end{proposition}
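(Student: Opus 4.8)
The plan is to verify the two defining properties of a time-homogeneous MDP for the reduced process $\{(\mathbf{S}_{t,G}, A_t, R_t)\}_{t \geq 0}$: (a) a Markovian, time-homogeneous transition kernel governing $\mathbf{S}_{t+1,G}$, and (b) a time-homogeneous conditional-mean reward function. The central device will be the tower property of conditional expectation, used to transfer statements that hold for the full process to the coarser filtration generated by the selected coordinates.

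First I would fix notation for two nested filtrations: the full history $\mathcal{F}_t = \sigma(\mathbf{S}_0, A_0, R_0, \ldots, \mathbf{S}_{t-1}, A_{t-1}, R_{t-1}, \mathbf{S}_t, A_t)$ and the reduced history $\mathcal{G}_t = \sigma(\mathbf{S}_{0,G}, A_0, R_0, \ldots, \mathbf{S}_{t-1,G}, A_{t-1}, R_{t-1}, \mathbf{S}_{t,G}, A_t)$, noting that $\mathcal{G}_t \subseteq \mathcal{F}_t$. The reduced MDP I aim to exhibit is the tuple $(\mathcal{S}_G, \aspace, \tilde{\rfun}, \tilde{\mathcal{P}}, \gamma)$, with $\tilde{\rfun}$ and $\tilde{\mathcal{P}}$ to be identified below.

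For the transition, take any bounded measurable $g$ and evaluate $\mathbb{E}[g(\mathbf{S}_{t+1,G}) \mid \mathcal{F}_t]$. By the Markov property of the original MDP this equals $\mathbb{E}[g(\mathbf{S}_{t+1,G}) \mid \mathbf{S}_t, A_t]$, and by condition (ii) of Definition \ref{def:sufficient-state} (i.e.\ $\mathbf{S}_{t+1,G} \indep \mathbf{S}_{t,G^c} \mid (\mathbf{S}_{t,G}, A_t)$) it further reduces to $\mathbb{E}[g(\mathbf{S}_{t+1,G}) \mid \mathbf{S}_{t,G}, A_t]$, a function of $(\mathbf{S}_{t,G}, A_t)$ alone and hence $\mathcal{G}_t$-measurable. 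Applying the tower property $\mathbb{E}[\,\cdot \mid \mathcal{G}_t] = \mathbb{E}[\mathbb{E}[\,\cdot \mid \mathcal{F}_t] \mid \mathcal{G}_t]$ then yields $\mathbb{E}[g(\mathbf{S}_{t+1,G}) \mid \mathcal{G}_t] = \mathbb{E}[g(\mathbf{S}_{t+1,G}) \mid \mathbf{S}_{t,G}, A_t]$, which defines the reduced transition kernel $\tilde{\mathcal{P}}(\cdot \mid \mathbf{S}_{t,G}, A_t)$. An identical computation using condition (i) gives $\mathbb{E}[R_t \mid \mathcal{G}_t] = \mathbb{E}[R_t \mid \mathbf{S}_{t,G}, A_t] =: \tilde{\rfun}(\mathbf{S}_{t,G}, A_t)$. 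Time-homogeneity of both $\tilde{\mathcal{P}}$ and $\tilde{\rfun}$ is inherited directly from that of the original MDP, since the conditional laws $\mathbb{E}[\,\cdot \mid \mathbf{S}_t = s, A_t = a]$ do not depend on $t$.

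I expect the main obstacle to be conceptual rather than computational: conditions (i) and (ii) are stated with respect to conditioning on $(\mathbf{S}_{t,G}, A_t)$, whereas the MDP property must be established with respect to the entire reduced past $\mathcal{G}_t$. The tower-property step is exactly what bridges this gap, and it works only because the full-history conditional expectation collapses to a $\mathcal{G}_t$-measurable function of $(\mathbf{S}_{t,G}, A_t)$; this is the crux and should be stated carefully. It is also worth emphasizing that one needs only the marginal one-step law of $\mathbf{S}_{t+1,G}$ and the scalar mean of $R_t$ separately, never their joint conditional law, which is precisely why the weaker Definition \ref{def:sufficient-state} (rather than joint conditional independence) suffices. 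A minor technical point to address in passing is the existence of regular conditional distributions, which is immediate here since $\mathcal{S}$ is a compact subset of $\mathbb{R}^d$.
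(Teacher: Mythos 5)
Your proposal is correct and establishes exactly what the paper's proof establishes: the reduced transition law and reward mean depend on the reduced history only through $(\mathbf{S}_{t,G},A_t)$, with time-homogeneity inherited from the original MDP. The only difference is one of formalism rather than substance. The paper works at the level of conditional independence statements and chains the graphoid axioms (the decomposition rule to drop $\mathbf{S}_{t,G^c}$ from the conditioned-upon history, then the contraction rule to combine the full Markov property with condition (ii) of Definition \ref{def:sufficient-state}, then decomposition again), whereas you work with conditional expectations of bounded test functions and use the tower property over the nested filtrations $\mathcal{G}_t \subseteq \mathcal{F}_t$. These two devices do identical work: your observation that $\mathbb{E}[g(\mathbf{S}_{t+1,G}) \mid \mathcal{F}_t]$ collapses to a $\mathcal{G}_t$-measurable function of $(\mathbf{S}_{t,G}, A_t)$ is precisely the content of the paper's contraction step. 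Your version is slightly more self-contained (it avoids citing the graphoid calculus) and makes the measurability crux explicit; the paper's version delivers the Markov property directly as a distributional conditional-independence statement rather than through the family of test functions. Both correctly exploit that only the marginal law of $\mathbf{S}_{t+1,G}$ and the scalar mean of $R_t$ are needed, so the weak form of sufficiency in Definition \ref{def:sufficient-state} is enough.
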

Proposition \ref{prop1} allows us to focus on policies that are functions of sufficient states only. Most existing RL algorithms require the data-generating process to follow an MDP model. Proposition \ref{prop2} implies that once a sufficient state is identified, these algorithms can be directly applied 
to the reduced process $\{\left(\mathbf{S}_{t, G}, A_t, R_{t}\right)\}_{t\geq0}$. %{\color{red}Moreover, with Proposition~\ref{prop2}, there exists an optimal stationary policy under the reduced MDP \citet{puterman2014markov}, which is identical to the optimal policy in the original MDP.} 
As discussed in the introduction, a low-dimensional sufficient state can enhance the performance of policy optimization and reduce computational costs. Therefore, we aim to 
identify a \textit{minimal sufficient state}, defined as follows.

\begin{definition}[Minimal Sufficient State]
	We say $\mathbf{S}_G$ is a minimal sufficient state if it is the sufficient state with the minimal dimension among all of the sufficient states.
	%$\bar{G} = \cap_{G \in \mathbb{G}} G$, where $\mathbb{G}$ denotes the set of all sufficient states.
\end{definition}

The proposed minimal sufficient state is model-based, defined by the reward and state transition functions (see Definition \ref{def:sufficient-state}), and tailored for policy learning, as reflected in Proposition \ref{prop1}. To the best of our knowledge, it has not yet been introduced in the literature. Specifically, existing variable selection approaches are typically model-free, focusing on identifying important variables within the Q- or value function \citep[see e.g.,][]{hao2021sparse}. Moreover, many of these approaches are designed primarily for policy evaluation rather than policy learning \citep{kolter2009regularization,ghavamzadeh2011finite,geist2012dantzig}. We discuss this further in Section \ref{sec:modelbasedmodelfree} of the Supplementary Materials. Below, we show that the minimal sufficient state is well-defined and unique under mild conditions.
\begin{proposition}\label{prop3}
	The minimal sufficient state always exists. Furthermore, if the 
	transition kernel is strictly positive, the minimal sufficient state is unique. 
\end{proposition}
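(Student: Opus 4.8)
The plan is to prove the two assertions separately: existence is a finiteness argument that needs no positivity, while uniqueness reduces to showing that the family of sufficient states is closed under intersection, and this is exactly where strict positivity of $\mathcal{P}$ is used. Let $\mathcal{G} = \{G \subseteq \{1,\ldots,d\} : \mathbf{S}_G \text{ is sufficient}\}$. This family is nonempty because $G = \{1,\ldots,d\}$ makes both conditions of Definition \ref{def:sufficient-state} hold trivially, and it is a finite collection of subsets, so $\min_{G \in \mathcal{G}} \# G$ is attained; any minimizer is a minimal sufficient state. This settles existence.

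For uniqueness the key lemma I would establish is: if the transition kernel is strictly positive and $G_1, G_2 \in \mathcal{G}$, then $G_1 \cap G_2 \in \mathcal{G}$. Granting this, set $G^{\ast} = \bigcap_{G \in \mathcal{G}} G$. Finite intersection closure gives $G^{\ast} \in \mathcal{G}$, and $G^{\ast} \subseteq G$ for every sufficient $G$, so $G^{\ast}$ has minimal cardinality. Any other minimal sufficient state $G$ satisfies both $G^{\ast} \subseteq G$ and $\# G = \# G^{\ast}$, forcing $G = G^{\ast}$; hence the minimal sufficient state is unique. The whole argument thus rests on the intersection-closure lemma, which I would derive from a single glueing principle. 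Partition the coordinates into the disjoint pieces $G_1 \cap G_2$, $G_1 \setminus G_2$, $G_2 \setminus G_1$, and $(G_1 \cup G_2)^c$. Strict positivity of $\mathcal{P}(\cdot \mid \mathbf{s}, a)$ forces $\mathbf{S}_t$ (for $t \ge 1$) to admit a strictly positive density on $\mathcal{S} = \prod_j \mathcal{S}_j$, so almost-everywhere statements under the law of $\mathbf{S}_t$ agree with those under the product Lebesgue measure. I would then show that whenever a measurable map $h(\mathbf{s}, a)$ depends on $\mathbf{s}$ only through $\mathbf{s}_{G_1}$ (a.e.) and only through $\mathbf{s}_{G_2}$ (a.e.), a Fubini argument forces dependence on $\mathbf{s}$ only through $\mathbf{s}_{G_1 \cap G_2}$: ``no dependence on $\mathbf{s}_{(G_1 \setminus G_2) \cup (G_1 \cup G_2)^c}$'' is really ``no dependence on $\mathbf{s}_{G_2^c}$,'' and combining with the symmetric statement, after fixing $\mathbf{s}_{G_1 \cap G_2}$, yields joint constancy in the remaining coordinates. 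Because $h$ here is either the reward function or a transition density, both time-homogeneous, the functional identity obtained for $t \ge 1$ propagates to all $t \ge 0$.

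I would then apply this glueing principle twice. Taking $h(\mathbf{s}, a) = \mathbb{E}(R_t \mid \mathbf{S}_t = \mathbf{s}, A_t = a)$ and using condition (i) for $G_1$ and $G_2$ gives condition (i) for $G_1 \cap G_2$. For condition (ii), I would first note that $\mathbf{S}_{t+1, G_i} \indep \mathbf{S}_{t, G_i^c} \mid (\mathbf{S}_{t, G_i}, A_t)$ is equivalent to the marginal transition density of the $G_i$-coordinates of $\mathbf{S}_{t+1}$ depending on $\mathbf{s}_t$ only through $\mathbf{s}_{t, G_i}$. Since $\mathbf{S}_{t+1, G_1 \cap G_2}$ is a subvector of $\mathbf{S}_{t+1, G_i}$, marginalizing shows the transition density of the $(G_1 \cap G_2)$-coordinates depends on $\mathbf{s}_t$ only through $\mathbf{s}_{t, G_1}$, and likewise only through $\mathbf{s}_{t, G_2}$; applying the glueing principle for each fixed value of the next-state coordinates collapses this to dependence only through $\mathbf{s}_{t, G_1 \cap G_2}$, which is precisely condition (ii) for $G_1 \cap G_2$.

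The main obstacle I anticipate is the measure-theoretic bookkeeping in the glueing step. The sufficiency conditions hold only almost surely, so I must use strict positivity to convert a.e. equalities under the law of $\mathbf{S}_t$ into genuine functional independence from coordinates, and then invoke Fubini carefully to merge the two directions. Without the full-support/positive-density hypothesis this step fails, and indeed it must: when two coordinates are deterministically linked, intersection closure breaks and there can be several distinct minimal sufficient states, showing that strict positivity is genuinely needed for uniqueness rather than a mere convenience.
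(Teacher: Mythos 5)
Your proof is correct, and while the existence half is identical to the paper's (a nonempty finite family of sufficient index sets has a cardinality minimizer), the uniqueness half takes a genuinely different route. The paper proves uniqueness jointly with Proposition \ref{prop4}: it runs the iterative construction ($G_R$, then $G_1, G_2, \ldots$), and for each layer invokes the intersection rule of conditional independence --- valid because strict positivity of the kernel makes the relevant joint laws positive --- to show that any index set satisfying the corresponding conditional-independence statement must contain that layer; hence $G_{\text{iter}}$ is contained in every sufficient state and is the unique minimizer. You instead prove that the family of sufficient states is closed under pairwise intersection and take the global intersection. The technical engine is the same in both arguments: your Fubini ``glueing principle'' for a function that factors a.e. through $\mathbf{s}_{G_1}$ and a.e. through $\mathbf{s}_{G_2}$ is precisely the intersection property of conditional independence, and positivity enters at exactly the same point. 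What the two routes buy is different, though. Yours is self-contained for Proposition \ref{prop3} --- it needs none of the iterative-selection machinery --- and delivers the stronger fact that the minimal sufficient state sits inside \emph{every} sufficient state as an immediate corollary of intersection-closure; the paper's route does double duty by simultaneously certifying that the iterative algorithm recovers the minimal sufficient state. One shared wrinkle: the positive-density argument is only available for $t \ge 1$ (the initial distribution need not have full support), and both proofs cover $t = 0$ via time-homogeneity of the reward and transition functions; your explicit remark about propagating the functional identity is the right fix. Your closing observation that intersection-closure genuinely fails without positivity also matches the paper's own caveat that the selected subset may then be non-unique.
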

Our goal is to identify the minimal sufficient state using batch data. To provide insights into this problem and motivate the proposed approach, we first discuss two extensions of supervised variable selection methods to MDPs that are not guaranteed to recover the minimal sufficient state. 

\textbf{Reward-only approach}: One approach is to  
apply the variable selection methods for the contextual bandit model described in Section \ref{sec:2}  
to the MDP setting. Specifically, we treat the state and action as the covariates and the reward as the outcome, and then apply a regression-based variable selection method as if the data were comprised of i.i.d. input (state, action) and output (reward) pairs. This approach will fail to identify state variables that do not affect the immediate reward but rather 
affect long-term rewards through their impact on the next state. Thus, this approach can have a high false negative rate and, thus, need not be consistent for selecting a sufficient state. Consequently, a process indexed by a state constructed using this approach may not satisfy the Markov property.  
\begin{comment}
	This na\"ive method directly searches the significant state variables that affect the immediate reward \textit{only}, which can be considered a myopic approach. This method will lead to a high false negative rate due to missing important variables in the transition dynamics. Moreover, %as shown by the following proposition, as we will show later, one cannot guarantee the selected variables by such a reward-only method satisfy the Markovian assumption, which makes most existing RL algorithms fail.
\end{comment}

\textbf{One-step approach}: To capture components of the state associated with both short- and long-term impacts on the reward, one can treat the reward and the next state jointly as a multivariate outcome and the current state-action pair as the input. With this setup, one can apply variable selection techniques for multiple-outcome regression. This approach was considered by \cite{tangkaratt2016model} in the context of dimension reduction.  While this approach can consistently remove pure noise variables, it can fail to remove time-dependent variables that do not affect cumulative reward. Consequently, as shown below, it will identify a sufficient state but need not identify a minimally sufficient one. 

\textbf{Proposed iterative approach}: The one-step approach fails to identify the minimal sufficient state because the response includes the entire next state vector (including those unrelated to the cumulative reward). Conceptually, one could avoid this by taking every possible subset $G$ of $\left \lbrace 1,\ldots, d \right\rbrace$ and testing the sufficiency condition in Definition \ref{def:sufficient-state}.
% $R_t \indep \mathbf{S}_{t, G^c} |(\mathbf{S}_{t,G}, A_t)$ and
% $\mathbf{S}_{t+1,G} \indep \mathbf{S}_{t,G^c} | 
% (\mathbf{S}_{t,G}, A_t)$ for
% all $t \ge 0$.  
However, such an approach is not scalable as it requires examining $2^d$ subsets and conducting a multivariate conditional independence test for each subset. As a result, we propose a sequential knockoff procedure that is (asymptotically) equivalent to such a procedure while requiring at most $d$ iterations to converge.  At a high level, the algorithm proceeds as follows. At the first iteration, a variable selection algorithm is applied to identify $\widehat G_1$, the significant state variables for predicting the reward function. The second iteration uses the selection algorithm to identify $\widehat G_2$, the significant state variables for predicting $\mathbf{S}_{t+1,\widehat G_1}$. Then recursively for $j\geq 2$, we identify the set of variables $\widehat G_j$ which predict $\mathbf{S}_{t+1, \widehat G_{j-1}}$. The algorithm terminates when $\widehat G_j = \widehat G_{j+1}$. As the sets $\widehat G_{j}$ are monotonically non-decreasing, the algorithm terminates in no more than $d$ steps.

\textbf{A numerical example}: We conduct a simulation example to illustrate the advantage of the iterative selection method. In this example, the dimension of the state is $20$, and the index set of the minimal sufficient state is $G_{\text{M}}= \{1,2\}$.
%\textcolor{red}{why is there a subscript $M$?} 
The generative model is such that $S_{t, 1}$ directly influences $R_t$, whereas $S_{t, 2}$  affects $S_{t+1,1}$ and hence has an indirect effect on the reward.
%only involves in the transition dynamics of $S_{t, 1}$, therefore only influencing the future rewards. 
The remaining state variables can be divided into two groups. The first
group, $G_{\text{AR}}=\{3,\ldots,11\}$, indexes state variables whose transitions are independent and follow a first-order autoregressive (AR(1)) model, i.e., \useshortskip
\begin{align*}
	\mathbb{P}(S_{t+1,j}\le s|\mathbf{S}_t,A_t) = \mathbb{P}(S_{t+1,j}\le s|S_{t,j}), \qquad \textrm{for all}~j= 3,\ldots,11\,\,\hbox{and}\,\, s\in \mathbb{R}.
\end{align*}\vspace{-4em}

\noindent The second group $G_{\text{WN}}=\{12,\ldots,20\}$ indexes state variables which are i.i.d. white noise across all decision points.  
We conduct a numerical study using the above three approaches and summarize the results in Figure \ref{fig:toy}  and Table \ref{tab:toy} under a fixed target FDR of 0.3. These results are obtained by aggregating over 20 independent simulations. The implementation details of these methods are presented in Section \ref{sec:seek}. It can be seen from Table \ref{tab:toy} that
%As we will see later, 
the reward-only method fails to select $S_{t, 2}$, as expected. Furthermore, the one-step method selects all variables in $G_{\text{AR}}$ because they are time-dependent and contribute to the state transition function. In contrast, the iterative procedure (based on the proposed algorithm) avoids selecting redundant variables in $G_{\text{AR}} \cup G_{\text{WN}}$ and thus is more appealing. It can be seen from Figure \ref{fig:toy} that the proposed algorithm controls the FDR and has a much larger area under the receiver operating characteristic curve among the three methods.

%\begin{figure}[!t]

\begin{figure}
%\begin{minipage}{0.67\linewidth}
	\centering
	\begin{subfigure}{0.45\linewidth}
		\centering
		\includegraphics[width=1\linewidth, height=4.5cm]{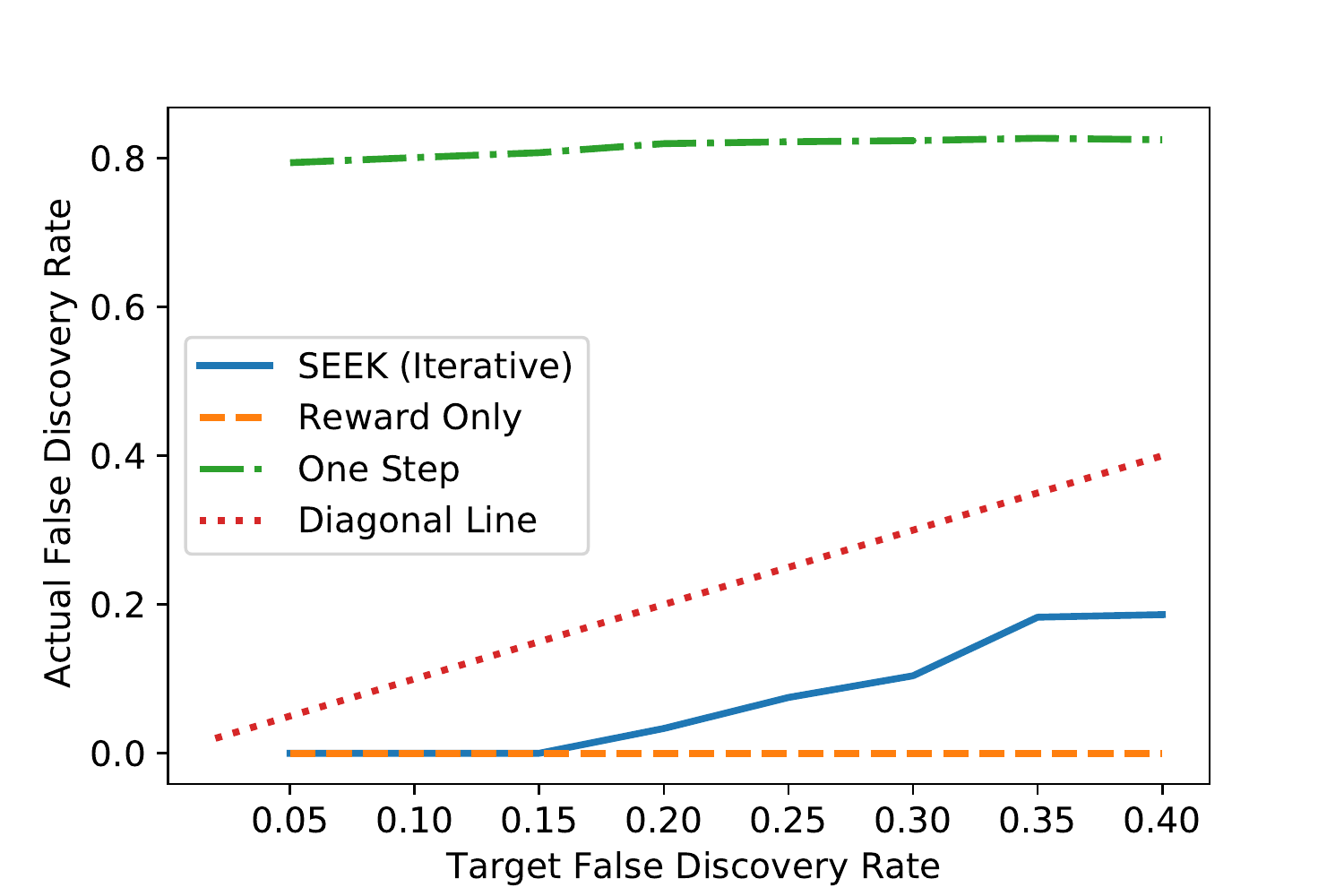} 
	\end{subfigure}%
	\begin{subfigure}{0.45\linewidth}
		\centering
		\includegraphics[width=1\linewidth, height=4.5cm]{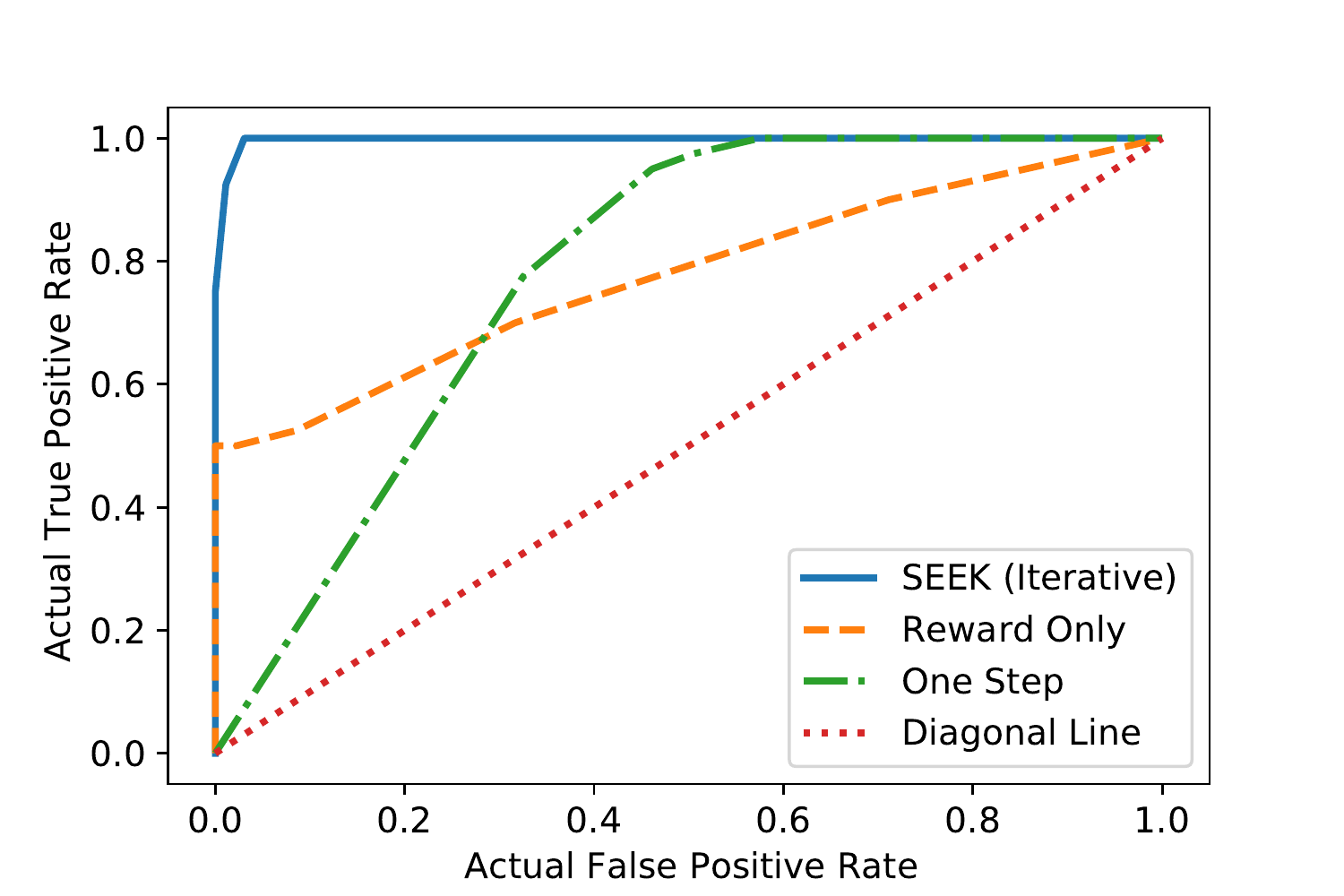} 
	\end{subfigure}%
	\caption{\small Receiver operating characteristic (ROC) curves for different methods in the toy example.}% All the results are aggregated over 20 runs.}
\label{fig:toy}
%\end{minipage}
\end{figure}

% \vspace{-0.3cm}
%\end{figure}

% \begin{table}
% \linespread{1.25}\selectfont
% \caption{\small Most frequently selected states by different methods in the toy example.}\label{tab:toy}
% \centering
% \footnotesize
% \begin{tabular}{ll}
% 	\toprule  
% 	Method     & Selected  States   \\
% 	\midrule
% 	SEEK (Iterative) & [1, 2]      \\
% 	Reward-only Method     & [1]      \\
% 	One-step Method     & [1, 2, 3, 4, 5, 6, 7, 8, 9, 10, 11]  \\
% 	\bottomrule
% \end{tabular}
% \end{table}
\begin{table}
	\linespread{1.25}\selectfont
	\caption{\small Most frequently selected states by different methods in the toy example.}\label{tab:toy}
	\centering
	\footnotesize
	\begin{tabular}{l|ccc}
		\toprule  
		Method     & SEEK & Reward-only & One-step 
		\\
		\midrule
		Selected  States & \{1, 2\} & \{1\} & \{1, 2, 3, 4, 5, 6, 7, 8, 9, 10, 11\} \\
		\bottomrule
	\end{tabular}
\end{table}

%\QZL{will add a description}.

The following proposition summarizes the properties of these three approaches at the population level, i.e., assuming infinitely many observations.

\begin{proposition}\label{prop4}
Assume that one applies a 
selection-consistent algorithm for the above three approaches. Then, (i) there exists an MDP such that the state selected by the reward-only method is not sufficient, and (ii) the state selected by the one-step method is sufficient but not minimally sufficient.  In addition, the iterative approach
can correctly recover the minimal sufficient state for any MDP provided the
transition kernel is strictly positive.   
\end{proposition}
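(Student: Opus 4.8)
The plan is to argue entirely at the population level, identifying each of the three procedures with the Markov blanket (the minimal set of relevant coordinates) that a selection-consistent algorithm recovers, and then to manipulate conditional independences using the graphoid properties (decomposition, weak union, and, under strict positivity, intersection). Throughout, write $G_M$ for the minimal sufficient state, which is well defined and unique by Proposition \ref{prop3}, and recall that a selection-consistent algorithm applied with response $Y$ and candidate predictors $\mathbf{S}_t$ (conditioning on $A_t$) returns exactly those coordinates $k$ for which $S_{t,k}$ is not conditionally independent of $Y$ given $(\mathbf{S}_{t,-k},A_t)$. Parts (i) and (ii) are established by construction and direct verification; part (iii) is the substantive claim and rests on a monotone fixed-point argument combined with an induction showing the iterates never leave $G_M$.

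\textbf{Parts (i) and (ii).} For (i) I would take the MDP of the numerical example, in which $S_{t,1}$ drives the reward mean while $S_{t,2}$ enters only through the transition of $S_{t+1,1}$. The reward-only procedure then selects $G=\{1\}$, and I verify that condition (ii) of Definition \ref{def:sufficient-state} fails: since $S_{t,2}$ affects $S_{t+1,1}$, $S_{t+1,1}$ is not conditionally independent of $S_{t,2}$ given $(S_{t,1},A_t)$, so $\{1\}$ is not sufficient. For (ii), let $B$ be the one-step selection, i.e. the Markov blanket of the joint response $(R_t,\mathbf{S}_{t+1})$. Under strict positivity this blanket satisfies the joint statement $(R_t,\mathbf{S}_{t+1})\indep \mathbf{S}_{t,B^c}\mid(\mathbf{S}_{t,B},A_t)$; applying decomposition to the two marginals gives $\mathbb{E}(R_t\mid\mathbf{S}_t,A_t)=\mathbb{E}(R_t\mid\mathbf{S}_{t,B},A_t)$ and $\mathbf{S}_{t+1,B}\indep\mathbf{S}_{t,B^c}\mid(\mathbf{S}_{t,B},A_t)$, which are exactly the two sufficiency conditions, so $B$ is sufficient. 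Non-minimality then follows from the same example (or any MDP with a self-transitioning nuisance coordinate): an AR(1) coordinate $S_{t,j}$ predicts its own next value $S_{t+1,j}$ and is therefore retained in $B$, even though it is irrelevant to cumulative reward and hence outside $G_M$.

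\textbf{Part (iii).} Consider the population iteration with $G_1$ the coordinates relevant to $R_t$ and, for $j\ge 2$, $G_j$ equal to $G_{j-1}$ together with the coordinates relevant to $\mathbf{S}_{t+1,G_{j-1}}$; this sequence is non-decreasing and bounded by $\{1,\dots,d\}$, so it stabilizes at a fixed point $G^\ast$ in at most $d$ steps. The fixed-point relation shows $G^\ast$ is sufficient: condition (i) holds because $G^\ast\supseteq G_1$ contains all reward-relevant coordinates, and condition (ii) holds because no coordinate outside $G^\ast$ is relevant to $\mathbf{S}_{t+1,G^\ast}$, that is $\mathbf{S}_{t+1,G^\ast}\indep\mathbf{S}_{t,(G^\ast)^c}\mid(\mathbf{S}_{t,G^\ast},A_t)$. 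It remains to prove $G^\ast\subseteq G_M$, which I would do by induction. The base case $G_1\subseteq G_M$ follows from condition (i) for the sufficient state $G_M$. For the inductive step, assume $G_{j-1}\subseteq G_M$ and fix any $k\in G_M^c$. Sufficiency condition (ii) for $G_M$ gives $\mathbf{S}_{t+1,G_M}\indep\mathbf{S}_{t,G_M^c}\mid(\mathbf{S}_{t,G_M},A_t)$; since $G_{j-1}\subseteq G_M$, decomposition yields $\mathbf{S}_{t+1,G_{j-1}}\indep\mathbf{S}_{t,G_M^c}\mid(\mathbf{S}_{t,G_M},A_t)$, and weak union (moving $\mathbf{S}_{t,G_M^c\setminus\{k\}}$ into the conditioning set) gives $\mathbf{S}_{t+1,G_{j-1}}\indep S_{t,k}\mid(\mathbf{S}_{t,-k},A_t)$. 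Hence $k$ is not selected, so $G_j\subseteq G_M$. Passing to the limit gives $G^\ast\subseteq G_M$. Finally, $G^\ast$ is sufficient and contained in the minimal sufficient state $G_M$, so minimality forces $|G^\ast|=|G_M|$ and therefore $G^\ast=G_M$; uniqueness from Proposition \ref{prop3} (strict positivity) identifies this limit with the unique minimal sufficient state.

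\textbf{Main obstacle.} The crux is bridging the per-coordinate characterization recovered by a selection-consistent algorithm with the joint conditional-independence statements in Definition \ref{def:sufficient-state}. Passing from a collection of single-variable independences (one for each $k\in B^c$) to the block statement $\mathbf{S}_{t,B^c}\indep\,\cdot\,\mid(\mathbf{S}_{t,B},A_t)$ requires the intersection property of conditional independence, which is exactly where strict positivity of the transition kernel enters (the same hypothesis underlying uniqueness in Proposition \ref{prop3}). I also need to track that the conditioning set changes from iteration to iteration, so the weak-union step in the induction must be applied to the correct block; verifying these graphoid manipulations carefully, rather than the high-level structure, is where the real work lies.
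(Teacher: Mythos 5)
Your proposal is correct, and its overall architecture --- the toy example for (i), decomposition of the joint block independence for sufficiency of the one-step selection plus an AR(1) nuisance coordinate for non-minimality, and a monotone fixed-point iteration for (iii) --- coincides with the paper's. The one place you genuinely diverge is the containment $G^\ast\subseteq G_M$ in part (iii). The paper proves minimality by showing, via the intersection rule (hence strict positivity) and an argument by contradiction, that each iterate $G_R, G_1, G_2,\dots$ is contained in \emph{every} index set satisfying the corresponding block conditional independence, so that $G_{\mathrm{iter}}$ sits inside every sufficient state; this simultaneously yields the uniqueness claim of Proposition \ref{prop3}. You instead run a forward induction off the sufficiency of $G_M$ itself, using only decomposition and weak union to show that no coordinate of $G_M^c$ can ever be selected, and then close with a cardinality argument. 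Your route is more elementary and does not consume positivity for that direction; the trade-offs are that it presupposes existence and uniqueness of $G_M$ from Proposition \ref{prop3} rather than re-deriving uniqueness, and that it shifts the entire burden of strict positivity onto the step you flag as the ``main obstacle'': upgrading the per-coordinate independences returned by a selection-consistent algorithm to the block statements $\mathbf{S}_{t+1,G^\ast}\indep\mathbf{S}_{t,(G^\ast)^c}\mid(\mathbf{S}_{t,G^\ast},A_t)$ and $\mathbb{E}(R_t\mid\mathbf{S}_t,A_t)=\mathbb{E}(R_t\mid\mathbf{S}_{t,G^\ast},A_t)$ needed for sufficiency of $G^\ast$. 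That is exactly where the paper invokes the intersection property (and its conditional-mean analogue for the reward), so your diagnosis of where the real work lies is accurate; to turn the plan into a complete proof you must actually execute that intersection argument rather than leave it as a remark.
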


The preceding result does not depend on the selection algorithm being applied, except for its selection consistency. Thus, this result is of independent interest as it provides a template for constructing variable selection methods for MDPs.  In the remainder of this paper, we focus on developing the \underline{se}qu\underline{e}ntial \underline{k}nockoffs (SEEK) algorithm, which extends Model-X knockoffs to the MDP setting. This algorithm will be a building block of the iterative procedure described earlier. 

\begin{remark}
    Our use of knockoffs is primarily motivated by its capacity to control the FDR or type-I error. It is especially appealing when the objective is to identify the minimum sufficient state or its subset in scientific discoveries, such as gaining clinical insights with batch data collected in biomedical applications, including mobile health, where meaningful interpretations are crucial. Suppose one is interested in training an optimal policy that maximizes the cumulative reward. It is important to balance both type-I and type-II errors, as managing false negatives becomes equally important. Our method can accommodate this scenario well; see the last paragraph of Section \ref{sec:seek} for more details. Finally, if the goal is solely to screen out the state variables that are not in the minimum sufficient state, emphasis should be shifted towards controlling the type-II error. To address this, we develop an alternative method for identifying the variables in the minimal sufficient state with controlled type-II error; see Section~\ref{sec:control-type-II-error} of the Supplementary Materials.
\end{remark}

\vspace{-0.5cm}
\section{SEEK: Sequential Knockoffs for Variable Selection}
\label{sec:seek}
%\vspace{-0.3cm}
%\subsection{The Main Idea}\label{sec:mainidea}
%\vspace{-0.3cm}
This section introduces the SEEK algorithm, which consists of three key steps. An outline is displayed in Algorithm \ref{alg:1}. %Additional implementation details can be found in Section  \ref{apdx:b} of the Supplementary Materials. 
Suppose we are given batch data $\calD$ consisting of $N$ i.i.d. finite-horizon trajectories, each %of length $T$, {\color{blue}isn't it more rigorous to have length (T+1) if you want T such tuples?} which can be summarized as $N
containing $T$ state-action-reward-next-state transition tuples. Note that observations within each trajectory are temporally dependent. Similar to the setting in Section \ref{sec:2}, the actions in $\calD$ are generated by a behavior policy $\pi_b$. We assume $\pi_b$ is stationary, i.e., the distribution of each action depends on the history only through the current state, and this dependence is invariant over time. Meanwhile, our method can accommodate cases where $T$ differs across trajectories; see Section~\ref{sec:bestK-different-horizon} of the Supplementary Materials for details. Indeed, we managed such cases in our MIMIC-III data analysis.
%The complete algorithm and technical derivations can be found in Appendix XXX.    

\textbf{Step 1: Data splitting}. %\\
We first split all $NT$ transition tuples in the batch data into $K$ non-overlapping sub-datasets, leading to a partition $\calD=\{\calD_k, k\in[K]\}$ such that for each tuple, $\left(\mathbf{S}_{t}, A_{t}, R_{t}, \mathbf{S}_{t+1}\right) \in \calD_k$ if $t \, \textrm{mod} \, K = {k-1}$. After the split, every two tuples within the same subset $\calD_k$ either come from the same trajectory with a time gap of at least $K$ or belong to two different trajectories. When the system is $\beta$-mixing \citep{bradley2005basic}, by an innovative use of Berbee's coupling lemma \citep{berbee1987convergence} and a careful choice of $K$, we can guarantee that all transition tuples within each subset $\calD_k$ are ``\textit{approximately independent}''. %with a high probability. 
See Section \ref{apdx:proof} of the Supplementary Materials for a rigorous statement of this result. It justifies our iterative use of Model-X knockoffs on each $\calD_k$  for finding the minimal sufficient state in the following steps. Finally, we aggregate the selection results from all the data subsets using a majority vote.

\textbf{Step 2: Iterative selection}. 
On each data subset $\calD_k$, we implement our iterative selection approach using Model-X knockoffs. We first apply the aggregated knockoffs procedure (across actions, see Algorithm \ref{alg:2}) to select the significant state variables for predicting the \textit{immediate reward} only. Denote this initial selection by $\widehat{G}_{k,1}$. Subsequently, we treat each selected state variable in $\widehat{G}_{k,1}$ in the next decision point as the response and use aggregated knockoffs again to select state variables that are significant to the response. The final index set is denoted by $\widehat{G}_{k,2}$. We repeat this procedure to obtain $\widehat{G}_{k,3}, \widehat{G}_{k,4}, \ldots,$ where at each iteration we combine all previously selected state variables in the next time point to construct the new responses, until no more significant variables are found. The resulting index set of selected variables is denoted by $\widehat{G}_k$.

 In our implementation, we adopt a second-order machine to construct knockoff variables. However, other construction approaches are feasible \citep{romano2020deep}. %, see approaches in Section \ref{apdx:b} of the Supplementary Material. 
A key difference from the regression framework is that our knockoffs must be constructed in an ``action-wise'' manner. This difference stems from our focus on states that directly impact the reward and future relevant states rather than those influencing these outcomes indirectly through the action. Consequently, it is crucial to ensure that the independence $\widetilde{\svar}\indep R \given A,\svar$ and exchangeability conditions $(\mathbf{S},\widetilde{\mathbf{S}})_{\textrm{swap}(B)} \given A\stackrel{d}{=}(\mathbf{S},\widetilde{\mathbf{S}}) \given A$ are maintained, conditionally given the action, rather than marginally. As outlined in Step 1 of Algorithm~\ref{alg:2}, by dividing the data into subsets $\{\mathcal{D}_k^{(a)}\}$ grouped by action and separately constructing knockoff variables for each $\mathcal{D}_k^{(a)}$, we achieve both conditional independence and exchangeability. Formal justifications are provided in Section \ref{sec:secondorder} of the Supplementary Materials. We also remark that when the action space is small, such an action-wise approach is commonly used in RL for both policy learning \citep[see, e.g.,][]{ernst2005tree} and evaluation \citep[see, e.g.,][]{shi2020statistical}. %which in turn enables us to identify state variables that directly affect the reward and relevant future states, rather than those influencing these outcomes indirectly through the action.}
 
Finally, for linear systems, we set each feature importance statistics $Z_j$ (or $\widetilde{Z}_j$) to be the $\ell_{\infty}$-norm of the estimated regression coefficients obtained via LASSO across different actions. In nonlinear systems, we set $Z_j$ (or $\widetilde{Z}_j$) by the variable importance with general machine learning methods. By setting $W_j=Z_j-\widetilde{Z}_j$, it automatically satisfies the flip-sign property, which is crucial for knockoffs to control the FDR/mFDR; see details in Sections \ref{sec:genericML} and \ref{sec:flipsignproperty} of the Supplementary Materials.

\renewcommand{\algorithmicrequire}{\textbf{Input:}}
\renewcommand{\algorithmicensure}{\textbf{Output:}}
\begin{algorithm}[!t]
%\setstretch{1.35}
\linespread{1.2}\selectfont
\caption{\underline{Se}qu\underline{e}ntial \underline{K}nockoffs (SEEK)}\label{alg:1}
\begin{algorithmic}[1]
\Require Batch data $\calD$, number of data subsets $K$, a target FDR level $q \in (0, 0.5)$, and a threshold $\alpha\in(0,1)$ for the majority vote.
\State Split $\calD$ evenly into $\{\calD_k, k\in[K]\}$. Initialize all $\widehat{G}_k$'s as empty sets.
\For {$k=1,2,\ldots,K$}
\State Apply Algorithm \ref{alg:2} to all $(\mathbf{S}, A, R)$ tuples in $\calD_k$ with the target FDR level $q$. Denote the selected index set by $\widehat{G}_{k, 1}$ and let $\widehat{G}_k = \widehat{G}_{k, 1}$. 
\For{$j = 2, 3, \ldots$}
\State For each $l \in \widehat{G}_{k,j-1}$, apply Algorithm \ref{alg:2} to all $(\mathbf{S}, A, Y)$ tuples in $\calD_k$ with $Y= S_{l}'$ and the target FDR level $q$. Denote the union of the selected index sets by $\widehat{G}_{k, j}$. 
\State \textbf{Break} if $\widehat{G}_{k, j} \subseteq \widehat{G}_{k}$.
\State Let $\widehat{G}_k = \widehat{G}_k \cup \widehat{G}_{k, j}$. 
\EndFor
\EndFor
\Ensure $\widehat{G} := \{j \in [d]: \sum_{1\leq k\leq K}\mathbb{I}(j \in \widehat{G}_k) \geq \alpha K\}.$
\end{algorithmic} 
\end{algorithm}

\begin{algorithm}[t]
\linespread{1.2}\selectfont
\caption{Aggregated Knockoffs} %in Iterative Selection}
\label{alg:2}
\begin{algorithmic}[1]
\Require Batch data $\calD_k$, and a target FDR level $q \in (0, 0.5)$.
\State Split $\calD_k$ into $\{\calD_k^{(a)},a\in \mathcal{A}\}$ where each $\calD_k^{(a)}$ contains state-response $(\svar,Y)$ pairs for which the action is equal to $a$. 
\State For each $(\svar,Y)$ pair in $\calD_k^{(a)}$, construct the associated knockoff variable $\widetilde{\mathbf{S}}$. 
\State Apply any machine learning algorithms (e.g., LASSO or random forest) to all $(\mathbf{S}, \widetilde{\mathbf{S}}, Y)$ triplets to construct feature importance statistics $Z_j^{(a)}$ and $\widetilde{Z}_j^{(a)}$ for $S_j$ and its knockoff variable $\widetilde{S}_j$, respectively, for each $j\in [d]$.
\State For each $j \in [d]$, set $Z_j=\max\limits_{a \in \mathcal{A}} Z_j^{(a)}$, $\widetilde{Z}_j=\max\limits_{a \in \mathcal{A}} \widetilde{Z}_j^{(a)}$ and $W_j = Z_j - \widetilde{Z}_j$.
\State To implement the (standard) knockoffs method, set the threshold $\tau$ to 
\vspace*{-0.6em}
\begin{align*}
	\tau = \min\left\{t>0: |\{i\in [d]: W_i \leq -t\}| \leq q |\{i\in [d]: W_i \geq t\}| \right\},
\end{align*}
\Ensure $\widehat{G}_k = \{i\in [d]: W_i \geq \tau\}$.
\State To implement the knockoffs+ method, set the threshold $\tau_+$ to: 
\vspace*{-0.6em}
\begin{align*}
	\tau_+ = \min\left\{t>0: 1 + |\{i\in [d]: W_i \leq -t\}| \leq q |\{i\in [d]: W_i \geq t\}| \right\}, 
\end{align*}
\Ensure $\widehat{G}_k = \{i\in [d]: W_i \geq \tau_+\}$.
\end{algorithmic} 
\end{algorithm}

\textbf{Step 3: Majority vote.}  Compute the proportion of subsets in which variable $j$ is selected, $\widehat{p}_j = K^{-1}\sum_{k=1}^K \mathbb{I}\big(
j \in \widehat{G}_k
\big)$, and define the final selected set 
$\widehat{G} = \left\lbrace j\,:\, \widehat{p}_j \ge \alpha \right\rbrace$,
where $\alpha \in (0, 1)$ is a pre-specified threshold (e.g., 0.5).

To conclude this section, we notice that each step of our algorithm relies on a potentially important tuning parameter: the number of splits $K$ in Step 1, the target FDR level $q$ in Step 2, and the majority vote threshold $\alpha$ in Step 3. To select $K$, we provide an adaptive (data-driven) procedure in Section \ref{sec:bestK} of the Supplementary Materials. Meanwhile, the latter two parameters can be chosen to balance the type-I and type-II errors so as to optimize the performance of the resulting estimated optimal policy. Specifically, we consider finite candidate values for $q$ and/or $\alpha$. We run the proposed algorithm for each candidate of $q$ and/or $\alpha$ and obtain the corresponding estimated optimal policy. We next evaluate these policies on an additional validation set and estimate their cumulative reward. Finally, we choose the FDR target and/or the majority vote threshold that leads to the highest cumulative reward and rerun SEEK on the entire dataset to compute the selected variables and estimate the optimal policy. This procedure has been implemented in our simulation study, which demonstrates that this procedure effectively improves the performance of the estimated optimal policy in small samples. For detailed results and discussions, refer to Section~\ref{sec:simulation-balancing} of the Supplementary Materials.

%Specifically, a state variable is finally selected if it has been selected by more than $\alpha$ proportion of all the $\widehat{G}_k$, where $\alpha \in (0, 1)$ is some pre-specified threshold (e.g., 0.5). We denote the output of our final selected index set as $\widehat{G}$.

%\subsection{Sequential Knockoffs in the General Case}
%\vspace{-0.5cm}
{\singlespacing\section{Theoretical Results}
\label{sec:theory}
%{\color{red}%Our theoretical results on SEEK are divided into two Sections, where Section~\ref{sec:false-discovery-analysis} studies false discovery property and Section~\ref{sec:poweranalysis} examines the power of SEEK. Below presenting the theoretical results, 
%This section is organized as follows: we first introduce some necessary conditions in Section~\ref{sec:conditions}. 
%We %next 
%analyze SEEK's properties related to FDR and power in Sections \ref{sec:false-discovery-analysis} and \ref{sec:poweranalysis}, respectively.}

%\subsection{Technical Conditions}\label{sec:conditions}
\subsection{FDR and Type-I Error}\label{sec:false-discovery-analysis}}
This section analyzes SEEK's properties related to the FDR and type-I error. We begin by introducing some technical conditions. 
\begin{condition}[Stationarity and exponential $\beta$-mixing]\label{ass:mixing}
The process $\{(\mathbf{S}_t, A_t, R_t)\}_{t\ge 0}$ is stationary and exponential $\beta$-mixing, i.e., its $\beta$-mixing coefficient at time lag $k$ is of the order $O(\rho^k)$ and $0<\rho<1$. 
\end{condition}

Since the behavior policy $\pi_b$ is stationary, the exponential $\beta$-mixing condition in Condition \ref{ass:mixing} is equivalent to requiring the Markov chain $\{(\mathbf{S}_t, A_t, R_t)\}_{t\ge 0}$ to be geometrically ergodic \citep{bradley2005basic}. Similar conditions have been frequently employed in the RL literature for theoretical analysis \citep[see e.g.,][]{antos2007fitted,luckett2019estimating,liao2021off,chen2022well,kallus2022efficiently,shi2020statistical}. These conditions allow estimators to achieve a rate of convergence nearly equivalent to those with i.i.d. data. Meanwhile, they can be relaxed to weaker conditions, such as polynomial $\beta$-mixing, detailed further in Section~\ref{sec:poly-beta-mixing} of the Supplementary Materials.

%\begin{condition}\label{ass:stationary-gaussian}
%	For any $a \in \mathcal{A}$, the stationary distribution of states in the process $\{(\mathbf{S}_t, A_t=a)\}_{t\geq 0}$ is a multivariate Gaussian distribution $\mathcal{MVN}(0, \Sigma^{(a)})$.
%\end{condition}
%This assumption allows establishing the property of model-X knockoff when the covariate distribution is unknown \citep{barber2020robust, fan2019rank, huang2020relaxing, chi2021high, fan2023ark}. Unlike the model-X knockoff approach \citep{candes2018panning}, Condition~\ref{ass:stationary-gaussian} does not require prior knowledge of the covariance matrices $\{\Sigma^{(a)}\}_{a\in \mathcal{A}}$, making it be a more realistic assumption. To ensure the effectiveness of the knockoff procedure, it is forseeable that the estimated covariance matrices shall well approximate $\{ \Sigma^{(a)} \}_{a\in \aspace}$. This motivates us impose the following condition. 
%\begin{condition}\label{ass:prec-matrix}
%	For any $a \in \aspace$, the precision matrix $\Omega^{(a)} = (\Sigma^{(a)})^{-1}$ has at most $L$ non-zero entries in each column for some diverging $L$ and its conditional number is bounded. 
%\end{condition}
%Condition~\ref{ass:prec-matrix} imposes a regular condition on the precision matrices $\{\Omega^{(a)}\}_{a \in \mathcal{A}}$ to ensure they can be well estimated under the high-dimensional scenario \citep{ravikumar2011high}, which is crucial for model-X knockoff of theoretical guarantees \citep{fan2019rank,barber2020robust}. 
\begin{condition}[External dataset]\label{ass:external-dataset}
    An external dataset $\mathcal{U}$, consisting of state-action-next-state triplets independent of $\mathcal{D}$, is available to construct the knockoff variables $\widetilde{\mathbf{S}}$. Note that this dataset 
    can be unlabeled because the construction of the knockoff variables does not need reward data. 
    
\end{condition}
Condition \ref{ass:external-dataset} simplifies our theoretical analysis by using the external dataset, $\mathcal{U}$, to construct the knockoff variables. The independence between $\mathcal{U}$ and $\mathcal{D}$ eliminates the stringent requirement for a known covariate distribution in the original Model-X knockoff framework. Note that $\mathcal{U}$ can be an unlabeled dataset, which is often easy to obtain. In practice, we are often provided with a small amount of high-quality reward annotated demonstration data, and a huge amount of unlabeled data for policy learning \citep{yu2022leverage}. For instance, in fine-tuning language models like Chat-GPT, there exists abundant unlabeled text data, which can reach up to 852 million words. In healthcare applications, the quantity of unlabeled electronic health records can be up to ten times greater than that of labeled data \citep{sonabend2023semi}. Similar assumptions are frequently imposed under the regression framework \citep{barber2020robust,huang2020relaxing}. Furthermore, even without an external dataset, cross-fitting can be applied to achieve comparable results \citep{fan2019rank,chi2021high,fan2023ark}.
\begin{condition}[Null variable distributional irrelevance]\label{condnull}
    Any null variable maintains conditional independence of the reward, given the minimal sufficient state and action. 
\end{condition}
By definition, the conditional mean function of the reward does not depend on the null variables when the significant variables and the action are given. Condition \ref{condnull} essentially requires that these null variables should not impact the conditional variance or higher moments of the reward either. If this condition is not met, SEEK may still identify a sufficient state but not necessarily the minimal sufficient state. This limitation arises from the use of knockoffs in Line 3 of Algorithm \ref{alg:1} to filter out variables irrelevant to the reward; however, knockoffs target variables that are conditionally independent rather than those only having no effect on the conditional mean. To overcome this limitation, one could employ tests for conditional mean independence to remove null variables more effectively.
%{\color{red}
%\subsection

%We next show that the knockoff algorithm in Algorithm \ref{alg:2} would have FDR controls. Notice that transition tuples in $\calD_k$ are not independent and the state distribution are unknown, and thus theoretical results from \cite{candes2018panning} cannot be directly applied here.
\begin{theorem}[FDR]\label{thm1}
Suppose Conditions \ref{ass:mixing} to \ref{condnull} hold. Set the number of sample splits $K=k_0 \log(NT)$ for some $k_0>- \log^{-1}(\rho)$, where $\rho$ is defined in Condition \ref{ass:mixing}. Then for any response $Y$ ($R$ or $S'_{l}$ for a given $1\le l\le d$), $\widehat{G}_k$ obtained by Algorithm \ref{alg:2} with standard knockoffs satisfies  %controls the modified FDR (mFDR), i.e.,
\begin{eqnarray*}
	\textrm{mFDR}(\widehat{G}_k)\leq q \exp(\epsilon)+\mathbb{P}\Big(\max_{j\in \mathcal{H}_0}\widehat{\textrm{KL}}_{j}>\epsilon\Big)+O\Big\{ K^{-1} (NT)^{1-k_0 \log(\rho^{-1})} \Big\},
\end{eqnarray*}\vspace{-3.5em}
%where $c=1-k_0\log(\rho)$ is a positive constant. 

\noindent for any $\epsilon>0$, where $\widehat{\textrm{KL}}_{j}$ denotes the observed KL divergence \citep{barber2020robust} %between $(S_j,\widetilde{S}_j,\mathbf{S}_{-j}, \widetilde{\mathbf{S}}_{-j})$ and $(\widetilde{S}_j,S_j,\mathbf{S}_{-j}, \widetilde{\mathbf{S}}_{-j})$ conditional on $A$, 
given by\vspace{-1em}
\begin{eqnarray*}
    \sum_{a\in \aspace}\sum_{(\mathbf{S},\widetilde{\mathbf{S}})\in \mathcal{D}_k^{(a)}} \log\Big[\frac{p_j(S_j|\mathbf{S}_{-j},A=a)}{q_j(S_j|\mathbf{S}_{-j},A=a)}\frac{q_j(\widetilde{S}_j|\mathbf{S}_{-j},A=a)}{p_j(\widetilde{S}_j|\mathbf{S}_{-j},A=a)}\Big].
\end{eqnarray*}\vspace{-3em}

\noindent Here $\mathbf{S}_{-j}$ ($\widetilde{\mathbf{S}}_{-j}$) denotes the subvector of $\mathbf{S}$ ($\widetilde{\mathbf{S}}$) obtained by removing the $j$th component, and $p_j$ and $q_j$ denote the oracle pdf/pmf of $S_j$ given $\mathbf{S}_{-j}$ and $A=a$ and its estimate for constructing knockoffs, respectively. In addition, $\widehat{G}_k$ obtained by implementing the knockoffs+ in Algorithm \ref{alg:2} satisfies \vspace{-1em}
\begin{align*}
	\textrm{FDR}(\widehat{G}_k)\leq q \exp(\epsilon)+\mathbb{P}\Big(\max_{j\in \mathcal{H}_0}\widehat{\textrm{KL}}_{j}>\epsilon\Big)+O\Big\{ K^{-1} (NT)^{1-k_0 \log(\rho^{-1})} \Big\}.
 %q\exp\left(c_1 \sqrt{|\mathcal{U}|^{-1}K^{-1} L NT \log^2(p)} \right) + p^{-c_1} + O\left\lbrace K^{-1}(NT)^{-c}\right\rbrace.
\end{align*}
\end{theorem}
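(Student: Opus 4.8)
The plan is to decouple the two sources of error that obstruct a direct application of Model-X knockoffs—temporal dependence within each subset $\calD_k$, and the use of an \emph{estimated} rather than exact conditional law when generating knockoffs—handle each with existing machinery, and then recombine. First I would reduce the dependent tuples in $\calD_k$ to an i.i.d.\ surrogate. Because the splitting in Step~1 guarantees that any two tuples sharing a trajectory inside $\calD_k$ are separated by a time lag of at least $K$, Condition~\ref{ass:mixing} bounds the $\beta$-mixing coefficient between consecutive retained tuples by an order $\rho^{K}$ quantity. Applying Berbee's coupling lemma block by block, I would construct on an enlarged probability space a collection of genuinely independent tuples, each distributed as the generic $(\mathbf{S}, A, R, \mathbf{S}')$, that coincides with the original retained sample except on an event of probability at most (number of retained tuples)$\,\times \rho^{K}$. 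With $|\calD_k| \asymp NT/K$ and the prescribed $K = k_0\log(NT)$ one has $\rho^{K} = (NT)^{-k_0\log(\rho^{-1})}$, so this coupling-failure probability is exactly of order $K^{-1}(NT)^{1-k_0\log(\rho^{-1})}$, i.e.\ the third term in both displayed bounds; the requirement $k_0 > -\log^{-1}(\rho)$ is precisely what forces a negative exponent and hence makes this term vanish.

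On the i.i.d.\ surrogate I would invoke the robust knockoffs theory of \citet{barber2020robust}. The action-wise construction in Algorithm~\ref{alg:2} builds knockoffs separately within each $\calD_k^{(a)}$, so the pairwise exchangeability and conditional-independence requirements are imposed \emph{given} $A=a$; Condition~\ref{condnull} ensures that each null variable is conditionally independent of the response given the minimal sufficient state and action, which is exactly what is needed for the null coordinates of $W_j = Z_j - \widetilde{Z}_j$ to retain the flip-sign (antisymmetry) property after the max-over-actions aggregation. Because the sampler uses the estimated conditional density $q_j$ in place of the oracle $p_j$, exchangeability holds only approximately, and \citet{barber2020robust} quantify the resulting inflation of the (modified) FDR by a factor $\exp(\widehat{\textrm{KL}}_j)$. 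Conditioning on the event $\{\max_{j\in\mathcal{H}_0}\widehat{\textrm{KL}}_j \le \epsilon\}$ and bounding its complement trivially yields the first two terms, $q\exp(\epsilon) + \mathbb{P}(\max_{j\in\mathcal{H}_0}\widehat{\textrm{KL}}_j > \epsilon)$, with the standard threshold $\tau$ controlling mFDR and the knockoffs+ threshold $\tau_+$ controlling FDR.

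Finally I would transfer the surrogate bound back to the original data. Since $\widehat{G}_k$ is a deterministic function of the sample, its value on the surrogate and on the original sample can differ only on the coupling-failure event, so the gap between the surrogate's mFDR/FDR and the target quantity is at most the coupling probability from the first step; adding this to the robust-knockoffs bound produces the stated inequalities. Here the independence of the external dataset $\mathcal{U}$ (Condition~\ref{ass:external-dataset}) is what makes the knockoff construction independent of the labeled sample and hence compatible with the coupling. The main obstacle, I expect, is the stitching rather than either module in isolation: one must verify that the robust-knockoffs guarantee—originally stated for genuinely i.i.d.\ rows—survives verbatim on the coupled surrogate, that the max-aggregation across actions together with Condition~\ref{condnull} leaves the conditional exchangeability of the null statistics intact, and that the coupling error is measured in the correct (total-variation on the selection event) metric so that it adds cleanly to the distributional-robustness term instead of interacting with it.
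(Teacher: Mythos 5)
Your proposal is correct and follows essentially the same route as the paper's own proof: establish the flip-sign property of the action-wise aggregated $W_j$ statistics, invoke the robust (approximate-exchangeability) knockoffs bound of \citet{barber2020robust} on an i.i.d.\ version of $\calD_k$ to get the $q\exp(\epsilon)+\mathbb{P}(\max_{j\in\mathcal{H}_0}\widehat{\textrm{KL}}_j>\epsilon)$ terms, and use Berbee's coupling under exponential $\beta$-mixing (with the FDP bounded by $1$ on the coupling-failure event) to pick up the $O\{K^{-1}(NT)^{1-k_0\log(\rho^{-1})}\}$ remainder. The only cosmetic difference is the order of presentation—you couple first and transfer back, while the paper proves the independent-data lemma first and then decomposes the FDP expectation over the coupling event—but the argument is the same.
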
\vspace{-1.5em}

\noindent The upper error bounds presented in Theorem \ref{thm1} can be decomposed into three %distinct 
terms: \vspace{-0.5em}
\begin{itemize}[leftmargin=*]
    \item The first term approaches the target FDR level $q$ as $\epsilon$ decays to zero. The inclusion of the additional term $\exp(\epsilon)$ arises because we estimate the covariate distribution  to construct the knockoff variables rather than assume a known covariate distribution. \vspace{-0.5em}
    \item The second term provides a high probability error bound for \(\widehat{\textrm{KL}}_{j}\), which quantifies the discrepancy between the true conditional distribution of $S_j$ given $A$ and $\mathbf{S}_{-j}$ and its %estimator {\color{blue}
    estimate %?} 
    based on the external unlabeled dataset. Assuming these conditional distributions are multivariate Gaussian, this term becomes negligible by setting $\epsilon=\sqrt{L|\mathcal{D}_k|(\log^2 d)/|\mathcal{U}|}$, where $L$ denotes the maximum number of nonzero entries in any column of the Gaussian precision matrix across all columns \citep{barber2020robust}. Given that the number of observations in the unlabeled data \(|\mathcal{U}|\) is typically much larger than that in the labeled data \(|\mathcal{D}|\), \(\epsilon\) becomes negligible, making the first term approach the target FDR level. \vspace{-0.5em}
    %arises from the estimation error in the covariate distribution. Importantly, when the estimated distribution is identical to the true underlying distribution, the combined effect of these two terms sums up to $q$;
    \item The last term accounts for the temporal dependence among transition tuples, which decays to zero as the number of observations in $\calD_k$ grows to infinity.
\end{itemize}  

Let $d_0$ denote the dimension of the unique minimum sufficient state. Next, we upper bound the family-wise type-I error of the SEEK algorithm. 
\begin{theorem}[Type-I error]\label{thm2}
	Suppose the conditions in Theorem~\ref{thm1} hold. Then, as the FDR target $q$ approaches zero, the probability that Algorithm~\ref{alg:1} with knockoffs+ selects any null variable is upper bounded by $O(\alpha^{-1} d_0^2 q) + O\Big\{\alpha^{-1}K^{-1}(NT)^{1-k_0 \log(\rho^{-1})}\Big\}$.
\end{theorem}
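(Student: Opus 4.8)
The plan is to control the family-wise type-I error $\mathbb{P}(\widehat{G}\cap\mathcal{H}_0\neq\emptyset)$ by pushing the per-application guarantee of Theorem~\ref{thm1} through the two layers of aggregation in Algorithm~\ref{alg:1}: the inner iterative selection and the outer majority vote. The first move is a union bound over the null indices together with Markov's inequality applied to the vote count. Since a null variable $j$ survives into $\widehat{G}$ only when $\widehat{p}_j\ge\alpha$,
\[
\mathbb{P}(\widehat{G}\cap\mathcal{H}_0\neq\emptyset)\le \sum_{j\in\mathcal{H}_0}\mathbb{P}(\widehat{p}_j\ge\alpha)\le (\alpha K)^{-1}\sum_{k}\sum_{j\in\mathcal{H}_0}\mathbb{P}(j\in\widehat{G}_k)=(\alpha K)^{-1}\sum_k\mathbb{E}\big[\#(\widehat{G}_k\cap\mathcal{H}_0)\big].
\]
This isolates the expected number of nulls selected within a single subset and already exposes the $\alpha^{-1}$ factor, the outer sum over the $K$ subsets cancelling the $K^{-1}$.

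Next I would unfold the inner loop. Within $\mathcal{D}_k$ the set $\widehat{G}_k$ is the union of the selections returned by Algorithm~\ref{alg:2} run on the responses $R$ and $S'_l$. On the event that every iteration stays inside the minimal sufficient state $G_{\text{M}}$, the monotonicity of $\{\widehat{G}_{k,j}\}$ and Proposition~\ref{prop4} confine the responses to $\{R\}\cup\{S'_l:l\in G_{\text{M}}\}$, so there are at most $d_0+1$ \emph{distinct} responses, each carrying at most $d_0$ true signals. Bounding $\#(\widehat{G}_k\cap\mathcal{H}_0)$ by the sum over these responses of the per-response false-discovery counts $V_Y$ gives $\mathbb{E}[\#(\widehat{G}_k\cap\mathcal{H}_0)]\le \sum_Y\mathbb{E}[V_Y]$; it is precisely this restriction to $G_{\text{M}}$ (rather than all of $[d]$) that replaces a crude $d^2$ by $d_0^2$.

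It then remains to convert the FDR statement of Theorem~\ref{thm1} into a bound on the counts $\mathbb{E}[V_Y]$. For a fixed $Y$ the knockoffs+ guarantee reads $\mathbb{E}[V_Y/\max(1,R_Y)]\le q\exp(\epsilon)+\mathbb{P}(\max_{j}\widehat{\textrm{KL}}_j>\epsilon)+O\{K^{-1}(NT)^{1-k_0\log(\rho^{-1})}\}$, and I would leverage selection consistency (power tending to one) to lower bound the denominator by the true-signal count $s_Y\le d_0$: on the event $V_Y\le s_Y$ one has $V_Y/\max(1,R_Y)\ge V_Y/(2s_Y)$, whence $\mathbb{E}[V_Y]=O(s_Y q)$ as $q\to 0$. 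Summing, $\sum_Y\mathbb{E}[V_Y]=O(q\sum_Y s_Y)=O(d_0^2 q)$, and feeding this back through the Markov bound of the first paragraph yields the leading $O(\alpha^{-1}d_0^2 q)$, while the mixing remainder of Theorem~\ref{thm1} is carried through unchanged to give $O\{\alpha^{-1}K^{-1}(NT)^{1-k_0\log(\rho^{-1})}\}$.

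The two places where real work is hidden are the following. First, the responses $\{S'_l:l\in\widehat{G}_{k,j-1}\}$ are \emph{data-dependent}, so Theorem~\ref{thm1}, stated for a fixed response, cannot be quoted verbatim; I would handle this by a union bound over the $d$ candidate indices $l$ combined with selection consistency, which pins the active responses to $G_{\text{M}}$ and the iteration count to at most $d_0$ on an event whose complement is of the order of the mixing remainder. Second, and more delicate, is the count conversion in the third paragraph: the auxiliary event $\{V_Y>s_Y\}$, on which false discoveries outnumber true ones, a priori contributes a term of order $d\,q$ rather than $d_0\,q$, and showing that it is genuinely higher order—using the knockoffs+ inequality $R_Y\ge 1/q$ on any nonempty selection together with the $\beta$-mixing concentration that drives the power result—is the main obstacle.
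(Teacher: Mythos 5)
There is a genuine gap, and it sits exactly where you flag it: the conversion from FDR control to a bound on the number of false selections. Your intermediate target $\mathbb{E}[V_Y]=O(s_Y q)$ is not implied by the knockoffs+ guarantee and can fail: FDR control $\mathbb{E}[V_Y/\max(1,R_Y)]\le q$ is perfectly consistent with a mechanism that, with probability of order $q$, selects \emph{all} $d-d_0$ nulls together with the $d_0$ signals (the FDP is then close to $1$, so the FDR is still about $q$), in which case $\mathbb{E}[V_Y]=\Theta(dq)$, not $O(d_0q)$. Your proposed patches do not close this: the knockoffs+ inequality $R_Y\ge 1/q$ on a nonempty selection only yields $\mathbb{E}[V_Y]\le d\cdot\mathrm{FDR}=O(dq)$, and the $\beta$-mixing concentration controls the coupling error, not the size of $V_Y$ on the event $\{V_Y>s_Y\}$. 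The difficulty is self-inflicted by your very first step: the union bound $\sum_{j\in\mathcal H_0}\mathbb{P}(\widehat p_j\ge\alpha)$ forces you to control expected \emph{counts} of false selections, which is strictly harder than what the theorem asks for.

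The paper's proof never touches $\mathbb{E}[V_Y]$. It bounds the \emph{probability} of selecting any null via an elementary observation (Lemma~\ref{lemma:onestep-false-inclusion}): because there are at most $d_0$ non-null variables in total, whenever at least one null is selected the false discovery proportion satisfies $\eta_Y=\bigl(1+|\widehat G_k(\{j\})\setminus\mathcal H_0|/|\widehat G_k(\{j\})\cap\mathcal H_0|\bigr)^{-1}\ge (1+d_0)^{-1}$, so Markov's inequality applied to $\eta_Y$ gives $\mathbb{P}(V_Y\ge 1)\le (1+d_0)\,\mathrm{FDR}=O(d_0 q)$. For the outer layers the paper uses the event inclusion $\{\widehat G\cap\mathcal H_0\neq\emptyset\}\subseteq\{\sum_k\mathbb{I}(\widehat G_k\cap\mathcal H_0\neq\emptyset)\ge\alpha K\}$ followed by Markov on the sum of indicators (yielding the $\alpha^{-1}$), and for the inner loop it decouples the data-dependent responses exactly as you sketch: on the event that no null has yet been selected, the responses are confined to the reward and $S'_l$ for $l\in\mathcal H_0^c$, giving at most $d_0+1$ single-response applications, each contributing $O(d_0q)$, hence $O(d_0^2q)$ per subset; the Berbee coupling term is carried through unchanged. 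If you replace your third paragraph with this Markov-on-the-FDP step and switch your first display to the per-subset event version, the rest of your outline goes through.
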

%The preceding result demonstrates that as the target FDR level approaches zero, 
According to Theorem \ref{thm2}, SEEK asymptotically avoids selecting any null variables provided that $q\ll d_0^{-2}$ or $q\to 0$ for a fixed $d_0$. Compared to the upper bounds in Theorem~\ref{thm1}, the one in Theorem~\ref{thm2} incorporates an additional factor of $O(\alpha^{-1} d_0^2)$ for $q$. This factor emerges for two reasons: (i) Theorem \ref{thm2} bounds the probability of selecting any null state rather than focusing on FDR, introducing a factor of $O(d_0)$; (ii) Theorem \ref{thm2} applies to the iterative Algorithm \ref{alg:1}, as opposed to Algorithm \ref{alg:2} which deals with a single iteration. Since SEEK terminates after at most $1+d_0$ steps when no null variable presents, the uniform control across $1+d_0$ iterations contributes to another factor of $O(\alpha^{-1} d_0)$, where $\alpha^{-1}$ arises from the majority voting step in Algorithm~\ref{alg:1}.

\subsection{Power Analysis}\label{sec:poweranalysis}

In this section, we study the power of SEEK. 
The analysis depends on the machine learning algorithm for constructing the feature importance statistics (see Step 2 in Section \ref{sec:seek}). We focus on the power analysis based on the LASSO. In Section \ref{sec:genericML} of the Supplementary Materials, we provide sufficient conditions to ensure that the power approaches one when a generic machine learning algorithm is used. 

To analyze LASSO, we consider a linear system where the reward and next state satisfy the following model:
\begin{equation*}
    [\mathbf{S}_{t+1}^\top,~R_t]
    =\sum_{a\in\mathcal{A}} \mathbb{I}(A_t=a) \mathbf{S}_t^\top \bm{B}^{(a)} 
 + \bm{\varepsilon}_t,
\end{equation*}
for some $d \times (d+1)$ matrices $\{\bm{B}^{(a)} \}_{a\in\mathcal{A}}$ %, where the 
and zero-mean error vectors $\{\bm{\varepsilon}_t\}_t$. %is independent of the state-action pair. 
%Under the linear model, 

The indices of the minimal sufficient state $G_{\textrm{M}}$ can be identified as follows. We begin with $G_1=\{j\in [d]: \max\limits_{a\in \aspace} |B_{j,d+1}^{(a)}| \neq 0\}$. Next, for any $\ell\ge 2$, we iteratively define $G_{\ell}=\{j\in [d]: \max\limits_{a\in \aspace, i\in G_{\ell-1}} |B_{j,i}^{(a)}| \neq 0\}\cup G_{\ell-1}$ and set $G_{\textrm{M}}=G_{\ell}$ whenever $G_{\ell}=G_{\ell-1}$. These coefficient matrices $\{\bm{B}^{(a)} \}_{a\in\mathcal{A}}$ define a directed graph. In particular, consider an augmented  $(d+1)\times (d+1)$ coefficient matrix $\bm{B}$ such that $B_{j,i}=\max\limits_{a\in \aspace} |B_{j,i}^{(a)}|$ for any $1\le j\le d,1\le i\le d+1$ and $B_{d+1,i}=0$ for any $1\le i\le d+1$, and a directed graph $\mathcal{G}(\bm{B})$ whose weight adjacency matrix is given by $\bm{B}$. By definition, there exists an edge from the $j$th node to the $i$th node if and only if $B_{j, i}=\max\limits_{a \in \mathcal{A}} |B_{j,i}^{(a)}|\neq 0$, and it is straightforward to see that $j\in G_{\textrm{M}}$ if and only if there exists a directed path from the $j$th node (i.e., the $j$th state variable) to the $(d+1)$th node (i.e., the reward).

%To estimate $\bm{B}^{(a)}$,  Concatenating these vectors together yields the estimated coefficient matrix $\widehat{\bm{B}}^{(a,k)}\in \mathbb{R}^{2p\times (p+1)}$.
%Let $\bm{B}^{(a,*)}=[(\bm{B}^{(a)})^\top, \bm{O}_{p\times (p+1)}^\top]^\top$ where $\bm{O}_{p\times (p+1)}$ denotes a zero matrix of dimension $p\times (p+1)$. 
%Correspondingly, we have $B_{j,i}=0,p+1\leq j \leq 2p, i\in [p+1]$.
%$\{(\mathbf{S}_t,\widetilde{\mathbf{S}}_t):A_i=a\}$ as the predictors and 
%$\{\mathbf{S}_t,~ \widetilde{\mathbf{S}}_t, \}$
To simplify notation, we use $n=NT/K$ to denote the sample size of each data subset $\calD_k$. For $1\le i\le d+1$, let $d_i$ denote the number of nonzero elements in the $i$th column of $\bm{B}$ and let $d_*=\max\limits_{1\le i\le d+1} d_i$. We impose two minimal signal strength conditions, one designed for Algorithm \ref{alg:2}, which applies knockoffs through a single iteration, and the other tailored for the sequential procedure in Algorithm \ref{alg:1}. Additional regularity conditions, such as knockoff construction errors, sparsity constraints, tuning parameters, and the absence of ties, are frequently imposed in the variable selection literature \citep[see, e.g.,][]{fan2019rank}. To save space, they are relegated to Section \ref{thm:powercond} of the Supplementary Materials.

\begin{condition}[Minimal signal strength for one iteration]\label{ass:signal} 
	%Denote $\mathcal{H} = \left\{(i, j): B_{j, i} \geq \kappa_n \sqrt{n^{-1}\log(p)} \right\}$ where $\kappa_n$ is a slowly diverging sequence as $n\to \infty$, and let $\bm{B}^{\mathcal{H}} \in \mathbb{R}^{(p+1)\times (p+1)}$ be a matrix that satisfies $B^{\mathcal{H}}_{j,i}=B_{j, i}$ if $(i, j) \in \mathcal{H}$ otherwise $B_{j,i}=0$. Then $\mathcal{H}$ and $\bm{B}^{\mathcal{H}}$ satisfy:
(i) There exists a sequence $\kappa_n\to \infty$ as $n\to \infty$ such that $\min_{i,j: B_{j,i}\neq 0} |B_{j,i}|\ge \kappa_n \sqrt{n^{-1}\log (dn)}$. 
 %
%each variable in the minimal sufficient state to the reward. 
%\vspace{-1em}
 %For each $j \in G_{M}$, $\mathcal{G}(\bm{B}^{\mathcal{H}}, \{G_{l}\}_{l=1}^{h})$ includes a directed path, with length at most $h (\leq p_0)$, from the $j$th node to the $(p+1)$th node.
	%\item[(b)] 
(ii) For each $1\le i\le d+1$, the number of nonzero elements in the $i$th column of $\bm{B}$ with absolute values larger than $\sqrt{d_* n^{-1}\log (dn)}$ shall exceed $2q^{-1}$. 
 %$\mathcal{H}$ has a subset $\mathcal{H}'$ of cardinality $|\mathcal{H}'| \in (2q^{-1}, |\mathcal{H}|)$ satisfying $B_{j, i} \gg \sqrt{sn^{-1}\log(p)}$ for any $(i, j) \in \mathcal{H}'$.
%\end{enumerate}
\end{condition}

\begin{condition}[Minimal signal strength for the sequential procedure]\label{ass:signal2} 
    The subgraph of $\mathcal{G}(\bm{B})$, obtained by removing edges with weights smaller than $\sqrt{d_* n^{-1}\log (d n)}$, still identifies $ G_{M}$, i.e.,  there exists a directed path in this subgraph from each variable in $G_M$ to the reward.
\end{condition}

\begin{figure}[!t]
\centering
\begin{subfigure}{0.5\linewidth}
\centering
\includegraphics[width=0.35\linewidth]{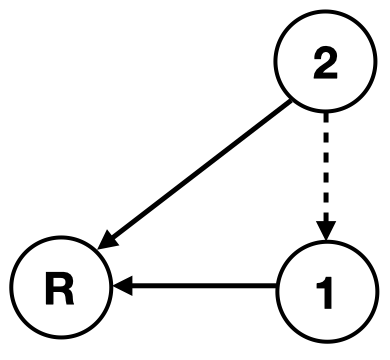}
\caption{2 state variables}
\end{subfigure}%
\begin{subfigure}{0.5\linewidth}
\centering
\includegraphics[width=0.35\linewidth]{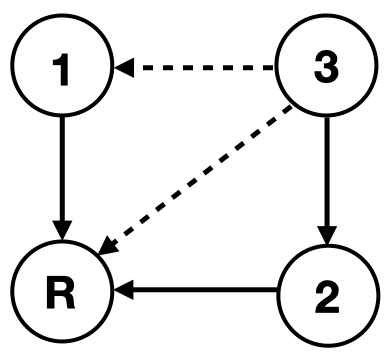}
\caption{3 state variables}
	\end{subfigure}
\caption{Two examples of dependence graph among reward and true state variables.}
\label{fig: graph_eg}
\end{figure}

\vspace{-0.5em}
Both conditions are weaker than the conventional minimal signal strength condition \citep[see e.g.,][Equation (8)]{zhao2006model}, which requires the absolute values of all nonzero coefficients to be much larger than $\sqrt{d_* n^{-1}\log (dn)}$ for LASSO's sure screening property \citep{fan2008sure}. To elaborate, we classify all nonzero coefficients into three categories:\vspace{-0.8em}
\begin{enumerate}[leftmargin=*]
    \item \textbf{Strong signals}: Coefficients whose absolute values are much larger than $\sqrt{d_*n^{-1}\log (dn)}$;\vspace{-0.3cm}
    \item \textbf{Moderate signals}: Coefficients whose absolute values are much larger than $\sqrt{n^{-1}\log (dn)}$, but are of the order $O(\sqrt{d_*n^{-1}\log (dn)})$; \vspace{-0.3cm}
    \item \textbf{Weak signals}: Coefficients that are of the order $O(\sqrt{n^{-1}\log (dn)})$. \vspace{-0.5em}
\end{enumerate}
Based on this classification, the conventional minimal signal strength condition requires all nonzero coefficients to be strong signals. Condition \ref{ass:signal} relaxes this requirement to include moderate signals since the sequence $\kappa_n$ in (i) can be any slowly diverging sequence. However, (ii) requires a proportion of nonzero coefficients that must still be strong signals. Similar conditions have been imposed to analyze the power of knockoffs in regression settings \citep{fan2019rank, fan2020ipad}.

Condition \ref{ass:signal2} further relaxes the requirements above to include weak signals. To illustrate, consider a simple example where the first two variables are the minimal sufficient state, both $S_{t,1}$ and $S_{t,2}$ directly influence $R_t$, and $S_{t,2}$ affects $S_{t,1}$ as well. In that case, Condition~\ref{ass:signal2} requires (i) $|B_{1,d+1}| \gg \sqrt{d_* n^{-1}\log (dn)}$ and (ii) either $|B_{2,d+1}| \gg \sqrt{d_* n^{-1}\log (dn)}$ or $ |B_{2,1}|\gg \sqrt{d_* n^{-1}\log (dn)}$. Notice that (ii) allows one of the two coefficients to be a weak signal. It occurs because there exist two directed paths from $S_{t, 2}$ to $R_t$, given by $S_{t,2}\to R_t$ and $S_{t,2}\to S_{t,1}\to R_t$, and we only require one of them to appear in the subgraph after removing edges with weak and moderate signals. Thus, we refer to this phenomenon as ``the blessings of multiple paths'', which allows us to impose a weaker signal strength condition. Two graphical examples are given in Figure \ref{fig: graph_eg}, where solid lines denote edges with strong signals, while dashed lines denote edges with moderate or weak signals. %Further interpretations are given in Section~\ref{sec:remark-recovery-lasso} of the supplemental article.
%{\color{red}Furthermore, Condition~\ref{ass:signal}(a) allows some signals can be relatively weak. To see this, Condition~\ref{ass:signal}(a) is less stringent than the conventional beta-min condition, i.e., $B_{j, i} \gg \sqrt{n^{-1}\log(p)}$. %Thus, Condition~\ref{ass:signal} is similar to the signal requirements in recent power analyses of model-X knockoff methods \citep{fan2019rank, fan2020ipad}. Finally, Condition~\ref{ass:signal} says the set of strong signals is only a large enough proper subset of $\mathcal{H}$. }

%Condition~\ref{ass:sub-gaussian} is considered mild and is widely used for deriving the concentration properties of Lasso coefficient estimation \citep{bickel2009simultaneous}. This condition is frequently assumed in literature related to Lasso and knockoff methods \citep{zhao2006model, bickel2009simultaneous, fan2019rank, fan2020ipad}. requires the estimated precision matrix to be close to the true precision matrix. Similar conditions are also imposed in the power analysis of model-X knockoff \citep{fan2019rank}.
The following two theorems establish the true positive rate (TPR) of Algorithm \ref{alg:2} and the type-II error of Algorithm \ref{alg:1}, respectively. 
\begin{theorem}[TPR]\label{thm:power-lasso}
    For any $1\le i\le d+1$, let $G(\{i\})$ denote the support of the $i$th column of $\bm{B}$. Under Conditions \ref{ass:mixing}, \ref{ass:external-dataset}, \ref{ass:signal}, and \ref{ass:knockoffs} to \ref{ass:noties} in the Supplementary Materials, $\widehat{G}_k$ returned by Algorithm \ref{alg:2} -- with $Y$ set to $S_i'$ for $i\leq d$ and to $R$ otherwise -- satisfies \vspace{-1em}
    \begin{align}\label{eqn:TPR}
        \mathbb{E}\left(\frac{|\widehat{G}_k\cap G(\{i\})|}{|G(\{i\})|}\right) \geq 1 -O(\kappa_n^{-1}) - O\left\{K^{-1}(NT)^{1-k_0 \log(\rho^{-1})}\right\}-O\left\{(nd)^{-C}\right\},
    \end{align}\vspace{-3em}
    
    \noindent for any large constant $C>0$. 
%where $C, c_3$ are positive constants. The \textbf{Power} would approach one as $NT\rightarrow \infty$.
\end{theorem}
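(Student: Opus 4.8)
The plan is to reduce the temporally dependent selection problem to an i.i.d. one, use LASSO concentration to separate signals from nulls at the level of the importance statistics, pin the data-dependent knockoff threshold down from above, and then count how many true signals can still be missed. Throughout, write $\lambda \asymp \sqrt{n^{-1}\log(dn)}$ for the common noise scale.

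\emph{Decoupling and importance statistics.} First I would transfer the Berbee-coupling argument underlying the FDR bound of Theorem~\ref{thm1} to the action-wise subsamples $\calD_k^{(a)}$. Under Condition~\ref{ass:mixing} and the choice $K=k_0\log(NT)$, each $\calD_k^{(a)}$ couples with a genuinely i.i.d. sample that coincides with it outside an event of probability $O\{K^{-1}(NT)^{1-k_0\log(\rho^{-1})}\}$, which is exactly the temporal-dependence term; on the coupling event I treat the observations in $\calD_k^{(a)}$ as i.i.d. draws from the linear model whose regression vector for response index $i$ is the $i$th column of $\bm B^{(a)}$, with support contained in $G(\{i\})$. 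On this data I would invoke the regularity Conditions~\ref{ass:knockoffs} to~\ref{ass:noties} (valid knockoff construction, restricted eigenvalue, tuning, and no ties) to obtain, with probability $1-O((nd)^{-C})$, a LASSO $\ell_1$-error bound $\|\widehat{\bm\beta}^{(a)}-\bm\beta^{(a)}\|_1=O(d_*\lambda)$ for every $a\in\aspace$, together with the fact that the importance scores of all null variables and of every knockoff stay at the null scale $O(\lambda)$. Hence $Z_j=\max_a|\widehat\beta^{(a)}_j|$ rises above this scale exactly for true signals, and $W_j=Z_j-\widetilde Z_j$ is positive and of the signal's order for each $j\in G(\{i\})$.

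\emph{Threshold control.} Because $\tau$ (and $\tau_+$) is a functional of all the $W_j$'s, it cannot be treated as deterministic, so I would bound it above by a fixed level $\tau^*=C\lambda$ sitting just above the null fluctuations. By Condition~\ref{ass:signal}(ii) the $i$th column carries at least $2q^{-1}$ strong signals, whose $W_j$ exceed $\tau^*$, while on the good event no null or knockoff statistic exceeds $\tau^*$ in magnitude. Thus at $t=\tau^*$ one has $|\{W_i\le-\tau^*\}|=0$ and $|\{W_i\ge\tau^*\}|\ge 2q^{-1}$, so both the knockoff and the knockoffs+ stopping inequalities hold at $\tau^*$ (e.g.\ $1+0\le q\cdot 2q^{-1}=2$), giving $\tau\le\tau^*$ with high probability and decoupling the threshold from the individual signals.

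\emph{Counting missed signals.} On the good event a signal $j\in G(\{i\})$ is unselected only if $Z_j<\tau\le\tau^*$, which at its maximizing action $a_j^*$ forces $|\widehat\beta^{(a_j^*)}_j-\beta^{(a_j^*)}_j|\ge|\beta^{(a_j^*)}_j|-\tau^*\ge(\kappa_n-O(1))\lambda$ by the minimal-signal Condition~\ref{ass:signal}(i). Summing over missed signals and comparing with the $\ell_1$ bound gives $\#\{\text{missed}\}\cdot\kappa_n\lambda\le\sum_{a\in\aspace}\|\widehat{\bm\beta}^{(a)}-\bm\beta^{(a)}\|_1=O(d_*\lambda)$, so the missed fraction is $O(d_*/(\kappa_n|G(\{i\})|))=O(\kappa_n^{-1})$ with $|\aspace|$ fixed. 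Taking expectations and charging the coupling and LASSO bad events their probabilities then yields the three terms in \eqref{eqn:TPR}. The main obstacle is the threshold step: the knockoff cutoff is a complicated function of the very statistics whose size I am trying to certify, creating a circular dependence; Condition~\ref{ass:signal}(ii) is precisely what lets me replace the random $\tau$ by the deterministic $\tau^*$. A secondary difficulty is that $Z_j$ is a maximum over actions, which I handle through the per-action $\ell_1$ decomposition above rather than through a single regression bound; this is also what produces the $\ell_1$ (rather than $\ell_2$) comparison that gives the sharp $O(\kappa_n^{-1})$ rate.
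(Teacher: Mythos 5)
Your overall route coincides with the paper's: Berbee coupling to pay the $O\{K^{-1}(NT)^{1-k_0\log(\rho^{-1})}\}$ term and reduce to i.i.d.\ data, LASSO oracle inequalities on the augmented $(\mathbf{S},\widetilde{\mathbf{S}})$ design, a deterministic cap on the knockoff threshold supplied by the $2q^{-1}$ strong signals of Condition \ref{ass:signal}(ii), and a count of missed signals against the $\ell_1$ error. (The paper carries out the first two stages as Parts I--II of its proof and delegates the last two to the proof of Theorem 3 of \citet{fan2019rank}.)

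The gap is in your threshold step, and it propagates into the counting. You cap $\tau$ by $\tau^*=C\lambda$ with $\lambda\asymp\sqrt{n^{-1}\log(dn)}$ on the grounds that no null or knockoff statistic exceeds $\tau^*$ on the good event. That requires an $\ell_\infty$ bound of order $\lambda$ on the LASSO error at the null and knockoff coordinates, which does not follow from the restricted-eigenvalue-type Condition \ref{ass:prec-matrix-estimate}: the oracle inequality only gives $\|\widehat{\bm b}-\bm b\|_2=O(\sqrt{d_*}\lambda)$, so up to $O(d_*)$ coordinates can have $\widetilde Z_j\ge C\lambda$ and hence $W_j\le -C\lambda$, while Condition \ref{ass:signal}(ii) supplies only $2q^{-1}$ signals above the cap, so the stopping inequality at $t=C\lambda$ can fail. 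The achievable deterministic cap is $\tau^*\asymp\sqrt{d_*}\lambda$ --- this is precisely why Condition \ref{ass:signal}(ii) is phrased with the threshold $\sqrt{d_*n^{-1}\log(dn)}$, at which scale the Chebyshev count $\|\widehat{\bm b}-\bm b\|_2^2/(\tau^*)^2=O(1)$ annihilates the negative tail. But with $\tau^*\asymp\sqrt{d_*}\lambda$ your implication ``missed $\Rightarrow |\widehat\beta_j-\beta_j|\ge|\beta_j|-\tau^*\ge(\kappa_n-O(1))\lambda$'' collapses for the moderate signals that Condition \ref{ass:signal}(i) deliberately admits, since $\kappa_n\lambda-C\sqrt{d_*}\lambda$ may be negative. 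The missing ingredient is a case split on the realized $\tau$: a signal with $W_j\ge\kappa_n\lambda/4$ can be missed only if $\tau>\kappa_n\lambda/4$, in which case the defining minimality of $\tau$ at $t=\kappa_n\lambda/4<\tau$ forces $\#\{j:W_j\ge\kappa_n\lambda/4\}<q^{-1}\#\{j:W_j\le-\kappa_n\lambda/4\}=O(q^{-1}d_*\kappa_n^{-2})$ via the $\ell_2$ oracle inequality; combining this with the $\ell_1$-based count of signals whose estimate or knockoff estimate is off by $\kappa_n\lambda/4$ yields the stated $O(\kappa_n^{-1})$ fraction. This is the content of the argument in \citet{fan2019rank} that the paper invokes. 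A further minor looseness: your missed fraction comes out as $O(d_*/(\kappa_n|G(\{i\})|))$, which is $O(\kappa_n^{-1})$ only because the oracle inequality for the regression with response $i$ actually scales with $d_i=|G(\{i\})|$ rather than the global $d_*$.
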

\begin{theorem}[Type-II error]\label{thm:type-II}
    Under Conditions \ref{ass:mixing}, \ref{ass:external-dataset}, \ref{ass:signal2}, and \ref{ass:knockoffs} to \ref{ass:tuning}, the probability that the selected set $\widehat{G}$ returned by Algorithm \ref{alg:1} contains the minimal sufficient state is lower bounded by $1-O\left\{(1-\alpha)^{-1}(nd)^{-C}\right\}-O\left\{(1-\alpha)^{-1}K^{-1}(NT)^{1-k_0 \log(\rho^{-1})}\right\}$. 
\end{theorem}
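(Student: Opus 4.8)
The plan is to reduce the statement to a per-subset analysis and then recombine across subsets through the majority-vote step, the latter contributing the $(1-\alpha)^{-1}$ factor via Markov's inequality. Concretely, writing $\widehat{p}_j = K^{-1}\sum_{k=1}^K \mathbb{I}(j\in\widehat{G}_k)$, the event $\{G_{M}\not\subseteq\widehat{G}\}$ equals $\{\exists j\in G_{M}: \widehat{p}_j<\alpha\}$. For a fixed $j\in G_M$, $\{\widehat p_j < \alpha\} = \{K^{-1}\sum_k \mathbb{I}(j\notin\widehat G_k) > 1-\alpha\}$, so Markov's inequality gives $\mathbb{P}(\widehat p_j<\alpha)\le (1-\alpha)^{-1}\,K^{-1}\sum_k \mathbb{P}(j\notin\widehat G_k)$. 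A union bound over the at most $d_0$ indices in $G_M$ then reduces everything to bounding the single-subset failure probability $\mathbb{P}(j\notin\widehat G_k)$, uniformly over $j\in G_M$ and $k$. Crucially, no independence across subsets is needed at this stage, since Markov's inequality only involves expectations.

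The core of the argument is therefore a per-subset guarantee $\mathbb{P}(G_M\subseteq\widehat G_k)\ge 1 - O((nd)^{-C}) - O(K^{-1}(NT)^{1-k_0\log(\rho^{-1})})$. First I would establish a high-probability, strong-signal version of the one-iteration TPR bound in Theorem \ref{thm:power-lasso}: using the same LASSO screening and knockoff-threshold-control arguments (under Conditions \ref{ass:knockoffs} to \ref{ass:tuning}) but upgrading the in-expectation conclusion to an event, one shows that a single application of Algorithm \ref{alg:2} with response node $i$ selects every strong in-neighbor of $i$, i.e., every $j$ with $|B_{j,i}|\ge\sqrt{d_* n^{-1}\log(dn)}$, with probability at least $1 - O((nd)^{-C}) - O(K^{-1}(NT)^{1-k_0\log(\rho^{-1})})$. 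Because the responses used in the iterations of Algorithm \ref{alg:1} are data-dependent and random, I would prove this uniformly over all $d+1$ candidate responses $\{R, S'_1,\dots,S'_d\}$ via a union bound, and let $\mathcal E_k$ be the intersection of these good events; on $\mathcal E_k$ the selection follows the strong edges of $\mathcal G(\bm B)$ regardless of which responses are actually chosen.

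Next I would chain the single-iteration guarantee through the iterations using Condition \ref{ass:signal2}. On $\mathcal E_k$, iteration $1$ (response $R$, node $d+1$) selects all strong in-neighbors of the reward, and iteration $j\ge 2$, which uses responses $S'_l$ for previously selected $l$, selects all strong in-neighbors of each such $l$; by induction along directed paths, every node reachable from the reward by a path of strong edges is eventually selected, and the algorithm does not terminate before all such nodes are added. Condition \ref{ass:signal2} guarantees that, after deleting edges of weight below $\sqrt{d_* n^{-1}\log(dn)}$, every variable in $G_M$ still has a directed strong-edge path to the reward; hence $G_M\subseteq\widehat G_k$ on $\mathcal E_k$. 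Since $|G_M|=d_0\le d$, the longest such path has length at most $d$, so at most $d$ iterations are required, and the union bound over the $\le d+1$ responses and $\le d$ iterations inflates the failure probability only by a polynomial-in-$d$ factor, absorbed into $(nd)^{-C}$ (by enlarging $C$) and into the temporal term (by the choice of $k_0$).

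Putting the pieces together, $\mathbb{P}(j\notin\widehat G_k)\le \mathbb{P}(\mathcal E_k^c)= O((nd)^{-C}) + O(K^{-1}(NT)^{1-k_0\log(\rho^{-1})})$ uniformly in $j\in G_M$ and $k$, and the Markov and union-bound reduction multiplies this by $(1-\alpha)^{-1}d_0$, absorbing the $d_0$ factor as before, yielding the stated lower bound on $\mathbb{P}(G_M\subseteq\widehat G)$. I expect the main obstacle to be the high-probability strong-signal selection lemma of the second paragraph: controlling the knockoff threshold so that all strong signals — and in the iterative steps, possibly only a single strong predecessor per response — exceed it under $\beta$-mixing dependence. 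This requires combining Berbee's coupling (to pass to the approximately independent within-subset observations) with the LASSO $\ell_\infty$ estimation bound, and doing so uniformly over all candidate responses $\{R,S'_1,\dots,S'_d\}$.
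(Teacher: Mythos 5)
Your proposal is correct and follows essentially the same route as the paper's proof: a per-subset guarantee $\mathbb{P}(G_M\subseteq\widehat G_k)\ge 1-O\{(nd)^{-C}\}-O\{K^{-1}(NT)^{1-k_0\log(\rho^{-1})}\}$ obtained by combining the uniform-over-responses LASSO oracle inequalities with knockoff-threshold control of the strong signals, chained along the strong-edge paths guaranteed by Condition \ref{ass:signal2}, then lifted to $\widehat G$ via Markov's inequality at cost $(1-\alpha)^{-1}$. The only cosmetic difference is that you apply Markov's inequality per variable $j\in G_M$ and union-bound over $G_M$ (picking up a harmless $d_0$ factor), whereas the paper applies it directly to the event $\{G_M\not\subseteq\widehat G_k\}$; both yield the stated bound.
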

The last term in \eqref{eqn:TPR} is negligible since the constant $C$ can be chosen arbitrarily large. The orders of magnitude of the second and third terms on the RHS are determined by the signal strength and the mixing properties of the MDP, both of which decay to zero under Conditions \ref{ass:mixing} and \ref{ass:signal}. Consequently, Theorem \ref{thm:power-lasso} proves that SEEK's TPR asymptotically approaches $1$. Meanwhile, Theorem \ref{thm:type-II} establishes SEEK's sure screening property. Together with Theorem \ref{thm2}, it shows that SEEK consistently identifies the minimal sufficient state.

\vspace{-0.4cm}
\section{Simulation Experiments}\label{sec:exp}
\vspace*{-1em}
\subsection{Experiment Design, Benchmarks, and Evaluation Metrics}\label{sec:expermentdesign}
\vspace*{-0.2cm}
In this section, we investigate the variable selection performance of SEEK via simulation studies and demonstrate its usefulness in improving existing RL algorithms. For the latter purpose, we apply FQI or implicit Q-learning \citep[IQL,][]{kostrikov2022offline} to the entire dataset and its subsets defined by the selected state variables.

We consider four environments. The first (denoted by AR) is based on an autoregressive model,  where all state variables follow an AR(1) model detailed in Section \ref{apdx:a} of the Supplemental Materials. The second environment (denoted by Mixed) is described in Section \ref{sec:minimal}, which includes autoregressive and i.i.d. state variables. In both environments, the dimension of the minimal sufficient state $d_0$ is 2. The third and fourth environments are from the OpenAI Gym\footnote{\url{https://github.com/openai/gym}}: CartPole-v0 (CP)  and LunarLander-v2 (LL), where $d_0=4$ for CP and $d_0=8$ for LL. For these environments, we manually include null state variables such that $d$ equals the specified value. Each null state variable either follows a white noise process or an AR(1) process. For AR and Mixed, we fix the horizon $T$ at 150. In CP and LL, the horizons are approximately 130 and 340 decision points, respectively. 

We investigate the performance of SEEK under large-scale settings, in which $N=d = 600, 800,\textup{ or } 1000$\footnote{We also evaluate SEEK on small-scale datasets; see Section~\ref{sec:addsimuresults} in Supplementary Materials.}. Each batch dataset is generated using an $\epsilon$-greedy algorithm as the behavior policy, which is a mixture of the optimal policy (with a probability of 0.7) and the random policy (with a probability of 0.3). The optimal policies have closed-form expressions for the first two environments. For the last two environments, the optimal policy is approximated with a deep Q-network \citep{mnih2015human} agent for the CP environment and the LL environment. 

%\textbf{Environment settings} We consider the performance of SEEK in different environments. First, consider an autoregressive (AR) model, where the minimal sufficient state has dimension $p_0 = 2$, and $p-p_0$ null state variables are included with $p=20$, the environment of which is referred to as AR environment. Here the initial state follows standard normal distribution, the action space contains 2 different actions, and reward depends on the first 2 state variables. The transition is such that given each action, the next state follows an AR(1) model of the current state with a standard normal random error. See Appendix \ref{apdx:a} for further details.
%Finally, 

To implement SEEK, we fix the majority vote threshold $\alpha=0.5$ and the target FDR level $q=0.1$. In the first two environments, the exponential $\beta$-mixing condition (see Condition \ref{ass:mixing}) is satisfied, and we apply the method proposed in Section \ref{sec:bestK} of the Supplementary Materials to determine the number of splits $K$ adaptively.  In addition, as the system is linear for these environments, we apply LASSO in Step 3 of Algorithm \ref{alg:2} for the implementation of SEEK. However, the last two environments are neither ergodic nor linear. As such, we apply random forests (RFs) when implementing Algorithm \ref{alg:2}. We fix $K =\log(NT)$ across different simulation replications. As shown in a sensitivity analysis in Section \ref{sec:addsimuresults} of the Supplemental Materials, SEEK has robust performance against the choice of $K$, as long as it is chosen in the order of $O(\log(NT))$.

Four benchmark methods are included. The first two are the Reward-only and One-step methods described in Section \ref{sec:minimal}, for which the same $K, \alpha$, knockoff filter and $q$ are used for variable selection as SEEK. The pseudocode of the two algorithms are provided in Section~\ref{sec:algo-rewardonly-onestep} of the Supplemental Materials. The third benchmark is also an implementation of the one-step approach but with a different variable selection method. More specifically, it conducts variable selection by LASSO (denoted by VS-LASSO) if the system is linear and by RF (denoted by VS-RF) otherwise. The last benchmark is the sparse feature selection method \citep[SFS,][]{hao2021sparse} based on LASSO. 

The methods above are evaluated by the following criteria: (i) FDR/mFDR, false positive rate (FPR, the percentage of falsely selected variables), and (ii) true positive rate (TPR, the percentage of correctly selected variables). We also investigate how the quality of the learned policies is affected by variable selection. For this purpose, we compute the value difference (VD) by taking the difference between the cumulative rewards of the policies learned by selected states and all states. For fair comparison, all these policies are learned via FQI in the first three environments. In the last environment, however, we found that FQI had a poor performance. Thus, we turn to employ IQL, a state-of-the-art offline RL algorithm, in this environment. Their cumulative rewards are calculated using a Monte Carlo method. Information about tuning parameters of LASSO and RF is given in Table \ref{tab:hyper-knockoff} in the Supplemental Materials. Details of the neural network architectures in the FQI and IQL are reported in Table \ref{tab:hyper-policy-learning} in the Supplemental Materials.

% \textbf{Methods and benchmark specification.} We apply the proposed SEEK method in Algorithm \ref{alg:1} to the simulated offline datasets. We apply two machine learning algorithms to compute the $W$-statistics, corresponding to LASSO and random forest (see their hyper-parameters information in Tables \ref{tab:hyper_lasso} and \ref{tab:hyper_rf} of Appendix \ref{apdx:a}), respectively, to compute feature statistics, 
% %with knockoffs and knockoff+, respectively, for variable selection criterion. 
% and implements both the knockoff and knockoff+ procedures in Algorithm \ref{alg:2} to determine the threshold. This yields four methods, denoted by SEEK-LASSO, SEEK-RF, SEEK-LASSO+, and SEEK-RF+. %We split samples by $K = \log(NT)$. 
% In the implementation, we fix $K=\log(NT)$, $q=0.1$ and $\alpha=0.5$.  %The target FDR is set to be  $q=0.1$ and the threshold $\alpha$ is fixed to $0.5$. 
% Two benchmarks are included for comparison, including a one-step approach that performs variable selection using LASSO or random forest (denoted by VS-LASSO or VS-RF), and %.The first one is the 
% %We consider the simple variable selection (VS) method with the response variable to be the augmented vector $[R, \mathbf{S}']$  based on either LASSO and random forest, namely, VS-LASSO and VS-RF. Here, for VS-RF, we fine-tune the best threshold for variable selection based on the feature important so that it can control the false discovery rate under most cases. We also implement 
% the sparse feature selection (SFS) method proposed by \cite{hao2021sparse} based on LASSO. %(denoted by SFS-LASSO).

\vspace{-0.3cm}
\subsection{Results}
\vspace{-0.2cm}
\begin{table}[!t]
	\linespread{1.25}\selectfont
	\setlength{\tabcolsep}{4.8pt}
	\caption{Performances of SEEK and benchmark methods in the AR environment. The cumulative rewards of policies learned by the all states are 1.25, 1.30, and 1.32, when $N$ is $600, 800$, and $1000$, respectively.}
	\label{tab:large-scale-ar}
	\centering
	\scriptsize
	\vspace{-0.1cm}
	\begin{tabular}{l|ccc|ccc|ccc|ccc|ccc}
	\toprule
	& \multicolumn{3}{c}{SEEK} & \multicolumn{3}{c}{Reward-only} & \multicolumn{3}{c}{One-step} & \multicolumn{3}{c}{SFS} & \multicolumn{3}{c}{VS-LASSO} \\ \hline
	$N=d$ & 600 & 800 & 1000 & 600 & 800 & 1000 & 600 & 800 & 1000 & 600 & 800 & 1000 & 600 & 800 & 1000 \\ \hline
	$K$                  & 52.60 & 49.00 & 45.30 & 52.60 & 49.00 & 45.20 & 52.30 & 49.10 & 45.20 & /    & /    & /    & 52.60 & 49.00 & 49.00 \\ \hline
	mFDR                 & 0.00  & 0.00  & 0.00  & 0.00  & 0.00 & 0.00   & 0.98  & 0.99  & 0.99  & 0.00 & 0.00 & 0.00 & 0.98  & 0.99  & 0.99 \\ \hline
	FDR                  & 0.00  & 0.00  & 0.00  & 0.00  & 0.00 & 0.00   & 1.00  & 1.00  & 1.00  & 0.00 & 0.00 & 0.00 & 1.00  & 1.00  & 1.00 \\ \hline
	FPR                  & 0.00  & 0.00  & 0.00  & 0.00  & 0.00 & 0.00   & 1.00  & 1.00  & 1.00  & 0.00 & 0.00 & 0.00 & 1.00  & 1.00  & 1.00 \\ \hline
	TPR                  & 1.00  & 1.00  & 1.00  & 1.00  & 1.00 & 1.00   & 1.00  & 1.00  & 1.00  & 1.00 & 1.00 & 1.00 & 1.00  & 1.00  & 1.00 \\ \hline
	VD  & 0.76 & 0.80 & 0.77 & 0.76 & 0.80 & 0.77 & -0.02 & 0.03  & 0.02  & 0.74 & 0.77 & 0.80 & 0.07 & 0.00  & -0.08 \\ 
	\bottomrule
	\end{tabular}
\end{table}
\begin{table}[!t]
	\linespread{1.25}\selectfont
	\caption{Results on the Mixed environment. The cumulative rewards of policies learned by the original state are 2.07, 2.12, and 2.15, when $N$ is $600, 800$, and $1000$, respectively.}
	\label{tab:large-scale-mixed}
	\centering
	\scriptsize
	\vspace{-0.1cm}
	\begin{tabular}{l|ccc|ccc|ccc|ccc|ccc}
	\toprule
	& \multicolumn{3}{c}{SEEK} & \multicolumn{3}{c}{Reward-only} & \multicolumn{3}{c}{One-step} & \multicolumn{3}{c}{SFS} & \multicolumn{3}{c}{VS-LASSO} \\ \hline
	$N=d$ & 600  & 800  & 1000 & 600  & 800  & 1000 & 600   & 800  & 1000 & 600 & 800 & 1000 & 600 & 800 & 1000 \\ \hline
	$K$   & 3.40 & 3.40 & 3.60 & 3.60 & 3.60 & 3.40 & 3.40  & 3.48 & 3.60 & / & / & / & 3.22 & 4.16 & 4.56 \\ \hline
	mFDR  & 0.00 & 0.00 & 0.00 & 0.00 & 0.00 & 0.00 & 0.98  & 0.99 & 0.99 & 0.00 & 0.00 & 0.00 & 0.98 & 0.99 & 0.99 \\ \hline
	FDR   & 0.00 & 0.00 & 0.00 & 0.00 & 0.00 & 0.00 & 1.00  & 1.00 & 1.00 & 0.00 & 0.00 & 0.00 & 1.00 & 1.00 & 1.00 \\ \hline
	FPR   & 0.00 & 0.00 & 0.00 & 0.00 & 0.00 & 0.00 & 1.00  & 1.00 & 1.00 & 0.00 & 0.00 & 0.00 & 1.00 & 1.00 & 1.00 \\ \hline
	TPR   & 1.00 & 1.00 & 1.00 & 0.50 & 0.50 & 0.50 & 1.00  & 1.00 & 1.00 & 0.50 & 0.50 & 0.50 & 1.00 & 1.00 & 1.00 \\ \hline
	VD   & 0.02 & 0.02 & 0.02 & 0.02 & 0.02 & 0.02 & -0.03 & 0.00 & 0.02 & 0.02 & 0.01 & 0.01 & -0.02 & -0.01 & 0.00 \\ 
	\bottomrule
	\end{tabular}
\end{table}
\begin{table}[!t]
\linespread{1.25}\selectfont
\setlength{\tabcolsep}{4.5pt}
\caption{Results on the CP environment. The cumulative rewards of the policies based on the original state are summarized in Table~\ref{tab:cartpole-drl-value-large} of the Supplementary Materials.}
\label{tab:large-scale-cartpole}
\scriptsize
\centering
\vspace{-0.1cm}
\begin{tabular}{l|l|ccc|ccc|ccc|ccc|ccc}
\toprule
\multirow{2}{*}{Type}
& Method & \multicolumn{3}{c}{SEEK} & \multicolumn{3}{c}{Reward-only} & \multicolumn{3}{c}{One-step} & \multicolumn{3}{c}{SFS} & \multicolumn{3}{c}{VS-RF} \\ 
\cline{2-17}
& $N=d$ & 600 & 800 & 1000 & 600 & 800 & 1000 & 600 & 800 & 1000 & 600 & 800 & 1000 & 600 & 800 & 1000 \\ 
\hline
\multirow{5}{*}{IID} 
& mFDR  & 0.00   & 0.00   & 0.00   & 0.00   & 0.00  & 0.00    & 0.00 & 0.00 & 0.00 & 0.00 & 0.00 & 0.00 & 0.00 & 0.00 & 0.00 \\ \cline{2-17}
& FDR   & 0.00   & 0.00   & 0.00   & 0.00   & 0.00  & 0.00    & 0.00 & 0.00 & 0.00 & 0.00 & 0.00 & 0.00 & 0.00 & 0.00 & 0.00 \\ \cline{2-17}
& FPR   & 0.00   & 0.00   & 0.00   & 0.00   & 0.00  & 0.00    & 0.00 & 0.00 & 0.00 & 0.00 & 0.00 & 0.00 & 0.00 & 0.00 & 0.00 \\ \cline{2-17}
& TPR   & 1.00   & 1.00   & 1.00   & 0.75   & 0.75  & 0.75    & 1.00 & 1.00 & 1.00 & 0.75 & 0.75 & 0.75 & 0.00 & 0.00 & 0.00 \\ \cline{2-17}
& VD    & 859 & 928 & 767 & 825 & 710 & 487 & 856 & 929 & 764 & 804 & 603 & 531 & -13 & -14 & -15 \\ \hline
\multirow{5}{*}{AR(1)}
& mFDR  & 0.00   & 0.00   & 0.00   & 0.00   & 0.00   & 0.00    & 0.98 & 0.99  & 0.99 & 0.00   & 0.00 & 0.00 & 0.00 & 0.00 & 0.00 \\ \cline{2-17}
& FDR   & 0.00   & 0.00   & 0.00   & 0.00   & 0.00   & 0.00    & 1.00 & 1.00  & 1.00 & 0.00   & 0.00 & 0.00 & 0.00 & 0.00 & 0.00 \\ \cline{2-17}
& FPR   & 0.00   & 0.00   & 0.00   & 0.00   & 0.00   & 0.00    & 1.00 & 1.00  & 1.00 & 0.00   & 0.00 & 0.00 & 0.00 & 0.00 & 0.00 \\ \cline{2-17}
& TPR   & 1.00   & 1.00   & 1.00   & 0.75   & 0.75   & 0.75    & 1.00 & 1.00  & 1.00 & 0.75   & 0.75 & 0.75 & 0.00 & 0.00 & 0.00 \\ \cline{2-17}
& VD   & 115 & 130 & 96 & 101 & 101 & 75 & 1 & -1 & 2 & 107 & 86 & 81 & -12 & -13 & -12 \\ 
\bottomrule
\end{tabular}
\end{table}
\begin{table}[!t]
\linespread{1.25}\selectfont
\setlength{\tabcolsep}{4.5pt}
	\caption{Results on the LL environment. The cumulative rewards of the policies based on the original state are summarized in Table~\ref{tab:cartpole-drl-value-large} of the Supplementary Materials.}
	\label{tab:large-scale-lunarlander}
	\scriptsize
	\centering
	\vspace{-0.1cm}
	\begin{tabular}{l|l|ccc|ccc|ccc|ccc|ccc}
	\toprule
	\multirow{2}{*}{Type}
	& Method & \multicolumn{3}{c}{SEEK} & \multicolumn{3}{c}{Reward-only} & \multicolumn{3}{c}{One-step} & \multicolumn{3}{c}{SFS} & \multicolumn{3}{c}{VS-RF} \\ 
	\cline{2-17}
	& $N=d$ & 600 & 800 & 1000 & 600 & 800 & 1000 & 600 & 800 & 1000 & 600 & 800 & 1000 & 600 & 800 & 1000 \\ 
	\hline
	\multirow{5}{*}{IID} 
	& mFDR & 0.00  & 0.00  & 0.00  & 0.00   & 0.00  & 0.00    & 0.00 & 0.00  & 0.00 & 0.00 & 0.00 & 0.00 & 0.00 & 0.00 & 0.00 \\ \cline{2-17}
	& FDR  & 0.00  & 0.00  & 0.00  & 0.00   & 0.00  & 0.00    & 0.00 & 0.00  & 0.00 & 0.00 & 0.00 & 0.00 & 0.00 & 0.00 & 0.00 \\ \cline{2-17}
	& FPR  & 0.00  & 0.00  & 0.00  & 0.00   & 0.00  & 0.00    & 0.00 & 0.00  & 0.00 & 0.00 & 0.00 & 0.00 & 0.00 & 0.00 & 0.00 \\ \cline{2-17}
	& TPR  & 1.00  & 1.00  & 1.00  & 0.12   & 0.12  & 0.12    & 1.00 & 1.00  & 1.00 & 0.38 & 0.38 & 0.38 & 0.00 & 0.00 & 0.00 \\ \cline{2-17}
	& VD   & 57 & 32 & 43 & -686 & -682 & -669 & 57 & 32 & 43 & -686 & -682 & -5 & -723 & -718 & -785 \\ \hline
	\multirow{5}{*}{AR(1)}
	& mFDR  & 0.00   & 0.00   & 0.00   & 0.00   & 0.00   & 0.00    & 0.98 & 0.99  & 0.99 & 0.00   & 0.00 & 0.00 & 0.00 & 0.00 & 0.00 \\ \cline{2-17}
	& FDR   & 0.00   & 0.00   & 0.00   & 0.00   & 0.00   & 0.00    & 1.00 & 1.00  & 1.00 & 0.00   & 0.00 & 0.00 & 0.00 & 0.00 & 0.00 \\ \cline{2-17}
	& FPR   & 0.00   & 0.00   & 0.00   & 0.00   & 0.00   & 0.00    & 1.00 & 1.00  & 1.00 & 0.00   & 0.00 & 0.00 & 0.00 & 0.00 & 0.00 \\ \cline{2-17}
	& TPR   & 1.00   & 1.00   & 1.00   & 0.12   & 0.12   & 0.12    & 1.00 & 1.00  & 1.00 & 0.38   & 0.38 & 0.38 & 0.00 & 0.00 & 0.00 \\ \cline{2-17}
	& VD   & 435 & 405 & 430 & -307 & -308 & -282 & -2 & 1 & -1 & -307 & -308 & -7 & -344 & -345 & -318 \\ 
	\bottomrule
	\end{tabular}
\end{table}
The results are reported in Tables~\ref{tab:large-scale-ar}--\ref{tab:large-scale-lunarlander}. We summarize our findings as follows. First, the proposed SEEK algorithm outperforms the other methods, with its FDRs close to zero and TPRs close to one in almost all cases. In addition, it can be seen that the resulting policies based on the selected state variables achieve the highest cumulative rewards, and are consistently better than the estimated optimal policies based on all state variables across all settings. Particularly, in the CP environment, the policies learned by all states have cumulative rewards range from 10 to 15 (see Table~\ref{tab:cartpole-drl-value-large} in the Supplementary Materials), which are eight to ten times lower than those of the trained policies based on SEEK. This demonstrates the importance of variable selection in RL. Second, the Reward-only method omits significant state variables as they do not directly impact the rewards. This failure substantially worsens the learned policy in the last two environments. Third, regardless of the implementation (either with the knockoff filter or vanilla machine learning methods), the one-step method fails with high FDR and poorly-estimated policies in the first, third and last environments. Fourth, SFS has low power in the Mixed and LL environments. As noted previously, this is because SFS requires linear function approximation and cannot handle complex nonlinear systems. These results align with our theoretical findings, demonstrating the advantage of the proposed SEEK algorithm. 

\vspace*{-0.8em}
\section{Analysis of the MIMIC-III Dataset}\label{sec:mimic3}
\vspace*{-0.4em}
The MIMIC-III v1.4 dataset contains critical care data for over 40,000 patients from Beth Israel Deaconess Medical Center, covering the period from 2001 to 2012 \citep{johnson2016mimic}. As commented in the introduction, each patient's state contains 47 variables, including their baseline information, laboratory values, vital signs, and intake/output records. Following 
%In line with 
the analyses in \citet{raghu2017continuous} and \citet{ zhou2023optimizing}, we discretize the dosage level of 
intravenous (IV) fluid and maximum vasopressor (VP) into five each. This yields 25 actions, each representing a combination of the two treatments' dosage levels. 
The reward is set to be the inverse of the SOFA score, which assesses the severity of patient organ failure \citep{lambden2019sofa}. %thereby making a higher reward more desirable. 
%Our state space encompasses 47 physiological features, including demographics, laboratory values, vital signs, and intake and output records. 
%We aim to utilize SEEK on this processed dataset to identify the most effective features among the 47. For comparative analysis, we also explore other variable selection methods such as Reward Only, SFS, and VS-LASSO. 
Similar to the simulation study, we compare SEEK against One-step, Reward-only, SFS, and VS-RF, as shown in Table \ref{tab:mimic3}: 
\begin{itemize}[leftmargin=*]
    \item \textbf{SEEK} substantially reduces the state dimension from 47 to 3. Among the three selected variables, two of them are related to urine output: urine output in the last 4 hours and total urine output. Both are identified as crucial by existing literature and clinician recommendations \citep{komorowski2018artificial}.
    \item \textbf{Reward-only} successfully selects urine output in the last 4 hours but fails to include total urine output. This is expected, as the method is designed to identify variables that affect immediate rewards rather than long-term outcomes.
    \item \textbf{VS-RF} and \textbf{One-step} fail to eliminate irrelevant variables, selecting nearly all variables. This suggests that in this dataset, many variables, while not affecting immediate rewards or important states, influence themselves temporally over time. 
    \item \textbf{SFS} also selects 3 variables, but they are entirely different from those chosen by SEEK. 
\end{itemize}
 
To illustrate the usefulness of variable selection, we apply IQL to the data subset based on the selected state variables to learn the optimal policy; see Section \ref{sec:details-IQL} of the Supplementary Materials for its implementation details. We next evaluate these estimated policies using fitted Q-evaluation \citep{le2019batch} and report their estimated cumulative reward in Table~\ref{tab:mimic3}. We further implement two deep RL algorithms on the original dataset without variable selection, one directly applying IQL and the other (denoted by AE) 
utilizing an auto-encoder to extract low-dimensional state representations before applying IQL. The results show that the estimated optimal policy based on SEEK achieves the highest cumulative reward. In conclusion, our analysis reveals the following findings: (i) The comparison against IQL on the whole dataset reveals the usefulness of variable selection in boosting the performance of existing deep RL algorithms; (ii) The comparison against other competing variable selection methods emphasizes the importance of using the most relevant variables for policy learning; (iii) The comparison against AE demonstrates the superiority of SEEK over unsupervised neural-network-based state representation learning methods.

%\newpage

\bibliographystyle{agsm}
\spacingset{1.50}
\bibliography{references}

\newpage
\newcommand\independent{\protect\mathpalette{\protect\independenT}{\perp}}
\def\independenT#1#2{\mathrel{\rlap{$#1#2$}\mkern2mu{#1#2}}}

\newcommand{\fb}{{\bf b}}
\newcommand{\fs}{{\bf s}}
\newcommand{\fR}{{\bf R}}
\newcommand{\Mean}{{\mathbb{E}}}
\newcommand{\Cov}{{\mbox{cov}}}
\newcommand{\Corr}{{\mbox{corr}}}
\newcommand{\diag}{{\mbox{diag}}}
\newcommand{\prob}{{\mbox{Pr}}}

\def\cvgn{\mathop{\longrightarrow}_{n\to+\infty}}
\def\ds1{{\mathrm{1 \hspace{-2.6pt} I}}}
\def\dsB{\mathbb {B}}
\def\dsC{\mathbb {C}}
\def\dsE{\mathbb {E}}
\def\dsN{\mathbb {N}}
\def\dsP{\mathbb {P}}
\def\dsQ{\mathbb {Q}}
\def\dsR{\mathbb {R}}
\def\dsS{\mathbb {S}}
\def\dsV{\mathbb {V}}
\def\dsZ{\mathbb {Z}}
\def\calA{{\cal A}}
\def\calB{{\cal B}}
\def\calBX{{{\cal B}_\calX}}
\def\calBY{{{\cal B}_\calY}}
\def\calBR{{\cal B}}
\def\calBZ{{{\cal B}_\calZ}}
\def\calC{{\cal C}}
\def\tcalC{{\tilde{\cal C}}}
\def\calCm{{{\cal C}_{max}}}
\def\calD{{\cal D}}
\def\calE{{\cal E}}
\def\calF{{\cal F}}
\def\calFXY{{\cal F}}%(\dsR^d,\calY)}
\def\calG{{\cal G}}
\def\calH{{\cal H}}
\def\calI{{\cal I}}
\def\calK{{\cal K}}
\def\calL{{\cal L}}
\def\calM{{\cal M}}
\def\calN{{\cal N}}
\def\calP{{\cal P}}
\def\calQ{{\cal Q}}
\def\calR{{\cal R}}
\def\tcalR{{\tilde{\cal R}}}
\def\calS{{\cal S}}
\def\calT{{\cal T}}
\def\calU{{\cal U}}
\def\calV{{\cal V}}
\def\calW{{\cal W}}
\def\calX{{\cal X}}
\def\calY{{\cal Y}}
\def\calZ{{\cal Z}}

\def\Dpi{\calD^\pi}
\def\Dpin{ \hat{\calD}^\pi_n}
\def\barotimes{\,\bar{\otimes}\,}
\def\Var{\text{Var}}
\def\EE{\mathbb{E}}
\def\w{\boldsymbol{w}}
\def\e{\boldsymbol{e}}
\def\f{\boldsymbol{f}}
\def\vpsi{\boldsymbol{\psi}}
\def\XX{\textbf{X}}
\def\ZZ{\textbf{Z}}
\def\rwrd{\textbf{R}}
\def\vphi{\boldsymbol{\phi}}
\def\R{\mathbb{R}}
\def\Regret{\operatorname{Regret}}
\def\Rn{\mathbf{R}_n}
\def\Rem{\operatorname{Rem}}
\def\Yn{\mathbf{Y}_n}
\def\I{\,\mathcal{I}}
\def\ones{\mathbf{1}}
\def\zeros{\mathbf{0}}
\def\B{\mathcal{B}}
\def\argmin{\operatorname{argmin}}
\def\argminb{\operatorname*{argmin}}
\def\argmax{\operatorname{argmax}}
\def\N{\mathcal{N}}
\def\argmaxb{\operatorname*{argmax}}
\def\wcvg{\Rightarrow}
\renewcommand{\liminf}{\varliminf}
\renewcommand{\limsup}{\varlimsup}
\def\op{\operatorname{op}}
\def\X{\mathcal{X}}
\def\Y{\mathcal{Y}}
\def\jiao{\cap}
\def\bing{\cup}
\def\C{\mathcal{C}}
\def\linfty{l^{\infty}}
\def\Q{\mathcal{Q}}
\def\given{\, | \,}
\def\Given{\, \Big| \,}
\def\GG{\mathbb{G}}
\def\Gn{\mathbb{G}_n}
\def\V{\mathcal{V}}
\def\transpose{\top}
\def\H{\mathcal{H}}
\def\Morth{\mathcal{M}^{\perp}}
\def\opt{\text{opt}}
\def\Span{\text{Span}}
\def\Vn{\hat{\mathcal{V}}_n}
\def\Mn{\hat{M}_N}

\begin{center}
    {\LARGE{\bf Supplementary Materials to ``Sequential Knockoffs for Variable Selection in Reinforcement Learning''}}
\end{center}
\bigskip
\bigskip
\bigskip

The Supplementary Materials are organized as follows. In Section~\ref{apdx:a}, we provide detailed descriptions of our numerical experiments. Section~\ref{apdx:b} outlines the implementation details of the SEEK algorithm. Technical proofs are presented in Section~\ref{apdx:proof}, while additional extensions and discussions are demonstrated in Section~\ref{apdx:extension}. Throughout the paper, uppercase and lowercase letters denote random variables and their realizations, respectively, while boldface letters represent vectors or matrices.

\renewcommand\thetable{A\arabic{table}}
\renewcommand\thefigure{A\arabic{figure}}

\section{Additional Numerical Details}\label{apdx:a}
This section begins by examining the performance of an algorithmic variant introduced at the end of Section~\ref{sec:seek}. Following this, Section~\ref{sec:exp-detail} offers additional numerical configurations and results that are directly related to Section~\ref{sec:exp}. In Section~\ref{sec:addsimuresults}, we proceed with simulation results on small-scale datasets, exploring how the selection of $K$ influences the identification of significant states. Subsequently, Section~\ref{sec:details-IQL} details the implementation of IQL applied to policy learning using the MIMIC-III dataset. Lastly, in Section~\ref{sec:ohio}, we present an additional data analysis conducted on the OhioT1DM dataset.

\subsection{Balancing Type-I and Type-II errors}\label{sec:simulation-balancing}

This section examines an algorithmic variant discussed at the conclusion of Section~\ref{sec:seek}. This algorithmic variant aims at enhancing the performance of the estimated optimal policy by managing type-I and type-II errors. This variant, named ``SEEK-Alpha,'' implements a data-adaptive approach to selecting the parameter $\alpha$. Specifically, SEEK-Alpha picks the best $\alpha$ from the set $\{0.15, 0.3, 0.5, 0.65, 0.8\}$ such that it maximizes the estimated cumulative reward computed by FQE.

We conducted a simulation study to compare SEEK-Alpha against the vanilla SEEK algorithm under the CP environment. Figure~\ref{fig:select-alpha} illustrates that SEEK-Alpha generally achieves a higher TPR than SEEK, albeit with a slightly higher FDR that remains manageable. Importantly, SEEK-Alpha accurately identifies more significant states, leading to policies that generally outperform those derived by SEEK (as shown in the right panel of Figure~\ref{fig:select-alpha}). This empirical evidence supports that balancing type-I and type-II errors with SEEK-Alpha enhances policy learning in practice.

\begin{figure}[H]
	\centering
	% \vspace*{-0.6em}
	\includegraphics[width=0.9\linewidth]{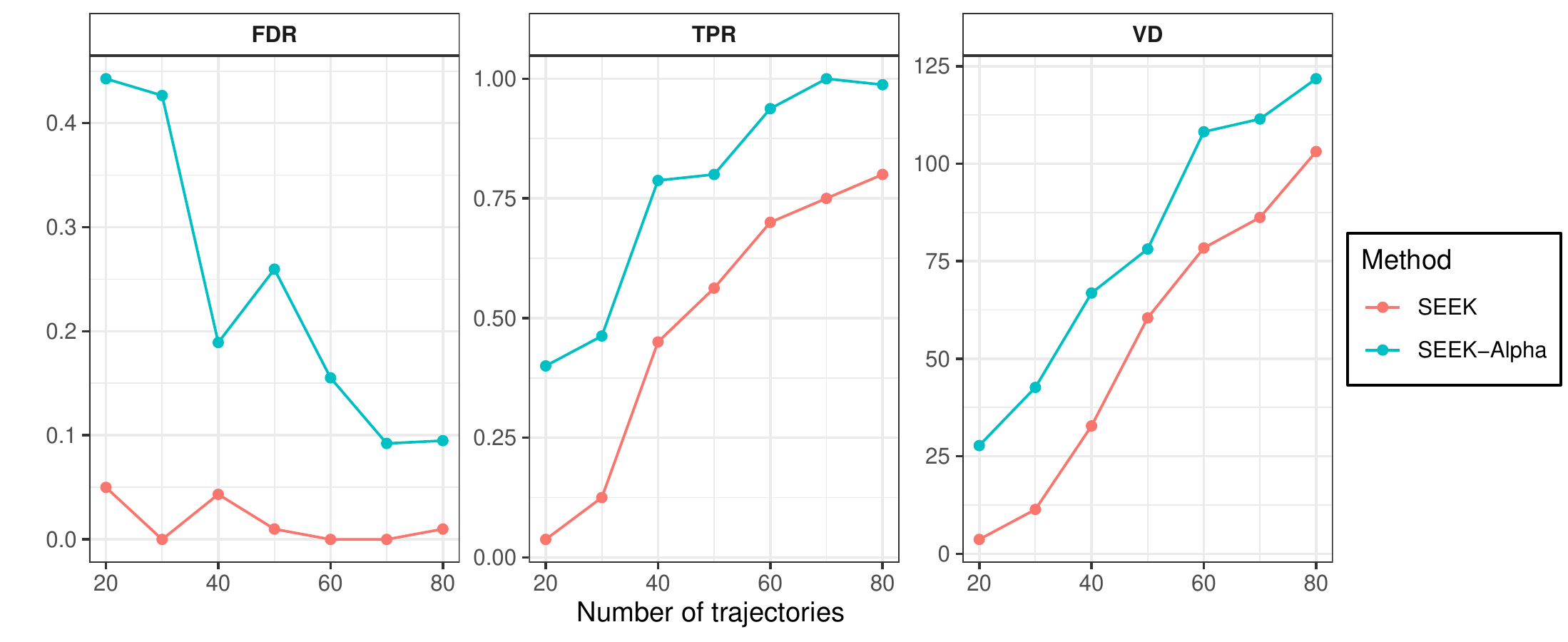}
	\vspace*{-10pt}
	\caption{The FDR, TPR, VD of SEEK and SEEK-Alpha. We use the CP environment and manually inject 96 white noises into the state to create a 100-dimensional state system.}
	\vspace*{-0.6em}
	\label{fig:select-alpha}
\end{figure}

\subsection{Simulation: Data Generating Processes, Tuning-parameters and Additional Results}\label{sec:exp-detail}
The AR environment in Section \ref{sec:exp} is generated as follows. Each state is generated according to: $S_{t+1,j}=0.9 S_{t,j}A_t+0.09S_{t,j}(1-A_t)+N(0,1)$ for any $t$ and $j$. The reward is given by $R_t=A_t (S_{t,1}+S_{t,2}) + N(0,1)$ for any $t$. 

In our implementation, we apply LASSO and random forest to compute the $W$-statistics and summarize their hyper-parameters information in Table \ref{tab:hyper-knockoff}. 
\begin{table}[H]
	\caption{Hyper-parameters information in Knockoff procedure.}\label{tab:hyper-knockoff}
	\centering
	\begin{tabular}{l|c|c}
		\toprule  
		Method & Hyper-parameter     & Setting   \\
		\midrule 
		\multirow{2}{*}{LASSO}
		& Penalty term $\lambda$ selection criterion    &  Bayesian information criterion    \\
		& Penalty term $\lambda$ selection range     & $[\exp(-20),\exp(-8)]$ \\
		\midrule
		\multirow{4}{*}{RF}
		& Number of trees in the forest & $NT/K/30$    \\
		& Maximum depth of the tree    & 4       \\
		& Number of variables for the best split    & $\sqrt{p}$  \\
		& Others   & Default values in \textsf{scikit-learn}\footnote{\url{https://github.com/scikit-learn/scikit-learn}}  \\
		\bottomrule
	\end{tabular}
\end{table}
\begin{table}[H]
	\caption{Hyper-parameters information for multilayer perceptron in policy learning.}\label{tab:hyper-policy-learning}
	\centering
	\begin{tabular}{l|c|c}
		\toprule  
		Method & Hyper-parameter     & Setting   \\
		\midrule
		\multirow{3}{*}{FQI} 
		& Number of hidden layers & 1       \\
		& Number of hidden units    & 128       \\
		& Others   & Default values in \textsf{scikit-learn}  \\
		\midrule
		\multirow{3}{*}{IQL} 
		& Number of hidden layers   & 2       \\
		& Number of hidden units    & 128       \\
		& Others   & Default values in \textsf{d3rlpy}\footnote{https://github.com/d3rlpy}  \\
		\bottomrule
	\end{tabular}
\end{table}
\begin{table}[H]
	\centering
	\caption{The cumulative reward of policies learned with all states in large-scale datasets.}\label{tab:cartpole-drl-value-large}
	\begin{tabular}{l|l|lccc}
		\toprule
		Environment & $N=p$ & 600 & 800 & 1000 \\
		\midrule
		\multirow{2}{*}{CartPole-v0}
		& IID & 13.14 & 14.21 & 14.78 \\
		& AR  & 11.71 & 12.62 & 12.36 \\
		\midrule
		\multirow{2}{*}{LunarLander-v2}
		& IID & -77.48 & -81.84 & -94.13 \\
		& AR  & -456.04 & -455.35 & -481.88 \\
		\bottomrule
	\end{tabular}
\end{table}

\subsection{Additional Simulation Results}\label{sec:addsimuresults}

\subsubsection{Small-scale Datasets}
Under small-scale settings, we choose $N$ from $\{10,20,40\}$ and $d=20$ for the AR environment and choose $N$ from $\{50,100,200\}$ and $d=20$ for the Mixed environment. As for the CP environment, $N$ is chosen from $\{100, 200\}$, and $d$ takes values in $\{50, 100, 150, 200\}$. The remaining settings are the same as that in large-scale datasets. The results under the small-scale on AR and Mixed environments are can be seen from Tables \ref{tab:ar-compare} and \ref{tab:mixed-compare}, respectively. And Figures \ref{fig:carpole_rf} demonstrate the results on the CP environments. From these results, we can see that SEEK still has the best overall variable selection performance in the small-scale datasets. 

\begin{table}[H]
	\linespread{1.25}\selectfont
	\setlength{\tabcolsep}{4.8pt}
	\caption{Performances of SEEK and benchmark methods in the AR environment, aggregated over 100 simulation replications. $T$ is fixed at 150. The cumulative rewards of policies learned by all states are 1.79, 1.80 and 1.80 when $N$ is $10, 20$ and $40$, respectively.}
	\label{tab:ar-compare}
	\centering
	\scriptsize
	\begin{tabular}{l|ccc|ccc|ccc|ccc|ccc}
	\toprule
	& \multicolumn{3}{c}{SEEK} & \multicolumn{3}{c}{Reward-only} & \multicolumn{3}{c}{One-step} &  \multicolumn{3}{c}{SFS} & \multicolumn{3}{c}{VS-LASSO} \\ 
	\hline
	$N$ & 10 & 20 & 40 & 10 & 20 & 40 & 10 & 20 & 40 & 10 & 20 & 40 & 10 & 20 & 40 \\ \hline
	$K$ & 19.60 & 24.62 & 29.28 & 19.60 & 24.62 & 29.28 & 19.60 & 24.62 & 29.28 & / & / & / & 19.60 & 24.62 & 29.28 \\ \hline
	mFDR & 0.00 & 0.00 & 0.00 & 0.00 & 0.00 & 0.00 & 0.60  & 0.60  & 0.60 & 0.02 & 0.01 & 0.02 & 0.60 & 0.60 & 0.60 \\ \hline
	FDR  & 0.00 & 0.00 & 0.00 & 0.00 & 0.00 & 0.00 & 0.90  & 0.90  & 0.90 & 0.07 & 0.03 & 0.06 & 0.90 & 0.90 & 0.90 \\ \hline
	FPR  & 0.00 & 0.00 & 0.00 & 0.00 & 0.00 & 0.00 & 1.00  & 1.00  & 1.00 & 0.02 & 0.01 & 0.02 & 1.00 & 1.00 & 1.00 \\ \hline
	TPR  & 0.96 & 1.00 & 1.00 & 0.91 & 1.00 & 1.00 & 1.00  & 1.00  & 1.00 & 1.00 & 1.00 & 1.00 & 1.00 & 1.00 & 1.00 \\ \hline
	VD  & 0.19 & 0.27 & 0.28 & 0.12 & 0.27 & 0.28 & -0.30 & -0.04 & 0.06 & 0.23 & 0.27 & 0.28 & -0.30 & -0.03 & 0.10 \\
	% \hline
	% Value & 1.91 & 2.07 & 2.08 & 1.49 & 1.76 & 1.86 & 1.98 & 2.07 & 2.08 & 2.02 & 2.06 & 2.07 & 1.49 & 1.77 & 1.90 \\ 
	\bottomrule
	\end{tabular}
\end{table}
\begin{table}[H]
	\linespread{1.25}\selectfont
	\caption{Performances of SEEK and benchmark methods in the Mixed environment, aggregated over 100 independent replications. $T$ is fixed at 150. The cumulative rewards of policies learned by the all states are 2.19 when $N$ is $50, 100$ and $200$.}
	\label{tab:mixed-compare}
	\centering
	\scriptsize
	\begin{tabular}{l|ccc|ccc|ccc|ccc|ccc}
	\toprule
	& \multicolumn{3}{c}{SEEK} & \multicolumn{3}{c}{Reward-only} & \multicolumn{3}{c}{One-step} & \multicolumn{3}{c}{SFS} & \multicolumn{3}{c}{VS-LASSO} \\ \hline
	$N$ & 50 & 100 & 200 & 50 & 100 & 200 & 50 & 100 & 200 & 50 & 100 & 200 & 50 & 100 & 200 \\ 
	\hline
	% $T$ & 150 & 150 & 150 & 150 & 150 & 150 & 150 & 150 & 150 & 150 & 150 & 150 & 150 & 150 & 150 \\ 
	% \hline
	$K$   & 3.22 & 4.16 & 4.56 & 3.22 & 4.16 & 4.56 & 3.22 & 4.16 & 4.56 & / & / & / & 3.22 & 4.16 & 4.56 \\ \hline
	mFDR  & 0.36  & 0.45 & 0.45 & 0.04  & 0.01 & 0.01 & 0.02  & 0.03 & 0.03 & 0.03  & 0.02 & 0.01 & 0.00 & 0.00 & 0.00 \\ \hline
	FDR   & 0.97  & 0.90 & 0.90 & 0.12  & 0.04 & 0.05 & 0.11  & 0.16 & 0.18 & 0.14  & 0.09 & 0.04 & 0.00 & 0.00 & 0.00 \\ \hline
	FPR   & 0.33  & 0.50 & 0.50 & 0.04  & 0.01 & 0.01 & 0.02  & 0.03 & 0.02 & 0.02  & 0.02 & 0.01 & 0.00 & 0.00 & 0.00 \\ \hline
	TPR   & 0.06  & 0.49 & 0.50 & 0.71  & 0.98 & 1.00 & 0.51  & 0.50 & 0.53 & 0.51  & 0.53 & 0.54 & 0.50 & 0.50 & 0.50 \\ \hline
	Value & -0.04 & 0.00 & 0.00 & -0.02 & 0.00 & 0.00 & -0.02 & 0.00 & 0.00 & -0.02 & 0.00 & 0.00 & -0.02 & 0.00 & 0.00 \\ 
	\bottomrule
	\end{tabular}
\end{table}
\begin{figure}[H]
	\vspace*{-10pt}
    \centering
    \includegraphics[width=1.0\linewidth]{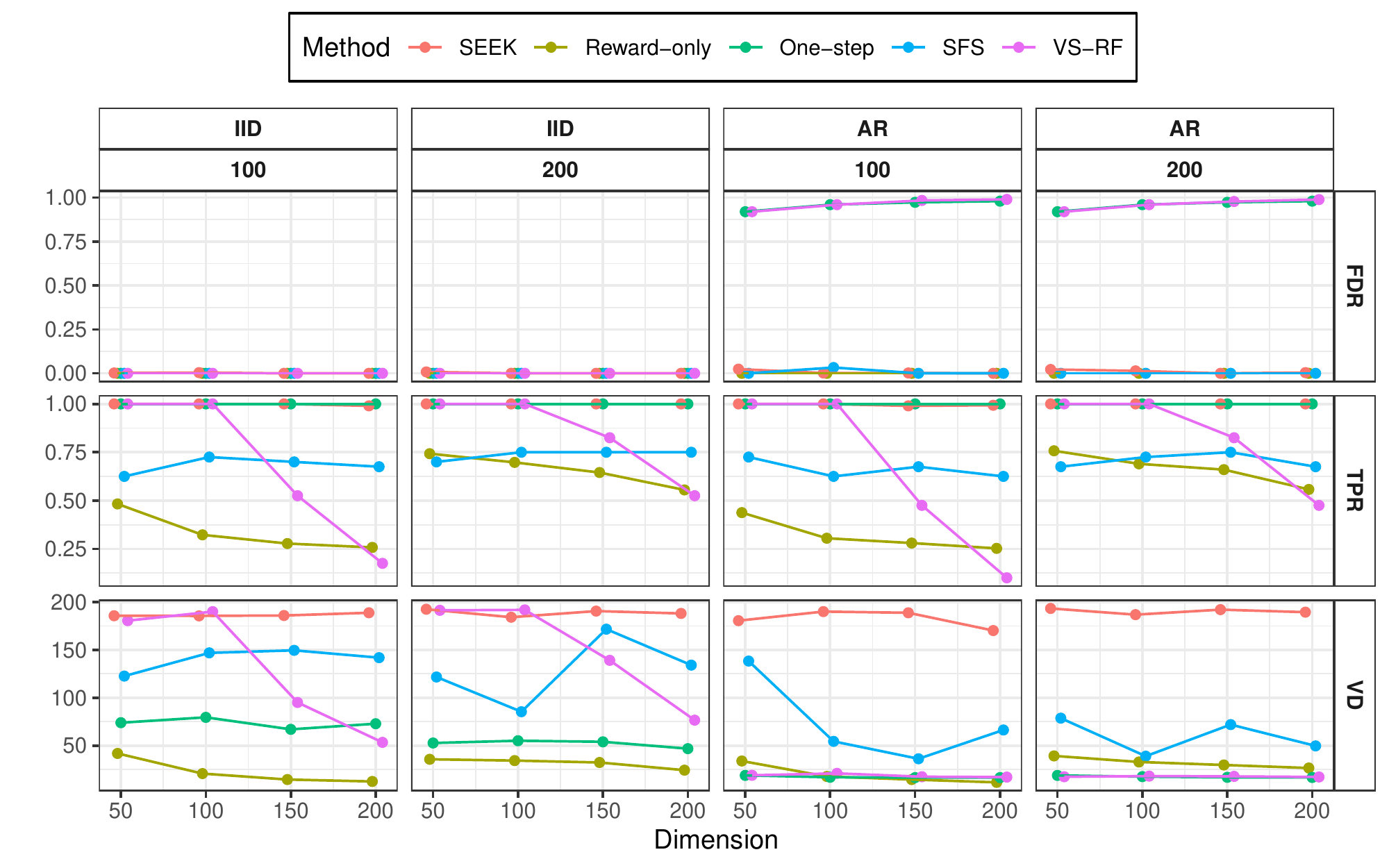}
	\vspace*{-20pt}
    \caption{Performance of methods in the CartPole-v0 environment across 100 simulation runs. Each row represents an evaluation criterion. Columns 1 and 3 show results for $N=100$, while other columns depict results for $N=200$. Columns 1-2 correspond to settings with independent null states, and columns 3-4 correspond to settings with the AR(1)-structure null states. The cumulative rewards of policies learned by the all states are summarized in Table~\ref{tab:cartpole-drl-value}.}
    \label{fig:carpole_rf}
\end{figure}
\begin{table}[H]
	\centering
	\caption{The cumulative rewards of policies learned by the all states of datasets generated from the CartPole-v0 environment.}\label{tab:cartpole-drl-value}
	\vspace*{-0.6em}
	\scriptsize
	\begin{tabular}{ll|cccc}
		\toprule
		$n$ & Cor & 50 & 100 & 150 & 200 \\
		\midrule
		\multirow{2}{*}{100} 
		& IID & 21.98 & 13.77 & 13.23 & 13.61 \\
		& AR & 17.18 & 13.81 & 13.50 & 13.87 \\
		\midrule
		\multirow{2}{*}{200} 
		& IID & 41.58 & 17.41 & 14.40 & 14.05 \\
		& AR & 20.02 & 15.91 & 14.39 & 14.16 \\
		\bottomrule
	\end{tabular}
\end{table}
% \begin{figure}[H]
% 	\centering
% 	\includegraphics[width=1.0\linewidth]{}
% 	\caption{Comparison results of the false discovery rate (FDR) and the true positive rate (TPR) based on SEEK, Reward-only, One-step, SFS, and VS-RF under ``LunarLander-v2'', aggregated over 100 runs. }
% 	\label{fig:lunarlander}
% 	% \vspace{-0.3cm}
% \end{figure}

\subsubsection{Selection $K$}
To investigate the proposed $K$-selection algorithm in Section~\ref{sec:bestK}, we conduct another analysis that applies SEEK to the first two environments with $K$ fixed to $5,10,20,40$, and we compare the results against those obtained based on SEEK with adaptively selected $K$. The numerical results are summarized in Table~\ref{tab:ar-mixed-bestK}. It can be seen from Table \ref{tab:ar-mixed-bestK} that mFDR, FDR, and TPR depend on the specification of $K$. In particular, when $K$ is moderately large (e.g., 10 or 20),  the TPR is reduced to half whereas the mFDR, FDR, and FPR are exactly equal to zero.  On the contrary, the proposed $K$-selection algorithm tends to select a small value of $K$ in the Mixed environment, achieving a better balance between (m)FDR/FPR and TPR. 

%We include the performances of SEEK with respect to different $K$ in Mixed environment in Table \ref{tab:mix-compare}. It is obvious that no matter which value $K$ takes, neither FDR/mFDR nor TPR improves a lot, with TPR even dropping significantly, indicating that the sample size after the split is already not enough. Then a even smaller value should be chosen, which perfectly aligns with the selected ones in the first 3 columns. Due to limited width, we defer the complete tables containing performances with more different values of $K$ in Tables \ref{tab: ar_long} and \ref{tab: mix_long}.

\begin{table}[H]
	\linespread{1.25}\selectfont
	\setlength{\tabcolsep}{4.2pt}
	\caption{Performances of SEEK for different $K$ in the AR environment, aggregated over 100 simulation runs. $T$ is fixed at 150. The cumulative reward of policies learned by all states in the AR environment are listed in the caption of Table~\ref{tab:ar-compare} while that in the Mixed environment are listed in the caption of Table~\ref{tab:mixed-compare}.}
	\label{tab:ar-mixed-bestK}
	\centering
	\scriptsize
	\begin{tabular}{l|c|ccc|ccc|ccc|ccc|ccc}
	\toprule
	& & \multicolumn{3}{c}{Selected $K$} & \multicolumn{3}{c}{$K=5$} & \multicolumn{3}{c}{$K=10$} & \multicolumn{3}{c}{$K=20$} & \multicolumn{3}{c}{$K=40$} \\ 
	\midrule
	\multirow{7}{*}{AR}
	& $N$ & 10 & 20 & 40 & 10 & 20 & 40 & 10 & 20 & 40 & 10 & 20 & 40 & 10 & 20 & 40 \\ \cline{2-17}
	& $K$ & 19.60 & 24.62 & 29.28 & / & / & / & / & / & / & / & / & / & / & / & / \\ \cline{2-17}
	& mFDR & 0.00 & 0.00 & 0.00 & 0.01 & 0.01 & 0.01 & 0.00 & 0.00 & 0.00 & 0.00 & 0.00 & 0.00 & 0.03 & 0.00 & 0.00 \\ \cline{2-17}
	& FDR & 0.00 & 0.00 & 0.00 & 0.03 & 0.04 & 0.03 & 0.01 & 0.02 & 0.00 & 0.00 & 0.00 & 0.00 & 0.12 & 0.00 & 0.00 \\ \cline{2-17}
	& FPR & 0.00 & 0.00 & 0.00 & 0.00 & 0.01 & 0.00 & 0.00 & 0.00 & 0.00 & 0.00 & 0.00 & 0.00 & 0.02 & 0.00 & 0.00 \\ \cline{2-17}
	& TPR & 0.96 & 1.00 & 1.00 & 1.00 & 1.00 & 1.00 & 1.00 & 1.00 & 1.00 & 0.89 & 1.00 & 1.00 & 1.00 & 0.98 & 1.00 \\ \cline{2-17}
	& VD & 0.19 & 0.27 & 0.28 & 0.24 & 0.26 & 0.28 & 0.24 & 0.26 & 0.28 & 0.08 & 0.27 & 0.28 & 0.22 & 0.24 & 0.28 \\ 
	\midrule
	\multirow{7}{*}{Mixed}
	& $N$ & 50 & 100 & 200 & 50 & 100 & 200 & 50 & 100 & 200 & 50 & 100 & 200 & 50 & 100 & 200 \\ \cline{2-17}
	& $K$ & 3.22 & 4.16 & 4.56 & / & / & / & / & / & / & / & / & / & / & / & / \\ \cline{2-17}
	& mFDR & 0.04  & 0.01 & 0.01 & 0.01  & 0.00 & 0.00 & 0.00  & 0.00 & 0.00 & 0.00 & 0.00 & 0.00 & 0.00 & 0.00 & 0.00 \\ \cline{2-17}
	& FDR  & 0.12  & 0.04 & 0.05 & 0.03  & 0.02 & 0.01 & 0.00  & 0.00 & 0.00 & 0.00 & 0.00 & 0.00 & 0.00 & 0.00 & 0.00 \\ \cline{2-17}
	& FPR  & 0.04  & 0.01 & 0.01 & 0.00  & 0.00 & 0.00 & 0.00  & 0.00 & 0.00 & 0.00 & 0.00 & 0.00 & 0.00 & 0.00 & 0.00 \\ \cline{2-17}
	& TPR  & 0.71  & 0.98 & 1.00 & 0.50  & 0.95 & 1.00 & 0.50  & 0.50 & 0.98 & 0.50 & 0.50 & 0.50 & 0.50 & 0.50 & 0.50 \\ \cline{2-17}
	& VD  & -0.02 & 0.00 & 0.00 & -0.02 & 0.00 & 0.00 & -0.02 & 0.00 & 0.00 & -0.02 & 0.00 & 0.00 & -0.02 & 0.00 & 0.00 \\
	\bottomrule
	\end{tabular}
\end{table}

In addition, we conduct a sensitivity analysis to test the robustness of four SEEK methods (SEEK-LASSO, SEEK-RF, SEEK-LASSO+, and SEEK-RF+, where SEEK-X+ refers to the SEEK-X algorithm with knockoffs+ for variable selection) to different choices of $K$ and $q$. We focus on the CP environment with $N=100$, $p=100$, and AR(1) noise, set $K = k_0\log(NT)$ with $k_0\in \{1,1.5,2\}$, choose $q\in(0,0.3]$ for the FDR control, and plot of the corresponding FDRs and TPRs in Figures \ref{fig:sa1} and \ref{fig:sa2} respectively. It can be seen that our method controls the FDR for any $K$ and $q$. In addition, the performance is not overly sensitive to the choice of $K$. 

\begin{figure}[H]
\centering
\begin{subfigure}{0.32\linewidth}
\centering
\includegraphics[width=1\linewidth]{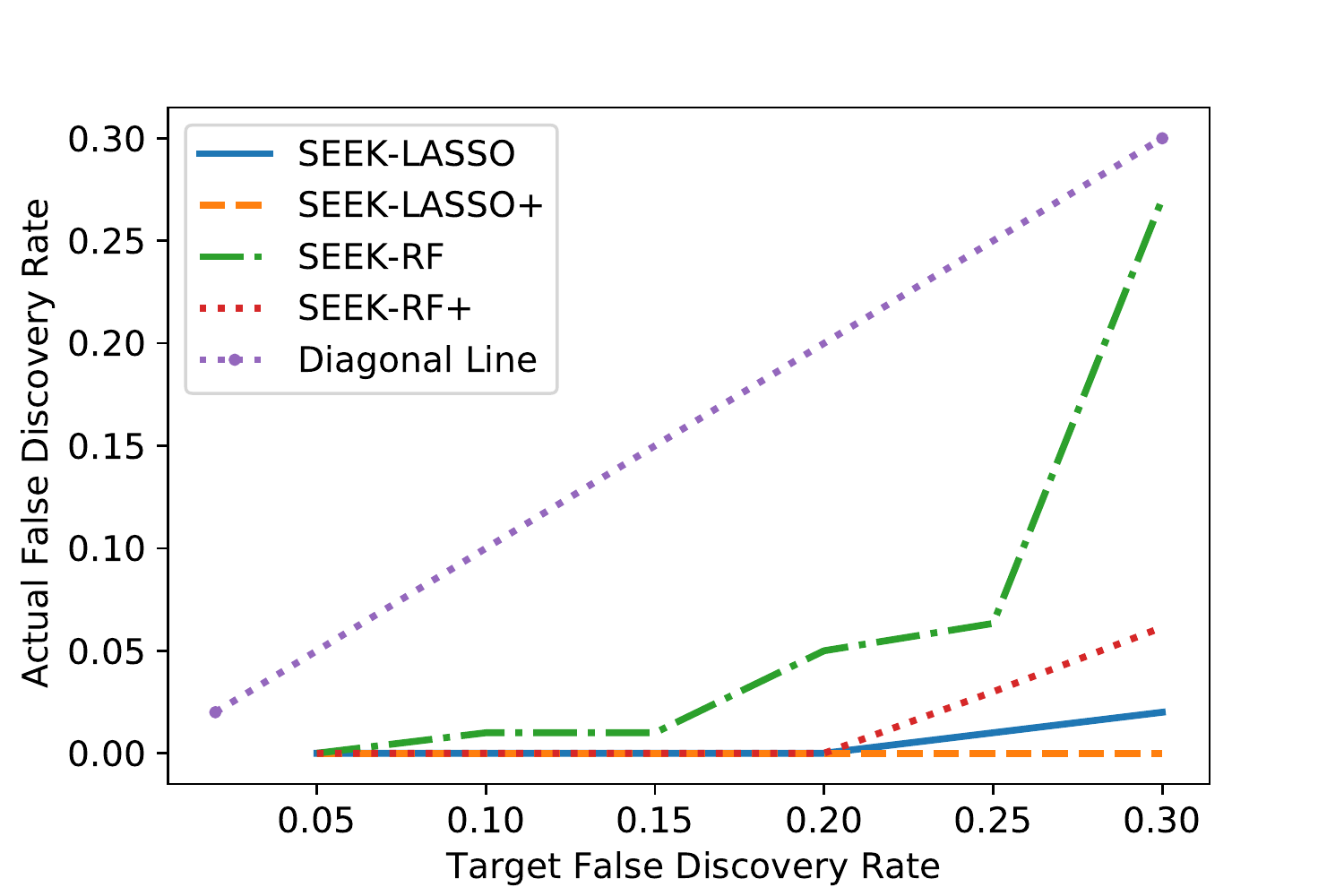} 
\caption{$K =\log(NT)$}
\end{subfigure}%
\begin{subfigure}{0.32\linewidth}
\centering
\includegraphics[width=1\linewidth]{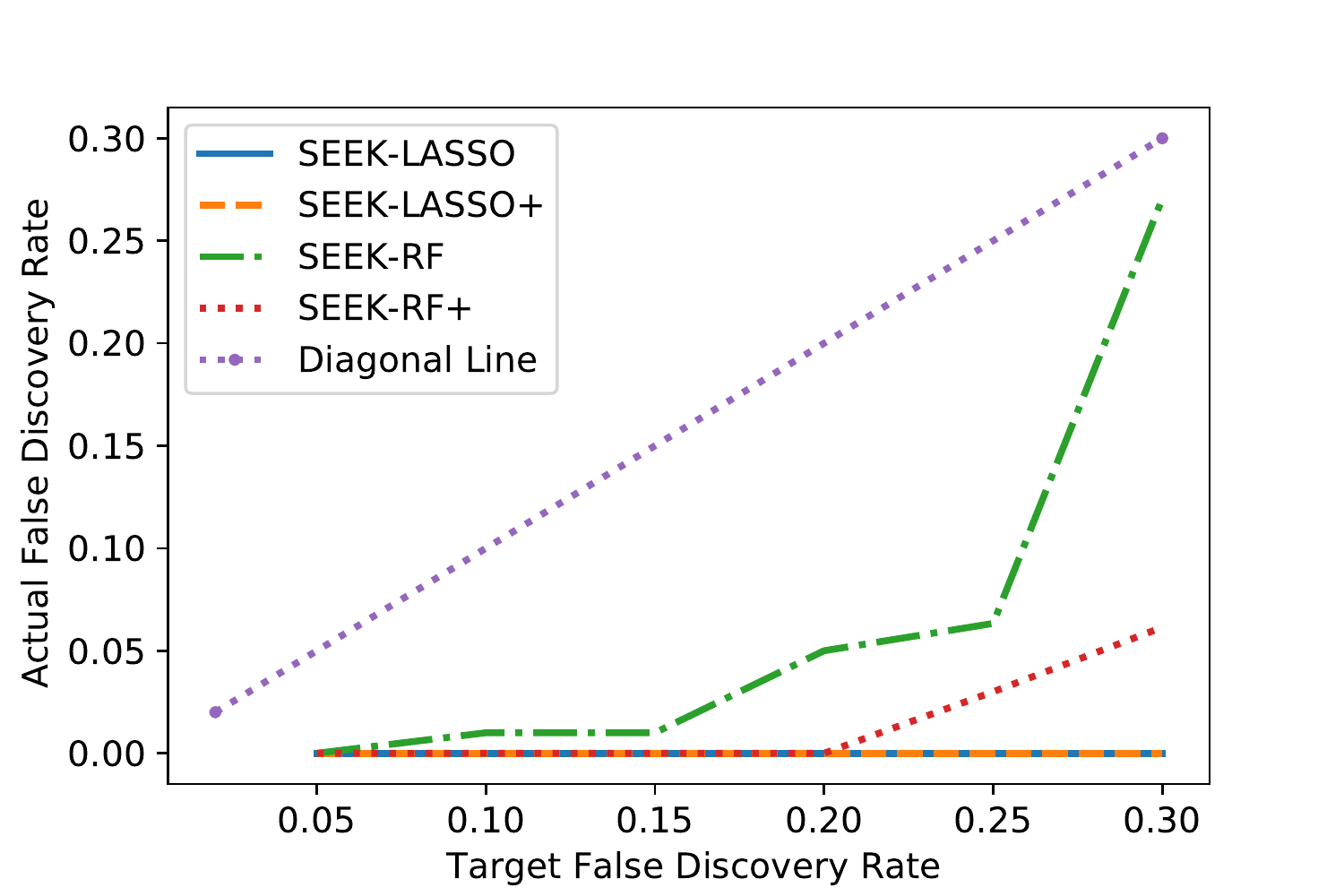} 
\caption{$K = 1.5\log(NT)$}
\end{subfigure}%
\begin{subfigure}{0.32\linewidth}
\centering
\includegraphics[width=1\linewidth]{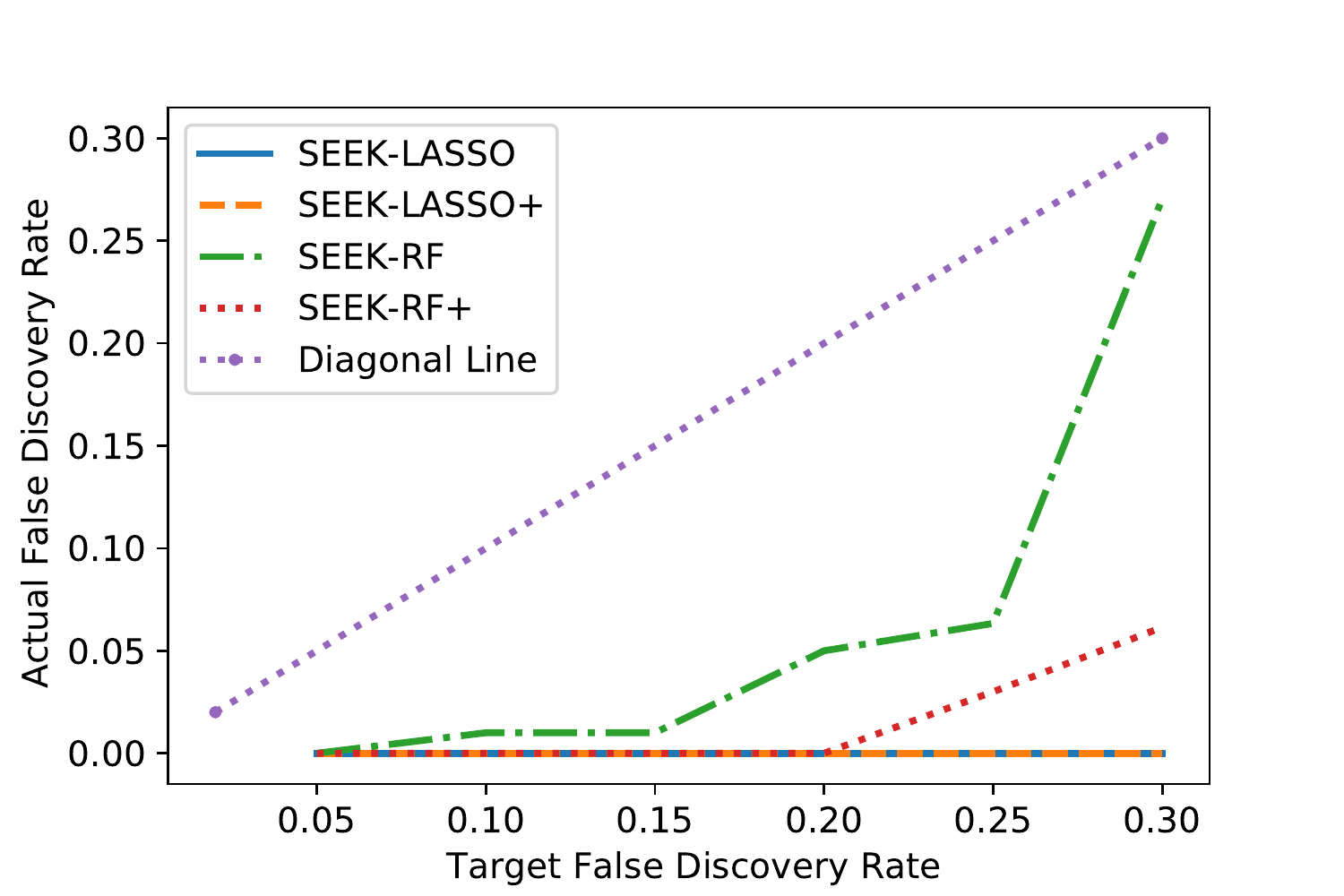} 
\caption{$K = 2\log(NT)$}
\end{subfigure}%
\caption{Results of the FDR using the proposed SEEK methods under the CP environment with $N=100$, $p=100$, $\alpha = 0.5$, and AR(1) noises. All the results are aggregated over 20 runs.}
\label{fig:sa1}
\end{figure}
\begin{figure}[H]
\centering
\begin{subfigure}{0.32\linewidth}
\centering
\includegraphics[width=1\linewidth]{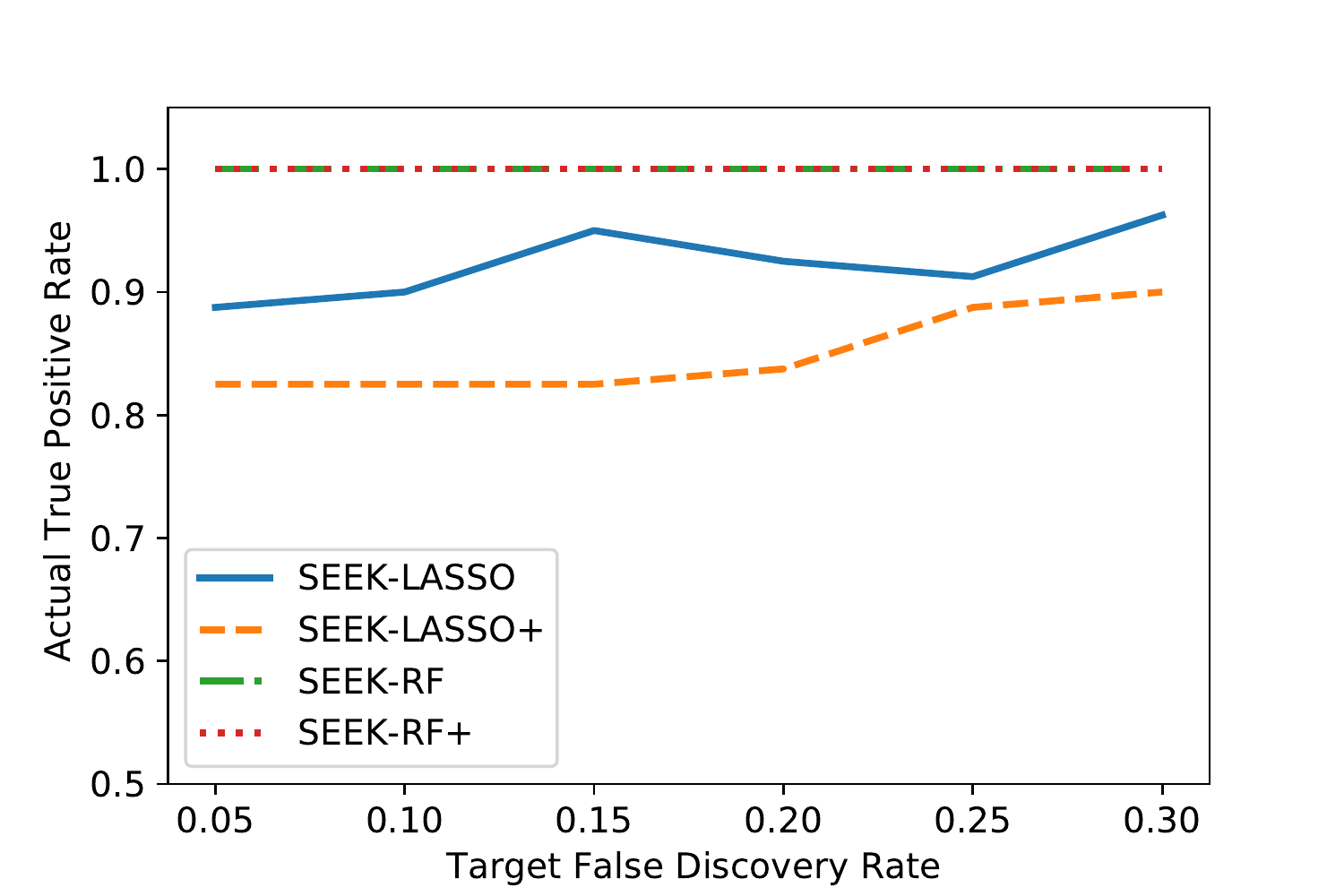} 
\caption{$K = \log(NT)$}
\end{subfigure}%
\begin{subfigure}{0.32\linewidth}
\centering
\includegraphics[width=1\linewidth]{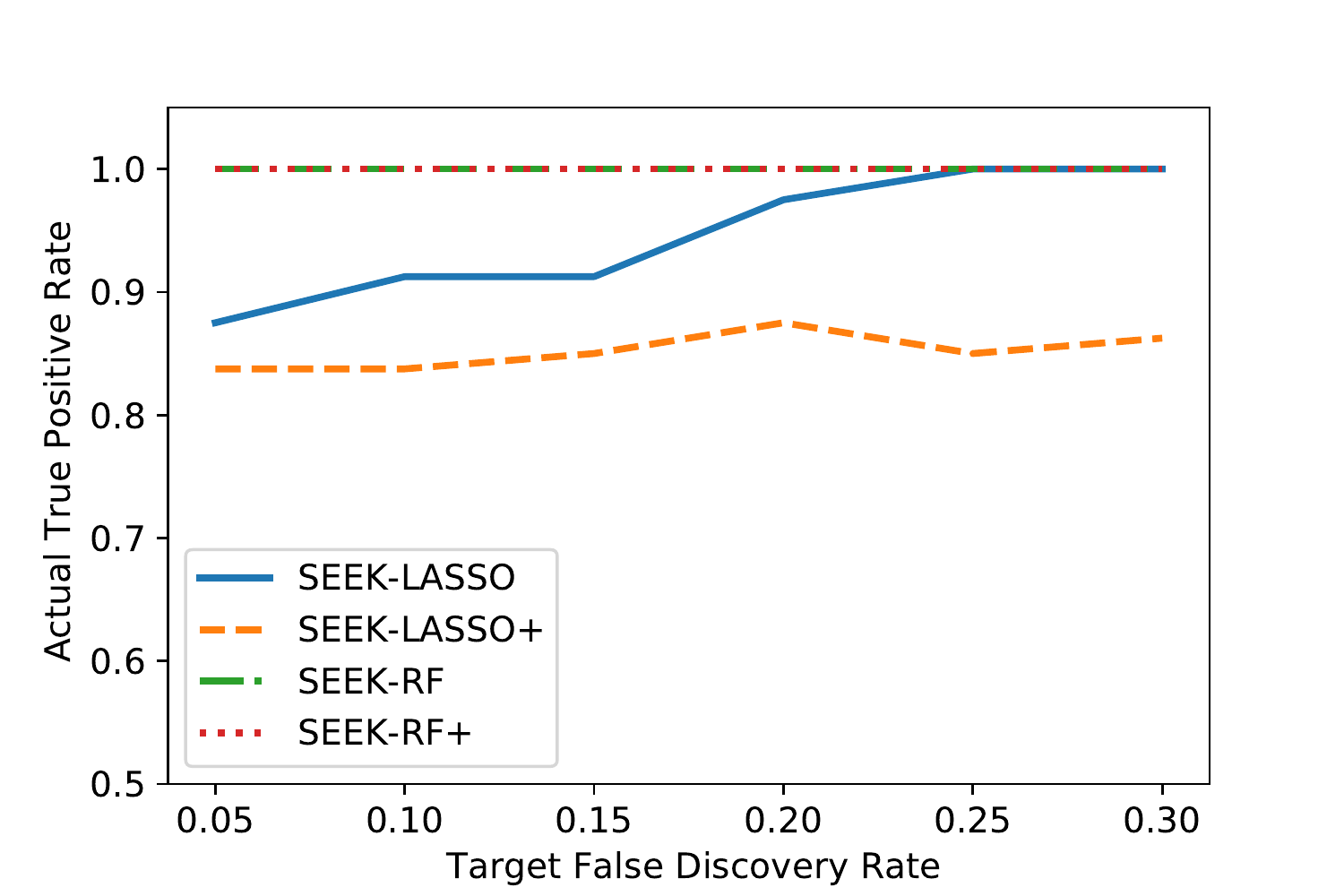} 
\caption{$K = 1.5\log(NT)$}
\end{subfigure}%
\begin{subfigure}{0.32\linewidth}
\centering
\includegraphics[width=1\linewidth]{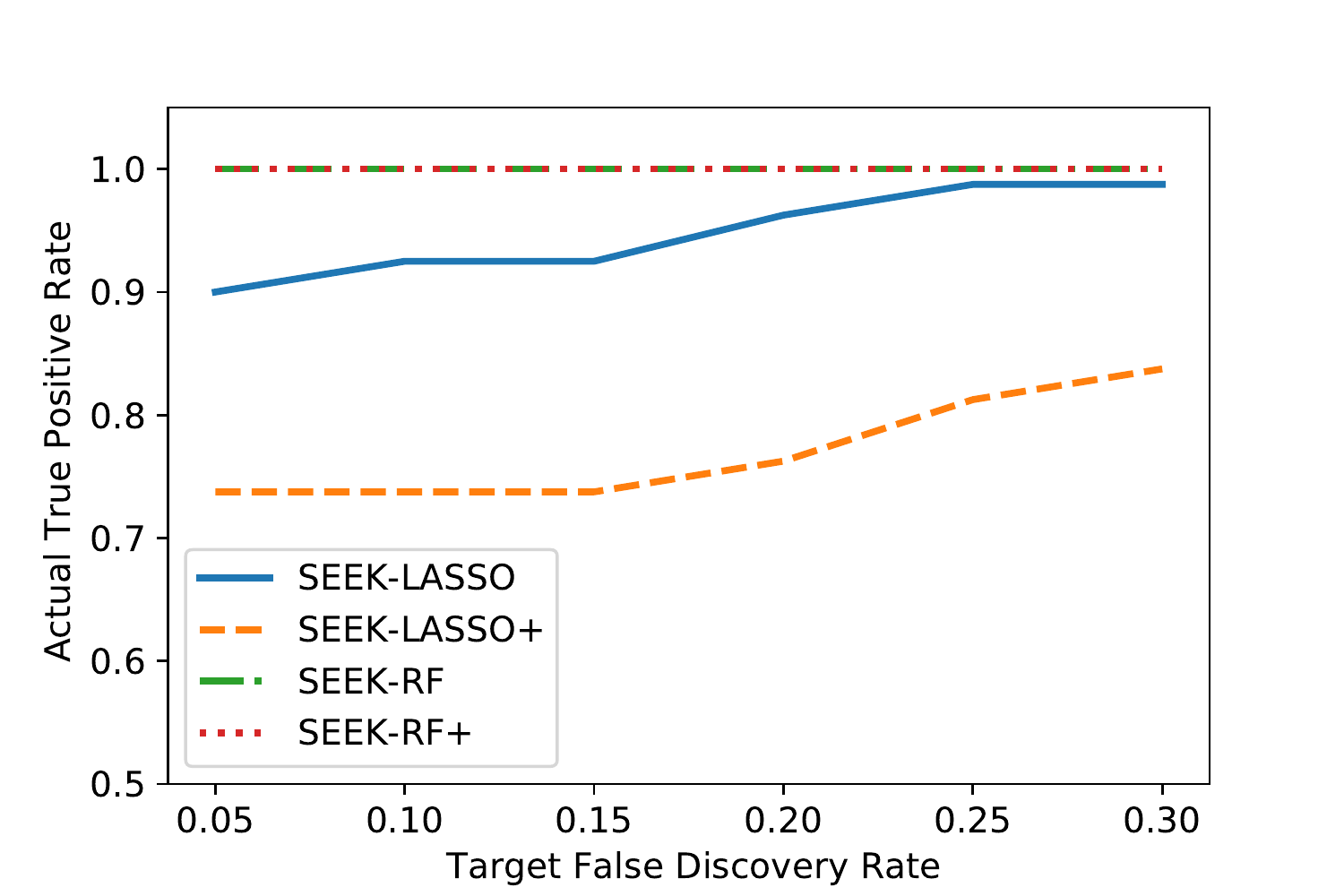} 
\caption{$K = 2\log(NT)$}
\end{subfigure}%
\caption{Results of the TPR using the proposed SEEK methods under the CP environment with $N=100$, $p=100$, $\alpha = 0.5$, and AR(1) noises. All the results are aggregated over 20 runs.}
\label{fig:sa2}
\end{figure}

\subsection{MIMIC-III Data Analysis: Implementation Details of IQL}\label{sec:details-IQL}

IQL is a state-off-the-art offline RL algorithm that is good at tackling both state and action (after one-hot encoding) with a large dimension. We adopt an offline deep reinforcement learning library, the \textsf{d3rlpy}\footnote{\url{https://github.com/takuseno/d3rlpy}} library, which provides an implementation for IQL. The IQL uses an actor-critic framework, where the neural network architecture for actor and critic is set as the default in \textsf{d3rlpy}, and the optimizer for training parameters is also set as the default values. Specifically, both actor and critic networks are multilayer perceptron with two hidden layers where each layer has 256 hidden units. Two key tuning parameters of IQL, i.e., the expectile value and the inverse temperature, are fixed as 0.7 and 3.0. We update actor-critic networks 100 thousand steps, each update with 256 samples. 

%\subsection{FQE: Details}

%Given a trained policy, FQE is employed to estimate the Q function of this policy. The estimation is based on 100 thousands steps update upon the learned Q function returned by IQL. The implementation of FQE is again based on the open-source \textsf{d3rlpy} library. 

\subsection{Analysis of the OhioT1DM Dataset}\label{sec:ohio}
In this section, we apply SEEK to the OhioT1DM dataset \citep{marling2020ohiot1dm} which contains data from 6  patients with type-I diabetes. For each patient, their glucose levels and self-reported meals are continuously measured over 8 weeks. These variables can be used to construct data-driven decision rules to determine whether a patient needs insulin injections to improve their health \citep{luckett2019estimating,shi2020statistical,zhou2022estimating}. 

To analyze this data, we divide the eight weeks into one-hour intervals and compute the average glucose level $G_t$ and the average carbohydrate estimate for the meal $M_t$ over each one-hour interval $(t-1,t]$. Previous studies found these variables do not satisfy the Markov assumption \citep[see e.g.,][]{shi2020does}. This motivates us to construct the state $S_t$ by including both the current observations $G_t$, $M_t$ as well as the past measurements within 5 hours, i.e., $G_{t-i}$ and $M_{t-i}$ for $i=1,2,3,4,5$. We also manually include 10 null variables (denoted by $N_1,\cdots,N_{10}$) in the state, each following an AR(1) model, given by $N_{t+1,j}=0.5 N_{t,j}+\mathcal{N}(0,1)$ for $j \in \{1, \ldots, 10\}$.  The action $A_t$ is binary, indicating whether the patient injected the insulin or not in the last hour. 

We next apply SEEK, Reward-only, One-step, VS-LASSO, and SFS to 4 out of 6 data trajectories for variable selection, iterate this procedure for all $\binom{6}{4} =15$ combinations, compute the percentage of each state variable being selected, and report the results in Figure~\ref{fig:real}. It can be seen that (i) the one-step method tends to select all states and thus it has a high FDP; (ii) the reward-only merely has a 30\% chance of identifying the most relevant variable while missing the remaining relevant variables; (iii) VS-LASSO has a high FDP with each of the null variables being selected at least $50\%$ of the time; (iv) SFS fails to identify the glucose levels and carbohydrate estimates as significant variables, suffering from a low TPR; (v) SEEK selects no null variable and has a high success rate in selecting glucose levels and carbohydrate estimates. 

\begin{figure}[H]
	\centering
	\includegraphics[width=\linewidth]{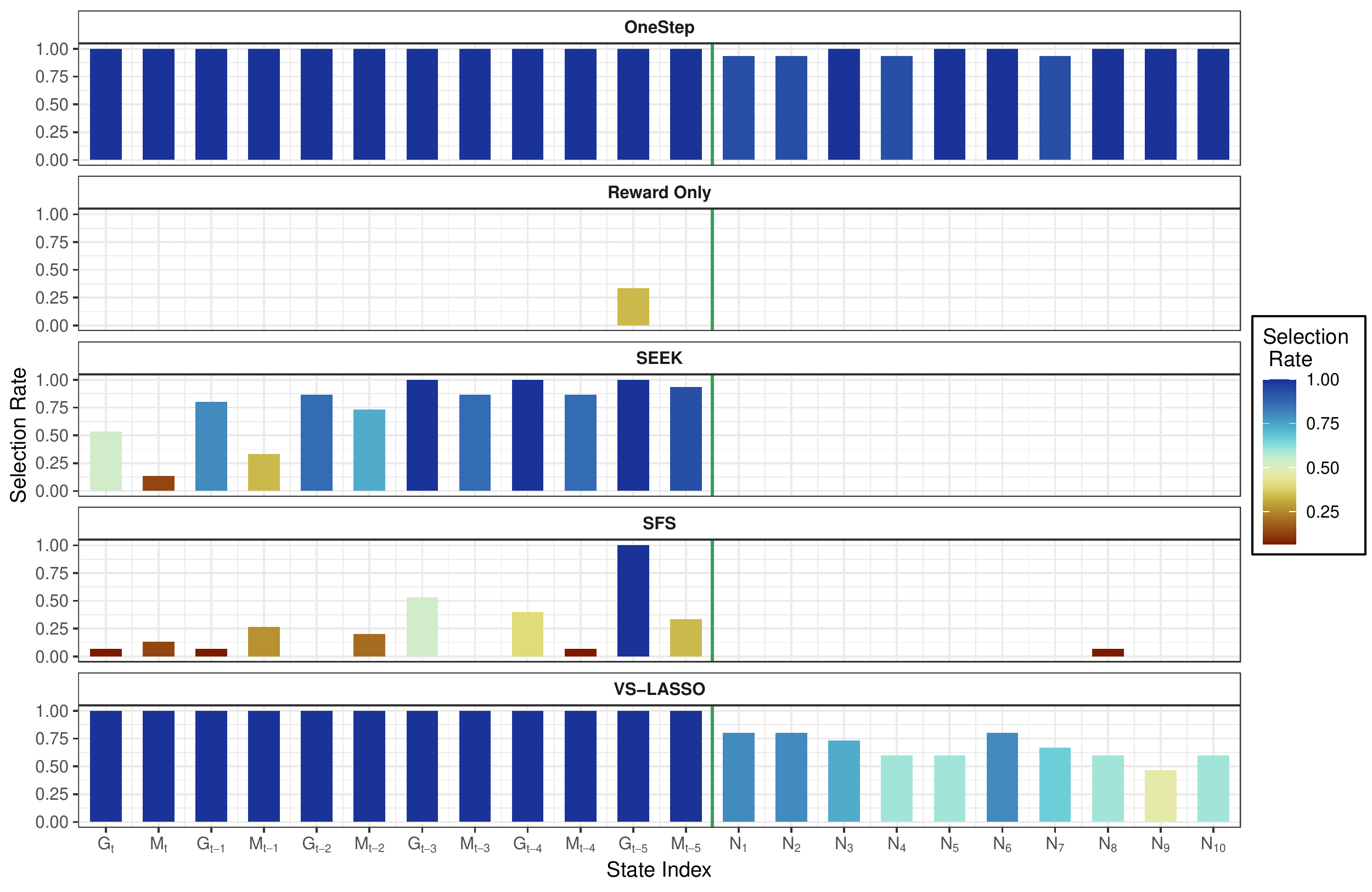}
	\caption{Selection rates of One-step, Reward-only, SEEK, SFS, and VS-LASSO on the Ohio datasets.}
	\label{fig:real}
\end{figure}

\section{Details of Algorithms}\label{apdx:b}

%We have provided a general description of our method in the above. 
In this section, we provide more implementation details about the proposed algorithm. We first present our algorithm for estimating the $\beta$-mixing coefficients and determining to the optimal number of data splits $K$ in Section \ref{sec:bestK}. We next detail the use of the second-order machine or Gaussian sampling to construct knockoff states in  Section \ref{sec:secondorder}. Next, we outline the process of constructing feature importance using generic machine learning algorithms in Section~\ref{sec:genericML}. Lately, we present the pesudo code of benchmarked methods in Section~\ref{sec:algo-rewardonly-onestep}.

\subsection{Determining the Number of Splits}\label{sec:bestK}
The performance of the proposed algorithm relies on the number of sub-datasets $K$ in the data splitting step. In this section, we first present a practical algorithm to adaptively select $K$ in Section~\ref{sec:bestK-algo}. We next establish the consistency of the algorithm in Section~\ref{sec:bestK-consistency}. Finally, we conduct simulation studies to investigate its consistency and sensitivity in Section~\ref{sec:bestK-simulation}.

\subsubsection{Algorithmic Procedure}\label{sec:bestK-algo}
A closer look at the proof for Theorem \ref{thm1} reveals that for a given $K$, %without assuming the $\beta$-mixing decay is exponential, such FDR control takes the form  
the FDR is bounded above by
$q + (NT/K)\beta(K),$ where $\beta(K)$ denotes the $\beta$-mixing coefficient, measuring the temporal dependence of the MDP  %between two %subsequences of the 
%processes with 
at lag $K$. See Definition \ref{def:beta-mix} for details. Thus, the choice of $K$ represents the following trade-off. On the one hand, when the system is $\beta$-mixing, we have $\lim_{K\to\infty} \beta(K)=0$. In theory, $K$ should diverge to infinity to avoid inflation of type-I error. On the other hand, $K$ should not be too large in order to guarantee that each data subset $\calD_k$ has sufficiently many observations. To balance this trade-off, we propose to set $K$ to $K^*=\argmin \{k\ge 1: k^{-1} (NT)\beta(k)\le  \delta\}$ and estimate it via
\begin{align}\label{k_critn}
    \widehat{K} = \argmin \left\{k\ge 1: \frac{NT}{k}\widehat{\beta}(k) \leq \delta \right\},
\end{align}
where $\widehat{\beta}(\bullet)$ denotes the estimated mixing coefficient and $\delta$ denotes a pre-specified upper bound for the inflation of the type-I error (e.g., 0.01 or 0.05). %error bound for extra pay. 
%In such a way the FDR control would be $q+\delta$. As a summary of the procedure, we estimate $\beta(K)$ as a function of different values of $K$, then solve $K^*$ from equation \ref{k_critn}. Finally, we use $K^*$ as the number of splits and implement the whole procedure of SEEK.

It remains to estimate the $\beta$-mixing coefficient. Existing solutions in the time series literature are based on histograms
 \citep{mcdonald2015estimating} and thus suffer from the curse of dimensionality and perform poorly in problems with moderate dimension. Furthermore, it is
 not trivial to extend these methods to produce an accurate estimator 
 of $K^*$ without imposing additional structure on the problem.    
  To see this, suppose that $\beta(k)$ decays exponentially fast with respect to the lag $k$. Then $\beta(K^*)$ is typically of the order 
%in order to ensure the inflated error is upper bounded by $\delta$, $\beta(\widehat{K})$ shall be of the order 
$\delta (NT)^{-1}$ up to some logarithmic factor. Nonetheless, without additional assumptions, its estimation error is at least of the order $(N T)^{-1/2}\gg \delta (NT)^{-1}$. 
%it is impossible to consistently estimate $\beta(K)$ for $K$ close to or larger than $T$ without additional assumptions. 
To address both challenges, we develop a novel three-step algorithm detailed below. %and attach the theoretical analysis for the algorithm in Section~\ref{sec:analysis-K} in Supplementary material. 

\textbf{Step 1. } %Primary estimation of mixing coefficients.} 
Construct initial estimators of $\beta(1)$, $\cdots$, $\beta(K_0)$ for a specified integer value of $K_0$ using a generic density estimator. Assuming both the behavior policy $\pi_b$ and the MDP are stationary, it follows that 
% {\color{blue}(Why it only measures beta mixing coefficient via $(s, a)$ rather than $(s, a, r)$?)}
\begin{eqnarray}\label{eqn:betak}
\begin{split}
    \beta(k)
    &=\frac{1}{2}\sum_{a,a'} \int_{s,s'} |f(s) \pi_b (a|s)f(s')\pi_b(a'|s')-f_k(s,s')\pi_b'(a|s,s')\pi_b(a'|s')|\\
    &=\frac{1}{2}\sum_{a} \int_{s,s'} |f(s) \pi_b (a|s)f(s')-f_k(s,s')\pi_b' (a|s,s')|,
\end{split} 
\end{eqnarray}
where $f$ denotes the marginal state density function, $f_k$ denotes the joint distribution function of $(\mathbf{S}_t, \mathbf{S}_{t+k})$, and $\pi_b'$ is the conditional probability of the action given both the current state and the state after lag $k$. To simplify the calculation, we assume $\pi_b$ and $\pi_b'$ are approximately the same so that $\beta(k)$ can be approximated by $\int_{s,s'} |f(s) f(s')-f_k(s,s')|/2$. Next, we estimate $f$, %$\pi_b$, $\pi_b'$, 
and $f_k$, and then plug in these estimators to estimate $\beta(k)$. In practice, we use a 
Gaussian mixture model to approximate $f$ based on tuples $(\mathbf{S}_t)$, 
and similarly to approximate $f_k$ based on tuples $(\mathbf{S}_t,\mathbf{S}_{t+k})$. %The behavior policy $\pi_b$ and the conditional probability $\pi_b'$ can be estimated via supervised learning (e.g., multinomial logistic regression) using tuples $(\mathbf{S}_t,A_t)$ and tuples $(\mathbf{S}_t,A_t, \mathbf{S}_{t+k})$. %through logistic regression if for a binary action space and random forest or deep neural network (DNN) if for a general discrete space. 

In addition, we apply importance sampling to numerically calculate the integral in \eqref{eqn:betak}. A naive method is to uniformly sample $(S_i,S_i')$ over the product space $\mathcal{S}\times \mathcal{S}$ and estimate $\beta(k)$ by
    \begin{align}\label{eqn:naiveest}
        %\tilde{\beta}_K = \frac{1}{2}\frac{\sum_{i=1}^n \hat{f}_K^{(2)}(\mathbf{x}_i) - [\hat{f}^{(1)}(\mathbf{x}_i)]^2}{n}
        \frac{|\mathcal{S}|^2}{2M} \sum_{i=1}^M |\widehat{f}(S_i) \widehat{f}(S_i')-\widehat{f}_k(S_i,S_i')|,
    \end{align}
where $\widehat{f}$ and $\widehat{f}_k$ denote the corresponding estimators for $f$ and $f_k$, respectively. However, this method requires the state space to be bounded. In addition, the resulting estimator may suffer from a large variance when the area of the state space is large. 
    
%where $\mathbf{x}_i$ corresponds to every data point and $n$ is still the sample size. However, 
In our implementation, we propose to sample $(S_i,S_i')$ according to a mixture distribution $[\widehat{f}(S_i) \widehat{f}(S_i')+\widehat{f}_k(S_i,S_i')]$. This yields the following estimator %apply importance sampling by defining $q(\mathbf{x}) = (\hat{f}_K^{(2)}(\mathbf{x}) + \hat{f}^{(1)}(\mathbf{x}))/2$ as the proposal distribution, the one according to which we sample data points. The the resulting estimation, after canceling common constant factors, is by
\begin{align*}
    \frac{1}{M}\sum_{i=1}^M\frac{|\widehat{f}(S_i) \widehat{f}(S_i')-\widehat{f}_k(S_i,S_i')|}{|\widehat{f}(S_i) \widehat{f}(S_i')+\widehat{f}_k(S_i,S_i')|}.
\end{align*}
Different from \eqref{eqn:naiveest}, the above importance sampling ratio is strictly smaller than $1$, reducing the variance of the resulting estimator. When $\widehat{f}$ and $\widehat{f}_k$ are consistent, it can be shown that its variance is upper bounded by $\beta(k)/M$ asymptotically.

We denote the resulting estimators by $\widetilde{\beta}(1)$, $\cdots$, $\widetilde{\beta}(K_0)$. 

%To implement the criterion of choice defined above, we first compute estimates of $\beta$-mixing coefficients with respect to different $K$ up to some limit, say $\tilde{\beta}_1, \tilde{\beta}_2,\ldots,\tilde{\beta}_m$, where our method is inspired by \citet{mcdonald2015estimating}. According to their work, for a Markov process $(\mathbf{S}_t, A_t)$, its $\beta$-mixing coefficient with lag $K$ can be consistently estimated by $\tilde{\beta}_K = \frac{1}{2}\int |\hat{f}_K^{(2)} - [\hat{f}^{(1)}]^2|$, where $\hat{f}^{(1)}$ is the estimator of the density function of $(\mathbf{S}_t, A_t)$, while $\hat{f}_K^{(2)}$ is the density estimator of the pair of observations with lag $K$, i.e. $(\mathbf{S}_t, A_t, \mathbf{S}_{t+K}, A_{t+K})$. Although they focus on using histograms for density estimations, other methods may have better efficiency and could also be considered, say Gaussian mixture models. In addition, we also use importance sampling, so that the variances of mixing coefficient estimates are well-bounded. See  Appendix \ref{apdx:k_select} for details.

\textbf{Step 2. } %Improved estimation of mixing coefficients.} 
The second step is to impose a parametric structure on the mixing coefficients to refine the initial estimators and to estimate $\beta(k)$ for $k\ge K_0$. We assume $\beta(k)=a_0 \exp(-b_0 k)$ for some $a_0,b_0>0$. To estimate these model parameters, we assume the first-step estimators satisfy
\begin{eqnarray}\label{eqn:initialestmodel}
    \widetilde{\beta}(k)=\eta_0 + a_0\exp(-b_0k) + \epsilon_k,
\end{eqnarray}
for some $\eta_0 \in\mathbb{R}$ and mean-zero random errors $\{\epsilon_k\}_{k=1}^{K_0}$. Here, $\eta_0$ and $\epsilon_k$ measure the bias and variance of the initial estimator. Under the model \eqref{eqn:initialestmodel}, the estimators $(\widehat{\eta}, \widehat{a}, \widehat{b})$ are computed by minimizing the following nonlinear least square loss:
\begin{align}\label{eqn:estK}
	\mathop{\textup{argmin}}_{a, b \in (0, \infty), \eta\in \mathbb{R}} \frac{1}{K_0}\sum_{k=1}^{K_0} \left(\widetilde{\beta}(k) - \eta - a \exp(-b k) \right)^2,
\end{align}
for some constant integer $K_0>3$. Given the convex nature of the objective function and the fact that it involves only three parameters, the global optimum can be efficiently computed. In our implementation, we solve this optimization using the well-established Broyden-Fletcher-Goldfarb-Shanno (BFGS) algorithm.

It is worth mentioning that we remove the bias term when constructing the final estimator to ensure that $\widehat{\beta}(k)$ indeed decays exponentially fast with respect to $k$. Nonetheless, in our numerical experiments, we find that the inclusion of $\eta_0$ in \eqref{eqn:initialestmodel} is essential to ensure the consistency of $\widehat{\beta}(k)$ due to the finite sample bias of $\widetilde{\beta}(k)$. %We also remark that we assume the bias term is constant as a function of $k$. More generally, one may impose another parametric model to characterize the bias. 

\textbf{Step 3}. We set $\widehat{K}$ to the smallest integer $k$ such that $\widehat{\beta}(k)$ is no larger than $\delta(NT)^{-1} k$. We find that the choice of $\widehat{K}$ is not overly sensitive to the value of $K_0$ used in the first step. 

\subsubsection{Consistency of the Estimated $K$}\label{sec:bestK-consistency} 
%subsection{Consistency of $\widehat{K}$}\label{sec:consistency-K}
We first present conditions for establishing the property. The first condition says the model used in Step 2 is correctly specified.
\begin{condition}\label{cond:initialest}
	Suppose the initial estimators $\widetilde{\beta}(1), \ldots, \widetilde{\beta}(K_0)$ satisfy:
	\begin{equation}\label{eq:init-beta-estimator}
		\widetilde{\beta}(k) =  a_0 \exp(-b_0 k)+\eta_0 + \epsilon_k,
	\end{equation}
	for some $a_0,b_0 \in (0, \infty)$ and $\eta_0\in \mathbb{R}$, and $\{\varepsilon_k\}_k$ are zero-mean noises whose variances decay to zero at a rate of $(NT)^{-\varepsilon}$ for any $\varepsilon>0$.
\end{condition}
Condition \ref{cond:initialest} is mild. To elaborate this condition, we discuss the three terms on the right-hand-side (RHS) of \eqref{eq:init-beta-estimator} one by one:
\begin{itemize}
	\item The first term $a_0 \exp(-b_0 k)$ represents the main effect, which equals the oracle value $\beta(k)$. This term decays to zero exponentially fast under the exponential $\beta$-mixing assumption (see Condition 1 in Section 5.1 of the main paper);
	\item The second term measures the shared bias among all the initial estimators. It originates from the estimation of some common nuisance functions, such as $f$, $\pi_b$, and $\pi_b'$, in estimating the mixing coefficients;
	\item The last term denotes the mean-zero residual, capturing the variability of the initial estimators. 
\end{itemize}
This condition is not restrictive at all. First, it only requires the variances of the initial estimators to decay to zero; it does not impose any requirement for the bias $\eta_0$. Second, the variances can decay to zero at any arbitrary rate, thus enabling a wide range of nonparametric estimators.  

Additionally, we require that $\beta(K^*)$ must be strictly smaller than $\delta K^*/(NT)$, and  $\beta(K^*-1)$ must be strictly larger than $\delta K^*/(NT)$.
\begin{condition}\label{cond:Kstar}
	$NT\beta(K^*)/K^*$ is strictly smaller than $\delta$ and $NT\beta(K^*-1)/(K^*-1)$ is strictly larger than $\delta$. Specifically, $\delta-NT\beta(K^*)/K^*$ and $NT\beta(K^*-1)/(K^*-1)-\delta$ are bounded away from zero. 
\end{condition}

Given Conditions~\ref{cond:initialest} and \ref{cond:Kstar}, the theorem below establishes the consistency of $\widehat{K}$, followed by the corresponding proof.
\begin{theorem}\label{thm:Kconsistency}
	Under Conditions \ref{cond:initialest} and \ref{cond:Kstar}, we have $\widehat{K}=K^*$ with probability approaching~$1$. 
\end{theorem}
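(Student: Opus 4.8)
The plan is to reduce the event $\{\widehat{K}=K^*\}$ to two boundary comparisons and then control the relative error of the extrapolated estimator $\widehat{\beta}(k)=\widehat{a}\exp(-\widehat{b}k)$ at lags of order $\log(NT)$. Since the optimization in \eqref{eqn:estK} constrains $b>0$, we have $\widehat{b}>0$, so $k\mapsto (NT/k)\widehat{\beta}(k)=(NT)\widehat{a}\exp(-\widehat{b}k)/k$ is strictly decreasing in $k$; the same monotonicity holds for $(NT/k)\beta(k)$ under the exponential form $\beta(k)=a_0\exp(-b_0k)$. Hence both $K^*$ and $\widehat{K}$ are clean crossing points, and to prove $\widehat{K}=K^*$ it suffices to show, with probability tending to one, that (a) $(NT/K^*)\widehat{\beta}(K^*)\le \delta$ and (b) $(NT/(K^*-1))\widehat{\beta}(K^*-1)>\delta$. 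Monotonicity then forces every $k<K^*$ to violate the threshold, so $K^*$ is the smallest index that meets it.

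I would first establish consistency, with an explicit rate, of the nonlinear least-squares estimator $(\widehat{a},\widehat{b},\widehat{\eta})$ from \eqref{eqn:estK}. Under Condition \ref{cond:initialest} the fitted model is correctly specified, so the population criterion $K_0^{-1}\sum_{k=1}^{K_0}(\eta_0+a_0e^{-b_0k}-\eta-ae^{-bk})^2$ has a unique minimizer at $(a_0,b_0,\eta_0)$: for $K_0>3$ the map $(a,b,\eta)\mapsto(\eta+ae^{-bk})_{k=1}^{K_0}$ is injective on $(0,\infty)^2\times\mathbb{R}$, which follows because a nonzero exponential sum with the three frequencies $\{0,-b,-b_0\}$ has at most two real zeros. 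Since the residuals $\epsilon_k$ have variances of order $(NT)^{-\varepsilon}$, standard M-estimation arguments then yield $|\widehat{a}-a_0|+|\widehat{b}-b_0|+|\widehat{\eta}-\eta_0|=O_p\{(NT)^{-\varepsilon/2}\}$.

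The crux, and the step I expect to be hardest, is converting this rate obtained at moderate lags $k\le K_0$ into control of $\widehat{\beta}$ at $k=K^*$, which by Condition \ref{cond:Kstar} and the exponential form is of order $b_0^{-1}\log(NT)$ and at which $\beta(K^*)$ is only of order $\delta K^*/(NT)$; a naive nonparametric estimate with error $(NT)^{-1/2}$ would be hopeless there. The parametric extrapolation rescues the argument. Writing
\begin{equation*}
    \frac{\widehat{\beta}(k)}{\beta(k)}=\frac{\widehat{a}}{a_0}\exp\{-(\widehat{b}-b_0)k\},
\end{equation*}
the bias estimate $\widehat{\eta}$ drops out entirely, which is precisely why the bias term is discarded when forming $\widehat{\beta}$ in Step 2; and for every $k\le C\log(NT)$ the exponent obeys $|(\widehat{b}-b_0)k|\le C|\widehat{b}-b_0|\log(NT)=O_p\{(NT)^{-\varepsilon/2}\log(NT)\}=o_p(1)$, because any polynomial rate dominates the logarithm. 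Therefore $\widehat{\beta}(k)/\beta(k)=1+o_p(1)$ uniformly over this range, in particular at $k=K^*$ and $k=K^*-1$.

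Finally I would combine this uniform relative-error bound with the strict separation in Condition \ref{cond:Kstar}, which supplies constants $c,c'>0$ such that $(NT/K^*)\beta(K^*)\le\delta-c$ and $(NT/(K^*-1))\beta(K^*-1)\ge\delta+c'$. Since $(NT/K^*)\beta(K^*)$ is bounded, the relative-error bound gives $(NT/K^*)\widehat{\beta}(K^*)=(NT/K^*)\beta(K^*)+o_p(1)\le\delta-c+o_p(1)$, which is below $\delta$ with probability approaching one; an identical argument shows $(NT/(K^*-1))\widehat{\beta}(K^*-1)\ge\delta+c'+o_p(1)>\delta$. This establishes (a) and (b), and hence $\widehat{K}=K^*$ with probability tending to one.
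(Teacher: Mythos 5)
Your proof is correct, and its overall architecture matches the paper's: establish consistency of $(\widehat{a},\widehat{b})$ from the nonlinear least squares fit, use monotonicity of $k\mapsto (NT/k)\widehat{\beta}(k)$ to reduce the claim to the two boundary comparisons at $K^*$ and $K^*-1$, and exploit $K^*=O(\log(NT))$ to control the extrapolation. Where you differ is in how the closeness of $\widehat{\beta}$ to $\beta$ at the crossing points is quantified. The paper works additively: it Taylor-expands $\{\exp(-\widehat{b}k)-\exp(-b_0k)\}/k$ and shows the difference $\widehat{\beta}(k)/k-\beta(k)/k$ is $o_p((NT)^{-1})$, a ``non-standard'' rate that only works because $\beta(K^*)/K^*$ is itself of order $(NT)^{-1}$. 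You instead use the exact ratio identity $\widehat{\beta}(k)/\beta(k)=(\widehat{a}/a_0)\exp\{-(\widehat{b}-b_0)k\}$ and observe that the exponent is $O_p((NT)^{-\varepsilon/2}\log(NT))=o_p(1)$ for $k\lesssim\log(NT)$, so the relative error is $o_p(1)$; multiplying by the bounded quantity $(NT/K^*)\beta(K^*)$ then delivers the additive control automatically. This multiplicative route is cleaner — it makes transparent why an estimate formed at lags $k\le K_0$ can be trusted at lag $K^*$, and it avoids having to argue for an $o_p((NT)^{-1})$ additive rate explicitly. A second, minor improvement: you justify consistency of the NLS estimator via injectivity of $(a,b,\eta)\mapsto(\eta+ae^{-bk})_{k=1}^{K_0}$ (an exponential-sum zero-counting argument), which is more defensible than the paper's appeal to convexity of the objective in \eqref{eqn:estK}, since that objective is not jointly convex in $(a,b,\eta)$. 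The only cosmetic blemish is writing $(NT/(K^*-1))\widehat{\beta}(K^*-1)\ge\delta+c'+o_p(1)$ where the precise statement is $\ge(\delta+c')(1+o_p(1))$; the conclusion that it exceeds $\delta$ with probability tending to one is unaffected.
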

\begin{proof}
	We begin by noting the convexity of the least square objective function. Since $K_0>3$ and is a finite integer, its Hessian matrix is positive definite. Consequently, it is straightforward to show that both estimators $\widehat{a}$ and $\widehat{b}$ converge to their oracle values at a rate of $O_p((NT)^{-\varepsilon})$, under Condition \ref{cond:initialest}. 

	Next, we notice that under exponential $\beta$-mixing, both $\widehat{\beta}(k)/k$ and $\beta(k)/k$ are monotonically decreasing as functions of $k$. To establish the consistency of $\widehat{K}$, it suffices to show $\widehat{\beta}(K^*)/K^*\le \delta/(NT)$ and $\widehat{\beta}(K^*-1)/(K^*-1)> \delta/(NT)$. Under Condition \ref{cond:Kstar}, it suffices to show 
	\begin{eqnarray}\label{eqn:converge}
		\max\Big|\frac{\widehat{\beta}(K^*)}{K^*}-\frac{\beta(K^*)}{K^*},\frac{\widehat{\beta}(K^*-1)}{K^*-1}-\frac{\beta(K^*-1)}{K^*-1}\Big|=o_p\Big(\frac{1}{NT}\Big).
	\end{eqnarray}
	Equation \eqref{eqn:converge} poses a non-standard condition, as it requires the estimator to decay at a much faster rate than $O((NT)^{-1})$. Nevertheless, as we will demonstrate, this rate is achievable, by noting that both $\beta(K^*-1)$ and $\beta(K^*)$ are of the same order of magnitude as $O((NT)^{-1})$.

	For a given integer $k$, we have by Taylor expansion that
	\begin{eqnarray}\label{eqn:taylor1}
		\frac{\exp(-\widehat{b}k)-\exp(-b_0 k)}{k}=\sum_{t=1}^{\infty} \frac{(-k)^t\exp(-b_0 k)}{t! k} (\widehat{b}-b_0)^t.
	\end{eqnarray}
	Since $\beta(K^*-1)/(K^*-1) > \delta/(NT)$, $K^*$ is upper bounded by $O(\log (NT))$. This together with the convergence rate of $\widehat{b}$ implies that, when $k$ is set to either $K^*-1$ or $K^*$, \eqref{eqn:taylor1} is much smaller than
	\begin{eqnarray*}
		\sum_{t=1}^{\infty} \frac{\exp(-b_0 k)}{t! k},
	\end{eqnarray*}
	with probability approaching $1$. Additionally, due to Condition~\ref{cond:Kstar}, we have $\beta(K^*)/K^*=O((NT)^{-1})$; and thus, we obtain that \eqref{eqn:taylor1} is $o_p((NT)^{-1})$. Finally, notice that
	\begin{eqnarray*}
		\frac{\widehat{\beta}(k)}{k}-\frac{\beta(k)}{k}=\widehat{a} \frac{\exp(-\widehat{b}k)-\exp(-b_0 k)}{k}+\frac{(\widehat{a}-a_0)\exp(-b_0 k)}{k}.
	\end{eqnarray*}
	When setting $k$ to either $K^*-1$ or $K^*$, the first term on the RHS is $o_p((NT)^{-1})$. Meanwhile, the second term is $o_p((NT)^{-1})$ as well, since $\beta(K^*)/K^*=O((NT)^{-1})$. This proves \eqref{eqn:converge}, which in turn completes the proof of Theorem \ref{thm:Kconsistency}. 
\end{proof}

\subsubsection{Simulations}\label{sec:bestK-simulation}
To analytically calculate the ground truth for $K^*$, we consider a simplified where states, actions, and rewards are all binary. We notice that even for slightly more complicated linear auto-regressive models, calculating $K^*$ becomes much more challenging \citep{mcdonald2015estimating}. Consider the following time-homogeneous MDP with the transition probability given by:
\begin{align*}
	&
	\mathbb{P}\left((S_t, A_t, R_t) = (s', a', r') \given (S_{t-1}, A_{t-1}, R_{t-1}) = (s, a, r)\right) 
	\\
	=& 
	\begin{cases}
		0, \;& \textup{ if } s=s', a=a' \textup{ and } r = r'
		\\
		\displaystyle \frac{1}{7}, \;& \textup{ otherwise },
	\end{cases}
\end{align*}
for any $t$. 
% As shown in [8], homogeneous recurrent Markov chains are exponentially $\beta$-mixing. In particular, if the Markov chain has stationary distribution $d_{*}$ and $q$-step transition distribution $\prob^q$, then
% \begin{equation}\label{}
% 	\beta(q) = \int \textup{TV}(\prob^q(\cdot|(s, a, r)), d_{*}(\cdot)) d_{*}(s, a, r) ds da dr.
% \end{equation}
Through detailed calculations, we can show that the beta-mixing coefficient under this configuration is:
\begin{align*}
	\beta(q) = \frac{7}{8} \times 7^{-q}.
\end{align*}
This expression enables us to derive $K^*$, which additionally depends on $NT$ and $\delta$. 

We adopt the procedure in Section 4.2 to estimate $K^*$. %Notice that, the first step in constructing 
The initial estimators $\widehat{\beta}(1)$, $\ldots$, $\widehat{\beta}(K_0)$ %uses the frequency estimators in 
were constructed using \citet{mcdonald2015estimating}. %Considering $\sspace, \aspace, \mathcal{R}$ are all discrete space, we use empirical frequencies to approximate the mass distributions. %Given $\widehat{\beta}(1), \ldots, \widehat{\beta}(K_0)$, we employ the same steps (i.e., Steps 2-3) in Section 4.2 to derive $\widehat{K}$. 
We visualize both the relative bias and the relative mean squared error (MSE) of $\widehat{K}$ in Figure~\ref{fig:k-selection-consistency}. It can seen that, for sufficiently large $NT$, both metrics decay to zero, which implies the consistency of $\widehat{K}$. Moreover, the consistency holds across all values of $\delta$. 

\begin{figure}[t]
	\centering
	\includegraphics[width=0.8\linewidth]{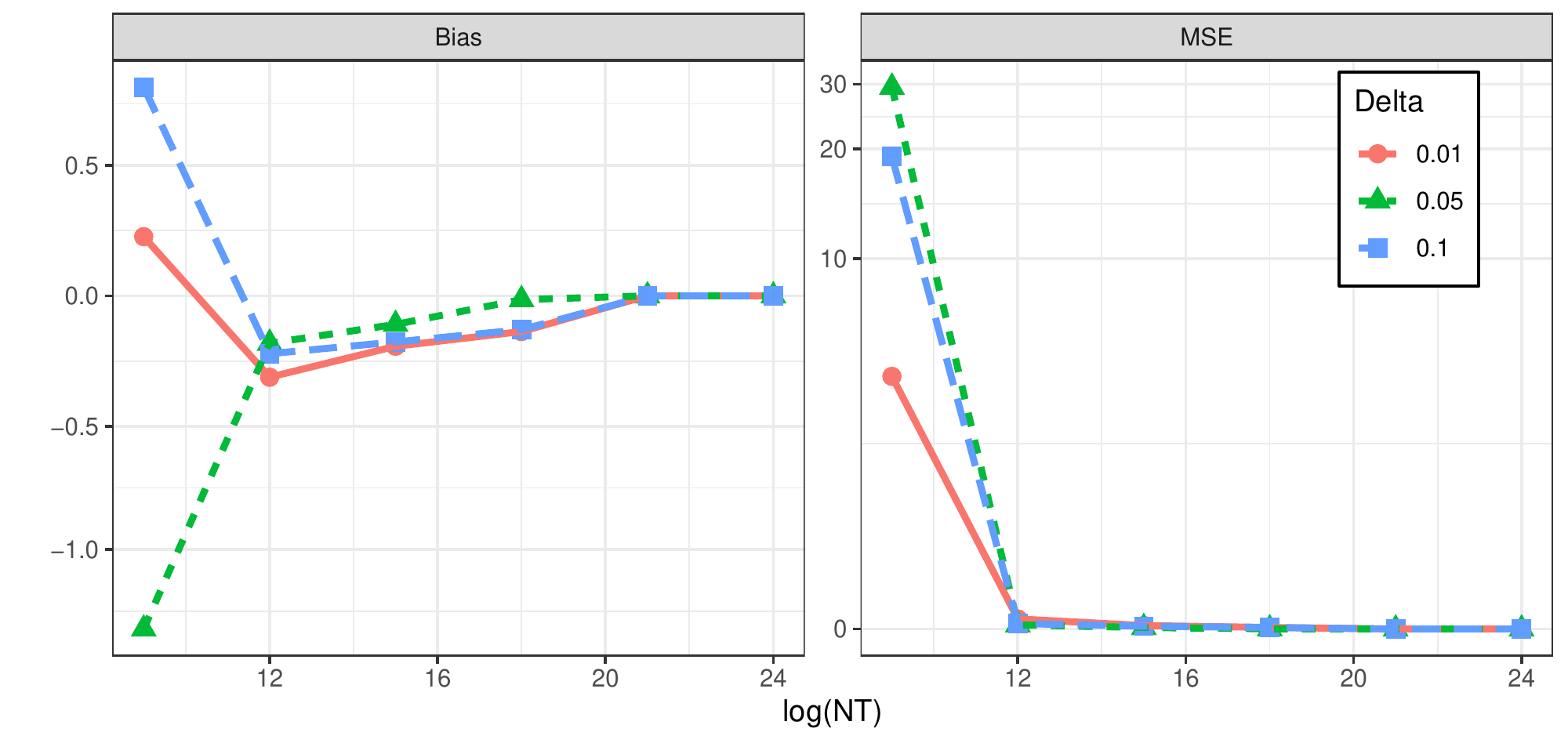}
	\caption{The $x$-axis denotes the value of $\log_2 (NT)$. The relative bias computes $(\widehat{K} - K)/K$ whereas the relative MSE calculates $(\widehat{K} - K)^2/K^2$ over 20 replications.}\label{fig:k-selection-consistency}
\end{figure}
\textit{\textbf{Sensitivity analysis}}. We further conduct a sensitivity analysis by varying (i) $K_0$, (ii) the number of Gaussian components, and (iii) the parametric form of the mixing parameter. Specifically for (iii), we consider a scenario where $\beta(k)$ decays polynomially fast with $k$, i.e., $\beta(k) = a_0 \times b_0^{-k}$, and estimate $a_0$, $b_0$ using a procedure similar to that under the exponential $\beta$-mixing. 

\begin{figure}[t] 
    \centering
    \includegraphics[width=\linewidth]{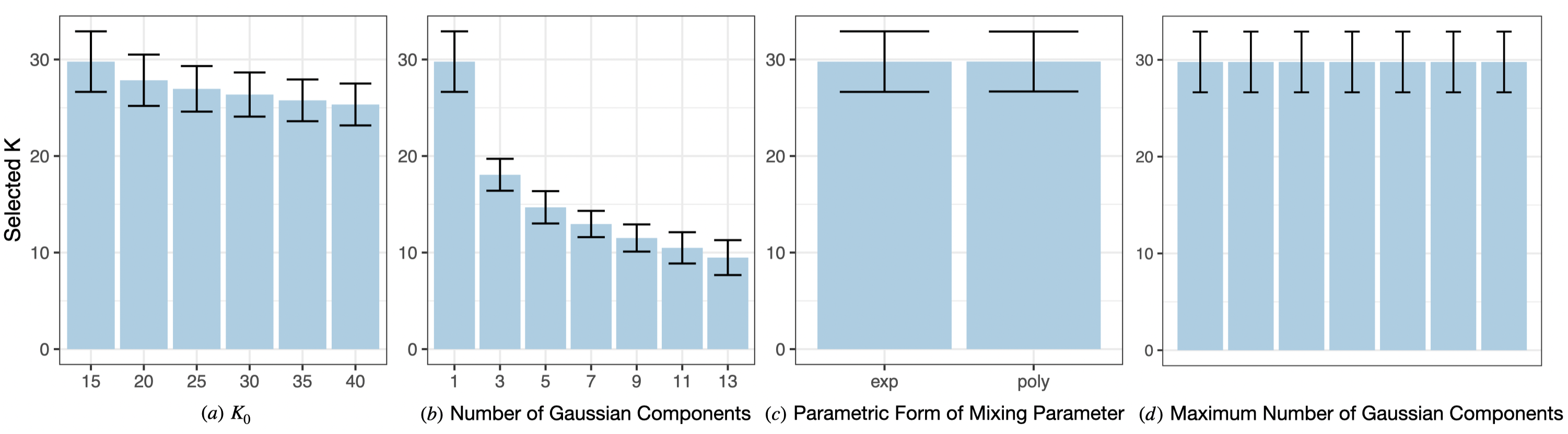}
%        \includegraphics[width=0.24\linewidth]{}
    %\vspace*{-30pt}
    \caption{The bar plot for the selected $K$ where the error bar indicates the 95\% confidence interval. The experiments are conducted on the AR environment with 100 replications. }
    \label{fig:select-K-sensitive}
\end{figure}

The results are visualized in Figure~\ref{fig:select-K-sensitive}. 
From Figure \ref{fig:select-K-sensitive}(a), %of Figure~\ref{fig:select-K-sensitive}, 
it is evident that increasing $K_0$ from 15 (the default value in our empirical studies) to 40 does not significantly affect the selected $K$. Furthermore, Figure \ref{fig:select-K-sensitive}(c) shows that the results based on the two parametric forms are almost indistinguishable. %In summary, we conclude that 
This confirms that the selection of $K$ is not sensitive to either the change in $K_0$ or the parametric form of the mixing parameter. 

On the other hand, it can be seen from Figure \ref{fig:select-K-sensitive}(b) that the number of Gaussian components, has a visible impact on the selection of $K$, with $K$ monotonically decreasing as this number increases. To address this sensitivity, we suggest using the Bayesian information criterion to select the number of Gaussian components \citep{mclachlan2014number}. The results presented in Figure~\ref{fig:select-K-sensitive}(d) show that the selected $K$ remains stable when the maximum number of Gaussian components varies from 1 to 13, and aligns with the number of splits selected in our numerical studies.

\subsection{Second-order Machine for Knockoff Construction}\label{sec:secondorder}
\label{apdx:b.3}
For each $a \in \mathcal{A}$, recall that $\calD_k^{(a)}$ denotes the data subset $\left\{(\mathbf{S}, A, Y)\in\calD_k:A=a\right\}$. 
According to Line 2 of Algorithm \ref{alg:2}, on each $\calD_k^{(a)}$, we aim to construct a knockoff state $\widetilde{\svar}$ such that
%row by row in the form of  $\tilde{\mathbf{s}}_k^{(a)}\in\mathbb{R}^{n_{k,a}\times p}$
%in each row for any subset $B\subset \{1,\ldots,p\}$,
\begin{align}\label{eqn:satisfyexchangeable}
	\widetilde{\bm{S}}\indep R|\bm{S},A \,\,\hbox{and}\,\, (\svar, ~ \widetilde{\svar})_{\text{swap}(B)}|A \overset{d}{=} (\svar, ~ \widetilde{\svar})|A.
\end{align}
%Such exchangeability is not a strong requirement to satisfy, and we provide several ways to sample $\tilde{\mathbf{s}}_k^{(a)}$ in below. Such range provides readers large with room in sampling, to accommodate their different needs in accuracy and efficiency.
%We discuss two concrete proposals below to construct knockoffs. Meanwhile, other proposals are equally applicable as long as they meet the exchangeability assumption. 
\begin{comment}
\end{comment}
%\subsubsection{Second-order Machine (Gaussian Sampling)}\label{sec:secondorder}%\\
%If we believe the random design 
To satisfy %such exchangeability assumption in 
\eqref{eqn:satisfyexchangeable}, 
we assume each $\svar|A=a$ follows a multivariate normal distribution with mean $\bm{\mu}^{(a)}$ and covariance matrix $\bm{\Sigma}^{(a)}$. For now, assumes $\widetilde{\svar}$ follows the following multivariate normal distribution, 
\begin{align}\label{eqn:gaussiansampling}
	\widetilde{\svar}|\svar,A=a \overset{d}{=}
	\mathcal{N}(\bm{\nu}^{(a)}, \bm{V}^{(a)}),
\end{align}
where 
\begin{align*}
	\bm{\nu}^{(a)} &= \svar^{(a)} -  (\bm{\Sigma}^{(a)})^{-1} \text{diag}(\mathbf{d}^{(a)})(\svar^{(a)}-\bm{\mu}^{(a)}),\\
	\bm{V}^{(a)} &= 2\text{diag}(\mathbf{d}^{(a)}) - \text{diag}(\mathbf{d}^{(a)}) (\bm{\Sigma}^{(a)})^{-1} \text{diag}(\mathbf{d}^{(a)}),
\end{align*}
for some $\mathbf{d}^{(a)}\in \mathbb{R}^d$. Notice that the conditional Gaussian distribution specified in \eqref{eqn:gaussiansampling} depends solely on $\svar$ and $A$ only, and is independent of the reward. This guarantees the conditional independence between the knockoff state and the reward in \eqref{eqn:satisfyexchangeable}. 
%where we have asssumed $\mathbf{S}_k^{(a)} \sim \mathcal{N}(\mathbf{0}, \Sigma_k^{(a)})$. We perform sampling row by row, and the generated samples is the $n_{k,a}\times p$ matrix $\tilde{\mathbf{s}}_k^{(a)}$.
%$$
%\tilde{\mathbf{s}}^{(j)} = \mathbf{s}^{(j)}(\mathbf{I} - \Sigma^{-1}\text{diag}\{\mathbf{d}\}) + \tilde{U}C
%$$
%where
%\begin{enumerate}
%  \item $\Sigma = \mathbf{s}^{(j)^\top}\mathbf{s}^{(j)}$
%  \item $\text{diag}\{\mathbf{d}\} \geq \mathbf{0},$ and %$ 2\Sigma \geq \text{diag}\{\mathbf{d}\} , \mathbf{d} \in \mathbb{R}^p $
%  \item $\tilde{U} \in \mathbb{R}^{n_j\times p}$ be orthonormal matrix with $\tilde{U}^\top \mathbf{s}^{(j)} = \mathbf{0}$
%  \item $C^\topC = 2\text{diag}\{\mathbf{d}\} - \text{diag}\{\mathbf{d}\}\Sigma^{-1}\text{diag}\{\mathbf{d}\} \geq \mathbf{0}$ and $C$ is obtained by Cholesky decomposition
%\end{enumerate}
%\begin{remark}

Meanwhile, with some calculations, it is straightforward to show that such a construction leads to the following joint distribution:
	\begin{align*}
		(\mathbf{S}, \widetilde{\mathbf{S}})|A=a \sim
		\mathcal{N}((\bm{\mu}^{(a)}, \bm{\mu}^{(a)}), \mathbf{U}^{(a)}),
		\mathbf{U}^{(a)}=
		\begin{bmatrix}
			\bm{\Sigma}^{(a)} & \bm{\Sigma}^{(a)} - \text{diag}(\mathbf{d}^{(a)})\\
			\bm{\Sigma}^{(a)} - \text{diag}(\mathbf{d}^{(a)})& \bm{\Sigma}^{(a)}
		\end{bmatrix}.
		%\geq \mathbf{0}.
	\end{align*}
	%which is proved in .  
By the symmetry of $\mathbf{U}^{(a)}$, switching any entries of $(S_{j_1},S_{j_2},\cdots,S_{j_{\kappa}})$ and $(\widetilde{S}_{j_1},\widetilde{S}_{j_2},\cdots,
\widetilde{S}_{j_{\kappa}})$ for any $\kappa$ and $j_1,\cdots,j_\kappa$ results in the same covariance matrix. Since the $\mathbf{S}$ and $\widetilde{\mathbf{S}}$ share the same mean, the exchangeability assumption (refer to Equation  \eqref{eqn:satisfyexchangeable}) is satisfied. This motivates us to employ the Gaussian sampling in \eqref{eqn:gaussiansampling} to construct the knockoff variables.

It remains to determine the diagonal elements $\mathbf{d}^{(a)}$ in $\bm{V}^{(a)}$. On one hand, these elements shall be chosen such that $\mathbf{U}^{(a)}$ is positive semidefinite %and all its 
 with all entries being nonnegative. On the other hand, each entry in $\mathbf{d}^{(a)}$ should be as large as possible to reduce the correlation between $\svar$ and $\widetilde{\svar}$ conditional on $A=a$. To balance this trade-off, one could adopt the approximate semidefinite program %is adopted with details introduced 
	developed in \citet[][]{candes2018panning} to determine the optimal $\mathbf{d}^{(a)}$.
%\end{remark}
%\begin{remark}
%In addition, since we have normalized the data column-wise, we know the above Gaussian idea 

Finally, we remark that the construction of the conditional Gaussian distribution in \eqref{eqn:gaussiansampling} ensures that the first two moments of $\widetilde{\mathbf{S}}^{(a)}$ match those of $\mathbf{S}^{(a)}$.
%matches the first two moments between $S_k^{(a)}$ and $\widetilde{S}_k^{(a)}$.  %even 
Thus, we refer to this method as a second-order (sampling) machine. In practice, it works well even without the %assumption of multivariate 
normality assumption.
%distribution in state variables, and with sample size large enough, such approximation can be close to achieving such goal. As a result, 
%\end{remark}

\subsection{Feature Importance Statistics}\label{sec:genericML}
\subsubsection{LASSO}\label{sec:importance-lasso}
In this section, we detail the construction of feature importance statistics via LASSO. For each $k$, we apply LASSO to the augmented data subset $\widetilde{\calD}_k^{(a)}$ (see Step 3 of Algorithm \ref{alg:2}) with both the observed states and their knockoff variables serving as the ``predictors'', and the rewards or the next states serving as the ``response'' (i.e., LASSO for each response variable). Let $\widehat{\bm{b}}^{(a)}\in \mathbb{R}^{2d}$ denote the regression coefficient vector. We next assign $Z_j^{(a)}$ and $\widetilde{Z}_j^{(a)}$ to the absolute values of the $j$th and $(j+d)$th elements of $\widehat{\bm{b}}^{(a)}$. Finally, we set $Z_j=\max_a Z_j^{(a)}$ and $\widetilde{Z}_j=\max_a \widetilde{Z}_j^{(a)}$.

\subsubsection{Generic Machine Learning Methods}\label{sec:importance-ML}
In this section, we focus on the setting where a generic machine learning method is applied to $\widetilde{\calD}_k^{(a)}$ to calculate the variable importance (VI) score for each state as well as its knockoff variable. %(denoted by $\{\textrm{VI}_{j}^{(a,k)}:j\in [2p]\}$). 
%When LASSO is employed, these VI's correspond to the estimated regression coefficients.
When the random forest algorithm is applied, it will simultaneously produce an importance measure for each variable. Alternatively, deep neural networks are applicable to produce these measures as well \citep{lu2018deeppink}. To handle multiple actions, we calculate the VI based on each data subset  $\widetilde{\calD}_k^{(a)}$ and obtain the maximum value over all actions. Let $\textrm{VI}^{(a,k)}_{j,i}$ denote the $j$th variable's VI with the $i$th state being the outcome. Note that when $i=d+1$, the outcome is the reward. Then we can define the $W$-statistic 
%\begin{align*}
    $W_j^{(k)} = \max\limits_{a\in\mathcal{A}}|VI_{j,i}^{(a,k)}| - \max\limits_{a\in\mathcal{A}}|VI_{j+d,i}^{(a,k)}|$. 
%Similarly 
%Consider the case where 

%For any general machine learning (ML) method (which could be non-parametric, like random forest or DNN), one way to calculate the variable importance is as follows. We first apply such a method to data, with the results of which we compute the residual sum of squares (RSS) between the estimation by the method and original responses. Then to evaluate the importance of any one variable, we kick it out from the learning and redo the ML process, the new RSS of which would be higher. And the difference between the two RSS (with some normalization) would be its variable importance, denoted by $VI$. Such a definition is applicable to most ML methods. For instance, when it comes to random forest, our definition just coincides with the common approach in defining variable importance score. In our scenario, we need to perform any specific method given each fixed action $a\in\mathcal{A}$, for each variable $j\in[2p]$, to have variable importance $VI_j^{(a)}$.
%\noindent
%Given that specification

To analyze the power property, similar to Section \ref{sec:poweranalysis}, we construct a $(d+1)\times (d+1)$ matrix $\mathbf{V}^{(k)}$ whose $(j,i)$th entry equals $\max\limits_{a \in \aspace} |\textrm{VI}_{j,i}^{(a,k)}|$ for $j\le d$ and $0$ for $j=d+1$. This generates a directed graph $\mathcal{G}(\mathbf{V}^{(k)})$. 
%For each $k\in[K]$, by replacing the entries in $\mathbf{B}$ with VI's, we can have a new matrix $\mathbf{V}^{(k)}$, where $V_{j,i}^{(k)} = \max_a |VI_{j,i}^{(a,k)}|$ for $j\in[p], i\in[p+1]$ and $V_{p+1,i}^{(k)} = 0, i \in [p+1]$. Then define  directed graphs $\mathcal{G}(\mathbf{V}^{(k)})$, with similar interpretations as in LASSO case. Notice that the graph in LASSO case is by the coefficients in the underlying true model, while here each graph $\mathcal{G}(\mathbf{V}^{(k)})$ is defined by VI computed on a data sample $\calD_k$.  
%\end{align*}
%and all following procedures in SEEK are exactly the same.

%\textbf{General conditions and results}
%The most possible condition we could require is some mild separation properties in variable importance, which could possibly be achieved.

\begin{condition} [Separation] \label{ass:sep}
There exists a function $\gamma(n,d)$ %and constant $c_2>0$, %s.t. for $n$ large, 
such that for any $k\in[K]$, with probability approaching $1$, we have %of at least $1-O(p^{-c_2})$, and any $k\in[K]$: 
\begin{enumerate}
    \item $\max\limits_{a\in\mathcal{A}}|\textrm{VI}_{j,i}^{(a,k)}| 
    < \gamma(n,d)/2$
    uniformly for any $j\in[2d]\backslash G_\textrm{M}$ and any $i\in [d+1]$; 
 %   \begin{align*}
 %   2\max_{a\in\mathcal{A}}|\textrm{VI}_{j,i}^{(a,k)}| 
 %   &< \gamma(n,p) 
 %   \end{align*}
    \item
    the subgraph of $\mathcal{G}(\mathbf{V}^{(k)})$, by removing edges with weights smaller than $\gamma(n,p)$, identifies $G_{\textrm{M}}$.
\end{enumerate}
\end{condition}

%\textcolor{red}{can we provide one or two examples about $\gamma(n, p)$}

% For each $k\in[K]$, by replacing the entries in $\mathbf{B}$ with VI's, we can have a new matrix $\mathbf{V}^{(k)}$, where $V_{j,i}^{(k)} = \max_a |VI_{j,i}^{(a,k)}|$ for $j\in[p], i\in[p+1]$ and $V_{p+1,i}^{(k)} = 0, i \in [p+1]$. Then define  directed graphs $\mathcal{G}(\mathbf{V}^{(k)})$, with similar interpretations as in LASSO case.
\begin{comment}
\begin{condition} 
\label{ass: recovery_ml}
With probability of at least $1-p^{-c_3}$, for any $k\in[K]$, the subgraph of $\mathcal{G}(\mathbf{V}^{(k)})$, by removing edges with weights smaller than $\gamma(n,p)$ in Condition \ref{ass:sep}, is able to identify $G_{\textrm{M}}$. 
\end{condition}
\end{comment}
\begin{remark}
    The separation assumption is %quite 
    mild in %several aspects. First, 
    the sense that we do not require $\textrm{VI}_{j,i}^{(a,k)}$'s to converge to some population-level variable importance measures as the sample size increases. The first part of Condition \ref{ass:sep} essentially upper bounds the importance of the insignificant variables whereas the second part corresponds to the signal strength condition. Similar to Condition \ref{ass:signal2}, it only requires the existence of one path from the reward to each minimal sufficient state after removing edges with weak or moderate signals. 
    %when an explicit true model exists, in the sense that each variable has a true (limiting) coefficient/importance score, then by consistency result, Condition \ref{ass:sep} can be shown. When either consistency is hard to achieve, or no such limiting importance score exists, the power is still achievable, as long as the VI's computed on data samples follow the separation defined in Condition \ref{ass:sep} with high probability. Finally, in different samples, the subgraph defined could even be different, i.e. one variable can have high VI directly towards reward in one dataset while moderate in the second set, as long as it has a path of strong edges (instead of a single strong edge) towards reward. Such flexibility in turn supports the generality of SEEK.
\end{remark}

\begin{comment}
\begin{condition} [Enough signals] \label{ass: signals_ml}
   Exists constant $C_7>0$, s.t. $|\{j\in[p]: \beta_j^{(a)} > 0 ~ \text{for some } a\in\mathcal{A} \}| \geq 1/q+C_7$.
\end{condition}
\end{comment}

\begin{theorem}\label{thm:power-ml}
Suppose Condition \ref{ass:sep} holds. Then with probability approaching $1$, $\widehat{G}$ produced by Algorithm \ref{alg:2} contains $G_M$. 
%the power of SEEK using general ML methods is bounded below by
%\begin{align*}
%\textbf{Power} 
% := \mathbb{E}\left(\frac{|\hat{G}\cap G_\textrm{M}|}{|G_\textrm{M}|}\right) 
%\geq 1 - O\left(h p^{-c_2} \right),
%\end{align*}
%\textcolor{red}{power has been defined before} 
%which asymptotically increases to one as $NT\rightarrow \infty$.
\end{theorem}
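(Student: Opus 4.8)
The plan is to reduce the claim to a single deterministic graph-reachability statement that holds on a high-probability event supplied by Condition~\ref{ass:sep}, and then to lift it from each data subset to the final majority-vote output. Throughout, abbreviate $\gamma=\gamma(n,d)$ and work on the event $\mathcal{E}_k$ on which both parts of Condition~\ref{ass:sep} hold for the subset $\calD_k$; by assumption $\mathbb{P}(\mathcal{E}_k)\to 1$. I would first analyze a single application of Algorithm~\ref{alg:2} with a fixed response indexed by $i\in[d+1]$, then chain these analyses along the iterations of Algorithm~\ref{alg:1}, and finally aggregate across $k$ by the majority vote.

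For the single-iteration step, the key is to bound the knockoff threshold. On $\mathcal{E}_k$, Part~1 of Condition~\ref{ass:sep} gives $\max_{a}|\textrm{VI}_{j+d,i}^{(a,k)}|<\gamma/2$ for every original index $j$, since each knockoff index $j+d$ lies in $[2d]\setminus G_{\textrm{M}}$. Hence for every $j$ the statistic $W_j^{(k)}=\max_a|\textrm{VI}_{j,i}^{(a,k)}|-\max_a|\textrm{VI}_{j+d,i}^{(a,k)}|>-\gamma/2$, so the set $\{j: W_j^{(k)}\le -\gamma/2\}$ is empty. Substituting $t=\gamma/2$ into the threshold rule of Algorithm~\ref{alg:2} shows its defining inequality is satisfied, whence $\tau\le\gamma/2$. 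Consequently any variable $j$ whose weight to node $i$ obeys $\mathbf{V}^{(k)}_{j,i}=\max_a|\textrm{VI}_{j,i}^{(a,k)}|\ge\gamma$ has $W_j^{(k)}\ge\gamma-\gamma/2=\gamma/2\ge\tau$ and is therefore selected. In graph language, the set selected for response $i$ contains every in-neighbor of node $i$ in the thresholded graph obtained by deleting all edges of weight below $\gamma$.

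I would then run the recursion. Algorithm~\ref{alg:1} begins with the reward node $d+1$ and, at each step, feeds the previously selected indices back in as responses; by the single-iteration guarantee each step adjoins all in-neighbors (in the thresholded graph) of the currently selected nodes, which is precisely a backward breadth-first search from node $d+1$. Because the reward is a sink (its row in the augmented coefficient matrix is zero), every intermediate node on any such path is a state index in $[d]$ and hence eligible to serve as a response. Part~2 of Condition~\ref{ass:sep} states that in the thresholded graph every $j\in G_{\textrm{M}}$ still has a directed path to the reward, so every such $j$ is backward-reachable from $d+1$ and is adjoined at some step; since the selected index sets grow monotonically the recursion terminates after at most $d+1$ steps, and $G_{\textrm{M}}\subseteq\widehat{G}_k$ on $\mathcal{E}_k$. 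As Condition~\ref{ass:sep} is a single event per subset holding uniformly over all responses $i\in[d+1]$, it covers every iteration at once; intersecting over the $K=O(\log(NT))$ subsets and applying a union bound keeps the overall probability tending to one. On this event each $j\in G_{\textrm{M}}$ has $\widehat{p}_j=1\ge\alpha$, so the majority-vote output satisfies $G_{\textrm{M}}\subseteq\widehat{G}$, and a final union bound over the $d_0$ coordinates of $G_{\textrm{M}}$ completes the argument.

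The main obstacle is the recursive composition rather than any single estimate: one must verify that the per-response guarantee, which is only a containment of in-neighbors, propagates correctly along directed paths so that backward reachability in the thresholded graph is reproduced faithfully, and that the high-probability event of Condition~\ref{ass:sep} can be imposed simultaneously across all subsets (this is where the exponential $\beta$-mixing of Condition~\ref{ass:mixing}, together with $K\asymp\log(NT)$, is used to treat the tuples within each $\calD_k$ as approximately independent, legitimizing the VI-level separation assumption). A secondary point is the knockoffs+ variant, whose threshold $\tau_+$ additionally requires at least $1/q$ strong-signal in-neighbors to make the selection nonempty; for power toward $G_{\textrm{M}}$ the standard-knockoffs bound $\tau\le\gamma/2$ already suffices.
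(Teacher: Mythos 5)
Your proof is correct and follows essentially the same route as the paper's (which is omitted and deferred by analogy to Theorem~\ref{thm:type-II}): bound the knockoff threshold by $\gamma/2$ using Part~1 of Condition~\ref{ass:sep}, deduce that every edge surviving the $\gamma$-thresholding is selected, and propagate this along the backward reachability structure guaranteed by Part~2 before aggregating over subsets. The only point to tighten is the final aggregation: since $K\to\infty$ and Condition~\ref{ass:sep} only gives a per-subset failure probability tending to zero, a union bound over all $K$ subsets is not justified; instead, as in the paper's proof of Theorem~\ref{thm:type-II} (the display surrounding its Markov-inequality step), one should bound $\mathbb{P}\bigl(\sum_{k}\mathbb{I}(G_{\textrm{M}}\not\subseteq\widehat{G}_k)>(1-\alpha)K\bigr)$ by Markov's inequality, which yields the required conclusion (with a harmless $(1-\alpha)^{-1}$ factor) under the weaker per-subset guarantee.
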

The proof of Theorem \ref{thm:power-ml} is similar to that of Theorem \ref{thm:type-II} and is hence omitted for brevity.

\subsection{Reward-only and One-step Methods: Pesudo Code}\label{sec:algo-rewardonly-onestep}
This section provides pseudocode of the reward-only method (see Algorithm~\ref{alg:1}) and the one-step method (see Algorithm~\ref{alg:2}) that were set as benchmarked methods in Sections~\ref{sec:exp} and~\ref{sec:mimic3}. From Algorithms~\ref{alg:reward-only} and~\ref{alg:one-step}, the implementations of reward-only and one-step methods ensure fair comparison for variable selection as we have applied the same data splitting step, knockoff filter, and majority voting step as used in the SEEK algorithm.

\renewcommand\thealgorithm{A\arabic{algorithm}}
\renewcommand{\algorithmicrequire}{\textbf{Input:}}
\renewcommand{\algorithmicensure}{\textbf{Output:}}
\begin{algorithm}[H]
\caption{Reward-only method}\label{alg:reward-only}
\begin{algorithmic}[1]
    \Require Batch data $\calD$, number of data subsets $K$, a target FDR level $q \in (0, 0.5)$, and a threshold $\alpha\in(0, 1)$ for the majority vote.
    \State Split $\calD$ into $K$ non-overlapping sets $\calD_1, \ldots, \calD_K$ that have the same cardinality. 
    \For {$k=1,2,\ldots,K$}
    \State Apply Algorithm~\ref{alg:2} to all $(\mathbf{S}, A, R)$ tuples in $\calD_k$ with the target FDR level $q$, and denote the selected index set as $\widehat{G}_{k}$. 
    \EndFor
    \Ensure $\widehat{G} := \left\{j \in \{1,\ldots,d\}: \sum\limits_{1\leq k\leq K}\mathbb{I}(j \in \widehat{G}_k) \geq \alpha K \right\}.$
\end{algorithmic}
\end{algorithm}
\begin{algorithm}[H]
\caption{One-step method}\label{alg:one-step}
\begin{algorithmic}[1]
    \Require Batch data $\calD$, number of data subsets $K$, a target FDR level $q \in (0, 0.5)$, and a threshold $\alpha\in(0, 1)$ for the majority vote.
    \State Split $\calD$ into $K$ non-overlapping sets $\calD_1, \ldots, \calD_K$ that have the same cardinality.
    
    \For {$k=1,2,\ldots,K$}
    \For {$j=1, 2, \ldots, d$}
    \State Apply Algorithm \ref{alg:2} to all $(\mathbf{S}, A, S_j)$ tuples in $\calD_k$ with the target FDR level $q$. Denote the selected index set by $\widehat{G}_{k, j}$.
    \EndFor
    \State Apply Algorithm \ref{alg:2} to all $(\mathbf{S}, A, R)$ tuples in $\calD_k$ with the target FDR level $q$.  Denote the selected index set by $\widehat{G}_{k, d+1}$.
    \State $\widehat{G}_k \leftarrow \bigcup_{j=1}^{d+1} \widehat{G}_{k, j}$
    \EndFor
    \Ensure $\widehat{G} := \left\{j \in \{1,\ldots,d\}: \sum\limits_{1\leq k\leq K}\mathbb{I}(j \in \widehat{G}_k) \geq \alpha K \right\}.$
\end{algorithmic}
\end{algorithm}

\section{Proofs}\label{apdx:proof}

\subsection{Propositions and Their Proofs}\label{sec:prop-proof}
This section is organized as follows. We begin by providing proofs for Propositions \ref{prop1} through \ref{prop4} in Sections \ref{sec:proofprop1} to \ref{sec:proofprop34}. Next, Sections \ref{sec:policy-learning} and \ref{sec:policy-evaluation} introduce two additional propositions that illustrate the benefits of variable selection for policy learning and evaluation. And finally, Section~\ref{sec:action-independence} shows the minimal sufficient state is independent to the action value. 

\subsubsection{Proof of Proposition \ref{prop1}}\label{sec:proofprop1}
%Under the MDP assumption, the optimal policy is greedy with respect to the optimal $Q$-function, denoted by $Q^\ast$ \citep[][Section 6]{puterman2014markov}. It suffices to show $Q^*$, or equivalently, $\E^{\pi^\ast}(R_t | \mathbf{S}_0, A_0)$ depends on $\mathbf{S}_0$ only through $\mathbf{S}_{0,G}$ for any $t$. 
We aim to show that for any policy $\pi$ that depends on the state $\svar$ only through $\svar_{G}$, its Q-function $Q^{\pi}$ is solely a function of $\svar_{G}$ as well. Assuming this is established, consider an arbitrary policy $\pi_0$ that depends on the state $\svar$ only through $\svar_{G}$. When we apply policy iteration to compute the optimal policy, the initial iteration evaluates its Q-function $Q^{\pi_0}$, which is solely a function of $\svar_{G}$. Subsequently, applying policy improvement yields its greedy policy, denoted by $\pi_1(a\given s) \coloneqq \arg\max\limits_{a} Q^{\pi_0}(s, a)$, which remains a function of $\svar_{G}$ as well. Repeating this procedure, we use induction to demonstrate that at each iteration, both the Q-function obtained via policy evaluation and the greedy policy obtained via policy improvement are functions of $\svar_{G}$ only. As the number of iterations increases to infinity, the sequence of greedy policies $\pi_1, \pi_2, \ldots, \pi_t, \ldots$ converges to the optimal policy \citep{puterman2014markov}, which also depends exclusively on $\svar_{G}$. 

It remains to show $Q^{\pi}$ is solely a function of $\svar_{G}$ for any $\svar_G$-dependent policy $\pi$. By definition, it suffices to show  $\E^{\pi}(R_t | \mathbf{S}_0, A_0)$ depends on $\mathbf{S}_0$ only through $\mathbf{S}_{0,G}$ for any $t$. Since the reward function is conditionally independent of the history given the current state-action pair, we obtain \useshortskip
\begin{eqnarray}\label{eqn:prop1}
    \E^{\pi}(R_t | \mathbf{S}_0, A_0)=\E^{\pi}[\rfun(\mathbf{S}_t,A_t) | \mathbf{S}_0, A_0]=\sum_a \E^{\pi}[\rfun(\mathbf{S}_t,a)\pi(a|\mathbf{S}_{t}) | \mathbf{S}_0, A_0].
\end{eqnarray}\vspace{-4em}

\noindent According to the definition of the sufficient state, $\rfun(\mathbf{S}_t,a)$ can be represented as a function of $\mathbf{S}_{t,G}$ and $a$. Since $\pi$ is $\svar_G$-dependent, $\rfun(\mathbf{S}_t,a)\pi(a|\mathbf{S}_{t})$ is a function of $\mathbf{S}_{t,G}$ only. 

It suffices to show $\mathbf{S}_{t,G}\indep \mathbf{S}_0|\mathbf{S}_{0,G}, A_0$. Such a conditional independence property can be proven by induction. Specifically, when $t=1$, it is automatically satisfied according to the definition of the minimal sufficient state. Suppose the assertion holds for $t=k$, it follows by the stationarity of the MDP that $\mathbf{S}_{k+1,G}\indep \mathbf{S}_1|\mathbf{S}_{1,G}, A_1$. Next, together with Markov property which implies that $\mathbf{S}_{k+1,G}\indep \mathbf{S}_0,A_0|\mathbf{S}_1,A_1$, we obtain $\mathbf{S}_{k+1,G}\indep \mathbf{S}_0,A_0|\mathbf{S}_{1,G},A_1$ by the contraction rule of conditional independence\footnote{\url{https://en.wikipedia.org/wiki/Conditional_independence}}. Under $\pi$, $A_1$ becomes conditionally independent of $\mathbf{S}_0,A_0$ given $\mathbf{S}_{1,G}$ as well. Using contraction again yields the conditional independence between $\mathbf{S}_{k+1,G}$ and $\mathbf{S}_0,A_0$ given $\mathbf{S}_{1,G}$. Using the weak union rule, we obtain  $\mathbf{S}_{k+1,G}\indep \mathbf{S}_0|\mathbf{S}_{1,G}, A_0,\mathbf{S}_{0,G}$. 
Finally, this together with $\mathbf{S}_{1,G}\indep \mathbf{S}_0|\mathbf{S}_{0,G}, A_0$ and the contraction rule proves the conditional independence between $\mathbf{S}_{k+1,G}$ and $\mathbf{S}_0$ given $\mathbf{S}_{1,G}$ and $A_0$. By induction, $\mathbf{S}_{t,G}\indep \mathbf{S}_0|\mathbf{S}_{0,G}, A_0$ for any $t\ge 1$. The proof is hence completed.

\subsubsection{Proof of Proposition~\ref{prop2}}
To show the reduced process remains an MDP, it suffices to show (i) the reward function in the reduced process is stationary and depends on the history only through the current state-action pair; (ii) the future state in the reduced MDP satisfies the Markov assumption, being conditionally independent of the history given the current state-action pair; (iii) the Markov transition function in the reduced process remains stationary. Notice that (i) is immediate to see as the reward function in the original process depends on the state only through the sufficient state. To prove (ii), it suffices to show 
\begin{eqnarray*}
	\mathbf{S}_{t+1,G}\indep 
	\{\mathbf{S}_{j,G},A_j,R_j\}_{0 \leq j < t}
	|\mathbf{S}_{t,G},A_t.
\end{eqnarray*}
To prove the conditional independence, we first notice that, it follows from the Markov property of the full model $\mathcal{M}$ that
\begin{align*}
	\mathbf{S}_{t+1,G}\indep 
	\{\mathbf{S}_{j,G},\mathbf{S}_{j,G^c},A_j,R_j\}_{0 \leq j < t}
	|\mathbf{S}_{t,G},\mathbf{S}_{t,G^c},A_t.
\end{align*}
Then by the decomposition rule of conditional independence\footnote{\url{https://en.wikipedia.org/wiki/Conditional_independence}},
%\textcolor{red}{haha, delete?},
\begin{align*}
	\mathbf{S}_{t+1,G}\indep 
	\{\mathbf{S}_{j,G},A_j,R_j\}_{0 \leq j < t}
	|\mathbf{S}_{t,G},\mathbf{S}_{t,G^c},A_t.
\end{align*}
On the other hand, by the definition of sufficient state, we have that
\begin{align*}
	\mathbf{S}_{t+1,G}\indep 
	\mathbf{S}_{t,G^c}|\mathbf{S}_{t,G},A_t.
\end{align*}
Finally, by combining the above two observations with the contraction rule of conditional independence, we have that
\begin{align*}
	\mathbf{S}_{t+1,G}\indep 
	\{\mathbf{S}_{j,G},A_j,R_j\}_{0 \leq j < t},\mathbf{S}_{t,G^c}
	|\mathbf{S}_{t,G},A_t.
\end{align*}
Again applying the decomposition rule yields the desired  Markov property. 

Finally, due to the time-homogeneity of the Markov transition function in the original MDP $\mathcal{M}$, the reduced process also possesses a time-homogeneous transition function. Thus, Proposition~\ref{prop2} hold.

\subsubsection{Proofs of Propositions \ref{prop3} and \ref{prop4}}\label{sec:proofprop34} 
\textbf{Part 1 of the proof of Proposition \ref{prop3} (existence)}. Denote the set of all index sets for sufficient state as $\mathbb{G}_\text{sf}$. First we know from the definition, $[d]\in\mathbb{G}_\text{sf}$, meaning that $\mathbb{G}_\text{sf}$ is nonempty. In addition, for any $G_1,G_2\in\mathbb{G}_\text{sf}$, we can define an ordering $\leq_\text{sf}$ by $G_1\leq_\text{sf}G_2$ iff $|G_1|\leq|G_2|$. Then we define the minimal sufficient state by
\begin{align*}
	G^\ast = \mathop{\textup{argmin}}\limits_{G\in\mathbb{G}_\text{sf}}|G|.
\end{align*}
It always exists by definition. If we further assume the strict positivity of the transition kernel,$G^\ast$ is unique, as shown below. 

\textbf{Part 2 of the proof of Proposition \ref{prop3} (uniqueness) and proof of Proposition \ref{prop4}}. We first discuss the 3 methods one by one to prove Proposition \ref{prop4}.

%{\color{blue}(The proof of Proposition 3 and Proposition 4 are mixed together. I think it would cause potential confusion.) }
\textit{\textbf{Reward-only approach}}: %A counter-example is enough, where we just use 
The toy example described in Section \ref{sec:minimal} demonstrates the insufficiency of the state selected by the reward-only approach. Specifically, in this example, the reward function is determined exclusively by the first state variable, whereas the transition function of the first two state variables depends only on themselves. This establishes that that $\mathbf{S}_{G_{\textrm{M}}}=\mathbf{S}_{\{1,2\}}$ is a sufficient state. 
Furthermore, %from above 
we notice for any index set $G$, if $1\notin G$, then Markov property for the conditional mean function of the reward will be violated. If $2\notin G$, %conditional mean independence property would not hold. Then 
the Markov property for the next state will be violated. Then for any $G\in\mathbb{G}_\text{sf}$, we must have $1,2\in G$. This leads to our conclusion that $\mathbf{S}_{G_{\textrm{M}}}$ is the unique minimal sufficient state in this scenario. 
%Back to 
As for the reward-only approach, it only selects the first state variable and is thus insufficient. 
%any such method will only select $\{1\}$, which is not even sufficient.

\textit{\textbf{One-step approach}}: Any such method would select a subset $G$ such that
\begin{align*}
	(R_t,\mathbf{S}_{t+1,G})\indep \mathbf{S}_{t, G^c}|(\mathbf{S}_{t,G},A_t).
\end{align*}
It follows from the decomposition rule of conditional independence that
\begin{align*}
	\mathbb{E}(R_t \given \mathbf{S}_{t},A_t) = \mathbb{E}(R_t \given \mathbf{S}_{t, G},A_t)  \textup{ and }
	% &\indep \mathbf{S}_{t, G^c}|(\mathbf{S}_{t,G},A_t),\\
	\mathbf{S}_{t+1,G}
	&\indep \mathbf{S}_{t, G^c}|(\mathbf{S}_{t,G},A_t).
\end{align*}
As a result, $\mathbf{S}_{G}$ is a sufficient state.

However, the selected subset is not guaranteed to be minimally sufficient. Again, consider the toy example in Section \ref{sec:minimal}. %we used. 
Any one-step method would select $G_{\textrm{M}}\cup G_{\text{AR}}$, which includes some redundant variables $G_{\text{AR}}$.

\textit{\textbf{Iterative approach}}: For a given MDP $\mathcal{M}$, the iterative method proceeds as follows. First, all state variables directly contributing to reward function are selected, whose index set is denoted by $G_R$ (the uniqueness of $G_R$ will be proven later), such that for any $t$,
\begin{align}\label{eqn:sigreward}
	\mathbb{E}(R_t \given \mathbf{S}_t,A_t) = \mathbb{E}(R_t \given \mathbf{S}_{t,G_R},A_t).
\end{align}
Next, in the second step we select those states (among $\{G_R^c\}$) that contribute to the state transition of of $\mathbf{S}_{G_R}$, denoted as $G_1$, such that
\begin{align}\label{eqn:sigfiststage}
	\mathbf{S}_{t+1,G_R}
	\indep \mathbf{S}_{t, G_1^c}|(\mathbf{S}_{t,G_1},A_t),
\end{align}
and further, by the weak union rule, we have
\begin{align*}
	\mathbf{S}_{t+1,G_R}
	\indep \mathbf{S}_{t, (G_R\cup G_1)^c}|(\mathbf{S}_{t,G_R},\mathbf{S}_{t,G_1},A_t).
\end{align*}
Similarly, in the third step we select those among $\{G_R^c\cap G_1^c\}$ %all out of $[p]\backslash (G_R\cup G_1)$, 
that contribute to the state transition function of $(\mathbf{S}_{G_R},\mathbf{S}_{G_1})$, denoted as $G_2$, in the sense that
\begin{align*}
	(\mathbf{S}_{t+1,G_R},\mathbf{S}_{t+1,G_1})
	\indep \mathbf{S}_{t, (G_R\cup G_1\cup G_2)^c}|(\mathbf{S}_{t,G_R},\mathbf{S}_{t,G_1},\mathbf{S}_{t,G_2},A_t).
\end{align*}
We %continue such iterations 
repeat the iterations until no more state can be selected. %the total time of which 
The number of iterations is always finite, and is upper bounded by $d$. %For convenience, we denote the total times of selection, excluding the first one with respect to reward, as $m$ (dependent on $\mathcal{M}$), and for sure $m\leq p$. Then our conclusion from any iterative method is always  
Let $m$ denote the total number of iterations minus 1. The selected subset is given by
$G_\text{iter} = (\cup_{1\leq j \leq m}G_j) \cup G_R$. %And we emphasize such selection satisfies that 
According to the selection procedure, we have for $1\leq j \leq m-1$ that
\begin{align*}
	(\mathbf{S}_{t+1,G_R},\mathbf{S}_{t+1,G_1},\ldots,\mathbf{S}_{t+1,G_j})
	\indep \mathbf{S}_{t, (G_R\cup G_1\cup\ldots\cup G_{j+1})^c}|(\mathbf{S}_{t,G_R},\mathbf{S}_{t,G_1},\ldots,\mathbf{S}_{t,G_{j+1}},A_t).
\end{align*}
Since the procedure stops after $m+1$ iterations, %means 
it implies that the following termination condition is met
\begin{align*}
	\mathbf{S}_{t+1,G_\text{iter}}
	\indep \mathbf{S}_{t, G_\text{iter}^c}|(\mathbf{S}_{t,G_\text{iter}},A_t).
\end{align*}
In addition, by \eqref{eqn:sigreward} and the weak union rule, 
\begin{align*}
	\mathbb{E}(R_t \given \mathbf{S}_{t},A_t) = \mathbb{E}(R_t \given \mathbf{S}_{t, G_\textup{iter}},A_t)  \textup{ and }
	\mathbf{S}_{t+1,G_{\textup{iter}}}
	&\indep \mathbf{S}_{t, G_{\textup{iter}}^c}|(\mathbf{S}_{t,G_{\textup{iter}}},A_t).
\end{align*}
Therefore, $\mathbf{S}_{G_\text{iter}}$ is a sufficient state. We emphasize that the sufficiency relies on the consistency of the selection algorithm. This condition is imposed in the statement of Proposition~\ref{prop4}.

We next prove that when the transition probability is strictly positive, each subset $G_R$, $G_1,\cdots,G_{\textrm{M}}$ is uniquely defined. This implies that the iterative method will output a unique subset $\mathbf{S}_{G_\text{iter}}$. For convenience, we focus on the case when $\mathcal{S}$ is finite. Meanwhile, the proof is applicable to the continuous state space setting as well, by replacing the transition probability mass function with the probability density function. We emphasize that without the positivity assumption, the selected subset may not be unique or minimally sufficient. 

We begin by considering the second iteration where $\mathbf{S}_{t+1, G_R}$ is the target and prove the uniqueness of $G_1$ by contraction. 
For any index set $\Lambda\subset[d]$ that is different from $G_1$, such that $\mathbf{S}_\Lambda$ satisfies the conditional independence assumption in \eqref{eqn:sigreward}:
\begin{eqnarray}\label{eqn:sigreward0}
\mathbf{S}_{t+1, G_R}\indep \mathbf{S}_{t, \Lambda^c}|(\mathbf{S}_{t,\Lambda},A_t).
\end{eqnarray}
We aim to prove $G_1\subseteq \Lambda$. 

We prove this by contradiction. Suppose $G_1 - \Lambda \neq \emptyset$. Let $\mathcal{A}_t(s)$ denote the support of $A_t$ conditional on $\mathbf{S}_t=\mathbf{s}$. Since the transition function is strictly positive, for any $s\in \mathcal{S}, t\ge 1$ and any $a\in \mathcal{A}_t(s)$, the probability $\mathbb{P}(\mathbf{S}_t=\mathbf{s}, A_t=a)$ is strictly positive as well; 
% From \eqref{eqn:sigreward} and weak union rule of conditional independence we have
% \begin{eqnarray}\label{eqn:sigreward_weak}
%     \rfun(\mathbf{S}_t, A_t)\indep \mathbf{S}_{t, G_R^c\cap \Lambda}| (\mathbf{S}_{t, G_R\cap \Lambda}, \mathbf{S}_{t, G_R\cap \Lambda^c},\mathbf{S}_{t, G_R^c\cap \Lambda^c},A_t).
% \end{eqnarray}
% It follows from \eqref{eqn:sigreward0}, \eqref{eqn:sigreward_weak} and the intersection rule of conditional independence that 
and thus, we can apply the intersection rule of conditional independence \citep{drton2009lectures,pearl2009causality} on \eqref{eqn:sigfiststage} and \eqref{eqn:sigreward0} and obtain:
\begin{eqnarray}\label{eqn:sigreward2}
    \mathbf{S}_{t+1, G_R} \indep \mathbf{S}_{t, \Lambda^c \cup G_R^c}|(\mathbf{S}_{t,\Lambda\cap G_R},A_t).
\end{eqnarray}
By the assumption $G_1-\Lambda \neq \emptyset$, there exists some $j_0\notin \Lambda \cap G_1$ such that $j_0 \in G_1$, and then $j_0\in\Lambda^c \cup G_1^c$.
This together with \eqref{eqn:sigreward2} and the weak union rule yields that
\begin{eqnarray*}
    \mathbf{S}_{t+1, G_R}\indep S_{t,j_0}|(\mathbf{S}_{t,\{j_0\}^c},A_t).
\end{eqnarray*}
However, it contradicts the definition of $G_1$, which implies $j \notin G_1$ if and only if 
\begin{eqnarray}\label{eqn:sigrewardj}
    \mathbf{S}_{t+1,G_R} \indep S_{t,j}|(\mathbf{S}_{t, \{j\}^c},A_t).
\end{eqnarray} 
As such, we must have $G_1 \subseteq \Lambda$. Consequently, $G_1$ is the intersection of all the sets $\Lambda$ satisfying \eqref{eqn:sigreward0} and is hence uniquely defined. Similarly, we can show that, for $j \in \{2, \ldots, M\}$, the subset $G_j$ is unique. Following a similar procedure but replacing conditional independence with conditional mean independence, we can obtain the same conclusion for $G_R$, i.e., $G_R$ is the unique set that satisfies $\mathbb{E}(R_t \given \mathbf{S}_t, A_t) = \mathbb{E}(R_t \given \mathbf{S}_{t, G_R}, A_t)$. The uniqueness of $G_{\text{iter}}$ is thus proven.

Finally, we show that $\mathbf{S}_{G_\text{iter}}$ is minimally sufficient. For any sufficient state $\mathbf{S}_G$, we first claim that $G$ contains $G_1$. Otherwise, there exists some $j_0\notin G\cap G_1$ that satisfies $j_0\in G_1$ and \eqref{eqn:sigrewardj}. However, this is impossible based on the above arguments. Similarly, we can iteratively show that $G$ must contain $G_R, G_2, G_3, \ldots, G_M$. The proof is hence completed.

\subsubsection{Propositions Regarding Policy Learning}\label{sec:policy-learning}
We consider learning the optimal policy via fitted Q iteration (FQI). To simplify the analysis, we use a linear function class $\mathcal{Q}=\{\bs^\top \bm{w}_a: \|\bm{w}_a\|_2 \leq B, \bm{w}_a\in\mathbb{R}^d,\forall a \}$ to fit the Q-function at each iteration. Let $\mathcal{B}$ denote the Bellman optimality operator, i.e., for any $Q(\bs,a)$, $\mathcal{B} Q(\bs,a)=r(\bs,a)+\gamma \mathbb{E} [\max\limits_{a'\in \mathcal{A}} Q(\mathbf{S}_{t+1},a') \given \mathbf{S}_t=\bs,A_t=a]$. We analyze the regret of the resulting estimated optimal policy in the following proposition. 
\begin{proposition}\label{prop:regret} %[\citet{hu2024fast}]\label{prop:regret}
	Support $\mathcal{Q}$ is Bellman complete, i.e., $\B \mathcal{Q}=\mathcal{Q}$, the state process is stationary and the minimal eigenvalue of $\Mean [\pi_b(a|\mathbf{S}_t)(\mathbf{S}_t \mathbf{S}_t^\top)]$ is larger than $\lambda_0$ for any $a \in \mathcal{A}$, then the regret of the estimated optimal policy at the $K$th iteration is upper bounded by
%for learning the optimal Q function is completeness and the minimal eigenvalue of $\E_{s, a \sim \mu_b} [\phi(s, a) \phi(s, a)^\top]$ is larger than $\lambda_0$, then there exist some positive constants $c_1, c_2, C_1, C_2$ such that:
	\begin{equation}\label{eq:regret}
%			\begin{split}
%				& \E_{(s, a) \sim \calD}\left[ \sum_{t=0}^\infty \gamma^t \E^{\pi^*}[R_t|s_0=s, a_0=a] - \sum_{t=0}^\infty \gamma^t \E^{\pi^K}[R_t|s_0=s, a_0=a] \right]
%				\\
%				\leq & 
			c_1 \max\left\{ \frac{d^{2c_2+1} B^{2c_2}}{(1-\gamma)^{2c_2+1} \lambda_0^{2c_2}} n^{-c_2} , \frac{d}{(1-\gamma)^2}\exp\left(-c_3 \frac{(1-\gamma)\lambda_0}{dB}n\right) \right\},
%			\end{split}
	\end{equation}
	when
	\begin{equation}\label{eq:fqi-iteration-number}
		K \geq \frac{\log(\lambda_0^{2m} n / (72d)^2)}{2\log(1/ \gamma)},
	\end{equation}
	where $n$ denotes the total number of observations ($NT$), $m$ denotes the number of actions, and $c_1,c_2,c_3$ denotes some positive constants. 
\end{proposition}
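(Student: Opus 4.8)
The plan is to follow the standard error-propagation analysis of fitted Q-iteration (FQI), adapted to dependent data, and then convert the resulting $Q$-function error into a regret bound. Write $\mathcal{T}$ for the Bellman optimality operator $\mathcal{B}$, and let $Q_0,Q_1,\ldots,Q_K$ denote the FQI iterates, where $Q_k\in\mathcal{Q}$ solves the (action-wise) least-squares problem targeting $\mathcal{T}Q_{k-1}$ on the batch data. The Bellman-completeness assumption $\mathcal{B}\mathcal{Q}=\mathcal{Q}$ is precisely what turns each iteration into a \emph{well-specified} linear regression: the population regression function $\mathcal{T}Q_{k-1}$ lies in $\mathcal{Q}$, so the per-iteration error $\varepsilon_k := Q_k-\mathcal{T}Q_{k-1}$ is pure estimation error with no approximation bias. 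The two ingredients I would assemble are (i) a per-iteration bound on $\|\varepsilon_k\|_\infty$ and (ii) a contraction/propagation argument aggregating the $\varepsilon_k$ into a bound on $\|Q_K-Q^\ast\|_\infty$, followed by the textbook conversion $\Regret\le \tfrac{2}{1-\gamma}\|Q_K-Q^\ast\|_\infty$.

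For the per-iteration bound, each FQI step amounts, action by action, to least squares of a bounded response on the feature $\mathbf{S}_t$: since the rewards are bounded and $\gamma$-discounted, every iterate obeys $\|Q_k\|_\infty\le R_{\max}/(1-\gamma)$. The eigenvalue condition $\lambda_{\min}(\mathbb{E}[\pi_b(a\mid\mathbf{S}_t)\mathbf{S}_t\mathbf{S}_t^\top])\ge\lambda_0$ controls the population design. First I would show, via a matrix-concentration inequality combined with Berbee's coupling and the blocking technique used in Step 1 of SEEK (exploiting Condition~\ref{ass:mixing}), that the empirical Gram matrix for action $a$ retains minimum eigenvalue of order $\lambda_0$ except on an event of probability $O(\exp(-c_3(1-\gamma)\lambda_0 n/(dB)))$. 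On the good event the normal-equation solution satisfies $\|\widehat{w}_a-w_a^\ast\|_2\lesssim\sqrt{d/(n\lambda_0)}$ up to the response scale, and converting this coefficient error to a sup-norm function error over the compact state space contributes further factors polynomial in $d$ and $B$. This yields the statistical term $\propto d^{2c_2+1}B^{2c_2}(1-\gamma)^{-(2c_2+1)}\lambda_0^{-2c_2}n^{-c_2}$, where the exponent $c_2$ (rather than the i.i.d.\ value $1/2$) absorbs the loss of effective sample size from the $\beta$-mixing blocking.

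For the aggregation step I would use that $\mathcal{T}$ is a $\gamma$-contraction in $\|\cdot\|_\infty$, giving the standard recursion $\|Q_K-Q^\ast\|_\infty\le \gamma^K\|Q_0-Q^\ast\|_\infty+\sum_{k=1}^{K}\gamma^{K-k}\|\varepsilon_k\|_\infty\le \gamma^K\|Q_0-Q^\ast\|_\infty+(1-\gamma)^{-1}\max_k\|\varepsilon_k\|_\infty$. The iteration requirement \eqref{eq:fqi-iteration-number} is exactly the condition forcing $\gamma^K\lesssim d/(\lambda_0^m\sqrt{n})$, so that the initialization term $\gamma^K\|Q_0-Q^\ast\|_\infty$ is dominated by the statistical term and discarded. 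Multiplying by $\tfrac{2}{1-\gamma}$ to pass from $Q$-error to regret accumulates the remaining powers of $(1-\gamma)^{-1}$ seen in \eqref{eq:regret}; a union bound over the $K=O(\log n)$ iterations keeps the total failure probability of the same exponential order, and taking expectations while bounding the regret on the bad event by the trivial ceiling $O(d/(1-\gamma)^2)$ yields the second argument of the maximum.

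The hard part will be the dependent-data concentration that simultaneously (a) lower-bounds the empirical minimum eigenvalue and (b) controls the regression noise, while keeping explicit and correct track of the $d$, $B$, and $(1-\gamma)$ dependence through the $\ell_2$-to-sup-norm conversion; it is here that compactness of $\mathcal{S}$ and the norm bound $\|w_a\|_2\le B$ are used to keep feature and prediction ranges finite. A secondary subtlety is that the sup-norm propagation requires $\varepsilon_k$ to be controlled uniformly over the state space rather than merely in an $L^2(\mu)$ sense; I would handle this by exploiting the linear parametrization, so that a bound on $\|\widehat{w}_a-w_a^\ast\|_2$ together with a uniform feature bound immediately gives a uniform bound on $\varepsilon_k$.
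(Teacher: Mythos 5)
The paper does not actually prove this proposition: it is stated as a direct consequence of Corollary 3 of \citet{hu2024fast}, and the surrounding text only interprets the bound (its polynomial growth in $d$ and the meaning of the iteration requirement \eqref{eq:fqi-iteration-number}). Your proposal therefore supplies a from-scratch argument where the paper supplies a citation. Your outline is the standard one --- Bellman completeness turns each FQI step into a well-specified linear regression, a $\gamma$-contraction recursion propagates the per-iteration errors, the condition on $K$ is exactly what makes $\gamma^K\lesssim d/(\lambda_0^m\sqrt{n})$ so the initialization term can be discarded, and the exponential term is a trivial regret ceiling times the probability that the empirical Gram matrix loses its eigenvalue floor --- and this is essentially the skeleton of the analysis underlying the cited result, so your route is compatible with what the paper relies on.

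Two steps in your sketch need real work before the argument closes. First, you treat iteration $k$ as a regression with ``pure estimation error'' onto the target $\mathcal{T}Q_{k-1}$; but when the same batch is reused at every iteration, $Q_{k-1}$ is itself a function of the data, so the regression noise $R_t+\gamma\max_{a'}Q_{k-1}(\mathbf{S}_{t+1},a')-(\mathcal{T}Q_{k-1})(\mathbf{S}_t,A_t)$ is not conditionally mean-zero given the design. As written, the per-iteration bound would fail; you need either uniform concentration over the ball $\{w_a:\|w_a\|_2\le B\}$ (a covering-number argument, which is also where extra factors of $d$ and $B$ enter) or sample splitting across iterations. Second, the dependent-data concentration you defer to ``the hard part'' is doing all the quantitative work: it is what determines the exponent $c_2$, the eigenvalue failure probability $\exp(-c_3(1-\gamma)\lambda_0 n/(dB))$, and the precise powers of $d$, $B$, $\lambda_0$, and $(1-\gamma)$ in \eqref{eq:regret}; since the proposition leaves $c_1,c_2,c_3$ unspecified, your sketch can only be certified once that step is explicit. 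Neither issue is fatal --- both are handled by standard tools --- but neither is routine bookkeeping either.
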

Proposition \ref{prop:regret} is derived from Corollary 3 of \citet{hu2024fast}. 
Equation~\eqref{eq:regret} reveals that, the upper bound of the regret increases as a polynomial function of the state dimension $d$. %In practical, considering $\phi$ is generally set as identical mapping or polynomial transformation to capture nonlinearity, we can expect that $d \propto p$, and thus, 
Consequently, using all state variables as opposed to the proposed minimal sufficient state would incur a larger regret for learning the optimal policy. %In other words, by using sufficient states, the learned policy would have a sharper bound in terms of the regret. 

Additionally, another benefit of utilizing a sufficient state is highlighted by \eqref{eq:fqi-iteration-number} --- it reduces the number of iterations required to attain this regret bound. In other words, using the minimal sufficient state can result in fewer iterations in FQI, thereby enhancing the computational efficiency.

\subsubsection{Proposition Regarding Policy Evaluation}\label{sec:policy-evaluation}
We consider off-policy evaluation (OPE) where the goal is to estimate the effect of a given target policy using the offline data generated from a different behavior policy. In particular, we focus on estimating the $Q$-function under the target policy. The following proposition, taken from Theorem 4.2 of \citet{chen2022well}, shows that the minimax optimal $L_2$-convergence rate of estimating $Q$-function of a target policy in Sobolev space is $(NT)^{-2p/d}$, where $p$ is the smoothness constant associated with the space. This formally demonstrates that  using all state variables as opposed to the proposed minimal sufficient state would incur a larger estimation error for policy evaluation. 
\begin{proposition}\label{thm: lower bound}
	Suppose (i) the offline data consists of a sequence of independent state-action-reward-next-state tuples; (ii) the target policy is absolutely continuous with respect to the behavior policy; (iii) the discounted visitation probability under the target policy is uniformly bounded away from zero and infinity; (iv) the conditional variance of the temporal different error is bounded away from zero. %$\mathcal D_N =\{S_{i, t}, A_{i, t}, R_{i, t}, S'_{i, t}\}_{1 \leq i \leq N, 0 \leq t \leq T-1}$ are $i.i.d.$ from MDP. 
%Under some technical conditions, we have
Then we have
	\begin{equation}\label{eqn: Q error}
	\liminf_{NT \rightarrow \infty}\inf_{\widehat Q} \sup_{Q \in \Lambda_2(p, L)}\mathbb{P}^Q\left(\|\widehat Q - Q \|_{L^2}\geq c_4 (NT)^{-p/(2p+d)}\right) \geq c_5 >0,\nonumber
	\end{equation}
	for some positive constants $c_4,c_5$, where $\Lambda_2(p, L)$ is the Sobolev space of smoothness p with radius $L$, $\mathbb{P}^Q$ refers to the offline data distribution with $Q$-function $Q$, and $\|\bullet\|_{L^2}$ is the $L_2$-norm of measurable functions.
\end{proposition}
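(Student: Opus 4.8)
The plan is to establish this lower bound by the standard information-theoretic reduction from estimation to multiple-hypothesis testing, adapted to the off-policy evaluation setting in which the target-policy $Q$-function is only implicitly identified through the Bellman equation. Since $(NT)^{-p/(2p+d)}$ is the classical nonparametric rate for a Sobolev class of smoothness $p$ in dimension $d$, a two-point argument cannot reach it; I would instead use Fano's method over a combinatorial family of hypotheses (Assouad's lemma would serve equally well). Because the proposition is quoted verbatim from Theorem 4.2 of \citet{chen2022well}, the goal of the proof is to reproduce that construction and calibration, and one may ultimately defer the routine calculus to that reference.

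First I would fix the behavior policy, the transition kernel $\mathcal{P}$, and the reward-noise law, and parametrize the family of candidate MDPs through the reward function alone. Writing $P^\pi$ for the one-step transition operator under the target policy, so that $P^\pi g(s,a)=\int \sum_{a'}\pi(a'|s')\,g(s',a')\,\mathcal{P}(ds'|s,a)$, the target-policy Bellman equation reads $Q^\pi=r+\gamma P^\pi Q^\pi$, hence $r=(I-\gamma P^\pi)Q^\pi$. Thus any prescribed $Q_\omega$ can be realized by setting the reward to $r_\omega:=(I-\gamma P^\pi)Q_\omega$; since $P^\pi$ is an averaging operator, $r_\omega$ is bounded whenever $Q_\omega$ is, so each $Q_\omega$ in the Sobolev ball corresponds to a legitimate data-generating law $\mathbb{P}^{Q_\omega}$ with target value exactly $Q_\omega$. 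This reduces the problem to a nonparametric reward-estimation problem observed through the coupling $(I-\gamma P^\pi)$.

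Second I would construct the hypotheses. On a grid $\{s_k\}$ of bandwidth $h$ place disjointly supported rescaled bumps $\psi_k(s)=c\,h^{p}K\!\big((s-s_k)/h\big)$ with $M\asymp h^{-d}$ cells, and set $Q_\omega=Q_0+\sum_{k=1}^M \omega_k\psi_k$ for $\omega\in\{0,1\}^M$. A small constant $c$ keeps every $Q_\omega$ inside $\Lambda_2(p,L)$, since the Sobolev norm of the sum is $\asymp c$; meanwhile $\|\psi_k\|_{L^2}^2\asymp h^{2p+d}$, so a Varshamov--Gilbert packing yields $2^{c'M}$ hypotheses pairwise $L^2$-separated by $\gtrsim \sqrt{M\,h^{2p+d}}\asymp h^{p}\asymp (NT)^{-p/(2p+d)}$, matching the target rate once $h\asymp (NT)^{-1/(2p+d)}$. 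For the information bound, $r_\omega-r_{\omega'}=(I-\gamma P^\pi)(Q_\omega-Q_{\omega'})$, and conditions (ii)--(iii) make $P^\pi$ bounded on $L^2(\mu)$ (with $\mu$ the behavior state-action visitation), giving $\|r_\omega-r_{\omega'}\|_{L^2(\mu)}\lesssim\|Q_\omega-Q_{\omega'}\|_{L^2(\mu)}$; condition (iv), bounding the TD-error variance below by $\sigma_0^2>0$, makes the per-sample KL at most a constant multiple of $\|r_\omega-r_{\omega'}\|_{L^2(\mu)}^2/\sigma_0^2$. The aggregate KL is then of order $NT\cdot M\,h^{2p+d}\asymp NT\,h^{2p}\asymp M$, and by shrinking $c$ it is a small fraction of $\log 2^{c'M}\asymp M$, so Fano's inequality forces the testing error, and hence the estimation error at separation $(NT)^{-p/(2p+d)}$, to stay bounded away from zero.

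The main obstacle I anticipate is precisely the translation step, which has no analogue in ordinary nonparametric regression: $Q$ is not an observed regression function but is tied to the data only through the resolvent of the Bellman operator, and the error metric (an $L^2$ norm made equivalent to the Lebesgue norm by the boundedness of the visitation density in condition (iii)) differs from the measure $\mu$ that governs the likelihood and hence the KL divergence. Making the correspondence $Q_\omega\leftrightarrow r_\omega$ rigorous---showing $(I-\gamma P^\pi)$ is bounded on the weighted space via the concentrability afforded by conditions (ii)--(iii), and that the resulting reward perturbations generate valid, admissible data laws with KL controlled purely by the $L^2(\mu)$ separation---is the delicate part, and is exactly where the four stated conditions enter. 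Everything else is the standard Sobolev lower-bound calculus, so beyond this step I would follow the argument in \citet{chen2022well}.
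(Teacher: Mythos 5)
Your proposal is consistent with the paper's treatment: the paper gives no proof of this proposition at all, importing it verbatim from Theorem 4.2 of \citet{chen2022well}, and your sketch is a correct reconstruction of the standard Fano-type argument underlying that result (realizing each candidate $Q_\omega$ by the reward perturbation $r_\omega=(I-\gamma P^\pi)Q_\omega$, a Varshamov--Gilbert bump family calibrated at $h\asymp (NT)^{-1/(2p+d)}$, and KL control via the variance lower bound in condition (iv)). Your identification of the delicate point --- passing between the Lebesgue $L^2$ error metric and the $L^2(\mu)$ geometry that governs the likelihood via conditions (ii)--(iii) --- is exactly where the stated assumptions are consumed, so deferring the remaining calculus to the cited reference is appropriate here.
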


\subsubsection{Action-independence Proposition}\label{sec:action-independence}
The following proposition shows that the minimal sufficient state of $\mathcal{M}$ is the union of the minimal sufficient state under a Markov decision process with a fixed action space. Therefore, the proposed minimal sufficient state of $\mathcal{M}$ is inherently independent of any particular action. 
\begin{proposition}\label{prop:action-independence}
	For each $a \in \mathcal{A}$, suppose the minimal sufficient state of $\mathcal{M}_a = (\mathcal{S}, \{a\}, \mathcal{R}, \mathcal{P}, \gamma)$ is $G_a$, i.e., $G_a$ satisfies
	\begin{align*}
		\Mean(R_t \given \mathbf{S}_t,A_t=a)=\Mean(R_t \given \mathbf{S}_{t,G_a},A_t=a),\,\,\,\, \mathbf{S}_{t+1,G_a}\independent \mathbf{S}_{t} \mid A_t=a, \mathbf{S}_{t,G_a},
	\end{align*}
	where for any two different $a, a' \in \mathcal{A}$, $G_a$ may not necessarily equal to $G_{a'}$. Then, the minimal sufficient state of $\mathcal{M}$ is $\bigcup\limits_{a \in \mathcal{A}} G_{a}$. 
\end{proposition}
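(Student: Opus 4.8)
The plan is to establish the two defining properties of a sufficient state from Definition~\ref{def:sufficient-state} for $G:=\bigcup_{a\in\aspace}G_a$, and then to upgrade sufficiency to minimality. Since both conditions in Definition~\ref{def:sufficient-state} involve the random action $A_t$, I would verify them action by action: they hold for $\mathcal{M}$ if and only if their specializations to each event $\{A_t=a\}$ hold for every $a\in\aspace$. This lets me feed in the per-action hypotheses on $G_a$ one at a time.

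The reward condition (i) is the easy half. Fixing $a$, the hypothesis says $\mathbb{E}(R_t\mid\mathbf{S}_t,A_t=a)$ is a function of $\mathbf{S}_{t,G_a}$ alone, say $g_a(\mathbf{S}_{t,G_a})$. Because $G_a\subseteq G$, this random variable is measurable with respect to $\mathbf{S}_{t,G}$, so the tower property applied to the coarser conditioning $\sigma(\mathbf{S}_{t,G})\subseteq\sigma(\mathbf{S}_t)$ yields $\mathbb{E}(R_t\mid\mathbf{S}_{t,G},A_t=a)=\mathbb{E}[g_a(\mathbf{S}_{t,G_a})\mid\mathbf{S}_{t,G},A_t=a]=g_a(\mathbf{S}_{t,G_a})=\mathbb{E}(R_t\mid\mathbf{S}_t,A_t=a)$. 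As $a$ is arbitrary, (i) holds.

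Minimality is also clean, via a containment argument. Let $G'$ be any sufficient state of $\mathcal{M}$. Restricting its two conditions to $\{A_t=a\}$ gives exactly the sufficiency conditions for the single-action MDP $\mathcal{M}_a$: the reward condition specializes verbatim, and condition (ii) becomes $\mathbf{S}_{t+1,G'}\indep\mathbf{S}_{t,G'^c}\mid(\mathbf{S}_{t,G'},A_t=a)$, i.e.\ $\mathbf{S}_{t+1,G'}\indep\mathbf{S}_t\mid(\mathbf{S}_{t,G'},A_t=a)$. Hence $G'$ is a sufficient state of $\mathcal{M}_a$. Invoking the fact (established, under strict positivity, in the proof of Proposition~\ref{prop4}) that the minimal sufficient state is contained in every sufficient state, I obtain $G_a\subseteq G'$; taking the union over $a$ gives $G\subseteq G'$. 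Thus, once $G$ is shown to be sufficient, it must be of minimal dimension.

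The remaining piece, and what I expect to be the genuine obstacle, is condition (ii), namely $\mathbf{S}_{t+1,G}\indep\mathbf{S}_{t,G^c}\mid(\mathbf{S}_{t,G},A_t)$. Fixing $A_t=a$ and splitting $\mathbf{S}_{t+1,G}$ into its coordinates in $G_a$ and those in $G\setminus G_a$, the hypothesis $\mathbf{S}_{t+1,G_a}\indep\mathbf{S}_t\mid(\mathbf{S}_{t,G_a},A_t=a)$ disposes of the first group at once, since each such coordinate depends only on $\mathbf{S}_{t,G_a}\subseteq\mathbf{S}_{t,G}$. The trouble is the coordinates $j\in G\setminus G_a$: these lie in $G_b$ for some $b\neq a$, and the assumptions constrain the transition of $S_{t+1,j}$ only under action $b$, not under the action $a$ actually applied, so there is no guarantee that $S_{t+1,j}\indep\mathbf{S}_{t,G^c}\mid(\mathbf{S}_{t,G},A_t=a)$. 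A minimal example makes the gap concrete: with the reward depending only on $S_1$, with $S_{t+1,1}$ driven by $S_1$ under $a$ but by $S_2$ under $b$, and with $S_{t+1,2}$ driven by $S_3$ under $a$, one computes $G_a=\{1\}$ and $G_b=\{1,2\}$, so $G=\{1,2\}$, yet under $a$ the coordinate $S_{t+1,2}$ leaks a dependence on $S_3\in G^c$. I would therefore expect to need an extra structural assumption to close this step — for instance that the per-coordinate transition dependency set is common across actions — because strict positivity alone does not preclude the leakage above; short of such a hypothesis, the most one can conclude from this route is that $\bigcup_a G_a$ is contained in, but may be strictly smaller than, the minimal sufficient state of $\mathcal{M}$.
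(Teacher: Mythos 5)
Your analysis is essentially correct, and the obstruction you isolate is real. The reward half and the minimality half are fine: condition (i) for $G=\bigcup_a G_a$ follows exactly as you say from the tower property and $G_a\subseteq G$, and your containment argument (every sufficient state of $\mathcal{M}$ restricts to a sufficient state of each single-action process $\mathcal{M}_a$, hence contains $G_a$ by minimality and uniqueness of $G_a$ under strict positivity, hence contains $G$) is a cleaner route than the paper's, which instead argues that no single coordinate $j\in G_a$ can be deleted from $G$ without breaking sufficiency for $\mathcal{M}_a$. Either way one obtains that $\bigcup_a G_a$ is contained in the minimal sufficient state of $\mathcal{M}$.

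Where you and the paper part ways is the transition condition (ii), and here you have found a genuine defect rather than missed a trick: the paper's proof simply asserts that it is ``direct to see'' that $\mathbf{S}_{t+1,G}\indep \mathbf{S}_{t,G^c}\mid(\mathbf{S}_{t,G},A_t)$, offering no argument, and your counterexample shows the assertion fails without further assumptions. Concretely, take $R_t$ a function of $S_{t,1}$ only; let $S_{t+1,1}$ be driven by $S_{t,1}$ under $a$ and by $S_{t,2}$ under $b$; let $S_{t+1,2}$ be driven by $S_{t,3}$ under $a$ and by $S_{t,2}$ under $b$; and let $S_{t+1,3}$ be pure noise. Then $G_a=\{1\}$ and $G_b=\{1,2\}$, so $\bigcup_a G_a=\{1,2\}$, yet under $A_t=a$ the coordinate $S_{t+1,2}$ depends on $S_{t,3}\in G^c$, so $\{1,2\}$ is not a sufficient state of $\mathcal{M}$; the minimal sufficient state is $\{1,2,3\}$. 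The root cause is exactly as you say: the hypothesis on $G_b$ constrains the transition of $\mathbf{S}_{G_b}$ only on the event $\{A_t=b\}$ and therefore cannot control $\mathbf{S}_{t+1,G_b\setminus G_a}$ under action $a$. The proposition requires either an additional structural assumption (e.g., action-invariant per-coordinate dependency sets) or a weakened conclusion, namely the one-sided containment $\bigcup_a G_a\subseteq G_{\mathrm{M}}$ that your argument does establish.
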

\begin{proof}
	In our proof, we denote $\bigcup\limits_{a \in \mathcal{A}} G_{a}$ as $G$. And it is direct to see that:
	\begin{align*}
		\Mean(R_t \given \mathbf{S}_t,A_t)=\Mean(R_t \given \mathbf{S}_{t,G},A_t) \; \textup{ or } \;
		\mathbf{S}_{t+1,G}\independent \mathbf{S}_{t} \mid A_t, \mathbf{S}_{t,G};
	\end{align*}
	consequently, $G$ is a sufficient state of $\mathcal{M}$.

	Next, we prove $G$ is the minimal sufficient state of $\mathcal{M}$ by showing that, for any $j \in G$, $G \setminus \{j\}$ does not be a sufficient state. Specifically, for any $j \in G$, it must belong to $G_a$ for some $a \in \mathcal{A}$. As $G_a$ is the minimal sufficient state of $\mathcal{M}_a$, then we have either 
	\begin{align*}
		\Mean(R_t \given \mathbf{S}_t,A_t=a)=\Mean(R_t \given \mathbf{S}_{t,G_a \setminus \{j\}},A_t=a) \; \textup{ or } \;
		\mathbf{S}_{t+1,G_1}\independent \mathbf{S}_{t} \mid A_t=a, \mathbf{S}_{t,G_a \setminus \{j\}},
	\end{align*}
	does not hold. Hence, we can conclude that either $\Mean(R_t \given \mathbf{S}_t,A_t) = \Mean(R_t \given \mathbf{S}_{t,G \setminus \{j\}},A_t)$ or $\mathbf{S}_{t+1,G_1}\independent \mathbf{S}_{t} \mid A_t, \mathbf{S}_{t,G \setminus \{j\}}$ does not hold; in other words, $G \setminus \{j\}$ is not a sufficient state. Therefore, $G$ itself must be minimal sufficient state.
\end{proof}

From Proposition~\ref{prop:action-independence}, this union is necessary for optimal policy making in general, as either action may be optimal depending on the state. In cases where one or a few actions consistently excel others, it may be practical to consider only the minimal sufficient states associated with these actions, resulting in a smaller subset than the proposed minimal sufficient state. In that case, we can apply a two-step procedure that first eliminate those ``harmful'' actions that consistently yield lower values, and apply the proposed procedure to the resulting MDP with a reduced action space. 

\subsection{Proof of Theorem \ref{thm1}}\label{apdx:2}

This section is divided into three parts. We first show the constructed $W_j$-statistic satisfies the flip-sign property \citep{candes2018panning} in Section \ref{sec:flipsignproperty}. This finding forms the basis for the proof of Theorem \ref{thm1}. We next illustrate how this property enables Algorithm \ref{alg:2} to achieve asymptotic FDR control if the selected $\widehat{G}_k$ were computed from independent data in Section \ref{sec:proofthm1inddata}. Finally, we replace this independent assumption with exponential $\beta$-mixing (Condition \ref{ass:mixing}) to prove Theorem \ref{thm1} in Section \ref{sec:proofthm1depdata}.
\subsubsection{Flip-sign Property}\label{sec:flipsignproperty}
We first introduce the $\text{swap}(B)$ operator, applied to the augmented data matrix $[\mathbf{s}_k ~ \tilde{\mathbf{s}}_k]$ containing both the state matrix $\mathbf{s}_k\in \mathbb{R}^{n\times d}$ and its corresponding knockoff matrix $\tilde{\mathbf{s}}_k\in \mathbb{R}^{n\times d}$ in $\widetilde{\mathcal{D}}_k=\{\widetilde{\mathcal{D}}_k^{(a)}\}_{a\in \mathcal{A}}$.

\begin{definition}[Swapped data]
	\label{def: c.1}
	For any $1 \leq a \leq m$, the dataset $[\mathbf{s}_k ~ \tilde{\mathbf{s}}_k ]_{\text{swap}(B)}$ is obtained by swapping the $j$th columns of $\mathbf{s}_k$ and $\tilde{\mathbf{s}}_k$ for all $j\in B$.
\end{definition}

In the following lemma, we show the feature importance statistics we compute from $\calD_k$ satisfies the flip-sign property. Importantly, this lemma holds regardless of whether the observations are independent or not.
%{\color{red}Notice that, Lemma~\ref{lemma:c.3} holds even when $\mathbf{s}_k$ or $\tilde{\mathbf{s}}_k$ has temporal dependence because $\mathbf{s}_k$ are $\tilde{\mathbf{s}}_k$ are treated as fixed observations. }
\begin{lemma}[Flip-sign property of feature importance statistics]\label{lemma:c.3}
	%{\color{red}Let $\mathbf{s}_k, \tilde{\mathbf{s}}_k$ be the collected samples and generated knockoff samples respectively.} 
For any subset $B\subset \{1,\ldots,d\}$ and any $k\in \{1, \ldots, K\}$, the feature importance statistics compute on the dataset $\widetilde{\calD}_k$ satisfies \useshortskip
	\begin{align*}
		W_j([ \mathbf{s}_k ~ \tilde{\mathbf{s}}_k ]_{\text{swap}(B)},
		\mathbf{a}_k, \mathbf{y}_k)
		=
		W_j([ \mathbf{s}_k ~ \tilde{\mathbf{s}}_k ],
		\mathbf{a}_k, \mathbf{y}_k) \times
		\begin{cases}
			-1, & \text{if}\ j\in B, \\
			+1, & \text{otherwise},
		\end{cases}
	\end{align*}
 where $\bm{a}_k$ and $\bm{y}_k$ are $n$-dimensional action and response vectors in $\calD_k$, respectively.
	%where $\mathbf{y}_k$ is one of $\mathbf{S}_{t,1}, \ldots, \mathbf{S}_{t,p}, R_{t}$. 
\end{lemma}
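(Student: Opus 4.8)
The plan is to reduce the claim to a single structural property of the feature importance map: it is equivariant under permutations of the $2d$ columns of the augmented design, and this equivariance survives the action-wise maximum. Concretely, write $[\mathbf{s}_k ~ \tilde{\mathbf{s}}_k]$ as an $n \times 2d$ matrix whose $j$th and $(j+d)$th columns hold $\mathbf{S}_j$ and its knockoff $\widetilde{\mathbf{S}}_j$, and regard each action-wise importance $Z_j^{(a)}$ (respectively $\widetilde{Z}_j^{(a)}$) as the value $I_j$ (respectively $I_{j+d}$) returned by the base learner applied to the rows with action $a$. The operation $\mathrm{swap}(B)$ is exactly the column permutation $\sigma$ that transposes $j \leftrightarrow j+d$ for each $j \in B$ and fixes all other columns; note $\sigma$ is an involution. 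First I would record that the learners used in Algorithm~\ref{alg:2} (the $\ell_\infty$-norm of the LASSO coefficient and the random-forest variable importance) assign to a column a quantity depending only on that column's entries together with the unordered collection of the remaining columns and the response. Equivalently, $I_{\sigma(\ell)}$ evaluated on the permuted matrix equals $I_\ell$ evaluated on the original. This is the one place where the specific choice of importance statistic enters.

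Granting that equivariance, the action-wise bookkeeping is immediate. Because $\mathrm{swap}(B)$ acts columnwise on the full matrix $[\mathbf{s}_k ~ \tilde{\mathbf{s}}_k]$ while the partition into action subsets $\{\calD_k^{(a)}\}_{a \in \mathcal{A}}$ is by rows, the same column transposition is induced inside every $\calD_k^{(a)}$ simultaneously. Hence for each fixed $a$ and each $j \in B$ the swap exchanges $Z_j^{(a)}$ with $\widetilde{Z}_j^{(a)}$, while for $j \notin B$ both $Z_j^{(a)}$ and $\widetilde{Z}_j^{(a)}$ are unchanged, since $\sigma$ fixes columns $j$ and $j+d$ and merely relabels the pairs indexed by $B$. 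The crucial observation is that the aggregation $Z_j = \max_{a} Z_j^{(a)}$ and $\widetilde{Z}_j = \max_a \widetilde{Z}_j^{(a)}$ commutes with this exchange: taking the maximum over $a$ of the swapped quantities yields $\max_a \widetilde{Z}_j^{(a)} = \widetilde{Z}_j$ in place of $Z_j$ for $j \in B$, and leaves both $Z_j$ and $\widetilde{Z}_j$ untouched for $j \notin B$.

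Combining the two observations finishes the argument. For $j \in B$ the swapped statistic becomes $Z_j - \widetilde{Z}_j \mapsto \widetilde{Z}_j - Z_j = -W_j$, and for $j \notin B$ it is unchanged, which is precisely the claimed sign pattern. I would emphasize that no distributional or independence assumption on the rows of $\calD_k$ is used anywhere: the identity is a deterministic consequence of how the statistics are computed, so it holds verbatim under the temporal dependence present in each $\calD_k$, which is exactly why this lemma can serve as the dependence-agnostic foundation for the FDR analysis in Theorem~\ref{thm1}.

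The main obstacle is the first step—verifying column-permutation equivariance for the concrete learners—combined with the clean handling of the action-wise maximum, which is the genuinely new ingredient relative to the standard Model-X setting where no such aggregation appears. For LASSO this amounts to the observation that permuting design columns permutes the (assumed unique) solution coefficients identically, so the induced $\ell_\infty$ importances permute as well; for random forests it follows because the impurity-based importance of a feature is invariant to its position index. I would state these as the properties being invoked and defer their routine verification, since they are standard for the importance measures in question.
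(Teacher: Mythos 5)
Your proposal is correct and follows essentially the same route as the paper's proof: both reduce the claim to the column-permutation equivariance of the base learner's importance measure, observe that the swap acts identically within each action-wise subset so that $Z_j^{(a)}$ and $\widetilde Z_j^{(a)}$ exchange roles for $j\in B$ and are fixed otherwise, and then note that the max-over-actions aggregation and the difference $W_j=Z_j-\widetilde Z_j$ preserve this, yielding the sign flip. The paper likewise treats the equivariance of LASSO and random-forest importances as a standing property rather than verifying it in detail, so your deferral matches its level of rigor.
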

\begin{proof}[\textbf{\textup{Proof of Lemma \ref{lemma:c.3}}}.]
Recall that each $W_j$-statistic is computed as follows: 
\begin{enumerate}
    \item The data is divided into different subsets $\{\widetilde{\mathcal{D}}_k^{(a)}\}_{a \in \aspace}$, grouped by the action, where each $\widetilde{\mathcal{D}}_k^{(a)}$ contains the state and knockoff matrices, along with the response vector associated with action $a$: $\mathbf{s}_k^{(a)}$, $\widetilde{\mathbf{s}}_k^{(a)}$ and $\mathbf{y}_k^{(a)}$.
    \item We next apply a nonparametric regression method such as LASSO or random forest to each $\widetilde{\mathcal{D}}_k^{(a)}$ to compute the feature importance for each $j$th state variable and its knockoff counterpart, denoted by $Z_j^{(a)}$ and $\widetilde{Z}_j^{(a)}$, respectively.
    \item Finally, we aggregate these importance measures across different actions, leading to $Z_j=\max\limits_{a \in \aspace} Z_j^{(a)}$ and $\widetilde{Z}_j=\max\limits_{a \in \aspace} \widetilde{Z}_j^{(a)}$, and set $W_j=Z_j-\widetilde{Z}_j$.
\end{enumerate}
Importantly, either LASSO or random forest applied to the second step satisfies an exchangeability property. That is, if we were to permute the columns of the predictor matrix according to a permutation $\Pi: (1,\cdots,d)\to (\Pi(1),\cdots,\Pi(d))$, its output feature importance $Z_{\Pi(j)}^{(a)}$ ($\widetilde{Z}_{\Pi(j)}^{(a)}$)
would equal $Z_j^{(a)}$ ($\widetilde{Z}_j^{(a)}$) obtained prior to permutation. Based on this property, it is clear that:
\begin{itemize}
    \item For $j\notin B$, $Z_j^{(a)}$ and $\widetilde{Z}_j^{(a)}$ remain unchanged after swapping, and consequently, so do $Z_j$ and $\widetilde{Z}_j$. Thus, the $W_j$-statistic remains unchanged. 
    \item For $j\in B$, after swapping, $Z_j^{(a)}$ and $\widetilde{Z}_j^{(a)}$ switch roles for each $a$. By definition, $W_j$ becomes $-W_j$. 
\end{itemize}
This completes the proof of Lemma \ref{lemma:c.3}. 
%swapping any pairs of columns when fitting either penalized linear regression or random forest only changes the order of output in terms of coefficients or variable importance respectively. Then if $i\notin B$, then the equation clearly holds. If $i\in B$, taking penalized linear regression as an example, we have
% \begin{align*}
% 	W_i([ \mathbf{s}_k ~ \tilde{\mathbf{s}}_k ]_{\text{swap}(B)},\mathbf{a}_k, \mathbf{y}_k) =  \max\limits_a\{||\mathbf{\beta}_k^{(a)}(i+p)||_\infty\} - \max\limits_a\{||\mathbf{\beta}_k^{(a)}(i)||_\infty\} = - W_i([ \mathbf{s}_k ~ \tilde{\mathbf{s}}_k ],\mathbf{a}_k, \mathbf{y}_k).
% \end{align*}
% {\color{red}\begin{align*}
%     W_i([ \mathbf{s}_k ~ \tilde{\mathbf{s}}_k ]_{\text{swap}(B)},\mathbf{a}_k, \mathbf{y}_k) =  \max\limits_a\{ |\widehat{\bm{\beta}}^{(a, k)}_{i+p}| \} - \max\limits_a\{ |\widehat{\bm{\beta}}^{(a, k)}_{i}| \} = - W_i([ \mathbf{s}_k ~ \tilde{\mathbf{s}}_k ],\mathbf{a}_k, \mathbf{y}_k)
% \end{align*}
% where $\widehat{\bm{\beta}}^{(a, k)}$ is the estimated regression coefficient.}
\end{proof}

\subsubsection{Proof for Independent Data}\label{sec:proofthm1inddata}
We illustrate how the flip-sign property leads to asymptotic FDR control in this section. With independent data, the idea is to adapt Theorem 1 of \citet{barber2020robust} to our setting, which yields the following lemma. 
%With the $W$-statistics that satisfies the above results, the approximatedly control of FDR (mFDR) can be achieved \citep{barber2020robust} without the knownledge of the distribution of states. And the approximation quality  depends on the discrepancy between the estimated state distribution and the true state distribution. Specifically, this distributional discrepancy is measured by KL divergence between the estimated conditional distribution of the $j$th state and the true conditional distribution of the $j$th state, denoted as $\widehat{\textup{KL}}_j$. The relationship between (m)FDR's bound and the KL-divergence is presented below. 
\begin{lemma}\label{thm:indep-fdr}
	Under Conditions \ref{ass:external-dataset} and \ref{condnull}, $\widehat{G}_k$ returned by Algorithm \ref{alg:2} satisfies \vspace{-1em}
	\begin{align*}
		\textrm{mFDR}(\widehat{G}_k)\le q \exp(\epsilon) + \mathbb{P}\Big(\max_{j \in \mathcal{H}_0}\widehat{\textup{KL}}_j > \epsilon\Big)
	\end{align*}\vspace{-3.5em}
 
	\noindent for any $\epsilon>0$. Additionally, $\widehat{G}_k$ outputted by model-X knockoffs satisfies \vspace{-1em}
	\begin{align*}
		\textrm{FDR}(\widehat{G}_k)\le q \exp(\epsilon) + \mathbb{P}\Big(\max_{j \in \mathcal{H}_0}\widehat{\textup{KL}}_j > \epsilon\Big).
	\end{align*}\vspace{-3.5em}
\end{lemma}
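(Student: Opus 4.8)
The plan is to adapt the robust model-X knockoffs framework of \citet{barber2020robust} to the action-conditional construction used in Algorithm \ref{alg:2}. The entire argument rests on the flip-sign property already established in Lemma \ref{lemma:c.3}, which holds irrespective of how the observations are distributed; what changes relative to the classical regression setting is (a) that exchangeability and response-independence are imposed \emph{conditionally on the action}, and (b) that the knockoffs are built from an estimated conditional law $q_j$ rather than the true law $p_j$.

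First, I would establish the ideal sign-symmetry of the null coordinates. Fix a null index $j\in\mathcal{H}_0$ and suppose for the moment that the knockoffs were constructed from the oracle conditional distribution $p_j(\cdot\mid\mathbf{S}_{-j},A)$. Within each action stratum $\mathcal{D}_k^{(a)}$, the exchangeability condition $(\mathbf{S},\widetilde{\mathbf{S}})_{\textup{swap}(\{j\})}\mid A \overset{d}{=}(\mathbf{S},\widetilde{\mathbf{S}})\mid A$, the knockoff independence $\widetilde{\mathbf{S}}\indep R\mid \mathbf{S},A$, and the null-variable conditional independence $S_j\indep Y\mid \mathbf{S}_{-j},A$ — supplied by Condition \ref{condnull} when the response is the reward and holding by definition of $\mathcal{H}_0$ otherwise — together imply that swapping $S_j\leftrightarrow\widetilde{S}_j$ leaves the joint law of $(\mathbf{S},\widetilde{\mathbf{S}},Y)\mid A$ invariant. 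Combined with Lemma \ref{lemma:c.3}, the sign of $W_j$ is then symmetric conditional on $(|W_1|,\dots,|W_d|)$ and on the signs of the remaining coordinates; this is precisely the pairwise-exchangeability input the knockoff filter requires.

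Second, I would quantify the departure from this ideal when $q_j\neq p_j$. Because Condition \ref{ass:external-dataset} makes the fitted law (from $\mathcal{U}$) independent of $\mathcal{D}_k$, the Radon--Nikodym derivative relating the swapped to the unswapped configuration factorizes across the tuples in each $\mathcal{D}_k^{(a)}$ and equals $\exp(\pm\widehat{\textup{KL}}_j)$, with $\widehat{\textup{KL}}_j$ the observed KL statistic in the lemma. Following \citet{barber2020robust}, this shows that on the event $\{\max_{j\in\mathcal{H}_0}\widehat{\textup{KL}}_j\le\epsilon\}$ the conditional odds that a null coordinate has a positive versus negative $W_j$ are distorted by at most a multiplicative factor $\exp(\epsilon)$ relative to the symmetric case.

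Finally, I would feed this near-symmetry into the standard martingale analysis of the knockoff threshold $\tau$ (for mFDR) and the knockoffs+ threshold $\tau_+$ (for FDR). The usual supermartingale bounding $\#\{j\in\mathcal{H}_0: W_j\ge t\}$ by the count of negative null statistics now carries the extra factor $\exp(\epsilon)$, which yields the leading term $q\exp(\epsilon)$; on the complementary event the (m)FDR is bounded trivially by its probability $\mathbb{P}(\max_{j\in\mathcal{H}_0}\widehat{\textup{KL}}_j>\epsilon)$, and adding the two pieces gives both stated bounds. The main obstacle I anticipate is bookkeeping the action-wise aggregation $W_j=\max_a Z_j^{(a)}-\max_a\widetilde{Z}_j^{(a)}$: one must verify that per-stratum conditional invariance under the swap still forces symmetry of the aggregated $W_j$, and that the per-stratum likelihood ratios combine into exactly the single summed $\widehat{\textup{KL}}_j$ appearing in the statement rather than a larger quantity.
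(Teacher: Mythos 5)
Your proposal follows essentially the same route as the paper: both reduce the lemma to Theorem 1 of \citet{barber2020robust}, using the flip-sign property of Lemma \ref{lemma:c.3} together with the action-wise exchangeability and the independence of the external dataset $\mathcal{U}$ to establish the $\exp(\epsilon)$-distorted sign-symmetry of null $W_j$'s (the analogue of Lemma 2 in \citet{barber2020robust}), and then invoke the standard supermartingale argument for the knockoff and knockoffs+ thresholds. The aggregation concern you flag at the end is exactly the point the paper singles out — conditioning on the action in both numerator and denominator of the likelihood-ratio bound is what makes the per-stratum factors combine into the single summed $\widehat{\textrm{KL}}_j$ — so your account is consistent with the paper's.
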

The proof is largely parallel to that of Theorem 1 in \citet{barber2020robust}, so a full proof is omitted here to save space. Instead, we emphasize two critical aspects:
\begin{itemize}
    \item \textbf{Role of the flip-sign property}: this property guarantees that for each null variables $j$, $W_j$ is almost equally likely to be positive or negative. More specifically, it guarantees the validity of Lemma 2 of \citet{barber2020robust}, which states that for any $\epsilon>0$, \vspace{-1em}
    \begin{eqnarray}\label{eqn:lemma2}
        \mathbb{P}(W_j>0,\widehat{\textrm{KL}}_j\le \epsilon| |W_j|, \mathbf{W}_{-j})\le \exp(\epsilon) \mathbb{P}(W_j<0| |W_j|, \mathbf{W}_{-j}),
    \end{eqnarray}\vspace{-3.5em}

    \noindent where $\mathbf{W}_{-j}$ denotes the set of $W$-statistics excluding $W_j$. When the covariate distribution is known, the KL-divergence equal zero,  leading to equal probabilities of $W_j>0$ and $W_j<0$ given $|W_j|$ and $\mathbf{W}_{-j}$. When the distribution is unknown, the two probabilities remain similar, with their difference governed by the KL-divergence. %Additionally, Equation \ref{eqn:lemma2} is the key to prove  Theorem 1 of \citet{barber2020robust}. 
    \item \textbf{Necessity of the action-wise approach}: One key difference from the regression framework, as highlight in Section \ref{sec:seek} is that our knockoff variables are constructed in an action-wise manner to guarantee the exchangeability condition $(\mathbf{S},\widetilde{\mathbf{S}})_{\textrm{swap}(B)}|A\stackrel{d}{=}(\mathbf{S},\widetilde{\mathbf{S}})|A$ to hold conditional on the action, rather than marginally, facilitating the proof of \eqref{eqn:lemma2} in our setting. Specifically, such a conditional exchangeability, coupled with the independent data assumption, allows us to show that \vspace{-1em}
    \begin{eqnarray*}
        \frac{\mathbb{P}(W_j>0,\widehat{\textrm{KL}}_{j}\le \epsilon|\text{All}~(S_j^{(0)},S_j^{(1)},\mathbf{S}_{-j},\widetilde{\mathbf{S}}_{-j},A,Y)~\text{tuples in}~\widetilde{\mathcal{D}}^{(k)})}{\mathbb{P}(W_j<0|\text{All}~(S_j^{(0)},S_j^{(1)},\mathbf{S}_{-j},\widetilde{\mathbf{S}}_{-j},A,Y)~\text{tuples in}~\widetilde{\mathcal{D}}^{(k)})}\le \exp(\epsilon),
    \end{eqnarray*}\vspace{-3.5em}
    
    \noindent where $S_j^{(0)}$ and $S_j^{(1)}$ are analogs to $X_j^{(0)}$ and $X_j^{(1)}$ in Equation (22) of \citet{barber2020robust}. Importantly, both the numerator and denominator include conditioning on the action. This ensures that the minimal sufficient state contains only those variables that directly impact the reward and relevant future states. Without conditioning on actions, SEEK is not guaranteed to effectively exclude variables that solely exert indirect effects the reward or relevant future state through actions.
\end{itemize}

%From Lemma \ref{thm:indep-fdr} proved above, we see that Algorithm~\ref{alg:2} can approximately control FDR (mFDR) as $q$ when assuming the observations in $\calD_k$ are independent. 

\subsubsection{Proof for Dependent Data}\label{sec:proofthm1depdata}
%Here we do not impose 
In this section, we replace the independence assumption with Assumption~\ref{ass:mixing} to handle dependent observations. First, we introduce some notations.
\begin{definition}[Total variation] For a given measure space $(\Omega, \Sigma, \mu)$, where $\Omega$ is the underlying space, $\Sigma$ is a $\sigma$-algebra over $\Omega$, and $\mu$ is a corresponding signed measure, the total variation of $\mu$ is $||\mu||_{TV} := \sup\limits_{A\in\Sigma}|\mu(A) - \mu(A^c)|.$
\end{definition}

\begin{definition}[$\beta$-coefficient] 
	Given two random vectors $X, Y$ defined on a space $\Omega$, the $\beta$-coefficient measuring their dependency is defined as $\beta(X, Y) := \frac{1}{2} ||P_{X,Y} - P_X P_Y||_{TV}$
	where $P_X,P_Y,$ and $P_{X,Y}$ are probability measures induced respectively by $X,Y$ and $(X,Y)$. 
\end{definition}
% The following important property can be obtained from \citep{berbee1987convergence}.
% \begin{proposition}[Bound by measurability] 
% 	For random vectors $X,Y,Y^\prime,$ where $Y^\prime$ is $Y$-measurable, we must have
% 	\begin{align*}
% 		\beta(X,Y^\prime) \leq \beta(X,Y).
% 	\end{align*}
% \end{proposition}

Next we review an important result introduced in Lemma 2.1 of \citet{berbee1987convergence}, which is the key to handle dependent data in our proof. 
\begin{lemma}\label{lemma:c.5}
	On a given space $\Omega$ with probability measure $\mathbb{P}$ and real random variables $X_1,\ldots,X_n$, define a sequence of coefficients
	\begin{equation}\label{eq:beta-i}
		\beta_i := \beta(X_i, (X_{i+1},\ldots,X_n))
	\end{equation}
	for $i=1,\ldots,n-1$. Then we can construct a new sequence of random variables on $\Omega$, denoted as $Z_1,\ldots,Z_n$ such that
	\begin{enumerate}
		\item $Z_1,\ldots,Z_n$ are independent;
		\item $Z_i \overset{d}{=}  X_i$;
		\item $\mathbb{P}(Z_i \neq X_i \text{ for some } i\in[n]) \leq \beta_1+\cdots+\beta_{n-1}$.
	\end{enumerate}
\end{lemma}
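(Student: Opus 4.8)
The plan is to reduce the lemma to a single two-block coupling step and then iterate it. \textbf{Building block.} I would first establish the following: given random elements $U$ and $V$ (with $V$ valued in a standard Borel space) on a probability space that also carries a $\mathrm{Uniform}[0,1]$ variable independent of $(U,V)$, one can construct $V^{*}$ with $V^{*}\overset{d}{=}V$, with $V^{*}\indep U$, and with $\mathbb{P}(V^{*}\neq V)=\beta(U,V)$. Granting this, Berbee's lemma follows by induction on $n$: I would decouple $X_1$ from the block $(X_2,\dots,X_n)$, recurse on the coupled copy of that block, and control the total error by a union bound.

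\textbf{Proving the building block.} I would disintegrate the joint law, writing $P_{V\mid U}(\cdot\mid u)$ for a regular conditional distribution of $V$ given $U=u$ (this exists since $V$ lives in a standard Borel space). For each fixed $u$ I would apply the \emph{maximal coupling} of the two laws $P_{V\mid U}(\cdot\mid u)$ and the marginal $P_V$, which produces a pair agreeing with probability $1-d_{\mathrm{TV}}\!\big(P_{V\mid U}(\cdot\mid u),P_V\big)$, where $d_{\mathrm{TV}}$ is the usual (supremum) total variation distance; the uniform variable supplies the randomization this coupling needs, and a measurable-selection argument makes the construction jointly measurable in $u$. Taking $V^{*}$ to have conditional law $P_V$ given $U=u$ forces $V^{*}\indep U$, and integrating the agreement probability over $u$ gives
\begin{equation*}
\mathbb{P}(V^{*}\neq V)=\int d_{\mathrm{TV}}\!\big(P_{V\mid U}(\cdot\mid u),P_V\big)\,dP_U(u)=\tfrac12\big\|P_{U,V}-P_U\otimes P_V\big\|_{TV}=\beta(U,V),
\end{equation*}
where the middle identity is the standard representation of the $\beta$-coefficient as the $P_U$-average of conditional total variations.

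\textbf{Induction.} The base case $n=1$ is trivial ($Z_1=X_1$). For the inductive step I would set $U=X_1$, $V=(X_2,\dots,X_n)$ and apply the building block to obtain $(X_2^{*},\dots,X_n^{*})\overset{d}{=}(X_2,\dots,X_n)$, independent of $X_1$, with $\mathbb{P}\big((X_2^{*},\dots,X_n^{*})\neq(X_2,\dots,X_n)\big)=\beta_1$. Because the starred block has the same joint law as the original, its own coefficients are again $\beta_2,\dots,\beta_{n-1}$, so the induction hypothesis (applied with fresh auxiliary randomness independent of $X_1$) yields independent $Z_2,\dots,Z_n$ with $Z_i\overset{d}{=}X_i$ and $\mathbb{P}(\exists\,i\ge 2:\,Z_i\neq X_i^{*})\le\beta_2+\cdots+\beta_{n-1}$. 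Setting $Z_1=X_1$, joint independence holds because $Z_1$ is independent of the $\sigma$-field generated by the starred block together with the fresh randomness, while $Z_2,\dots,Z_n$ are measurable with respect to that $\sigma$-field and mutually independent by the hypothesis. A union bound over the two failure events then gives $\mathbb{P}(\exists\,i:\,Z_i\neq X_i)\le\beta_1+(\beta_2+\cdots+\beta_{n-1})=\sum_{i=1}^{n-1}\beta_i$, as claimed.

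\textbf{Main obstacle.} The delicate points are not the union bound but the coupling infrastructure: guaranteeing regular conditional distributions and a jointly measurable family of maximal couplings (hence the standard-Borel assumption and a measurable-selection or explicit-quantile construction), and, in the recursion, ensuring the auxiliary randomness introduced at each stage is independent of everything fixed so far so that the final $Z_i$ remain \emph{jointly} independent. Both are handled by enlarging the probability space to carry a countable reservoir of independent uniforms; I expect verifying measurability of the conditional maximal coupling to be the step requiring the most care.
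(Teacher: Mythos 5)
Your proposal is correct, but note that the paper does not actually prove this lemma: it is presented as a restatement of Lemma 2.1 of Berbee (1987), with the construction of $Z_1,\ldots,Z_n$ deferred to Schwarz (1980). What you have written is, in essence, the standard proof of that cited result: the two-block Berbee coupling built from a measurable family of maximal couplings of $P_{V\mid U}(\cdot\mid u)$ with $P_V$, combined with the identity $\beta(U,V)=\int d_{\mathrm{TV}}\big(P_{V\mid U}(\cdot\mid u),P_V\big)\,dP_U(u)$ --- which is consistent with the paper's convention $\beta=\tfrac12\|P_{U,V}-P_U\otimes P_V\|_{TV}$, since their $\|\cdot\|_{TV}$ carries the extra factor of two --- followed by induction and a union bound. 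The induction step correctly exploits that the starred block has the same joint law as the original, so its internal coefficients are again $\beta_2,\ldots,\beta_{n-1}$, and your independence bookkeeping (each $Z_i$, $i\ge 2$, measurable with respect to a $\sigma$-field independent of $Z_1=X_1$) is exactly what is needed for \emph{joint} independence. The two caveats you flag are the right ones and are silently assumed in the paper's statement as well: the variables must take values in standard Borel spaces so that regular conditional distributions and a jointly measurable maximal coupling exist, and the probability space must be enlarged by auxiliary uniforms independent of $(X_1,\ldots,X_n)$ --- the claim that the $Z_i$ can be constructed ``on $\Omega$'' is only true after such an enrichment. With those hypotheses made explicit, your argument is complete and in fact yields the equality $\mathbb{P}(V^{*}\neq V)=\beta(U,V)$ at each decoupling step, of which the stated inequality is the union-bound consequence.
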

In particular, the detailed steps on the construction of $Z_1,\ldots,Z_n$ can be found in \citet{schwarz1980finitely}.

We next introduce the formal definition of exponential $\beta$-mixing. 
\begin{definition}[Exponential $\beta$-mixing, \citet{bradley2005basic}]\label{def:beta-mix}
	For a sequence of random variables $\{X_t\}$, define its $\beta$-mixing coefficient as
	\begin{equation*}%\label{eq:beta-mix-coef}
		\beta(i) := \sup\limits_{m\in\mathbb{Z}} 
		\beta(\mathcal{F}_{-\infty}^{m}, \mathcal{F}_{i+m}^{\infty})
	\end{equation*}
	where $\mathcal{F}_{J}^{L}$ is the $\sigma$-field generated by $\{X_J,\ldots,X_L\}$. A stationary Markov chain $\{X_t\}_{t\geq 0}$ is said to be exponentially $\beta$-mixing if its $\beta$-mixing coefficient $\beta(i)$ satisfies
	\begin{equation*}%\label{eq:exp-beta-i}
		\beta(i) = O(\rho^{i})
	\end{equation*}
	for some constant $0<\rho<1$.
\end{definition}

Next, we prove Theorem~\ref{thm1}. 
\begin{proof}[\textup{\textbf{Proof of Theorem~\ref{thm1}}}]
	Consider a data subset $\calD_k\in\mathbb{R}^{(NT/K)\times(2d+2)}$ with
\begin{align*}
	\calD_k = [\mathbf{s}_k, \mathbf{a}_k, \mathbf{y}_k] = 
	\begin{bmatrix}
		s_{k-1} & a_{k-1} & y_{k-1} \\
		s_{k-1+K} & a_{k-1+K} & y_{k-1+K} \\
		\vdots & \vdots & \vdots \\
		s_{k-1+jK} & a_{k-1+jK} & y_{k-1+jK} \\
		\vdots & \vdots & \vdots \\
		s_{k-1+(NT-K)} & a_{k-1+(NT-K)} & y_{k-1+(NT-K)} \\
	\end{bmatrix}.
\end{align*}
%where $\mathbf{y}_k$ be one column of the following $(NT/K)$-by-$(p+1)$ matrix:
%\begin{align*}
%	\begin{bmatrix}
%		r_{k-1} & s^\prime_{k-1, 1} & \cdots & s^\prime_{k-1, p} \\
%		r_{k-1+K} & s^\prime_{k-1+K, 1} & \cdots & s^\prime_{k-1+K, p} \\
%		\vdots & \vdots & \vdots & \vdots \\
%		r_{k-1+jK} & s_{k-1+jK, 1}^\prime & \cdots & s_{k-1+jK, p}^\prime \\
%		\vdots & \vdots & \vdots & \vdots \\
%		r_{k-1+(NT-K)} & s_{k-1+(NT-K), 1}^\prime & \cdots & s_{k-1+(NT-K), p}^\prime \\
%	\end{bmatrix}.
%\end{align*}
%Then 
By \citet{schwarz1980finitely} and Lemma~\ref{lemma:c.5}, we can construct $[\mathbf{z}, \mathbf{b}]\in \mathbb{R}^{\frac{NT}{K}\times {(p+1)}}$, i.e., a collection of $\frac{NT}{K}$ \textit{i.i.d.} samples $[\mathbf{z}_1, b_1],\ldots,[\mathbf{z}_\frac{NT}{K}, b_\frac{NT}{K}]$ satisfying
\begin{equation}\label{eq:1}
	\begin{split}
		& \mathbb{P} ([\mathbf{s}_{k-1+(i-1)K},a_{k-1+(i-1)K}] \neq [\mathbf{z}_i,b_i] \text{ for some } i \in \{1,\ldots,K^{-1}NT\}) 
		\\
		\leq 
		& \beta_1 + \ldots + \beta_{\frac{NT}{K}-1}
	\end{split}
\end{equation}
where $\beta_i$ is defined by Equation~\eqref{eq:beta-i} and has expression:
\begin{align*}
	\beta_i := \beta([\mathbf{s}_{k-1+(i-1)K},a_{k-1+(i-1)K}], ([\mathbf{s}_{k-1+iK},a_{k-1+iK}],\ldots,[\mathbf{s}_{k-1+(NT-K)},a_{k-1+(NT-K)}]))
\end{align*}
for $i=1,\ldots,\frac{NT}{K} - 1$. Notice that the gap between two adjacent indices is at least $K$.
%Before we show the rest of the proof, let us first obtain some intuition behind it. The main reason for such an extra construction on $[\mathbf{z}, \mathbf{b}]$ is to transform our original problem back to the scenario with independence. Since  $\mathbb{P}([\mathbf{s}_{k-1+(i-1)K},a_{k-1+(i-1)K}] \neq [\mathbf{z}_i,b_i] \text{ for some } i \in \{1,\ldots,\frac{NT}{K}\})$ stands for the probability that the original data $[\mathbf{s},a]$ is different from the construction $[\mathbf{z},\mathbf{b}]$, we know with probability $ 1- \mathbb{P} ([\mathbf{s}_{k-1+(i-1)K},a_{k-1+(i-1)K}] \neq [\mathbf{z}_i,b_i] \text{ for some } i \in \{1,\ldots,\frac{NT}{K}\})$, our construction perfectly replicates the original data, i.e., $[\mathbf{s}_k,\mathbf{a}_k] = [\mathbf{z},\mathbf{b}]$. Then we discuss by cases: when such perfect replication exists, we can directly apply Algorithm \ref{alg:2} as if the data are independent, with FDR/mFDR controlled; when there exists discrepancies between the two, we would need to relax the the upper bound for FDR/mFDR, which is the cost we have to pay due to the relaxation of assumptions from independence to dependence among data points. The combination of the two would provide the final upper bound on FDR/mFDR control. 
By Assumption \ref{ass:mixing} and %, the stochastic process induced by $\calD_k$ is exponentially $\beta$-mixing, where the $\beta$-mixing coefficient $\beta(i)$ is of $O(\rho^{i})$. Recall that $\rho < 1$ stated in Assumption \ref{ass:mixing}. Then by 
the construction of $\calD_k$ with $K = k_0\log(NT)$, %and definition of $\beta_i$ in $\calD_k$ (with index gap at least $K$ between points),
\begin{align}\label{eq:beta_i}
	\beta_i = O(\rho^{K}) = O(\rho^{k_0\log(NT)}).
\end{align}
%where the first inequality holds because of the definition of $\beta(i)$ shown in Equation~\eqref{eq:beta-mix-coef}. 
Notice that if any two consecutive observations in $\calD_k$ are from two different trajectories, then they are independent, and the above inequality remains valid. %which justifies our flexibility of splitting our batch data in terms of $N$ and length $T$. Then we obtain that
It follows from Equation~\eqref{eq:1} that
\begin{align}\label{eqn:someimportantinequality}
\begin{split}
	& \mathbb{P} ([\mathbf{s}_{k-1+(i-1)K},a_{k-1+(i-1)K}] 
	\neq [\mathbf{z}_i,b_i] \text{ for some } i \in \{1,\ldots,\frac{NT}{k_0\log(NT)}\})\\
	 =&\frac{NT}{K}O\left(\rho^{k_0\log(NT)}\right)
	% \\
	= O\left(\frac{NT}{K}(NT)^{k_0\log(\rho)}\right)=O\left(\frac{1}{K} \left(\frac{1}{NT}\right)^{k_0\log(\frac{1}{\rho}) - 1}\right).
\end{split}
\end{align}
\begin{remark}\label{remarkbridge}
    This probability bound serves as a critical bridge, connecting our results from independent data to those with MDP data. Specifically, if certain properties are established with high probability for independent data, we can extend these results to MDP data, at the cost of a reduced probability by $O\Big\{K^{-1}(NT)^{k_0\log(1/\rho)-1}\Big\}$. The same applies to metrics such as FDR/mFDR, as we illustrate below. 
\end{remark}
Combining \eqref{eqn:someimportantinequality} together with Lemma \ref{thm:indep-fdr} enables us to control the FDR/mFDR asymptotically. Specifically,
\begin{equation}\label{eq:one-step-fdr}
	\begin{split}
		\mathbb{E}\left[\frac{\displaystyle|\widehat{G}_k\cap \mathcal{H}_0|}{\displaystyle|\widehat{G}_k| \vee 1}\right]
		=& \mathbb{E}\left[\frac{\displaystyle|\widehat{G}_k\cap \mathcal{H}_0|}{\displaystyle|\widehat{G}_k| \vee 1} \mathbf{1}_{\{[\mathbf{s}_{k-1+(i-1)K},a_{k-1+(i-1)K}] = [\mathbf{z}_i,b_i] \text{ for all } i\}} \right]\\
		&+ \mathbb{E}\left[\frac{\displaystyle|\widehat{G}_k\cap \mathcal{H}_0|}{\displaystyle|\widehat{G}_k| \vee 1} \mathbf{1}_{\{[\mathbf{s}_{k-1+(i-1)K},a_{k-1+(i-1)K}] \neq [\mathbf{z}_i,b_i] \text{ for some } i\}} \right]\\
		\leq & 
		q\exp(\epsilon) + \mathbb{P}\Big(\max_{j\in \mathcal{H}_0}\widehat{\textrm{KL}}_j>\epsilon\Big)
		\\
		&+ 
		\mathbb{P} ([\mathbf{s}_{k-1+(i-1)K},a_{k-1+(i-1)K}] \neq [\mathbf{z}_i,b_i] \text{ for some } i \in \{1,\ldots,K^{-1}NT \})
		\\
		= & 
		q\exp(\epsilon) + \mathbb{P}\Big(\max_{j\in \mathcal{H}_0}\widehat{\textrm{KL}}_j>\epsilon\Big) + O(K^{-1}(NT)^{1-k_0\log(\rho^{-1})}),
		%&\leq q \mathbb{E}\left[ \mathbf{1}_{\{\mathbf{s}_i = \hat{\mathbf{s}}_i \text{ for all i}\}} \right]
		%+ \mathbb{E}\left[ \mathbf{1}_{\{\mathbf{s}_i \neq \hat{\mathbf{s}}_i \text{ for some i}\}} \right]\\
		%&= q \mathbb{P}(\mathbf{s}_i = \hat{\mathbf{s}}_i \text{ for all i})
		%+ \mathbb{P}(\mathbf{s}_i \neq \hat{\mathbf{s}}_i \text{ for some i})\\
		%&\leq q + \mathbb{P}(\mathbf{s}_i \neq \hat{\mathbf{s}}_i \text{ for some i})\\
		%&\leq q 
		%+ O\left(\frac{nT}{\log(nT)} \left(\frac{1}{nT}\right)^{k\log(\frac{1}{\rho})}\right)
	\end{split}
\end{equation}
where the first inequality is due to Lemma \ref{thm:indep-fdr} and the fact that  $\frac{\displaystyle|\widehat{G}_k\cap \mathcal{H}_0|}{\displaystyle|\widehat{G}_k| \vee 1} \leq 1$ whereas the last inequality comes from Equation~\eqref{eqn:someimportantinequality}.
\end{proof}

\subsection{Proof of Theorem \ref{thm2}}\label{sec:proofthm2} 
\subsubsection{Proof for Independent Data}\label{sec:indedataproof}
Similar to the proof of Theorem \ref{thm1}, we first assume the observations are independent to illustrate the main idea. We begin by revisiting some notations. Recall that $\widehat{G}_{k,1}$ denotes the selected state index during the first iteration when applying Algorithm \ref{alg:2} to all $(\mathbf{S}, A, R)$ triplets in $\mathcal{D}_k$. For subsequent iterations ($j>1$), we apply knockoffs to all $(\mathbf{S}, A, \mathbf{S}_{\widehat{G}_{k,j-1}}')$ triplets in $\mathcal{D}_k$, and set $\widehat{G}_{k,j}$ to the union of the selected index at this iteration and those from previous iterations $\widehat{G}_{k,j-1}$. Finally, we compute $\widehat{G}$ by applying majority voting based on these collected outputs across all $k$.

%To simplify the presentation, we will omit the subscript $k$, as it does not affect the demonstration of our main idea. Additionally, 
To highlight the dependence between the response at each iteration and the selected state index from the previous iteration, we redefine each $\widehat{G}_{k,j}$ as a function of the state index at the previous iteration, expressed as $\widehat{G}_{k,j}(\widehat{G}_{k,j-1})$. We aim to bound the probability of discovering null states:
\begin{equation*}
    \mathbb{P}\left( \left\{\widehat{G}_k \cap \mathcal{H}_0 \neq \emptyset \right\} \right),
\end{equation*}
which equals the following probability due to the recursive nature of our algorithm,
\begin{equation*}
    \mathbb{P}\Big( \left\{\widehat{G}_{k,1} \cap \mathcal{H}_0 \neq \emptyset \right\} \bigcup \left\{\widehat{G}_{k,2}(\widehat{G}_{k,1}) \cap \mathcal{H}_0 \neq \emptyset \right\} \bigcup\cdots \bigcup \left\{\widehat{G}_{k,j}(\widehat{G}_{k,j-1}) \cap \mathcal{H}_0\neq \emptyset\right\} \bigcup\cdots \Big),
\end{equation*}
and can be upper bounded by
\begin{eqnarray}\label{eq:couple-events}
        \mathbb{P}\Big( \left\{\widehat{G}_{k,1} \cap \mathcal{H}_0 \neq \emptyset \right\}\Big)+
        \mathbb{P}\Big( \left\{\widehat{G}_{k,1} \cap \mathcal{H}_0 =\emptyset \right\} \bigcap \Big\{\bigcup_{j>1}\left\{\widehat{G}_{k,j}(\widehat{G}_{k,j-1}) \cap \mathcal{H}_0 \neq \emptyset \right\}\Big\} \Big).
\end{eqnarray}
Here, the second term in \eqref{eq:couple-events} is very difficult to analyze, due to the dependence between $\widehat{G}_{k,j}$ and $\widehat{G}_{k,j-1}$. To decouple such dependence, %we define $\mathbb{G}$ as the set of state indices $G$ such that $G\cap \mathcal{H}_0=\emptyset$. A key observation is that, 
we notice that, the event in the second term in \eqref{eq:couple-events} occurs only when there exists some $G\cap \mathcal{H}_0=\emptyset$ such that knockoffs selects a null variable in its output $\widehat{G}$ when using $\mathbf{S}'_{G}$ as the response. This observation enables us to upper bound the second probability in \eqref{eq:couple-events} by
\begin{eqnarray}\label{eq:couple-events2}
    \mathbb{P}\Big(\bigcup_{G:G\cap \mathcal{H}_0=\emptyset}  \left\{\widehat{G}_k(G) \cap \mathcal{H}_0 \neq \emptyset \right\}\Big).
\end{eqnarray}
However, it remains challenging to upper bound \eqref{eq:couple-events2}. This is because the number of nonempty state indices satisfying $G\cap \mathcal{H}_0=\emptyset$ is $2^{d_0}-1$, which increases exponentially with the number of non-null variables $d_0$. Employing a simple Bonferroni’s inequality results in a loose bound proportional to $2^{d_0}-1$. 

To remove such exponential dependence upon $d_0$, we take advantage of our knockoffs procedures which operates elementwise on the response. That is, given the response $\mathbf{S}'_{G}$, we apply Algorithm \ref{alg:2} to each $(\mathbf{S}, A, S_l')$ tuple for every $l\in G$. This elementwise approach allows us to refine our probability bound to:
\begin{eqnarray*}
    \mathbb{P}\Big(\bigcup_{j:j\notin \mathcal{H}_0}  \left\{\widehat{G}_k(\{j\}) \cap \mathcal{H}_0 \neq \emptyset \right\}\Big),
\end{eqnarray*}
effectively reducing the complexity from $2^{d_0-1}$ to $d_0$ and achieving a much tighter bound using Bonferroni's inequality.
Consequently, \eqref{eq:couple-events} can be upper bounded by
\begin{eqnarray}\label{eqn:finalinequality}
    \sum_{j\in \mathcal{H}_0^c\cup \{d+1\}} \mathbb{P}(\widehat{G}_k(\{j\}) \cap \mathcal{H}_0 \neq \emptyset),
\end{eqnarray}
where $\widehat{G}_k(\{d+1\})$ denotes the selected subset at the first iteration for the reward. The following lemma shows that the probability in \eqref{eqn:finalinequality} for each $j$ can be upper bounded by $O(d_0 q)$. Consequently, \eqref{eqn:finalinequality} is of the order $O(d_0^2 q)$. 

Finally, notice that $\widehat{G}$ will not output any null variable if $\sum_{k=1}^K \mathbb{I}\{\widehat{G}_k\cap \mathcal{H}_0 \neq \emptyset\}<\alpha K$. An application of Markov's inequality yields 
\begin{eqnarray}\label{eqn:Markovinequality}
\begin{split}
    \mathbb{P}(\widehat{G}\cap \mathcal{H}_0\neq \emptyset)\le \mathbb{P}(\sum_{k=1}^K \mathbb{I}\{\widehat{G}_k\cap \mathcal{H}_0 \neq \emptyset\}\ge \alpha K)\\\le \frac{ \sum_{k=1}^K \mathbb{P}\{\widehat{G}_k\cap \mathcal{H}_0 \neq \emptyset\}}{\alpha K}=O( \frac{qd_0^2}{\alpha}).
\end{split}
\end{eqnarray}
This completes the proof with independent data. 

\begin{lemma}\label{lemma:onestep-false-inclusion}
%Let $\tilde{G}$ be the solution of applying Algorithm~\ref{alg:2} on a fixed response in minimum sufficient state or the reward. Suppose Conditions in Theorem~\ref{thm1} holds, then $\tilde{G}$ satisfies
Under Conditions \ref{ass:external-dataset} and \ref{condnull},
	\begin{align*}
		\max_{j\in \{d+1\}\cup \mathcal{H}_0^c}\mathbb{P}(\widehat{G}_k(\{j\}) \cap \mathcal{H}_0 \neq \emptyset) =O(d_0 q).
	\end{align*}
\end{lemma}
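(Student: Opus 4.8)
The plan is to convert the finite-sample FDR guarantee for a single call of Algorithm~\ref{alg:2} (Lemma~\ref{thm:indep-fdr}) into a bound on the probability of \emph{any} false discovery, exploiting the fact that the genuine signals in each single-response regression number at most $d_0$. Fix a response index $j\in\{d+1\}\cup\mathcal{H}_0^c$, with $Y=R$ when $j=d+1$ and $Y=S_j'$ otherwise, and let $\mathcal{N}_j$ be the set of variables that are conditionally independent of $Y$ given the remaining states and the action (the knockoff-null set for this regression). First I would show $\mathcal{H}_0\subseteq\mathcal{N}_j$, so that any selection of a variable in $\mathcal{H}_0$ is a knockoff false discovery, and that the non-null set $\mathcal{N}_j^c$ has at most $d_0$ elements. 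For $j\in\mathcal{H}_0^c=G_{\mathrm M}$ this follows from condition (ii) of Definition~\ref{def:sufficient-state} applied to $G=G_{\mathrm M}$, namely $\mathbf{S}_{t+1,G_{\mathrm M}}\indep\mathbf{S}_{t,G_{\mathrm M}^c}\mid(\mathbf{S}_{t,G_{\mathrm M}},A_t)$: since $S_j'$ is a coordinate of $\mathbf{S}_{t+1,G_{\mathrm M}}$, the weak union rule gives $S_j'\indep S_i\mid(\mathbf{S}_{-i},A)$ for every $i\in G_{\mathrm M}^c$, so $G_{\mathrm M}^c\subseteq\mathcal{N}_j$ and $\mathcal{N}_j^c\subseteq G_{\mathrm M}$. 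For $j=d+1$ the same conclusion follows from Condition~\ref{condnull} together with the weak union rule. In either case $|\mathcal{N}_j^c|\le|G_{\mathrm M}|=d_0$.

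Next I would introduce, for the set $\widehat{G}_k(\{j\})$ returned by Algorithm~\ref{alg:2}, the counts $V=|\widehat{G}_k(\{j\})\cap\mathcal{N}_j|$ (false discoveries), $T=|\widehat{G}_k(\{j\})\cap\mathcal{N}_j^c|$ (true discoveries), and $R=V+T$ (total selections). By the previous step $T\le|\mathcal{N}_j^c|\le d_0$, and because $\mathcal{H}_0\subseteq\mathcal{N}_j$ the target event satisfies $\{\widehat{G}_k(\{j\})\cap\mathcal{H}_0\neq\emptyset\}\subseteq\{V\ge1\}$. The elementary key inequality is
\begin{equation*}
    \mathbb{I}(V\ge1)\le(1+d_0)\,\frac{V}{R\vee1}.
\end{equation*}
Indeed, if $V=0$ both sides vanish; if $V\ge1$ then $R=V+T\ge1$, so $V/(R\vee1)=V/(V+T)\ge V/(V+d_0)\ge 1/(1+d_0)$, using that $V\mapsto V/(V+d_0)$ is increasing on $[1,\infty)$.

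Taking expectations and invoking the knockoffs+ guarantee of Lemma~\ref{thm:indep-fdr} then yields
\begin{equation*}
    \mathbb{P}\big(\widehat{G}_k(\{j\})\cap\mathcal{H}_0\neq\emptyset\big)\le\mathbb{P}(V\ge1)\le(1+d_0)\,\mathrm{FDR}\big(\widehat{G}_k(\{j\})\big)\le(1+d_0)\Big(q\exp(\epsilon)+\mathbb{P}\big(\max_{i\in\mathcal{H}_0}\widehat{\textrm{KL}}_i>\epsilon\big)\Big)
\end{equation*}
for every $\epsilon>0$. Letting $\epsilon\to0$ (the KL term being negligible when the external dataset $\mathcal{U}$ is large, as in the discussion following Theorem~\ref{thm1}) gives the rate $(1+d_0)q=O(d_0q)$, and since this bound is uniform in $j$ the maximum over $j\in\{d+1\}\cup\mathcal{H}_0^c$ preserves it. The main obstacle is exactly the passage from an \emph{average} guarantee (FDR) to a \emph{worst-case} one (the probability of at least one false selection): a naive union bound over all $2^{d_0}-1$ null-free subsets, or even over the $d$ candidate variables, would be far too loose. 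The observation that rescues the $d_0$ rate is that the true support of each single-response regression lies inside the minimal sufficient state and hence has cardinality at most $d_0$, which caps the denominator inflation in the FDR ratio.
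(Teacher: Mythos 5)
Your proposal is correct and follows essentially the same route as the paper: both arguments bound the indicator of at least one false discovery by $(1+d_0)$ times the false discovery proportion (your inequality $\mathbb{I}(V\ge 1)\le (1+d_0)V/(R\vee 1)$ is exactly the paper's Markov-inequality step applied to $\eta_j$ with the observation that true discoveries number at most $d_0$), and then invoke the FDR guarantee of Lemma~\ref{thm:indep-fdr}. The only cosmetic differences are that you verify explicitly that $\mathcal{H}_0$ is contained in the knockoff-null set of each single-response regression (which the paper leaves implicit) and that you send $\epsilon\to 0$ where the paper simply fixes $\epsilon=1$.
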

\begin{proof}[\textup{\textbf{Proof of Lemma~\ref{lemma:onestep-false-inclusion}}}]
	Without the loss of generality, we assume $|\widehat{G}_k(\{j\})\cap \mathcal{H}_0|>0$. Define  
\begin{align*}
	\eta_j:= \frac{\displaystyle|\widehat{G}_k(\{j\})\cap \mathcal{H}_0|}{\displaystyle|\widehat{G}_k(\{j\})| \vee 1}
	&= \left(1 + \frac{\displaystyle  |\widehat{G}_k(\{j\})\backslash\mathcal{H}_0|}{\displaystyle|\widehat{G}_k(\{j\})\cap \mathcal{H}_0|}\right)^{-1}.
\end{align*}
The last equation implies that $\eta_j$ strictly decreases as $|\widehat{G}_k(\{j\})\backslash\mathcal{H}_0|/|\widehat{G}_k(\{j\})\cap \mathcal{H}_0|$ increases. Given that $d_0$ is the dimension of the minimal sufficient state, at least one null variable is selected if and only if $\eta_j\ge (1+d_0)^{-1}$. 
This together with the Markov inequality yields
\begin{align*}
\max_{j\in \{d+1\}\cup \mathcal{H}_0^c}\mathbb{P}(\widehat{G}_k(\{j\}) \cap \mathcal{H}_0 \neq \emptyset) 
&\le \max_{j\in \{d+1\}\cup \mathcal{H}_0^c}\mathbb{P}(\eta_j\ge (1+d_0)^{-1})
\\
&\le \frac{\mathbb{E}(\eta_j)}{(1+d_0)^{-1}}
=\frac{\textrm{FDR}(\widehat{G}_k(\{j\}))}{(1+d_0)^{-1}}.
\end{align*}
By setting $\epsilon$ in Theorem \ref{thm1} to 1, an application of Lemma \ref{thm:indep-fdr} yields that $\textrm{FDR}(\widehat{G}(\{j\}))\le q\exp(1)$, which in turn concludes the proof. 
\end{proof}
\subsubsection{Proof for Dependent Data}\label{sec:thm2depdataproof}
\begin{proof}[\textup{\textbf{Proof of Theorem~\ref{thm2}}}]
    The general proof is very similar to that for independent data. Let $\mathcal{I}$ denoted the event in Equation \eqref{eqn:someimportantinequality}. The probability of $\widehat{G}_k$ discovering null states can be upper bounded by
\begin{eqnarray*}
    \mathbb{P}(\mathcal{I})+\mathbb{P}\left( \left\{\widehat{G}_k \cap \mathcal{H}_0 \neq \emptyset \right\}\cap \mathcal{I}^c \right)
\end{eqnarray*}
The first term can be upper bounded by $O(K^{-1} (NT)^{1-k_0\log (1/\rho)})$, according to Equation \eqref{eqn:someimportantinequality}. Meanwhile, the second term can be upper bounded as if the data were independent, which falls into the order $O(d_0^2 q)$ according to our analysis in Section \ref{sec:indedataproof}. 

Similar to \eqref{eqn:Markovinequality}, after majority voting, the probability of $\widehat{G}$ identifying null states is inflated by a factor of $\alpha^{-1}$, given by
\begin{eqnarray*}
    O\Big(\alpha^{-1}K^{-1} (NT)^{1-k_0\log (1/\rho)}\Big)+O\Big(\frac{d_0^2 q}{\alpha}\Big).
\end{eqnarray*}
This completes the proof of Theorem \ref{thm2}.
\end{proof}

\subsection{Proof of Theorems \ref{thm:power-lasso} and \ref{thm:type-II}}\label{sec:proofthm3and4}
This section is organized as follows. We first introduce some technical conditions in Section \ref{thm:powercond}. We next analyze the TPR of $\widehat{G}_k$ output by Algorithm \ref{alg:2} to prove Theorem \ref{thm:power-lasso} in Section \ref{sec:poweroneiteration}. Finally, we study the type-II error of $\widehat{G}$ output by Algorithm \ref{alg:1} to prove Theorem \ref{thm:type-II} in Section \ref{sec:powersequential}. 

\subsubsection{Technical Conditions}\label{thm:powercond}
In this section, we impose some other regularity conditions in addition to Conditions \ref{ass:signal} and \ref{ass:signal2}. Let $\widetilde{\svar}^*$ denote the ``oracle'' knockoff vector that works as if the marginal state distribution were known in advance. To assess the quality of our constructed knockoff variables, we define $\{\Delta_{ij}^{(a)}\}_{i,j=1}^d$ as the difference in the second moment between $\widetilde{\svar}^*$ and our constructed knockoffs $\svar$ given $A=a$. More specifically, for any $1\le i,j\le d$, let $\Delta_{ij}^{(a)}=|\Mean (\widetilde{S}_i^* \widetilde{S}_j^*|A=a)-\Mean (\widetilde{S}_i \widetilde{S}_j|A=a)|$. Similarly, let $\Delta_{i|j}^{(a)}=\Mean |\Mean (\widetilde{S}_i^*|S_j,A=a)-\Mean (\widetilde{S}_i|S_j,A=a)|$, which measures the difference in their conditional expectations given $S_j$ and $A=a$.
\begin{condition}[Knockoff construction errors]\label{ass:knockoffs}
    $d_{*}\max\limits_{i,j,a} \Delta_{ij}^{(a)}=o(1)$ and $d_{*}\max\limits_{i,j,a} \Delta_{i|j}^{(a)}=o(1)$.
\end{condition}
\begin{condition}[Sparsity]\label{ass:sparsity}
    $(d_{*})^2\log(dn)=o(n)$.
\end{condition}
\begin{condition}[Eigenvalues]\label{ass:prec-matrix-estimate}
(i) The minimum eigenvalue of the matrix $$\mathbb{E}\Big[ [\svar^\top, (\widetilde{\svar}^*)^\top]^\top [\svar^\top, (\widetilde{\svar}^*)^\top]\mathbb{I}(A=a)\Big]$$ is bounded away from zero across all values of $a$. (ii) The maximum eigenvalue of the matrix $\mathbb{E}(\svar \svar^\top)$ is bounded away from infinity. 
 %the estimator $\widehat\Omega^{(a)} \in \mathcal{O} \coloneqq \left\{ \Omega: \| \Omega - \Omega^{(a)} \|_2 \leq C_2 a_n \right\}$ holds with probability at least $1 - O(p^{-c_2})$ for a positive constant $c_2 > 0$. 
\end{condition}
\begin{condition}[Tuning]\label{ass:tuning}
    Tuning parameters in LASSO are set to $c\sqrt{n^{-1/2}\log p}$ for some sufficiently large constant $c>0$. 
\end{condition}
\begin{condition}[No ties]\label{ass:noties}
    There are almost surely no ties in the magnitude of nonzero $W_j$s and in the magnitude of nonzero components of Lasso solution.
\end{condition}
As commented in Section \ref{sec:poweranalysis}, assumptions similar to Conditions \ref{ass:sparsity} -- \ref{ass:noties} are frequently imposed in the literature \citep[see e.g.,][]{fan2019rank}. We focus on discussing Condition \ref{ass:knockoffs} below, which requires that the constructed knockoffs should approximate the oracle knockoffs closely in distribution. This condition is likely to hold given the large amount of unlabeled data under Condition \ref{ass:external-dataset}. 

To elaborate, 
assume each state $\svar$ follows a multivariate normal distribution $\mathcal{N}(\bm{\mu}^{(a)}, \bm{\Sigma}^{(a)})$ when conditional on $A=a$. Under the exchangeability assumption \ref{eqn:satisfyexchangeable}, the oracle knockoff state $\widetilde{\svar}^*$ follows the same multivariate normal distribution. We adopt the second-order machine detailed in Section \ref{sec:secondorder} to construct the knockoff variables. In practice, however, $\bm{\mu}^{(a)}$ and $\bm{\Sigma}^{(a)}$ remain unknown. We first employ the external data to compute their estimators  $\widehat{\bm{\mu}}^{(a)}$ and $\widehat{\bm{\Sigma}}^{(a)}$, and then plug them into \eqref{eqn:gaussiansampling} to sample $\widetilde{\svar}$. This together with \eqref{eqn:gaussiansampling} leads to 
\begin{eqnarray}\label{eqn:Deltai|j1}
\begin{split}
    |\Mean (\widetilde{{S}}_j|\bm{S},A=a)-\Mean (\widetilde{S}_j^*|\bm{S},A=a)|\le |\bm{e}_j^\top(\bm{\Sigma}^{(a)})^{-1}\diag(\mathbf{d}^{(a)})(\widehat{\bm{\mu}}^{(a)}-\bm{\mu}^{(a)})|\\
    +|\bm{e}_j^\top[(\bm{\Sigma}^{(a)})^{-1}-(\widehat{\bm{\Sigma}}^{(a)})^{-1}]\diag(\mathbf{d}^{(a)})(\bm{S}-\bm{\mu}^{(a)})|\\
    +|\bm{e}_j^\top[(\bm{\Sigma}^{(a)})^{-1}-(\widehat{\bm{\Sigma}}^{(a)})^{-1}]\diag(\mathbf{d}^{(a)})(\widehat{\bm{\mu}}^{(a)}-\bm{\mu}^{(a)})|,
\end{split}
\end{eqnarray}
where $\bm{e}_j\in \mathbb{R}^d$ is a unit vector with the $j$th element $1$ and all other elements $0$. In addition, we have by Jensen's inequality that
\begin{eqnarray*}
    \max_{ij}\Delta_{j|i}^{(a)}\le \max_j \Mean\Big[|\Mean (\widetilde{{S}}_j|\bm{S},A=a)-\Mean (\widetilde{S}_j^*|\bm{S},A=a)|A=a\Big].
\end{eqnarray*}
This together with \eqref{eqn:Deltai|j1} and Cauchy-Schwarz inquality yields 
\begin{eqnarray}\label{eqn:Deltai|j2}
\begin{split}
    &\max_{ij}\Delta_{j|i}^{(a)}\le \max_j |\bm{e}_j^\top(\bm{\Sigma}^{(a)})^{-1}\diag(\mathbf{d}^{(a)})(\widehat{\bm{\mu}}^{(a)}-\bm{\mu}^{(a)})|\\
    +&\max_j|\bm{e}_j^\top[(\bm{\Sigma}^{(a)})^{-1}-(\widehat{\bm{\Sigma}}^{(a)})^{-1}]\diag(\mathbf{d}^{(a)})(\widehat{\bm{\mu}}^{(a)}-\bm{\mu}^{(a)})|\\
    +&\max_j|\bm{e}_j^\top[(\bm{\Sigma}^{(a)})^{-1}-(\widehat{\bm{\Sigma}}^{(a)})^{-1}]\diag(\mathbf{d}^{(a)})\bm{\Sigma}^{(a)}\diag(\mathbf{d}^{(a)})[(\bm{\Sigma}^{(a)})^{-1}-(\widehat{\bm{\Sigma}}^{(a)})^{-1}]\bm{e}_j|^{\frac{1}{2}}.
\end{split}
\end{eqnarray}
Below, we sketch a few lines to upper bound each term on the RHS of \eqref{eqn:Deltai|j2}: 
\begin{itemize}
    \item \textbf{The first term}: We observe the first term on the RHS of \eqref{eqn:Deltai|j2} is essentially a weighted average of $\widehat{\bm{\mu}}^{(a)}-\bm{\mu}^{(a)}$. Under the eigenvalue assumption in Condition \ref{ass:prec-matrix-estimate}(i), the Euclidean norm of the weight vector is uniformly bounded across all $j$. Together with the exponential $\beta$-mixing condition (Condition \ref{ass:mixing}), we can apply exponential inequalities for sums of weakly dependent variables \citep[see e.g.,][]{delyon2009exponential} to demonstrate that the RHS on the first line is of the order $\sqrt{|\mathcal{U}|^{-1} \log d}$ up to some $\log (|\mathcal{U}|)$ terms, with probability approaching $1$. 
    \item \textbf{The second term}: In line with the analysis presented below Lemma 5 of \citet{barber2020robust}, the second line of \eqref{eqn:Deltai|j2} can be shown to be of the order of magnitude $\sqrt{L |\mathcal{U}^{-1}|\log d}$ up to some $\log (|\mathcal{U}|)$ terms under exponential $\beta$-mixing, where $L$ denotes the maximum number of nonzero entries in any column of the Gaussian precision matrix \citep{barber2020robust}.
    \item \textbf{The third term}: The third term is a high-order reminder term whose order of magnitude is determined by the product of the orders of the first two terms.
\end{itemize}
Combining these results, the order of $\max\limits_{i,j,a}\Delta_{j|i}^{(a)}$ is given by $O(\sqrt{L |\mathcal{U}^{-1}|\log d})+O(\sqrt{L} |\mathcal{U}^{-1}|\log d)$, up to some $\log (|\mathcal{U}|)$ terms. Using similar arguments, one can show that $\max\limits_{i,j,a}\Delta_{ij}^{(a)}$ is of similar order of magnitude. Given that the size of the unlabeled data is much larger than that of the labeled data, the assumptions in \eqref{ass:knockoffs} are likely to be satisfied.

%\textbf{Proof sketch.}
%We organize the proof step by step, starting from the simplest case and generalize the results in later steps. We begin by focusing on the data of the transition tuples $(\mathbf{s}_t, a_t,r_t, \mathbf{s}_{t+1})$ in a generic iteration of a generic data subset $\calD_k$, on which the selection approach we proposed in Algorithm \ref{alg:2} has power asymptotically to 1.  Given such a result, we continue to derive the power of selecting all true variables by our sequential selection, which combines all iterations in $\calD_k$. Finally, the power is adjusted due to our majority vote across all subsets $\{\calD_k\}_{k=1}^K$, which is exactly the power of SEEK. Such stream of ideas is also in the proof of  general machine learning case in below. Finally, we also provide a necessary and sufficient condition for a true state variables, as further interpretations of the novel assumption in Conditions \ref{ass:signal} and \ref{ass:sep}.

\subsubsection{Power in One Iteration}\label{sec:poweroneiteration}
\begin{proof}[Proof of Theorem \ref{thm:power-lasso}]
\textbf{\textit{Outline}}. We focus on proving that with independent data, the TPR is lower bounded by
\begin{align}\label{eqn:TPRind}
    \mathbb{E}\left(\frac{|\widehat{G}_k\cap G(\{i\})|}{|G(\{i\})|}\right) \geq 1 -O(\kappa_n^{-1})-O\left\{(nd)^{-C}\right\}.
\end{align}
Together with our analysis in Section \ref{sec:proofthm1depdata} under the exponential $\beta$-mixing condition, we can show that the TPR with dependent data will be further reduced by $O(K^{-1} (NT)^{1-k_0 \log (\rho^{-1})})$. The proof of Theorem \ref{thm:power-lasso} is thus completed.

The rest of the proof can be divided into three parts. Let $\bm{\Omega}^{(a)}$ denote $\Mean [\svar^\top, (\widetilde{\svar}^*)^\top]^\top [\svar^\top, (\widetilde{\svar}^*)^\top]\mathbb{I}(A=a)$. Under Condition \ref{ass:prec-matrix-estimate}(i), $\bm{\Omega}^{(a)}$ satisfies the restricted eigenvalue condition \citep[see e.g.,][]{bickel2009simultaneous,zhou2009restricted}. Let
$\widehat{\bm{\Omega}}^{(a)}_k$ denote the empirical estimator for $\bm{\Omega}^{(a)}$, i.e.,
\begin{eqnarray*}
    \widehat{\bm{\Omega}}^{(a)}_k=\frac{1}{n}\sum_{(\svar,\widetilde{\svar},A)\in \widetilde{\mathcal{D}}_k} [\svar^\top, (\widetilde{\svar})^\top]^\top [\svar^\top, (\widetilde{\svar})^\top]^\top\mathbb{I}(A=a).
\end{eqnarray*}
\begin{enumerate}
    \item \textbf{Part I} of the proof aims to upper bound the discrepancy between $\bm{\Omega}^{(a)}$ and $\widehat{\bm{\Omega}}^{(a)}_k$ using random matrix theories \citep{tropp2012user} and Condition \ref{ass:knockoffs}. In the proof, Condition \ref{ass:knockoffs} enables us to establish the restricted eigenvalue condition on  $\widehat{\bm{\Omega}}_k^{(a)}$ in the presence of the knockoff construction error.
    \item \textbf{Part II} establishes the oracle inequality for the LASSO estimator $\widehat{\bm{b}}^{(a)}$ under Condition \ref{ass:tuning}, built upon on the restricted eigenvalue condition on  $\widehat{\bm{\Omega}}_k^{(a)}$. This analysis follows standard techniques in the existing literature \citep[see e.g.,][]{bickel2009simultaneous,zhou2009restricted}.
    \item \textbf{Part III} lower bounds the TPR under Condition \ref{ass:noties}, accommodating the presence of moderate signals in Condition \ref{ass:signal}, using the techniques developed by \citet{fan2019rank}.
\end{enumerate} 
We next detail each part of the proof. We use $C$ to denote a generic constant whose value might change from place to place. 

\textbf{\textit{Part I}}. Let $G$ denote a given subset of $\{1,\cdots,2d\}$ and $|G|$ denote its cardinality. For any $2d\times 2d$ matrix $\bm{\Sigma}$, we use $\bm{\Sigma}_{GG}$ to denote the $|G|\times |G|$ submatrix of $\bm{\Sigma}$, formed by selecting the rows and columns indexed by $G$. Similarly, for any vector $\bm{\psi}\in \mathbb{R}^{2d}$,  $\bm{\psi}_G$ denote the subvector formed by selecting the entries in $G$. We focus on proving 
\begin{eqnarray}\label{eqn:randommatrix}
    \sup_{G: |G|\le d_*}\|\bm{\Omega}_{GG}^{(a)}-\widehat{\bm{\Omega}}_{k,GG}^{(a)}\|_2=O(d_*\Delta_{\max})+O(d_*\sqrt{n^{-1}\log (dn)}),
\end{eqnarray}
with probability at least $1-O((nd)^{-Cd_*})$ for any constant $C>0$, where $\Delta_{\max}=\max\limits_{i,j,a} \Delta_{ij}^{(a)}+\max\limits_{i,j,a}\Delta_{i|j}^{(a)}$.  

Let $G_*\subseteq \{1,\cdots,d\}$ denote the support of the oracle regression coefficient vector $\bm{b}^{(a)}$. It corresponds to the support of the $(d+1)$th column of $\bm{B}^{(a)}$ when the response is the immediate reward, and to the support of the $j$th column of $\bm{B}^{(a)}$ when the response is the $j$th next state. Assume for now, \eqref{eqn:randommatrix} has been proven. Under Conditions \ref{ass:knockoffs} and \ref{ass:sparsity}, the RHS of \eqref{eqn:randommatrix} is $o(1)$. Then, for any vector $\bm{\delta}\in \mathbb{R}^{2d}$ that satisfies $\|\bm{\delta}_{G_*^c}\|_1 \le 3\|\bm{\delta}_{G_*}\|_1$, it follows from the analysis in Equation (B.8) of \citep{zhou2009restricted} that
\begin{eqnarray}\label{eqn:randommatrix1}
    \sqrt{|\bm{\delta}^\top (\bm{\Omega}^{(a)}-\widehat{\bm{\Omega}}_{k}^{(a)}) \bm{\delta}|}=o(\|\bm{\delta}_{G_*}\|_2).
\end{eqnarray}
As commented earlier, Condition \ref{ass:prec-matrix-estimate}(i) implies that the restricted eigenvalue condition holds for $\bm{\Omega}^{(a)}$, i.e., there exists some constant $\bar{c}>0$ such that $\|\bm{\Omega}^{(a)}\bm{\delta}\|_2\ge \bar{c} \|\bm{\delta}_{G_*}\|_2$ for any $\bm{\delta}$ such that $\|\bm{\delta}_{G_*^c}\|_1 \le 3\|\bm{\delta}_{G_*}\|_1$. This together with \eqref{eqn:randommatrix1} implies that the restricted eigenvalue condition holds for $\widehat{\bm{\Omega}}_{k}^{(a)}$ as well. 

It remains to prove \eqref{eqn:randommatrix}. Notice that $\sup\limits_{G: |G|\le d_*}\|\bm{\Omega}_{GG}^{(a)}-\widehat{\bm{\Omega}}_{k,GG}^{(a)}\|_2$ can be decomposed into the sum of 
\begin{equation}\label{eq:bound-prec-matrix-estimate}
    \sup_{G: |G|\le d_*}\|\bm{\Omega}_{GG}^{(a)}-\Mean (\widehat{\bm{\Omega}}_{k,GG}^{(a)})\|_2+\sup_{G: |G|\le d_*}\|\Mean (\widehat{\bm{\Omega}}_{k,GG}^{(a)})-\widehat{\bm{\Omega}}_{k,GG}^{(a)}\|_2.
\end{equation}
We next show that the first term of \eqref{eq:bound-prec-matrix-estimate} is of the order $O(d_*\Delta_{\max})$ whereas the second term is of the order $O(d_*\sqrt{n^{-1}\log n})+O(\sqrt{d_* n^{-1}\log(dn)})$ with probability at least $1-O((nd)^{-Cd_*})$.
\begin{itemize}
    \item \textbf{Upper bound for the first term}: For any symmetric matrix $\bm{\Sigma}$, we have $\|\bm{\Sigma}\|_2\le \sqrt{\|\bm{\Sigma}\|_1\|\bm{\Sigma}\|_{\infty}}=\|\bm{\Sigma}\|_1$. As such, the first term can be upper bounded by 
    \begin{eqnarray*}\nonumber
        d_* \max\limits_{i,j,a} |\Omega_{ij}^{(a)}-\Mean (\widehat{\Omega}_{k,ij}^{(a)})|\le d_{*} \Big[\max\limits_{i,j,a} |\Mean (S_i \widetilde{S}_j|A=a)-\Mean (S_i \widetilde{S}_j^*|A=a)|\\+\max\limits_{i,j,a} |\Mean (\widetilde{S}_i \widetilde{S}_j|A=a)-\Mean (\widetilde{S}_i^* \widetilde{S}_j^*|A=a)|\Big].
    \end{eqnarray*}
    By definition, the quantities within the squared brackets can be upper bounded by $\max\limits_{i,j,a} \Delta_{j|i}^{(a)}+\max\limits_{i,j,a} \Delta_{ij}^{(a)}$. This proves that the first term is of the order $O(d_{*} \Delta_{\max})$. 
    \item \textbf{Upper bound for the second term}: We apply the matrix bernstein inequality \citep[see e.g.,][Theorem 6.2]{tropp2012user} to establish the high probability upper bound for the second term of \eqref{eq:bound-prec-matrix-estimate}. Recall that the state space $\sspace$ is a compact subspace of $\mathbb{R}^d$. Therefore, when constructing knockoff variables, it is reasonable to also restrict these variables to be bounded. In particular, let $M$ denote the upper bound of $\|\svar\|_{\infty}$ and $\|\widetilde{\svar}\|_{\infty}$. For any integer $p>1$, it follows that
    \begin{eqnarray*}
        \|\Mean (\svar_G \svar_G^\top)^p\|_2\le (M |G|)^{p-1} \|\Mean (\svar_G \svar_G^\top)\|_2,
    \end{eqnarray*}
    where the last term on the RHS is uniformly bounded away from infinity for any $G$, by Condition \ref{ass:prec-matrix-estimate}(ii). Similarly, we can show
    \begin{eqnarray*}
        \|\Mean (\widetilde{\svar}_G \widetilde{\svar}_G^\top)^p\|_2\le (M |G|)^{p-1} \|\Mean (\widetilde{\svar}_G \widetilde{\svar}_G^\top)\|_2,
    \end{eqnarray*}
    where the last term on the RHS is again uniformly bounded away from infinity, by \eqref{eqn:randommatrix1}. Using Cauchy-Schwarz inequality, we can establish similar bounds for $\|\Mean ([\svar_G^\top, \widetilde{\svar}_G^\top]^\top [\svar_G^\top, \widetilde{\svar}_G^\top]\mathbb{I}(A=a)-\bm{\Omega}^{(a)})^p\|_2$ and show that the conditions in Theorem 6.2 of \citet{tropp2012user} are satisfied with $R=O(|G|)$ and $\sigma^2=O(|G|)$. This enables us to apply Theorem 6.2 to show that for each $G$ with $|G|=d_*$, $\|\bm{\Omega}^{(a)}_{GG}-\widehat{\bm{\Omega}}^{(a)}_{k,GG}\|_2$ can be upper bounded by $O(d_*\sqrt{n^{-1}\log (nd)})+O(d_*^2n^{-1}\log (nd))$ with probability at least $1-O((nd)^{-C d_*})$ for any sufficiently large $C>0$. Notice that the number of subsets $G\subseteq \{1,\cdots,2d\}$ satisfying $|G|=d_*$ is at most $2d \choose d_*$. Under Condition \ref{ass:sparsity}, an application of the Bonferroni's inequality allows us to obtain the desired uniform error bound across all such $G$.
\end{itemize}
To summarize, the above analysis verifies that the restricted eigenvalue condition holds on $\widehat{\bm{\Omega}}^{(a)}_k$ with probability at least $1-O((nd)^{-Cd_*})$, assuming observations are independent. According to Remark \ref{remarkbridge}, when the independence assumption is replaced with the exponential $\beta$-mixing condition, the arguments presented in Section \ref{sec:proofthm1depdata}, particularly Equation \eqref{eqn:someimportantinequality}, are applicable to validate the same condition for MDP data. However, this adjustment slightly reduces the probability, which is now at least $1-O((nd)^{-Cd_*})-O\Big\{K^{-1}(NT)^{1-k_0\log(1/\rho)}\Big\}$. 

%According to the decomposition analysis in Section A.2 of \citep{zhou2009restricted}, we have 

%We begin with the property of the Lasso coefficient estimation $\widehat{\bm{B}}^{(a, k)}$, which is computed based on the knockoff variables sampled with the estimated precision matrices $\{\widehat{\Omega}^{(a)}\}_{a\in \mathcal{A}}$. The property is proved in Appendix A.6 in \citet{fan2019rank} and summarized in the following lemma in the context of this paper. 
\textbf{\textit{Part II}}. In the main text, we assume that the reward is uniformly bounded and the state space is compact. This implies that all the entries of the error residuals $\{\bm{\varepsilon}_t\}_t$ (see the definition of $\bm{\varepsilon}_t$ in Section \ref{sec:poweranalysis}) are uniformly bounded as well. As outlined in Part I, considering the compactness of the state space, it is reasonable to truncate the knockoff states to the same compact space. Due to the boundedness of the state and the residual, applications of the Hoeffding's inequality \citep{hoeffding1994probability} and the Bonferroni's inequality allow us to show that both
\begin{eqnarray*}
    \max_j \Big|\sum_{(S_j,A,Y)\in \widetilde{\mathcal{D}}_k} S_j \mathbb{I}(A=a)Y\Big|\qquad\hbox{and}\qquad \Big|\sum_{(\widetilde{S}_j,A,Y)\in \widetilde{\mathcal{D}}_k} \widetilde{S}_j \mathbb{I}(A=a)Y\Big|
\end{eqnarray*}
are upper bounded by $O(\sqrt{n^{-1}\log (dn)})$, uniformly for each $j$, with probability at least $1-O((nd)^{-C})$ for any constant $C>0$. Together with Condition~\ref{ass:tuning} and the restricted eigenvalue condition established in Part I, this allows us to apply the proof techniques in Proposition C.3 of \citet{zhou2009restricted} to establish the following oracle inequalities, 
\begin{eqnarray}\label{eqn:oracleinequality}
    \|\widehat{\bm{b}}^{(a)}-\bm{b}^{(a)}\|_2=O(\sqrt{d_*n^{-1}\log (dn)})\,\,\hbox{and}\,\,\|\widehat{\bm{b}}^{(a)}-\bm{b}^{(a)}\|_1=O(d_*\sqrt{n^{-1}\log (dn)}),
\end{eqnarray}
uniformly for any $a\in \aspace$ with probability at least $1-O((nd)^{-C})$, where $\bm{b}^{(a)}$ denotes the ground truth.

Similarly, according to Remark \ref{remarkbridge}, when substituting the independence assumption with exponential $\beta$-mixing, the same oracle inequalities are maintained, at the cost of a slightly reduced probability, which is now $1-O((nd)^{-C})-O\Big\{K^{-1}(NT)^{1-k_0\log(1/\rho)}\Big\}$. 

\textbf{\textit{Part III}}. Let $\widehat{\bm{b}}$ and $\bm{b}$ be defined as $\widehat{\bm{b}}=\max\limits_a |\widehat{\bm{b}}^{(a)}|$ and $\bm{b}=\max\limits_a |\bm{b}^{(a)}|$, respectively, where both the maximum and absolute value operations are applied element-wise. It follows from \eqref{eqn:oracleinequality} that
\begin{eqnarray}\label{eqn:oracleinequality2}
    \|\widehat{\bm{b}}-\bm{b}\|_2=O(\sqrt{d_*n^{-1}\log (dn)})\,\,\hbox{and}\,\,\|\widehat{\bm{b}}-\bm{b}\|_1=O(d_*\sqrt{n^{-1}\log (dn)}).
\end{eqnarray}
These bounds confirm that the oracle inequality also holds for $\widehat{\bm{b}}$. Under Conditions \ref{ass:signal} and \ref{ass:noties}, the proof of \eqref{eqn:TPRind} follows very similarly to the proof of Theorem 3 of \citet{fan2019rank}; hence, we omit further details to avoid redundancy. 

This completes the proof of Theorem \ref{thm:power-lasso}.
\end{proof}

\subsubsection{Power of the Sequential Procedure}\label{sec:powersequential}
\begin{proof}[Proof of Theorem \ref{thm:type-II}]
    We focus on lower bounding the probability that, for each $k \in \{1, \ldots, K\}$, $\widehat{G}_k$ contains the minimal sufficient state $G_M$. Specifically, we aim to show that 
    \begin{eqnarray}\label{eqn:lowerbound}
        \mathbb{P}(G_M \subseteq \widehat{G}_k)\ge 1-O\Big\{(nd)^{-C}\Big\}-O\Big\{K^{-1}(NT)^{1-k_0 \log (\rho^{-1})}\Big\}
    \end{eqnarray}
    Once this has been proven, using similar arguments in \eqref{eqn:Markovinequality}, it can be shown that the probability of $\widehat{G}$ containing $G_M$ is only inflated by a factor of $(1-\alpha)^{-1}$. The proof of Theorem \ref{thm:type-II} can thus be completed. 

    It remains to prove \eqref{eqn:lowerbound}. Similar to the proofs of Theorems \ref{thm1} -- \ref{thm:power-lasso}, it suffices to show that with independent data, the probability can be lower bounded by
    \begin{eqnarray}\label{eqn:lowerbound1}
        \mathbb{P}(G_M \subseteq \widehat{G}_k)\ge 1-O\Big\{(nd)^{-C}\Big\}.
    \end{eqnarray}
    Under Conditions \ref{ass:external-dataset} and \ref{ass:knockoffs} -- \ref{ass:tuning}, the arguments presented in Part I and Part II of the proof of Theorem \ref{thm:power-lasso} remains applicable. Consequently, the oracle inequalities for the LASSO estimator \eqref{eqn:oracleinequality} remain valid with a probability of $1-O((nd)^{-C})$. Additionally, since the constant $C$ can be chosen sufficiently large, it follows from the Bonferroni's inequality that, uniformly across all responses (whether the reward or the $j$th next state for any $1\le j\le d$), the oracle inequalities are satisfied with probability $1-O((nd)^{-C})$. Therefore,  
    \begin{eqnarray*}
        \max_{1\le j\le d+1} \|\widehat{\bm{b}}^{(a)}_j-\bm{b}^{(a)}_j\|_2=O(\sqrt{d_* n^{-1}\log (dn)}),\,\,\max_{1\le j\le d+1} \|\widehat{\bm{b}}^{(a)}_j-\bm{b}^{(a)}_j\|_1=O(d_* \sqrt{n^{-1}\log (dn)}),
    \end{eqnarray*}
    where the first $d$th elements in $\bm{b}_j^{(a)}$ corresponds to the $j$th column of $\bm{B}^{(a)}$, whereas the last $d$the elements form a zero vector,  and $\widehat{\bm{b}}^{(a)}_j$ denotes the corresponding LASSO estimator. Moreover, similar to \eqref{eqn:oracleinequality2}, we can show that
    \begin{eqnarray}\label{eqn:oracleinequality3}
        \max_{1\le j\le d+1} \|\widehat{\bm{b}}_j-\bm{b}_j\|_2=O(\sqrt{d_* n^{-1}\log (dn)}),
    \end{eqnarray}
    where $\widehat{\bm{b}}_j=\max\limits_{a \in \mathcal{A}} |\widehat{\bm{b}}_j^{(a)}|$ and $\bm{b}_j=\max\limits_{a \in \mathcal{A}} |\bm{b}_j^{(a)}|$. Let $G_j^*$ denote the set of indices of those strong signals in the $j$th column of $\bm{B}$ whose coefficients are much larger than $\sqrt{d_* n^{-1}\log (dn)}$. It follows from \eqref{eqn:oracleinequality3} that for any index $g\in G_j^*$, their $W$-statistic $W_g=\widehat{b}_{j,g}-\widehat{b}_{j+d,g}$ is much larger than $\sqrt{d_* n^{-1}\log (dn)}$, whereas the remaining $W$-statistics are of the order $O(\sqrt{d_* n^{-1}\log (dn)})$. By definition, the knockoff threshold $\tau$ or $\tau_+$ cannot exceed $\min\limits_{g\in G_j^*} W_g$; see also the proof of Lemma 6 of \citep{fan2019rank}. Consequently, when we apply Algorithm \ref{alg:2} with response being the $j$th next state (or the immediate reward), the set $G_j^*$ will be included in the output with large probability. Moreover, uniformly across all responses, the inclusion of  $G_j^*$ occurs with probability $1-O\Big\{(nd)^{-C}\Big\}$, i.e.,
    \begin{eqnarray}\label{eqn:oracleinequality4}
        \mathbb{P}\Big( \bigcap_{1\le j\le d+1} \Big\{G_j^*\subseteq \widehat{G}_k(\{j\})\Big\} \Big)\ge 1-O\Big\{(nd)^{-C}\Big\},
    \end{eqnarray}
    where we recall that $\widehat{G}_k(\{j\})$ denotes the output of Algorithm \ref{alg:2} with response being the $j$th next state for $1\le j\le d$ and the immediate reward for $j=d+1$. 

    Under the minimal signal condition in Condition \ref{ass:signal2}, $G_M$ remains identifiable after removing edges in $\mathcal{G}(\bm{B})$ with moderate or weak signals. Consequently, using similar arguments in Section \ref{sec:indedataproof}, the event $G_M\subseteq \widehat{G}_k$ is implied by the union of the events $\Big\{G_j^*\subseteq \widehat{G}_k(\{j\})\Big\}$ for $1\le j\le d+1$. It follows from \eqref{eqn:oracleinequality4} that \eqref{eqn:lowerbound1} is proven. This completes the proof. 
\end{proof}

\section{Extensions and Discussions}\label{apdx:extension}
This section illustrates several practically meaningful extensions of the proposed method. Conceptionally, we extend the definition of minimal sufficient states to time-inhomogeneous MDPs (see Section~\ref{sec:extension-time-dependence}). Methodologically, we introduce a Bayesian method to control type-II errors in model-based selection (see Section~\ref{sec:control-type-II-error}). Algorithmically, we adapt the SEEK method to handle trajectories of varying horizons (see Section~\ref{sec:bestK-different-horizon}). Theoretically, we develop the theoretical guarantees under a weaker temporal dependence condition (see Section~\ref{sec:poly-beta-mixing}). Finally, we close this section with a comparison to model-free selection methods (see Section~\ref{sec:modelbasedmodelfree}).

\subsection{Extension I: Time-Dependent Minimal Sufficient State}\label{sec:extension-time-dependence}

This paper focus on a time-homogeneous Markov decision process where reward function and state transition function are invariant with respect to time. In practice, one possible scenario is that the reward or the transition function is nonstationary over time \citep{cheung2020reinforcement,li2022testing}. At this scenario, the minimal sufficient state becomes nonstationary as well. To address to this potential issue, below we have extended our proposal to handle nonstationary environments.
 
Specifically, for a given offline dataset covering the period from $T_0$ to $T$, the idea is to impose the following piecewise constant condition to approximate the nonstationray environment: 
\begin{condition}\label{cond:piecewise}
	Suppose there are $K$ many change points from $T_0 = 0$ to $T_{K+1}=T$ such that for any $0<k\le K$, the reward and state transition functions during the time interval $(T_k, T_{k+1}]$ remains constant over time. 
\end{condition}
Under Condition \ref{cond:piecewise}, each time interval $(T_k, T_{k+1}]$ has its own minimal sufficient state. To identify them, we first employ recently developed tools from the RL literature to identify the most recent change point $T_K$ in the reward and transition functions \citep{li2022testing,wang2023robust}. Let $\widehat{T}$ denote the estimated change point. We next apply the proposed SEEK to the data subset within the time interval $(\widehat{T}, T]$ to estimate its minimal sufficient state. Next, we adopt the following iterative procedure: 
\begin{itemize}
	\item Apply change point detection to the data subset within $[T_0, \widehat{T}]$ to identify the most recent change point that occured before $\widehat{T}$, denoted by $\widetilde{T}$.
	\item Apply SEEK to the data subset within the time interval $(\widetilde{T}, \widehat{T}]$ to estimate its minimal sufficient state.
	\item Update $\widehat{T}$ to $\widetilde{T}$.
\end{itemize}
This process continues until no further change points are detected within $[T_0, \widehat{T}]$. Finally, we output all the estimated change points and minimal sufficient states. 

Theoretically, we anticipate that our theories established in stationary settings will continue to hold with minor modifications in this context. Under Condition \ref{cond:piecewise}, the estimated change point is known to converge at a rate of $(NT)^{-1}$, which is much faster than the parameter rate of $(NT)^{-1/2}$ \citep[see e.g.,][]{fryzlewicz2014wild}. As a result, the FDR of the selected state will experience only slight inflation, and the power of the proposed method will be slightly decreased.

\subsection{Extension II: Controlling Type-II Error}\label{sec:control-type-II-error}

When the goal is solely to screen out the states that are not in the minimum sufficient state. In that case, one should consider metrics similar to type-II errors concerning the false negatives. For example, one may want to control the probability of having one or more false negatives within a target level $\alpha$, which can be viewed as a generalization of the familywise error rate to the type-II error. In that case, we may solve the problem under a Bayesian formulation. Specifically, the Markov property  allows us to factorize the conditional distribution of $(R_{it}, A_{it}, \mathbf{S}_{i,t+1})$, $t=1, \ldots, T-1$, $i=1, \ldots, N$, given $\mathbf{S}_{i1}, i=1, \ldots, N$, as
\begin{align*}
	\prod_{i=1}^{N}\prod_{t=1}^{T-1} f_1(R_{it}, \mathbf{S}_{i,t+1,G}|\mathbf{S}_{i,t,G},A_{it})f_2(\mathbf{S}_{i,t+1, G^{c}}\vert \mathbf{S}_{it}, A_{it},R_{it},\mathbf{S}_{i,t+1,G})f_3(A_{it}\vert \mathbf{S}_{it}),
\end{align*} 
where $f_1$, $f_2$, and $f_3$ are unknown functions.

We treat $G$ as an unknown random set, impose non-parametric Bayes priors on the functions $f_1$-$f_3$, as in \cite{ghosal2017fundamentals}, and a prior distribution on $G$ over all subsets of $\{1, \ldots, p\}$.  We then compute the posterior distribution of $G$ given the observed data (e.g., by Markov chain Monte Carlo methods) and select $\widehat{G}$ as the smallest subset of $\{1, \ldots, p\}$ satisfying the posterior probability $\mathbb{P}(|G \setminus \widehat{G}| \geq 1 \given \text{Data}) \leq \alpha$, which in turn guarantees the type-II familywise error rate to satisfy $\mathbb{P}(|G \setminus \widehat{G} | \geq 1) \leq \alpha$ in the Bayesian sense. Even if we evaluate this Bayesian procedure from a frequentist point of view, under suitable regularity conditions, one can still show the posterior distribution of $G$ to concentrate around the true sufficient minimal state, which further implies  $\mathbb{P}(|G \setminus \widehat{G}| \geq 1)$ to be asymptotically zero and the type II familywise error rate to be asymptotically controlled. For the posterior concentration to hold, we require regularities on the true functions $f_1$ to $f_3$, and the corresponding regularities on their priors. If $p$ diverges with $N$ and $T$, a condition is further needed on its diverging speed relative to $N$ and $T$. We refer to Chapter 10.5 in \cite{ghosal2017fundamentals}, for the establishment of such posterior concentration results. 

\subsection{Extension III: Trajectories with Different Horizons}\label{sec:bestK-different-horizon}
Our three-step algorithm in Section~\ref{sec:seek} can readily accommodate cases where the horizon differs across trajectories. Recall that the algorithm involves three key steps: data splitting, iterative selection, and majority voting. We have observed that the varying horizons only affect the initial data splitting step. Once the data is divided into several subsets, the iterative selection and majority voting steps proceed without modification. Specifically, the varying horizons influence the data splitting by affecting both (i) the determination of the optimal number of splits $K$ and (ii) the division of the data into $K$ subsets. Actually, both procedures~(i) and (ii) can be adapted to accommodate varying horizons as we can see below.

\textbf{\textit{Determining the optimal $K$}}. Assume the offline data consists of transition tuples $\cup_{i=1}^N\{(\mathbf{S}_{i, t}, A_{i, t}, R_{i, t}, \mathbf{S}_{i, t+1})\}_{i=1}^{T_i}$ where $T_i$ is the length of horizon of the $i$th trajectory and can vary for different $i$. When the horizon is constant across trajectories, the first two steps in Section~\ref{sec:bestK-algo} remain applicable to produce the least square estimators for the $\beta$-mixing coefficients $\{\widehat{\beta}(k)\}_k$, and in the last step, we set 
\begin{align*}
	\widehat{K} = \argmin\limits_k\{k\ge 1: \sum_{i=1}^N T_i \widehat{\beta}(k)/k \le \delta \}
\end{align*}
for some $\delta>0$. 
    
\textbf{\textit{Splitting into $K$ data subsets}}. For each $1\le k\le \widehat{K}$, we set $T_i(k)$ be the largest integer no larger than $T_i$ such that $T_i(k)-k$ is divisible by $\widehat{K}$. We next allocate to the data subset $\mathcal{D}_k$ to contain the $i$th trajectories' $k$th, $(k+\widehat{K})$th, $\cdots$, and $(k+T_i(k)\widehat{K})$th tuples for all $1\le i\le N$. It is immediate to see that within each $\mathcal{D}_k$, any two different tuples either come from the same trajectory with a time gap of at least $\widehat{K}$, or belong to two different trajectories, becoming ``approximately independent''. 

\subsection{Extensions IV: Polynomially $\beta$-mixing Condition}\label{sec:poly-beta-mixing}

This section introduces the polynomially $\beta$-mixing condition to relax the exponential $\beta$-mixing imposed in Condition~\ref{ass:mixing}. This condition is formally introduced below. 
\begin{condition}[Stationarity and polynomial $\beta$-mixing]\label{ass:poly-mixing}
	The process $\{(\mathbf{S}_t, A_t, R_t)\}_{t\ge 0}$ is stationary and polynomially $\beta$-mixing, i.e., its $\beta$-mixing coefficients $\beta(k)$ (see its expression in~\eqref{eq:beta-i}) satisfy
	\begin{equation}\label{eq:poly-beta-i}
		\beta(k) = O(k^{-\rho})
	\end{equation}
	for some constant $\rho > 1$.
\end{condition}
It is immediate to see that Condition \ref{ass:poly-mixing} is much weaker than exponential $\beta$-mixing, given that the $\beta$-mixing coefficients under this condition decay to zero at a much lower rate. Under this new condition, the procedure for estimating $K$ as well as Theorems~\ref{thm1}-\ref{thm2} shall be modified accordingly. We would present the modified procedure in Sections~\ref{sec:number-split-poly} and the updated theoretical results in Section~\ref{sec:theory-poly}. 

\subsubsection{Determine the Number of Splits}\label{sec:number-split-poly}
We next detail the methodology to determine the optimal number of splits under polynomial $\beta$-mixing. We impose the following model assumption for the first-step estimators: 
\begin{align}\label{eqn:polyinitial}
	\widetilde{\beta}(k) = \eta_0 + a_0 k^{-b_0} + \epsilon_k.
\end{align}
Compared to \eqref{eq:init-beta-estimator}, the main effect term in \eqref{eqn:polyinitial} decays polynomially in $k$. The parameters $a_0, b_0$ and $\eta_0$ %are all positive constants to be estimated. We 
can be similarly estimated by minimizing the following least square loss function
\begin{align*}
	(\widehat{\eta}, \widehat{a}, \widehat{b})=\arg\min\limits_{\eta_0, a_0, b_0} \sum_{k=1}^{K_0}\left(\widetilde{\beta}(k) - \eta_0 - a_0 k^{-b_0}\right)^2.
\end{align*}
%to attain estimators $\widehat{\eta}_0, \widehat{a}_0$ and $\widehat{b}_0$. 
Finally, we set the number of splits $\widehat{K}$ as
\begin{align*}
	\widehat{K} = \argmin \left\{k\geq 1: \frac{NT}{k} \widehat{a}_0 k^{-\widehat{b}_0}  \leq \delta \right\}. 
\end{align*}

\subsubsection{Theoretical Guarantees}\label{sec:theory-poly}
Under the new temporal dependence condition (Condition~\ref{ass:poly-mixing}), we can derive a new theoretical result concerning FDR. The result is presented below, along with its proofs.

\begin{theorem}[FDR]\label{thm:fdr-poly}
	Suppose Conditions \ref{ass:poly-mixing}, \ref{ass:external-dataset} and \ref{condnull} hold. Set the number of sample splits $K = k_0 (NT)^{\frac{1}{\rho-1}}$ for a constant $k_0>0$ where $\rho$ is defined in Condition \ref{ass:poly-mixing}. Then for any response $Y$ ($R$ or $S'_{l}$ for a given $1\le l\le d$), $\widehat{G}_k$ obtained by Algorithm \ref{alg:2} with standard knockoffs satisfies
	\begin{eqnarray*}
		\textrm{mFDR}(\widehat{G}_k)\leq q \exp(\epsilon)+\mathbb{P}\Big(\max_{j\in \mathcal{H}_0}\widehat{\textrm{KL}}_{j}>\epsilon\Big)+O\Big\{ K^{-1} (NT)^{-\frac{1}{\rho-1}} \Big\},
	\end{eqnarray*}
	for any $\epsilon>0$, where $\widehat{\textrm{KL}}_{j}$ is the same as that in Theorem~\ref{thm1}. In addition, $\widehat{G}_k$ obtained by implementing the knockoffs+ in Algorithm \ref{alg:2} satisfies 
	\begin{align*}
		\textrm{FDR}(\widehat{G}_k)\leq q \exp(\epsilon)+\mathbb{P}\Big(\max_{j\in \mathcal{H}_0}\widehat{\textrm{KL}}_{j}>\epsilon\Big)+O\Big\{ K^{-1} (NT)^{-\frac{1}{\rho-1}} \Big\}.
	\end{align*}
\end{theorem}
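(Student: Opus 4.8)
The plan is to reproduce the three-part architecture used to prove Theorem~\ref{thm1}, changing only the step that controls the temporal dependence. Recall that proof decomposes into: (i) the flip-sign property of the feature-importance statistics (Lemma~\ref{lemma:c.3}), (ii) asymptotic (m)FDR control \emph{under independent data} (Lemma~\ref{thm:indep-fdr}), yielding the bound $q\exp(\epsilon)+\mathbb{P}(\max_{j\in\mathcal{H}_0}\widehat{\textrm{KL}}_j>\epsilon)$, and (iii) a coupling argument via Berbee's lemma (Lemma~\ref{lemma:c.5}) that transfers the independent-data guarantee to the MDP data at the cost of the coupling-failure probability. Steps (i) and (ii) are entirely agnostic to the dependence structure of the data and to the rate at which $\beta(k)$ decays; they have already been established and carry over verbatim. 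The separate term $\mathbb{P}(\max_{j\in\mathcal{H}_0}\widehat{\textrm{KL}}_j>\epsilon)$ is governed by the external unlabeled data $\mathcal{U}$ under Condition~\ref{ass:external-dataset}, not by the per-subset sample size, so it too is unaffected. Consequently the \emph{only} quantity that must be recomputed is the coupling-failure bound analogous to \eqref{eqn:someimportantinequality}, and then the assembly via the expectation split in \eqref{eq:one-step-fdr}.

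First I would invoke Lemma~\ref{lemma:c.5} on each subset $\mathcal{D}_k$, whose consecutive tuples have index gap at least $K$. By stationarity the coefficients in \eqref{eq:beta-i} satisfy $\beta_i=O(\beta(K))$, and Condition~\ref{ass:poly-mixing} gives $\beta(K)=O(K^{-\rho})$ in place of the exponential $O(\rho^K)$. Summing over the $NT/K$ tuples in $\mathcal{D}_k$ then yields the coupling-failure probability
$$\frac{NT}{K}\,O(K^{-\rho})=O\!\left(NT\,K^{-\rho-1}\right)=O\!\left(K^{-1}(NT)^{-1/(\rho-1)}\right),$$
where the last equality is the heart of the calculation: substituting $K=k_0(NT)^{1/(\rho-1)}$ gives $K^{-\rho}=k_0^{-\rho}(NT)^{-\rho/(\rho-1)}$, hence $NT\cdot K^{-\rho}=k_0^{-\rho}(NT)^{1-\rho/(\rho-1)}=k_0^{-\rho}(NT)^{-1/(\rho-1)}$. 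This is precisely the error order claimed in the theorem, confirming that the polynomial choice of $K$ (growing as a power of $NT$ rather than logarithmically) is calibrated exactly so that the dependence penalty decays at rate $(NT)^{-1/(\rho-1)}$.

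With this bound in hand, I would finish exactly as in \eqref{eq:one-step-fdr}: split $\mathbb{E}[|\widehat{G}_k\cap\mathcal{H}_0|/(|\widehat{G}_k|\vee 1)]$ according to whether the Berbee coupling succeeds, bound the first piece by Lemma~\ref{thm:indep-fdr} (using that the ratio is at most one) and the second piece by the coupling-failure probability above. This gives the stated mFDR bound for standard knockoffs; the knockoffs+ version follows identically, since Lemma~\ref{thm:indep-fdr} supplies the corresponding FDR statement under independence. The main obstacle is genuinely only bookkeeping: verifying the exponent arithmetic and confirming that $\beta_i=O(K^{-\rho})$ remains valid when $K$ itself diverges polynomially in $NT$ (so that the mixing assumption is evaluated at a growing lag). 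I expect no new conceptual difficulty, as neither the flip-sign property nor the independent-data knockoff analysis requires any minimum per-subset sample size, so the weaker polynomial decay affects the bound through the coupling term alone.
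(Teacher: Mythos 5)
Your proposal is correct and follows essentially the same route as the paper: the paper's own proof simply re-runs the argument for Theorem~\ref{thm1} with $K=k_0(NT)^{1/(\rho-1)}$ and the bound $\beta_i\le O(K^{-\rho})=O(k_0^{-\rho}(NT)^{-\rho/(\rho-1)})$ substituted into the coupling step, exactly as you do. Your exponent arithmetic showing $\frac{NT}{K}O(K^{-\rho})=O(K^{-1}(NT)^{-1/(\rho-1)})$ is correct and matches the stated error term.
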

\begin{proof}
The proof for Theorems~\ref{thm:fdr-poly} follows similar steps in proving Theorems~\ref{thm1}. Specifically, the proof in Section~\ref{sec:proofthm1depdata} remains applicable under the polynomial $\beta$-mixing condition, with two key modifications: (i) $K$ is replaced by $k_0(NT)^{\frac{1}{\rho-1}}$, and (ii) the bound on $\beta_i$ (given in Equation~\eqref{eq:beta_i}) is changed to:
\begin{align}\label{eq:beta_i_poly}
\beta_i \leq \beta(i) \leq O(K^{-\rho}) = O(k_0^{-\rho} (NT)^{\frac{-\rho}{\rho-1}}),
\end{align}
under Condition~\ref{ass:poly-mixing}. Substituting this new bound for $\beta_i$ into the proof of Theorem~\ref{thm1} directly yields the result of Theorem~\ref{thm:fdr-poly}.
\end{proof}

Next, we present results regarding Type-I error, TPR, and Type-II error. For brevity, we omit the proofs of these theorems, as they follow similar lines of reasoning as those for Theorems~\ref{thm2}-\ref{thm:type-II}.
\begin{theorem}[Type-I error]\label{thm:type-I-error-poly}
	Under the same conditions in Theorem~\ref{thm:fdr-poly}, then as the FDR target $q$ approaches zero, the probability that Algorithm~\ref{alg:1} with knockoffs+ selects any null variable is upper bounded by
	$O(\alpha^{-1} d_0^2 q) + O\Big\{\alpha^{-1}K^{-1}(NT)^{-\frac{1}{\rho-1}}\Big\}$.
\end{theorem}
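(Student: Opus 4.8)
The plan is to mirror the two-phase argument already used for Theorem~\ref{thm2}, changing only the single ingredient that controls the temporal dependence. As in Section~\ref{sec:indedataproof}, I would first bound the probability that $\widehat{G}_k$ from one run of Algorithm~\ref{alg:1} selects a null variable \emph{as if the transition tuples within $\mathcal{D}_k$ were independent}. This part of the argument is entirely agnostic to whether the mixing is exponential or polynomial: it rests only on (i) the recursive decomposition of $\{\widehat{G}_k\cap\mathcal{H}_0\neq\emptyset\}$ across iterations, (ii) the observation that the elementwise (per-coordinate) construction of the responses $S_l'$ lets us replace a union over the $2^{d_0}-1$ nonempty subsets $G$ with $G\cap\mathcal{H}_0=\emptyset$ by a union over just the $d_0$ coordinates, and (iii) Lemma~\ref{lemma:onestep-false-inclusion}, which bounds $\mathbb{P}(\widehat{G}_k(\{j\})\cap\mathcal{H}_0\neq\emptyset)$ by $O(d_0 q)$ for each $j$. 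All three facts transfer verbatim, yielding the independent-data bound $O(d_0^2 q)$ on $\mathbb{P}(\widehat{G}_k\cap\mathcal{H}_0\neq\emptyset)$.

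The second phase replaces independence with Condition~\ref{ass:poly-mixing} via Berbee's coupling (Lemma~\ref{lemma:c.5}), exactly as in Section~\ref{sec:thm2depdataproof}. Letting $\mathcal{I}$ denote the coupling-failure event (cf.\ Equation~\eqref{eqn:someimportantinequality}), I would write
\begin{equation*}
	\mathbb{P}(\widehat{G}_k\cap\mathcal{H}_0\neq\emptyset)\le \mathbb{P}(\mathcal{I})+\mathbb{P}\big(\{\widehat{G}_k\cap\mathcal{H}_0\neq\emptyset\}\cap\mathcal{I}^c\big),
\end{equation*}
where the second term is controlled by the independent-data bound $O(d_0^2 q)$. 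The only genuinely new computation is $\mathbb{P}(\mathcal{I})$. With the scaling $K=k_0(NT)^{1/(\rho-1)}$ from Theorem~\ref{thm:fdr-poly}, each coupling coefficient obeys $\beta_i\le\beta(K)=O(K^{-\rho})$ under Condition~\ref{ass:poly-mixing}, so summing the $NT/K$ coupling terms gives
\begin{equation*}
	\mathbb{P}(\mathcal{I})=\frac{NT}{K}\,O(K^{-\rho})=O\big(NT\,K^{-\rho-1}\big)=O(K^{-2})=O\big(K^{-1}(NT)^{-1/(\rho-1)}\big),
\end{equation*}
where the third equality uses $NT=\Theta(K^{\rho-1})$ (equivalently $(NT)^{-1/(\rho-1)}=\Theta(K^{-1})$), which follows directly from the choice of $K$. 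This is precisely the polynomial analog of the exponential remainder in Theorem~\ref{thm2}, and it matches the term already appearing in Theorem~\ref{thm:fdr-poly}.

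Finally, I would aggregate across the $K$ data subsets through the majority-vote step exactly as in Equation~\eqref{eqn:Markovinequality}: since $\widehat{G}$ contains a null variable only if at least $\alpha K$ of the $\widehat{G}_k$ do, Markov's inequality inflates both contributions by $\alpha^{-1}$, giving the claimed bound $O(\alpha^{-1}d_0^2 q)+O\{\alpha^{-1}K^{-1}(NT)^{-1/(\rho-1)}\}$. The main (indeed essentially the only) obstacle is the exponent bookkeeping in the display for $\mathbb{P}(\mathcal{I})$ together with checking that the chosen scaling $K=k_0(NT)^{1/(\rho-1)}$ simultaneously renders $\mathbb{P}(\mathcal{I})$ negligible and keeps the per-subset sample size $n=NT/K$ large enough for Lemma~\ref{lemma:onestep-false-inclusion} (and hence the independent-data bound) to remain valid; everything else is an unchanged transcription of the Theorem~\ref{thm2} argument.
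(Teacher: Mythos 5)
Your proposal is correct and follows exactly the route the paper intends: the paper omits this proof precisely because it is the Theorem~\ref{thm2} argument (independent-data bound $O(d_0^2 q)$ via the elementwise decomposition and Lemma~\ref{lemma:onestep-false-inclusion}, plus Berbee coupling, plus the $\alpha^{-1}$ majority-vote inflation) with the single substitution $\beta_i = O(K^{-\rho})$ and $K = k_0(NT)^{1/(\rho-1)}$ carried over from Theorem~\ref{thm:fdr-poly}. Your exponent bookkeeping for $\mathbb{P}(\mathcal{I}) = O(NT\,K^{-\rho-1}) = O\bigl(K^{-1}(NT)^{-1/(\rho-1)}\bigr)$ checks out, so nothing further is needed.
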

\begin{theorem}[TPR]\label{thm:power-lasso-poly}
	Support Conditions \ref{ass:external-dataset}, \ref{ass:signal}, and \ref{ass:knockoffs} to \ref{ass:poly-mixing} hold. Under the same notions in Theorem~\ref{thm:power-lasso}, the selected set $\widehat{G}_k$ returned by Algorithm \ref{alg:2} satisfies
    \begin{align}
        \mathbb{E}\left(\frac{|\widehat{G}_k\cap G(\{i\})|}{|G(\{i\})|}\right) \geq 1 -O(\kappa_n^{-1}) - O\left\{K^{-1}(NT)^{-\frac{1}{\rho-1}}\right\}-O\left\{(nd)^{-C}\right\},
    \end{align}
    for any large constant $C>0$. 
\end{theorem}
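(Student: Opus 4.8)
The plan is to reuse the three-part structure of the proof of Theorem~\ref{thm:power-lasso} verbatim for the independent-data analysis, and to replace only the exponential coupling bound with its polynomial counterpart when transferring the result to MDP data. Concretely, I would first argue that the independent-data lower bound \eqref{eqn:TPRind}, namely
\begin{eqnarray*}
    \mathbb{E}\left(\frac{|\widehat{G}_k\cap G(\{i\})|}{|G(\{i\})|}\right) \geq 1 -O(\kappa_n^{-1})-O\left\{(nd)^{-C}\right\},
\end{eqnarray*}
continues to hold with $n=NT/K$. This step is unchanged: Part~I (the restricted eigenvalue condition on $\widehat{\bm{\Omega}}_k^{(a)}$), Part~II (the LASSO oracle inequalities \eqref{eqn:oracleinequality}), and Part~III (the TPR bound accommodating moderate signals under Condition~\ref{ass:signal}) all rely only on the per-subset sample size and on boundedness of the state, knockoffs, and residuals, and never on the specific rate at which the mixing coefficients decay.

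Second, I would transfer this independent-data bound to the MDP via Berbee's coupling (Lemma~\ref{lemma:c.5}), exactly following the bridge described in Remark~\ref{remarkbridge}. The only modification is quantitative: with the new split $K=k_0(NT)^{1/(\rho-1)}$ and the polynomial decay \eqref{eq:poly-beta-i}, the per-gap coefficients obey $\beta_i\le\beta(K)=O(K^{-\rho})$ rather than $O(\rho^K)$. Repeating the computation leading to \eqref{eqn:someimportantinequality}, the probability that the coupled i.i.d. surrogate differs from the observed tuples in $\calD_k$ is bounded by $(NT/K)\cdot O(K^{-\rho})=O\{K^{-1}(NT)^{-1/(\rho-1)}\}$ after substituting $K=k_0(NT)^{1/(\rho-1)}$. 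Since the TPR ratio never exceeds $1$, its expectation under the MDP differs from its expectation under the coupled i.i.d. data by at most this coupling-failure probability, which supplies the extra additive term $O\{K^{-1}(NT)^{-1/(\rho-1)}\}$ in the statement and completes the argument.

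The main obstacle I anticipate is ensuring that the per-subset sample size $n=NT/K$ still diverges, so that the leading terms $O(\kappa_n^{-1})$ and $O((nd)^{-C})$ are genuinely vanishing. Under $K=k_0(NT)^{1/(\rho-1)}$ one has $n\asymp (NT)^{(\rho-2)/(\rho-1)}$, which diverges only when $\rho>2$; for slower mixing one must either enlarge $NT$ accordingly or reinterpret the sparsity, signal-strength (Condition~\ref{ass:signal}), and tuning conditions relative to this smaller $n$. A secondary point requiring care is that the high-probability events underlying Parts~I and~II (e.g., the restricted eigenvalue condition, which holds with probability $1-O((nd)^{-Cd_*})$ for i.i.d. data) survive the coupling only up to the coupling cost; because the polynomial coupling cost is larger than in the exponential case, I would verify that it remains dominated by, or is absorbed into, the stated error term $O\{K^{-1}(NT)^{-1/(\rho-1)}\}$, so that no term in \eqref{eqn:TPRind} is inflated.
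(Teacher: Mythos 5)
Your proposal matches the paper's own route: the paper omits a separate proof and states that the result follows the argument for Theorem~\ref{thm:power-lasso}, with exactly the substitution you make --- keeping Parts~I--III of the independent-data analysis intact and replacing the coupling cost $(NT/K)\,O(\rho^{K})$ by $(NT/K)\,O(K^{-\rho})=O\{K^{-1}(NT)^{-1/(\rho-1)}\}$ under $K=k_0(NT)^{1/(\rho-1)}$, just as in the proof of Theorem~\ref{thm:fdr-poly}. Your additional observation that $n=NT/K\asymp (NT)^{(\rho-2)/(\rho-1)}$ diverges only for $\rho>2$ is a correct and worthwhile caveat about when the leading terms actually vanish, but it does not affect the validity of the stated bound.
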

\begin{theorem}[Type-II error]\label{thm:type-II-poly}
    Under Conditions \ref{ass:mixing}, \ref{ass:external-dataset}, \ref{ass:signal2}, and \ref{ass:knockoffs} to \ref{ass:tuning}, the probability that the selected set $\widehat{G}$ returned by Algorithm \ref{alg:1} contains the minimal sufficient state is lower bounded by $$1-O\left\{(1-\alpha)^{-1}(nd)^{-C}\right\}-O\left\{(1-\alpha)^{-1}K^{-1}(NT)^{-\frac{1}{\rho-1}}\right\}.$$ 
\end{theorem}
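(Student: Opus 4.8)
The plan is to mirror the proof of Theorem~\ref{thm:type-II} given in Section~\ref{sec:powersequential}, changing only the temporal-dependence bookkeeping to account for the polynomial rather than exponential mixing rate. As in that argument, I would first reduce the statement about $\widehat{G}$ to a per-subset statement about each $\widehat{G}_k$, then lift back via majority voting. Concretely, the goal is to show that for every $k$,
\begin{equation*}
	\mathbb{P}(G_M \subseteq \widehat{G}_k)\ge 1-O\left\{(nd)^{-C}\right\}-O\left\{K^{-1}(NT)^{-\frac{1}{\rho-1}}\right\},
\end{equation*}
after which the familywise analog of \eqref{eqn:Markovinequality} shows that the event $G_M\subseteq\widehat{G}$ fails only with probability inflated by the factor $(1-\alpha)^{-1}$, producing exactly the claimed bound. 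This reduction step is purely combinatorial and is identical to the exponential case, so it carries over verbatim.

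The substantive content lives in the independent-data lower bound $\mathbb{P}(G_M \subseteq \widehat{G}_k)\ge 1-O((nd)^{-C})$, which is established exactly as in the proof of Theorem~\ref{thm:type-II}. I would invoke Parts~I and~II of the proof of Theorem~\ref{thm:power-lasso} to obtain the restricted eigenvalue property of $\widehat{\bm{\Omega}}_k^{(a)}$ and the resulting LASSO oracle inequalities \eqref{eqn:oracleinequality3}, uniformly across all $d+1$ responses via Bonferroni. From these, the strong-signal set $G_j^*$ in each column is guaranteed to be selected by the knockoff filter with probability $1-O((nd)^{-C})$, giving \eqref{eqn:oracleinequality4}. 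Finally, Condition~\ref{ass:signal2} ensures that $G_M$ remains identifiable after deleting moderate- and weak-signal edges of $\mathcal{G}(\bm{B})$, so that $G_M\subseteq\widehat{G}_k$ is implied by the intersection $\bigcap_{1\le j\le d+1}\{G_j^*\subseteq\widehat{G}_k(\{j\})\}$. None of these steps references the mixing structure, so they are reproduced with no modification.

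The bridge from independent to dependent data is the only place where the polynomial condition enters. Here I would apply Berbee's coupling (Lemma~\ref{lemma:c.5}) as in Section~\ref{sec:proofthm1depdata}, but with the mixing bound from the polynomial setting: under Condition~\ref{ass:poly-mixing} and the choice $K=k_0(NT)^{1/(\rho-1)}$, the per-subset coupling failure probability becomes $\sum_i\beta_i = O((NT/K)K^{-\rho})$, which by \eqref{eq:beta_i_poly} simplifies to $O\{K^{-1}(NT)^{-1/(\rho-1)}\}$, replacing the exponential term $O\{K^{-1}(NT)^{1-k_0\log(\rho^{-1})}\}$. Invoking the bridging principle of Remark~\ref{remarkbridge}, the independent-data guarantee then transfers to the mixing setting at the cost of this additive term, yielding the per-subset bound and hence the theorem.

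The main obstacle I anticipate is not the algebra but checking that the per-subset sample size $n=NT/K$ still diverges fast enough for the oracle inequalities of Part~II to apply. Unlike the exponential case, where $K=k_0\log(NT)$ keeps $n$ within a logarithmic factor of $NT$, here $K=k_0(NT)^{1/(\rho-1)}$ grows polynomially, so $n=(NT)^{(\rho-2)/(\rho-1)}$ diverges only when $\rho$ is sufficiently large. I would therefore need to verify that the sparsity and signal-strength Conditions~\ref{ass:knockoffs}--\ref{ass:noties}, which are stated in terms of $n$, remain compatible with this smaller effective sample size, so that the $O((nd)^{-C})$ terms genuinely vanish; this is the delicate point that distinguishes the polynomial analysis from the exponential one.
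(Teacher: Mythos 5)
Your proposal is correct and follows exactly the route the paper intends: the paper omits the proof of Theorem~\ref{thm:type-II-poly} precisely because it is the proof of Theorem~\ref{thm:type-II} with the Berbee-coupling failure probability recomputed under Condition~\ref{ass:poly-mixing}, i.e.\ $\sum_i \beta_i = O\{(NT/K)K^{-\rho}\} = O\{K^{-1}(NT)^{-1/(\rho-1)}\}$ for $K = k_0(NT)^{1/(\rho-1)}$, followed by the same $(1-\alpha)^{-1}$ majority-vote inflation. Your closing observation that $n = NT/K = (NT)^{(\rho-2)/(\rho-1)}$ diverges only for $\rho > 2$, so that the $O\{(nd)^{-C}\}$ terms and Conditions~\ref{ass:knockoffs}--\ref{ass:noties} require more care than in the exponential case, is a legitimate caveat that the paper does not address.
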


\subsection{Discussion: Analytic Comparison on Model-free Selections}\label{sec:modelbasedmodelfree}
As commented in Section \ref{sec:minimal}, existing variable selection approaches are typically model-free. For instance, \citet{hao2021sparse} proposed to combine LASSO with fitted Q-iteration for selecting relevant variables in the optimal Q-function. Their algorithm is model-free, since it directly models the Q-function rather than the reward or state transition function. To the contrary, the proposed minimal sufficient state is model-based, defined by the reward and state transition functions (see Definition \ref{def:sufficient-state}). Consequently, the targets for variable selection between our proposal and \citet{hao2021sparse} differ, although both are sufficient for optimal policy learning. A closer look of the proof for Proposition 1 in Section \ref{sec:proofprop1} of the Supplementary Materials reveals that the optimal Q-function depends on all state variables only through the proposed minimal sufficient state only. Therefore, while their variable selection target is covered by the proposed minimal sufficient state, it can be strictly smaller than the minimal sufficient state itself. However, as commented in the introduction, their proposal requires linear function approximation and cannot handle complex nonlinear systems.

\end{document}